\newcommand\R{\mathbb{R}}
\newcommand{\Comment}[1]{\textcolor{gray}{\footnotesize\textit{// #1}}}
\theoremstyle{plain}
\newtheorem{theorem}{Theorem}[section]
\newtheorem{proposition}[theorem]{Proposition}
\newtheorem{example}[theorem]{Example}
\newtheorem{lemma}[theorem]{Lemma}
\newtheorem{corollary}[theorem]{Corollary}
\theoremstyle{definition}
\newtheorem{definition}[theorem]{Definition}
\newtheorem{assumption}[theorem]{Assumption}
\newtheorem{remark}[theorem]{Remark}
\newtheorem{property}[theorem]{Property}
\def\ceil#1{\left\lceil #1 \right\rceil}
\def\est{\mathrm{est}}
\def\E{\mathbb{E}}
\def\N{\mathbb{N}}
\DeclareMathOperator*{\argmax}{arg\,max}
\xdef\csname m\x\endcsname{\noexpand\mathbf{\x}}
\xdef\csname om\x\endcsname{\noexpand\overline{\noexpand\mathbf{\x}}}
\xdef\csname c\x\endcsname{\noexpand\mathcal{\x}}
\DeclareRobustCommand\widecheck[1]{{\mathpalette\@widecheck{#1}}}
\def\@widecheck#1#2{%
    \setbox\z@\hbox{\m@th$#1#2$}%
    \setbox\tw@\hbox{\m@th$#1%
       \widehat{%
          \vrule\@width\z@\@height\ht\z@
          \vrule\@height\z@\@width\wd\z@}$}%
    \dp\tw@-\ht\z@
    \@tempdima\ht\z@ \advance\@tempdima2\ht\tw@ \divide\@tempdima\thr@@
    \setbox\tw@\hbox{%
       \raise\@tempdima\hbox{\scalebox{1}[-1]{\lower\@tempdima\box
\tw@}}}%
    {\ooalign{\box\tw@ \cr \box\z@}}}
\title{Mean-Field Sampling for Cooperative Multi-Agent Reinforcement Learning}
\author{%
  Emile Anand\thanks{Work partially done while a visiting student at Carnegie Mellon University and intern at Cognition AI.}\\
  Georgia Institute of Technology\\
  Atlanta, GA 30308 \\
  \texttt{emile@gatech.edu} \\
  \And
  Ishani Karmarkar \\
  Stanford University \\ Palo Alto, CA, 94305\\
  \texttt{ishanik@stanford.edu} \\
  \AND
  Guannan Qu \\
  Carnegie Mellon University \\
  Pittsburgh, PA 94035 \\
  \texttt{gqu@andrew.cmu.edu}
}
\begin{document}

\maketitle

\begin{abstract}
Designing efficient algorithms for multi-agent reinforcement learning (MARL) is fundamentally challenging because the size of the joint state and action spaces grows exponentially in the number of agents. These difficulties are exacerbated when balancing sequential global decision-making with local agent interactions. In this work, we propose a new algorithm $\texttt{SUBSAMPLE-MFQ}$ ($\textbf{Subsample}$-$\textbf{M}$ean-$\textbf{F}$ield-$\textbf{Q}$-learning) and a decentralized randomized policy for a system with $n$ agents. For any $k\leq n$, our algorithm learns a policy for the system in time polynomial in $k$. We prove that this learned policy converges to the optimal policy on the order of $\tilde{O}(1/\sqrt{k})$ as the number of subsampled agents $k$ increases. In particular, this bound is independent of the number of agents $n$. 
\end{abstract}

\section{Introduction}

Reinforcement Learning (RL) has become a popular framework to solve sequential decision making problems in unknown environments and has achieved tremendous success in a wide array of domains such as playing the game of Go \citep{Silver_Huang_Maddison_Guez_Sifre_van_den_Driessche_Schrittwieser_Antonoglou_Panneershelvam_Lanctot_et_al._2016}, robotic control \citep{doi:10.1177/0278364913495721}, and autonomous driving \citep{9351818,lin2023online}. A key feature of most real-world systems is their uncertain nature, and thus, RL has emerged as a powerful tool for learning optimal policies for multi-agent systems to operate in unknown environments \citep{7438918,zhang2021multiagent,pmlr-v247-lin24a,anand2024efficient}. While early RL works focused on the single-agent setting, multi-agent RL (MARL) has recently achieved impressive success in many applications, such as coordinating robotic swarms \citep{7989376,deweese2024locally}, real-time bidding \citep{10.1145/3269206.3272021}, ride-sharing \citep{10.1145/3308558.3313433}, and stochastic games \citep{pmlr-v125-jin20a}.

Despite growing interest in MARL, extending RL to multi-agent settings poses significant computational challenges due to the \emph{curse of dimensionality}. MARL is fundamentally difficult as agents in the real-world not only interact with the environment but also with each other \citep{Shapley195317AV}: if each of the $n$ agents has a state space $S$ and action space $A$, the global state-action space has size $(|S||A|)^n$, which is exponential in $n$. Thus, many RL algorithms (such as temporal difference learning and tabular $Q$-learning) require computing and storing an $(|S||A|)^n$-sized Q-table \citep{Sutton_McAllester_Singh_Mansour_1999, 10.5555/560669}. This scalability issue has been observed in a variety of MARL settings \citep{Blondel_Tsitsiklis_2000,b0e74184-2114-3e45-b092-dfbc8fefcf91,littman}.

An exciting line of work that addresses this intractability is mean-field MARL 
 \citep{Lasry_Lions_2007,pmlr-v80-yang18d,doi:10.1137/20M1360700,gu2022meanfield,Hu_Wei_Yan_Zhang_2023}. The mean-field approach assumes agents are homogeneous in their state-action spaces, enabling their interactions to be approximated by a two-agent setting: here, each agent interacts with a representative ``mean agent'' which evolves as the empirical distribution of states of all other agents. With these assumptions, mean-field MARL learns an optimal policy with sample complexity $O(n^{|S||A|}|S||A|)$, which is polynomial in the number of agents. However, if $n$ is large, this remains prohibitive even for moderate values of $|S|$ and  $|A|$.

Motivated by this problem, in this paper we study the cooperative setting--where agents work collaboratively to maximize a structured global reward--and ask: \emph{Can we design a scalable MARL algorithm for learning an approximately optimal policy in a cooperative multi-agent system?}

\textbf{Contributions.} We answer this question affirmatively. Our key contributions are outlined below.

\textbf{Subsampling algorithm.} We model the problem as a Markov Decision Process (MDP) with a global agent and $n$ local agents. We propose
 \texttt{SUBSAMPLE-MFQ} to address the challenge of MARL with a large number of local agents. \texttt{SUBSAMPLE-MFQ} selects $k\leq n$ local agents to learn a deterministic policy $\hat{\pi}^\est_{k}$ by applying mean-field value iteration on the $k$-local-agent subsystem to learn $\hat{Q}_{k}^\est$, which can be viewed as a smaller $Q$ function. \texttt{SUBSAMPLE-MFQ} then deploys a stochastic policy ${\pi}^\est_{k}$, where the global agent samples $k$ local agents uniformly at each step and uses $\hat{\pi}^\est_{k}$ to determine its action, while each local agent samples $k-1$ other local agents and uses $\hat{\pi}^\est_{k}$ to determine its action.

\textbf{Sample complexity and theoretical guarantee.} As the number of local agents increases, the size of $\hat{Q}_{k}$ scales polynomially with $k$, rather than polynomially with $n$ as in mean-field MARL. Analogously, when the size of the local agent's state space grows, the size of $\hat{Q}_{k}$ scales exponentially with $k$, rather than exponentially with $n$, as in traditional $Q$-learning. The key analytic technique underlying our results is a novel MDP sampling result. This result shows that the performance gap between ${{\pi}_{k}^\est}$ and the optimal policy ${ \pi^*}$ is at most $\tilde{O}(1/\sqrt{k})$, with high probability. The choice of $k$ reveals a fundamental trade-off between the size of the $Q$-table and the optimality of ${\pi}_{k}^\est$. For example, if $k$ is set to $O(\log n)$, \texttt{SUBSAMPLE-MFQ} is the first centralized MARL algorithm to achieve a \emph{polylogarithmic} run-time in $n$, representing an exponential speedup over the previously best-known polytime mean-field MARL methods, while maintaining a decaying optimality gap as $n$ gets large,

While our results are theoretical in nature, we hope \texttt{SUBSAMPLE-MFQ} will further exploration of sampling in Markov games, and potentially inspire new practical multi-agent algorithms.

\textbf{Related work.} MARL has a rich history, starting with early works on Markov games \citep{littman,Sutton_McAllester_Singh_Mansour_1999}, which are a multi-agent extension of MDPs. MARL has since been actively studied \citep{zhang2021multiagent} in a broad range of settings. MARL is most similar to the category of “succinctly described” MDPs \citep{Blondel_Tsitsiklis_2000}, where the state/action space is a product space formed by the individual state/action spaces of multiple agents, and where the agents interact to maximize an objective. A recent line of research constrains the problem to sparse networked instances to enforce local interactions between agents \citep{10.5555/3495724.3495899,DBLP:journals/corr/abs-2006-06555,10.5555/3586589.3586718}. In this formulation, the agents correspond to vertices on a graph who only interact with nearby agents. By exploiting Gamarnik's correlation decay property from combinatorial optimization \citep{gamarnik2009correlation}, they overcome the curse of dimensionality by simplifying the problem to only search over the policy space derived from the truncated graph to learn approximately optimal solutions. However, as the underlying network structure becomes dense with many local interactions, the neighborhood of each agent gets large, and these algorithms become intractable.

\emph{Mean-Field RL}. Under assumptions of homogeneity in the state/action spaces of the agents, the problem of densely networked multi-agent RL was studied by \citet{pmlr-v80-yang18d,doi:10.1137/20M1360700}, who approximated the solution in polynomial time with a mean-field approach where the approximation error scales in $O(1/\sqrt{n})$. 
In contrast, our work achieves \emph{subpolynomial} runtimes by directly sampling from this mean-field distribution.
\citet{cui2022learning} introduce heterogeneity to mean-field MARL by modeling non-uniform interactions through graphons; however, these methods crucially assume the existence of graphon sequences that converge in cut-norm to the finite graph. In the cooperative setting, \citet{subramanian2022decentralized,cui2023multiagent} studies a mean-field setting with $q$ types of homogeneous agents; however, their learned policy does not provably converge to the optimum.

\emph{Other related works.} Our work is related to factored MDPs, where there is a global action affecting every agent; however, in
our case, each agent has its own action \citep{pmlr-v202-min23a,10.5555/645529.658113}.
 \citet{pmlr-v125-jin20a} reduces the dependence of the product action space to an
additive dependence with V-learning. Our work \emph{further} reduces the complexity of the joint state space, which has not been previously accomplished. We add to the growing literature on the Centralized Training with Decentralized Execution regime \citep{zhou2023centralizedtrainingdecentralizedexecution}, as our algorithm learns a provably approximately optimal policy using centralized information, but makes decisions using only local information during execution. Finally, one can efficiently approximate the $Q$-table through function approximation \citep{jin2021bellmaneluderdimensionnew}. However, achieving theoretical bounds on the performance loss due to function approximation is intractable without strong assumptions such as linear Bellman completeness or low Bellman-Eluder dimension \citep{golowich2024roleinherentbellmanerror}. While our work primarily studies the finite tabular setting, we extend it to non-tabular linear MDPs in \cref{Appendix/extension_nontabular}.

\section{Preliminaries}
\label{section: preliminaries}

\paragraph{Notation.} For $k,n\in\N$ where $k\leq n$, let $\binom{[n]}{k}$ denote the set of $k$-sized subsets of $[n]=\{1,\dots,n\}$. For any vector $z\in\mathbb{R}^d$, let $\|z\|_1$ and $\|z\|_\infty$ denote the standard $\ell_1$ and $\ell_\infty$ norms of $z$ respectively.
Let $\|\mathbf{A}\|_1$ denote the matrix $\ell_1$-norm of $\mathbf{A}\in\mathbb{R}^{n\times m}$. Given variables $s_1,\dots,s_n$, $s_{\Delta}\coloneqq \{s_i:i\in\Delta\}$ for $\Delta\subseteq[n]$. We use $\tilde{O}(\cdot)$ to suppress polylogarithmic factors in all problem parameters except $n$. For a discrete measurable space $(\mathcal{X}, \mathcal{F})$, the total variation distance between probability measures $\rho_1, \rho_2$ is given by $\mathrm{TV}(\rho_1,\rho_2)=\frac{1}{2}\sum_{x\in\mathcal{X}}|\rho_1(x)-\rho_2(x)|$. Next, $x\sim \mathcal{D}(\cdot)$ denotes that $x$ is a random element sampled from a distribution $\mathcal{D}$, and we denote that $x$ is a random sample from the uniform distribution over a finite set $\Omega$ by $x\sim \mathcal{U}(\Omega)$. We include a detailed notation table in Table~\ref{table:notations}. 

\subsection{Problem formulation} 

We consider a system of $n+1$ agents, where agent $g$ is a ``global decision making agent'' and the remaining $n$ agents, denoted by $[n]$, are ``local agents.'' At time $t$, the agents are in state $s(t) = (s_g(t), s_1(t), ..., s_n(t))\in\mathcal{S}\coloneq \mathcal{S}_g\times\mathcal{S}_l^n$, where $s_g(t) \in \cS_g$ denotes the global agent's state, and for each $i \in [n]$, $s_i(t) \in \cS_l$ denotes the state of the $i$'th local agent. The agents cooperatively select actions $a(t) = (a_g(t), a_1(t), ..., a_n(t))\in\mathcal{A}\coloneqq \cA_g\times\cA_l^n$, where $a_g(t)\in \cA_g$ denotes the global agent's action and $a_i(t) \in \cA_l$ denotes the $i$'th local agent's action. At time $t+1$, the next state for all the agents is independently generated by stochastic transition kernels
$P_g:\mathcal{S}_g\times\mathcal{S}_g\times\mathcal{A}_g\to\Delta(\cS_g)$ and $P_l:\mathcal{S}_l\times\mathcal{S}_l\times\mathcal{S}_g\times\mathcal{A}_l\to\Delta(\cS_l)$ as follows:
\begin{equation}\label{equation: global_transition}s_g(t+1) \sim P_g(\cdot|s_g(t), a_g(t)),\quad s_i(t+1) \sim P_l(\cdot|s_i(t), s_g(t), a_i(t)), \forall i\in [n].\end{equation}
The system then collects a structured stage reward $r(s(t), a(t))$ where the reward $r:\cS\times\cA\to\mathbb{R}$ depends on $s(t)$ and $a(t)$ through \cref{equation: rewards}, and where $r_g$ and $r_l$ is typically application specific.
\begin{equation}\label{equation: rewards}
    r(s, a) = \underbrace{r_g(s_g, a_g)}_{{\text{global component}}} + \frac{1}{n} \sum_{i\in [n]} \underbrace{r_l(s_i, s_g, a_i)}_{\text{local component}}
\end{equation}
A policy $\pi:\mathcal{S}\to\mathcal{P}(\mathcal{A})$ maps from states to distributions of actions such that $a \sim \pi(\cdot|s)$. Given $\gamma \in (0, 1)$, we seek to learn a policy $\pi$ that maximizes the ($\gamma$-discounted) value for each $s\in S$: \begin{equation}V^{\pi}(s)=\mathbb{E}_{a(t)\sim\pi(\cdot|s)}\Big[\sum_{t=0}^\infty \gamma^t r(s(t), a(t))|s(0)=s\Big].\end{equation}

The cardinality of the search space simplex for the optimal policy is $|\mathcal{S}_g||\mathcal{S}_l|^n|\mathcal{A}_g||\mathcal{A}_l|^n$, which is exponential in the number of agents, underscoring the need for efficient approximation algorithms. 

To efficiently learn policies that maximize the objective, we make the following standard assumptions:
\begin{assumption}[Finite state/action spaces]\label{assumption: finite cardinality}
    We assume that the state and action spaces of all the agents in the MARL game are finite:  $|\mathcal{S}_l|, |\mathcal{S}_g|, |\mathcal{A}_g|, |\mathcal{A}_l| < \infty$. \cref{Appendix/extension_nontabular} of the supplementary material relaxes this assumption to the non-tabular setting with infinite continuous sets.
\end{assumption}
\begin{assumption}[Bounded rewards]\label{assumption: bounded rewards}
    The components of the reward function are bounded. Specifically, $\|r_g(\cdot,\cdot)\|_\infty \leq \tilde{r}_g$, and
    $\|r_l(\cdot,\cdot,\cdot)\|_\infty \leq \tilde{r}_l$. This implies $\|r(\cdot,\cdot)\|_\infty \leq \tilde{r}_g + \tilde{r}_l := \tilde{r}$.
\end{assumption}

\begin{definition}[$\epsilon$-optimal policy] Given a policy simplex $\Pi$, $\pi\in\Pi$ is an $\epsilon$-optimal policy if $V^\pi(s) \geq \sup_{\pi^*\in\Pi} V^{\pi^*}(s) - \epsilon$.
\end{definition}

\textbf{Motivating examples.} Below we give examples of two cooperative MARL settings which are naturally modeled by our setting. Our experiments in \cref{sec: numerical_experiments} reveal a monotonic improvement in the learned policies as $k\to n$, while providing a substantial speedup over mean-field $Q$-learning\footnote{We provide supporting code for the algorithm and experiments in \url{https://github.com/emiletimothy/Mean-Field-Subsample-Q-Learning}}. 
\begin{figure}[hbt!]
\centering
\includegraphics[width=0.4\linewidth]{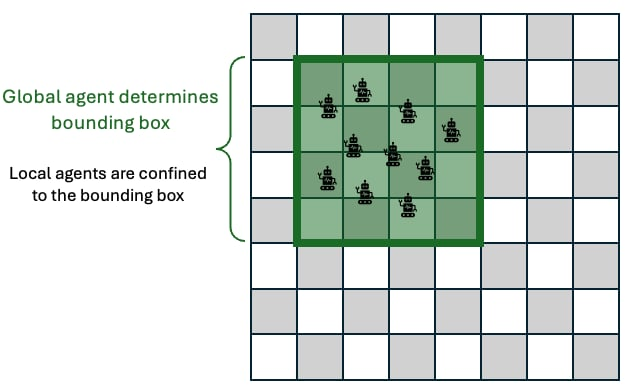}\quad\includegraphics[width=0.4\linewidth]{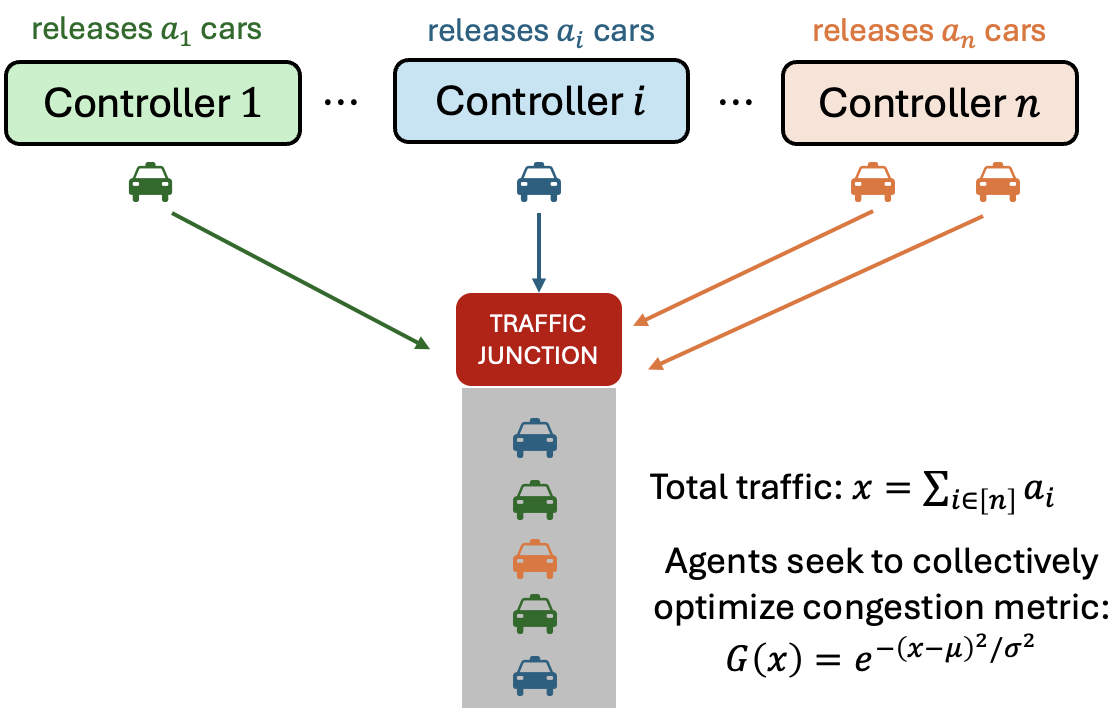}

        \caption{Bounded exploration in warehouse accidents, and traffic congestions with Gaussian squeeze.}
        \label{figures/box.png}
        \vspace{-.5em}
\end{figure}
\begin{itemize}[leftmargin=*, labelsep=0.5em]
    \item \textbf{Gaussian squeeze:} {In this task, $n$ homogeneous agents determine individual actions $a_i$ to jointly maximize the objective $r(x) = xe^{-(x-\mu)^2/\sigma^2}$, where $x = \sum_{i=1}^n a_i$, and $\mu$ and $\sigma$ are the pre-defined mean and variance of the system. In scenarios of traffic congestion, each agent $i\in[n]$ is a controller trying to send $a_i$ vehicles into the main road, where controllers coordinate with each other to avoid congestion, hence avoiding either over-use or under-use, thereby contributing to the entire system. This GS problem is previously studied in \citet{pmlr-v80-yang18d}, and serves as an ablation study on the impact of subsampling for MARL.}
    \item \textbf{Constrained exploration:} {Consider an $M\times M$ grid. Each agent's state is a coordinate in $[M]\times[M]$. The state represents the \emph{center of a $d \times d$ box} where the global agent constrains the local agents’ movements. Initially, all agents are in the same location. At each time-step, the local agents take actions $a_i(t)\in\R^2$ (e.g., up, down, left, right) to transition between states and collect rewards. The transition kernel ensures that local agents remain within the $d \times d$ box dictated by the global agent, using knowledge of $a_i(t), s_g(t)$, and $s_i(t)$. In warehouse settings where shelves have collapsed, creating hazardous or inaccessible areas, we want agents to clean these areas. However, exploration in these regions may be challenging due to physical constraints or safety concerns, causing exploration in these regions to be \emph{disincentivized} from the local agents' perspectives. Through an appropriate design of the reward and transition functions, the global agent could guide the local agents to focus on specific $d \times d$ grids, allowing efficient cleanup while minimizing risk.}
\end{itemize}

\paragraph{Capturing heterogeneity.} Following \citet{10.5555/3586589.3586718,xu2023decentralized}, our model can capture heterogeneity in the local agents by modeling agent types as part of the state: to do this, we assign a type $\varepsilon_i\in\mathcal{E}$ to each local agent by letting $\mathcal{S}_l = \mathcal{E}\times\mathcal{S}'_l$, where $\mathcal{E}$ is a set that enumerates possible types that are treated as a fixed part of the agent's state, and $\cS_l'$ is the latent state space of any local agent. The transition and reward functions can vary depending on the agent's type. The global agent can provide unique signals to local agents of each type by letting $s_g\in\cS_g$ and $a_g\in\cA_g$ denote a state/action vector where each element corresponds to a type.

\section{Algorithmic Approach: Subsampled Value Iteration}\label{subsec:our-approach}

\textbf{Q-learning.}\label{RL review} Our starting point is the classic Q-learning framework \citep{Watkins_Dayan_1992} for offline-RL, which seeks to learn the $Q$-function $Q:\mathcal{S}\times\mathcal{A}\to\mathbb{R}$. For any policy $\pi$, $Q^\pi(s,a)=\mathbb{E}^\pi[\sum_{t=0}^\infty\gamma^t r(s(t),a(t))|s(0)\!=\!s,a(0)\!=\!a]$. Initially, $Q^0(s,a)\!=\!0$, for all $(s,a)\in\mathcal{S}\!\times\!\mathcal{A}$. Then, for all $t\!\in\![T]$, it is updated as $Q^{t+1}(s,a)\!\gets\!\mathcal{T}Q^t(s,a)$, where the \emph{Bellman operator} $\mathcal{T}$ is\begin{align}
\mathcal{T}&Q^t(s,a)=r(s,a)+\gamma\E_{\substack{s_g'\sim P_g(\cdot|s_g,a), s_i'\sim P_l(\cdot|s_i,s_g,a_i), \forall i\in[n]}} \max_{a'\in\mathcal{A}}Q^t(s',a').\end{align} 
It is  well known that $\cT$ is $\gamma$-contractive, ensuring that the above updates converge to a unique $Q^*$ such that $\mathcal{T}Q^* = Q^*$. The optimal policy $\pi^*\!:\!\mathcal{S}\!\to\!\mathcal{A}$ can then be computed greedily as $\pi^*(s)=\arg\max_{a\in\mathcal{A}} Q^*(s, a)$. However, the update complexity for the $Q$-function is $O(|\cS||\cA|)=O(|\mathcal{S}_g||\mathcal{S}_l|^n|\mathcal{A}_g||\mathcal{A}_l|^n)$, which is exponential in the number of local agents increases.

\textbf{Mean-field transformation.} To address this, mean-field MARL \citep{pmlr-v80-yang18d} (under homogeneity assumptions) studies the empirical distribution function $F_{z_{[n]}}:\mathcal{Z}_l\to\R$, for $\cZ_l\coloneq \cS_l\times\cA_l$:
\begin{equation}F_{z_{[n]}}(z)\coloneq\frac{1}{n}\sum_{i=1}^n \mathbf{1}\{s_i=z_s, a_i = z_a\},\quad \forall z\coloneqq (z_s,z_a)\in\cZ_l\coloneqq \cS_l\times\cA_l.\end{equation}
Let $\mu_n(\mathcal{Z}_l)=\{\frac{b}{n}| b\in\{0,\dots,n\}\}^{|\cZ_l|}$ be the space of $|\cZ_l|$-sized vectors (or $|\mathcal{S}_l|\times|\cA_l|$-sized tables), where each entry is in $\{0,{1}/{n},{2}/{n},\dots,1\}$. Intuitively, $\mu_n(\cZ_l)$ is a discrete distribution over $(\cS_l,\cA_l)$ where each probability assigned is a multiple of $1/n$. Then, $F_{z_{[n]}}\in\mu_n{(\mathcal{Z}_l)}$ indicates the proportion of agents in each state-action pair. 

As in Q-learning, in mean-field Q-learning initially $\hat{Q}^0(s_g,s_1,a_g,a_1, F_{z_{[n]\setminus 1}})=0$. At each time $t \in [T]$, we update $\hat{Q}$ as $\hat{Q}^{t+1}= \hat{\mathcal{T}}\hat{Q}^t$, where $\hat{\mathcal{T}}$ is the Bellman operator in \emph{distribution space}:
\begin{align*}\label{eqn:bellman_op_dist_space}&\hat{\mathcal{T}}\hat{Q}^t(s_g,s_1,a_g,a_1, F_{z_{[n]\setminus 1}})=r(s,a)+\gamma\mathbb{E}_{\substack{s_g' \sim P_g(\cdot \mid s_g, a_g) \\ s_i' \sim P_l(\cdot \mid s_i, s_g, a_i) \\ \forall i \in [n]}}\!
\max_{\substack{(a_g',a_1',a_{[n]\setminus 1}')\\ \in\mathcal{A}_g\times\mathcal{A}_l\times\mathcal{A}_l^{n-1}}}\!\!\hat{Q}^t(s'_g,s'_1,a'_g,a'_1, F_{z'_{[n]\setminus 1}})\end{align*}
Since the $Q$-function is permutation-invariant in the homogeneous local agents, one sees that for each $t \geq 0$, $\smash{Q^t(s_g,s_{[n]},a_g,a_{[n]})=\hat{Q}^t(s_g,s_1,a_g,a_1, F_{z_{[n]\setminus 1}})}$. In other words, mean-field Q-learning implements the same updates as standard Q-learning.  However, the update complexity of $\hat{Q}$ is only $\smash{O(|\mathcal{S}_g||\mathcal{A}_g||\cZ_l|n^{|\cZ_l|})}$, which scales polynomially in $n$ but exponentially in $|\cZ_l|$.

\begin{remark}\label{remark: existing methods}Existing methods use sample complexity $\tilde{O}(\min\{|\mathcal{S}_g||\mathcal{A}_g||\cZ_l|^n,|\mathcal{S}_g||\mathcal{A}_g||\cZ_l|n^{|\cZ_l|}\})$: one uses $Q$-learning if $|\cZ_l|^{n-1}<n^{|\cZ_l|}$, and mean-field value iteration otherwise. In each regime, as $n$ scales, the update complexity becomes computationally infeasible. 
\end{remark}
To further reduce the update complexity, we propose \texttt{SUBSAMPLE-MFQ} to overcome the polynomial (in $n$) sample complexity of mean-field $Q$-learning and the exponential (in $n$) sample complexity of traditional $Q$-learning. We begin by motivating the intuition behind our algorithms.  

\subsection{Overview of approach.} 
\textbf{Offline Planning: \cref{algorithm: approx-dense-tolerable-Q-learning-exp}.} First, the global agent randomly samples a subset of local agents $\Delta\subseteq [n]$ such that $|\Delta|=k$, for $k\leq n$. It then ignores all other local agents $[n]\setminus\Delta$, and performs value iteration (using $m$ samples to update the $Q$-function in each iterate) to approximately learn the $Q$-function $\hat{Q}_{k,m}^\est$ and policy $\hat{\pi}_{k,m}^\est$ for this surrogate subsystem of $k$ local agents. We denote the surrogate reward gained by this subsystem at each time step by $r_\Delta:\mathcal{S}\times\mathcal{A}\to\mathbb{R}$, where
\begin{equation}\label{equation: surrogate_rewards}{r_\Delta(s, a)=r_g(s_g, a_g)+\frac{1}{|\Delta|} \sum_{i\in\Delta}r_l(s_g,s_i,a_i).}\end{equation}
In 
\cref{lemma: Q lipschitz continuous wrt Fsdelta}, we show that $\|\hat{Q}_{k,m}^\est-Q^*\|_\infty$ is Lipschitz continuous with respect to the TV-distance between the state/action pairs of the subsampled agents and the full set of $n$ agents. Equipped with this approximation guarantee, we show how to construct an approximately optimal policy on the \emph{full} system on $n$ agents. In general, converting this policy on $k$ local agents to a policy on the full $n$-agent system without sacrificing error guarantees can be intractable, and there is a line on works on centralized-training decentralized-execution (CTDE) \citep{xu2023decentralized,zhou2023centralizedtrainingdecentralizedexecution} which shows that designing performant decentralized policies can be highly non-trivial. 

\textbf{Online Execution: \cref{algorithm: approx-dense-tolerable-Q*}.} In order to circumvent this obstacle and convert the optimality of each agent's action in the $k$ local-agent subsystem to an approximate optimality guarantee on the full $n$-agent system, we propose a randomized policy ${\pi}^\est_{k,m}$, where the global agent samples $\smash{\Delta\in{[n]\choose k}}$ at each time-step to derive an action $\smash{a_g\gets \hat{\pi}_{k,m}^\est(s_g,s_\Delta)}$, and where each local $i$ agent samples $\smash{k-1}$ other local agents $\Delta_i$ to derive an action $\smash{a_i\gets \hat{\pi}_{k,m}^\est(s_g,s_i,s_{\Delta_i})}$. Finally, \cref{theorem: performance_difference_lemma_applied} shows that the policy ${\pi}_{k,m}^\est$ converges to the optimal policy $\pi^*$ as $\smash{k\to n}$ with rate $\tilde{O}(\frac{1}{\sqrt{k}})$.

\subsection{Algorithm description.} We now formally describe the algorithms. Before describing \cref{algorithm: approx-dense-tolerable-Q-learning-exp} (\texttt{SUBSAMPLE-MFQ}: Learning) and \cref{algorithm: approx-dense-tolerable-Q*} (\texttt{SUBSAMPLE-MFQ}: Execution) in detail, it will be helpful to first define the \emph{empirical distribution function}:

\begin{definition}[Empirical Distribution Function] For any population $(z_1, \dots, z_n) \in \mathcal{Z}_l^n$, where $\cZ_l\coloneq\cS_l\times\cA_l$, define the empirical distribution function $F_{z_\Delta}: \mathcal{Z}_l \to \mathbb{R}_+$ for all $z\coloneq(z_s,z_a)\in\cS_l\times\cA_l$ and for all $\smash{\Delta\in {[n]\choose k}}$ by $\smash{F_{z_\Delta}}(x) \coloneq \frac{1}{k}\sum_{i\in\Delta}\mathbf{1}\{s_i = z_s, a_i=z_a\}$.
\end{definition}

Let $\mu_k(\mathcal{Z}_l)\coloneq\left\{\frac{b}{k}|b\in\{0,\dots,k\}\right\}^{|\cZ_l|}$ be the space of $|\cZ_l|$-sized vectors (or $|\mathcal{S}_l|\times|\cA_l|$-sized tables) where each entry is in $\{0,1/k,2/k,\dots,1\}$. Intuitively, $\mu_k(\cZ_l)$ is a discrete distribution over $(\cS_l,\cA_l)$ where each probability assigned is a multiple of $1/k$. Here, $F_{z_\Delta}\in\mu_k(\mathcal{Z}_l)$ indicates the proportion of agents (in the $k$-local-agent subsystem) at each state/action pair.

\textbf{\cref{algorithm: approx-dense-tolerable-Q-learning-exp}} (Offline learning). Let $m\in\N$ denote the sample size for the learning algorithm with sampling parameter $k\leq n$. As in \cref{remark: existing methods},
when $\smash{|\cZ_l|^{k}\leq|\cZ_l|k^{|\cZ_l|}}$, the algorithm uses traditional value-iteration, and when $\smash{|\cZ_l|^{k}>|\cZ_l|k^{|\cZ_l|}}$, it uses mean-field value iteration. We formally describe the procedure for each regime below:

\textbf{Regime with Large State/Action Space:} \underline{When $|\mathcal{Z}_l|^{k}\leq |\cZ_l| k^{|\mathcal{Z}_l|}$}, we iteratively learn the optimal $Q$-function for a subsystem with $k$-local agents denoted by $\smash{\hat{Q}_{k,m}^t:\mathcal{S}_g\times\mathcal{S}_l^k\times\mathcal{A}_g\times\mathcal{A}_l^k\to\R}$, which is initialized to 0. At time $t$, we update \begin{equation}\hat{Q}_{k,m}^{t+1}(s_g,{s_\Delta},a_g,a_\Delta) = {\tilde{\mathcal{T}}}_{k,m}\hat{Q}_{k,m}^t(s_g,s_\Delta,a_g,a_\Delta),\end{equation} where ${\tilde{\mathcal{T}}}_{k,m}$ is the \emph{empirically adapted Bellman operator} in \cref{empiric adapted bellman value}. Since the system is unknown, $\tilde{\mathcal{T}}_{k,m}$ cannot directly perform the Bellman update, so it instead uses $m$ random samples $s_g^j\sim P_g(\cdot|s_g,a_g)$ and $s_i^j\sim P_l(\cdot|s_i,s_g,a_i)$  for each $j\in[m]$, $i\in\Delta$ to approximate the system:
\begin{align*}
    \label{empiric adapted bellman value}
&{\tilde{\mathcal{T}}}_{k,m}\hat{Q}_{k,m}^t(s_g,{s_\Delta},a_g,a_\Delta) 
=r_\Delta(s,a) +\frac{\gamma}{m}\sum_{j\in[m]}\max_{\substack{a_g'\in\mathcal{A}_g, a_\Delta'\in\mathcal{A}_l^k}}\hat{Q}_{k,m}^t(s_g^j,{s_\Delta^j},a_g',a_\Delta').
\end{align*}
Since $\tilde{\mathcal{T}}_{k,m}$ is $\gamma$-contractive, \cref{algorithm: approx-dense-tolerable-Q-learning-exp} applies value iteration with $\tilde{\mathcal{T}}$  until $\hat{Q}_{k,m}$ converges to a fixed point satisfying $\smash{\tilde{\mathcal{T}}_{k,m}\hat{Q}_{k,m}^\est=\hat{Q}_{k,m}^\est}$, yielding a deterministic policy $\hat{\pi}^\est_{k,m}(s_g,{s_\Delta})$ where
\begin{align}\hat{\pi}^\est_{k,m}(s_g,{s_\Delta})&\!=\!\! \mathop{\arg\max}_{a_g\in\mathcal{A}_g,a_\Delta\in\cA_l^k}\hat{Q}^\est_{k,m}(s_g,{s_\Delta},a_g,a_\Delta).
\end{align}

\begin{algorithm}[t]
\caption{\texttt{SUB-SAMPLE-MFQ}: Learning }\label{algorithm: approx-dense-tolerable-Q-learning-exp}
\begin{algorithmic}[1]
\REQUIRE A multi-agent system as in \cref{section: preliminaries}, number of iterations $T$, sampling parameters $k\in[n]$, $m\in\mathbb{N}$, and discount factor $\gamma\in (0,1)$. 
\STATE Let $\Delta=\{1,\dots,k\}$, $\tilde{\Delta}=\{2,\dots,k\}$, and  $\mu_{k-1}(\cZ_l)\!=\!\{\frac{b}{k-1}: b\in\{0,1,\dots,k-1\}\}^{|\mathcal{S}_l|\times|\mathcal{A}_l|}$.
\IF{$|\cZ_l|^{k} \leq |\cZ_l|k^{|\cZ_l|}$}
\STATE \Comment{Sub-sampled version of standard $Q$-learning}
\STATE Initialize $\hat{Q}^0_{k,m}(s_g, s_\Delta, a_g,a_\Delta)= 0, \forall (s_g,s_\Delta,a_g,a_\Delta)$.
\FOR{$t=1$ to $T$}
\FOR{$(s_g,s_\Delta,a_\Delta,a_g) \in \mathcal{S}_g\times\mathcal{S}_l^k\times\mathcal{A}_g\times\cA_l^k$}
\STATE \!\!$\hat{Q}^{t+1}_{k,m}
(s_g, s_\Delta, a_g,a_\Delta)\!=\!\tilde{\mathcal{T}}_{k,m}\hat{Q}^t_{k,m}(s_g, s_\Delta, a_g, a_\Delta)$\!\! 
\ENDFOR
\ENDFOR
\STATE Let the greedy policy be $\hat{\pi}_{k,m}^\est(s_g,s_\Delta)\coloneqq \arg\max_{\substack{a_g\in\mathcal{A}_g, a_\Delta\in\cA_l^k}} \hat{Q}_{k,m}^T(s_g,s_\Delta,a_g,a_\Delta)$.
\ELSE{}
\STATE \Comment{Sub-sampled version of mean-field $Q$-learning}
\STATE Initialize $\hat{Q}^0_{k,m}(s_g,s_1, F_{z_{\tilde{\Delta}}}, a_1, a_g)=0$, $\forall(s_g,s_{\Delta},a_\Delta,a_g)$.\FOR{$t=1$ to $T$}
\FOR{$(s_g,s_1,F_{z_{\tilde{\Delta}}},a_1,a_g)\in\mathcal{S}_g\times\cS_l\times \mu_{k-1}(\cZ_l)\times\cA_l\times\cA_g$}
\STATE $\hat{Q}^{t+1}_{k,m}(s_g,s_1, F_{z_{\tilde{\Delta}}}, a_1,a_g)= \hat{\mathcal{T}}_{k,m} \hat{Q}^t_{k,m}(s_g,s_1, F_{z_{\tilde{\Delta}}}, a_1,a_g)$
\ENDFOR
\ENDFOR
\STATE Let the greedy policy be $\hat{\pi}_{k,m}^\est(s_g,s_i, F_{s_{\tilde{\Delta}}})\!:=\!
\argmax_{\substack{a_g\in\cA_g,a_i\in\cA_l,\\ F_{a_{\tilde{\Delta}}}\in\mu_{k-1}{(\mathcal{A}_l)}}}\!\!\hat{Q}_{k,m}^T(s_g,s_1, F_{z_{\tilde{\Delta}}}, a_1,a_g)$.
\ENDIF

\end{algorithmic}
\end{algorithm}

\textbf{Regime with Large Number of Agents:} \underline{When $|\mathcal{Z}_l|^{k}>|\cZ_l|k^{|\mathcal{Z}_l|}$}, we learn the optimal mean-field $Q$-function for a $k$ local agent system, denoted by $\hat{Q}_{k,m}^t:\mathcal{S}_g\times\cS_l\times\mu_{k-1}(\mathcal{Z}_l)\times\mathcal{A}_g\times\cA_l\to\R$, which is initialized to 0. At time $t$, we update \begin{equation}\hat{Q}_{k,m}^{t+1}(s_g,s_1, F_{z_{\tilde{\Delta}}},a_1,a_g)=\hat{\mathcal{T}}_{k,m} \hat{Q}_{k,m}^t(s_g,s_1, F_{z_{\tilde{\Delta}}},a_1,a_g),\end{equation} where $\hat{\mathcal{T}}_{k,m}$ is the \emph{empirically adapted mean-field Bellman operator} in \cref{equation: empirical_adapted_bellman}.
Similarly, as the system is unknown,  $\hat{\mathcal{T}}_{k,m}$ cannot directly perform the Bellman update and instead uses $m$ random samples $s_g^j\sim P_g(\cdot|s_g,a_g)$ and $s_i^j\sim P_l(\cdot|s_i,s_g,a_i), \forall j\in[m]$, $i\in\Delta$ to approximate the system:
\begin{align}
\label{equation: empirical_adapted_bellman}
&\!\!\!\hat{\mathcal{T}}_{k,m}\hat{Q}_{k,m}^t(s_g, s_1, F_{z_{\tilde{\Delta}}}, a_1, a_g) 
\!=\!r_\Delta(s, a) \!+\!  \frac{\gamma}{m}\!\sum_{j \in [m]}\! 
\max_{\substack{
a_g' \in \mathcal{A}_g,
a_1' \in \cA_l, \\
F_{a_{\tilde{\Delta}}'} \in \mu_{k-1}(\cA_l)
}}\!\!
\hat{Q}_{k,m}^t(s_g^j, s_1^j, F_{s_{\tilde{\Delta}}^j, a_{\tilde{\Delta}'}}, a_1', a_g')
\end{align}
Since $\hat{Q}_{k,m}^t$ depends on $s_\Delta$ and $a_\Delta$ through $F_{z_\Delta}$, $\hat{\mathcal{T}}_{k,m}$ is also $\gamma$-contractive. So, \cref{algorithm: approx-dense-tolerable-Q-learning-exp} applies value iteration with $\hat{\mathcal{T}}$  until $\hat{Q}_{k,m}$ converges to a fixed point satisfying $\hat{\mathcal{T}}_{k,m}\hat{Q}_{k,m}^\est=\hat{Q}_{k,m}^\est$, yielding a deterministic policy $\hat{\pi}^\est_{k,m}(s_g,s_1,F_{s_{\tilde{\Delta}}})$:
\begin{align*}\hat{\pi}^\est_{k,m}(s_g,s_1,F_{s_{\tilde{\Delta}}}) &\!=\!\!\mathop{\arg\max}_{\substack{a_g\in\mathcal{A}_g,a_1\in\cA_l, F_{a_{\tilde{\Delta}}}\in \mu_{k-1}(\cA_l)}}\hat{Q}^\est_{k,m}(s_g,s_1,F_{z_{\tilde{\Delta}}},a_1,a_g)\end{align*}
For $\hat{\pi}_{k,m}^\est(s_g,s_i,s_{\Delta\setminus i})\!=\!a_g^*,a_i^*,a_{\Delta\setminus i}^*$, let $[\hat{\pi}_{k,m}^\mathrm{est}(s_g,s_\Delta)]_g\!=\!a_g^*$ and $[\hat{\pi}_{k,m}^\mathrm{est}(s_g,s_i,s_{\Delta\setminus i})]_l\!=\!a_i^*$. 

\begin{remark}
    Since mean-field $Q$-learning maintains the same updates as standard $Q$-learning (from \cref{lemma: equivalent updates} which follows by noting that the $Q$-function is permutation-invariant in the homogeneous local agents), the deterministic policies $\hat{\pi}_{k,m}^\est(s_g,s_{\Delta})$ and $\hat{\pi}_{k,m}^\est(s_g,s_1,F_{s_{\bar{\Delta}}})$ are equivalent. 
\end{remark}

\textbf{\cref{algorithm: approx-dense-tolerable-Q*}} (Online implementation). In \cref{algorithm: approx-dense-tolerable-Q*}, (\texttt{SUBSAMPLE-MFQ}: Execution) the global agent samples local agents $\Delta(t) \sim\mathcal{U}\binom{[n]}{k}$ at each step to derive action $a_g(t) =[\hat{\pi}_{k,m}^\est(s_g,s_\Delta(t))]_g$, and each local agent $i$ samples other local agents $\Delta_i(t)\sim \mathcal{U}{[n]\setminus i\choose k-1}$ to derive action $a_i(t) = [\hat{\pi}_{k,m}^\est(s_g,s_i,s_\Delta(t))]_l$. The system then incurs a reward $r(s,a)$. This procedure of first sampling agents and then applying $\hat{\pi}^\est_{k,m}$ is denoted by a stochastic policy ${\pi}_{k,m}^\est(a|s)$, where ${\pi}_{k,m}^\est(a_g|s)$ is the global agent's action distribution and ${\pi}_{k,m}^\est(a_l|s)$ is the local agent's action distribution:
\begin{align}\!\!{\pi}_{k,m}^\est(a_g|s)\!&=\! \frac{1}{\binom{n}{k}}\sum_{\Delta\in \binom{[n]}{k}} \mathbf{1}(\hat{\pi}_{k,m}^\est(s_g,s_\Delta)=a )\\ \!\!{\pi}_{k,m}^\est(a_i|s)\!&=\!\frac{1}{\binom{n-1}{k-1}}\!\!\sum_{\tilde{\Delta}\in \binom{[n]\setminus i}{k-1}} \!\!\!\!\mathbf{1}({\pi}_{k,m}^\est(s_g,s_i,F_{s_{\tilde{\Delta}}})\!=\!a_i)\!
\end{align}
The agents then transition to their next states.
\begin{algorithm}[t]
\caption{\texttt{SUBSAMPLE-MFQ}: Execution} \label{algorithm: approx-dense-tolerable-Q*}
\begin{algorithmic}[1]
\REQUIRE Parameter $T'$ for the number of iterations for the decision-making sequence. Sampling parameter $k \in [n], m\in\mathbb{N}$. Discount factor $\gamma$. Policy $\hat{\pi}^\est_{k,m}(s_g, F_{s_\Delta})$.
\STATE Learn $\hat{\pi}_{k,m}^\est$ from \cref{algorithm: approx-dense-tolerable-Q-learning-exp}.
\STATE Sample $(s_g(0), s_{[n]}(0)) \sim s_0$, where $s_0$ is a distribution on the initial global state $(s_g, s_{{[n]}})$
\STATE Initialize the total reward $R_0=0$.
\STATE \textbf{Policy} $\pi_{k,m}^\est(s)$ is defined as follows:
\FOR{$t=0$ to $T'$}
\STATE Choose $\Delta$ uniformly at random from ${[n] \choose k}$ and let $a_g(t) = [\hat{\pi}_{k,m}^\est(s_g(t),s_\Delta(t))]_g$.
\FOR{$i=1$ to $n$}
\STATE Choose $\Delta_i$ uniformly at random from ${[n]\setminus i\choose k-1}$ and let $a_i(t) = [\hat{\pi}_{k,m}^\est(s_g(t), s_i(t), s_{\Delta_i}(t))]_{l}$.
\ENDFOR
\STATE Let $s_g(t+1) \sim P_g(\cdot|s_g(t), a_g(t))$.
\STATE Let $s_i(t+1) \sim P_l(\cdot|s_i(t), s_g(t),a_i(t))$, $\forall i\in [n]$.
\STATE $R_{t+1} = R_t + \gamma^t \cdot r(s,a)$
\ENDFOR
\end{algorithmic}
\end{algorithm}     

\section{Theoretical Guarantees and Analysis Approach}
We now show the value of the expected discounted cumulative reward produced by ${\pi}^\est_{k,m}$ is approximately optimal, where the optimality gap decays as $k\to n$ and $m$ grows.

\textbf{Bellman noise.} We introduce the notion of Bellman noise, which is used in the main theorem. Note that $\hat{\mathcal{T}}_{k,m}$ is an unbiased estimator of the adapted Bellman operator $\hat{\mathcal{T}}_k$,
\begin{equation}
    \label{eqn:adapted bellman}
\begin{split}
&\hat{\mathcal{T}}_k\hat{Q}_k(s_g,s_\Delta,a_g,a_\Delta)\!=\! r_\Delta(s,a)\!+\!\gamma\E_{\substack{
s_g' \sim P_g(\cdot \mid s_g, a_g), \\
s_i' \sim P_l(\cdot \mid s_i, s_g, a_i), 
\forall i \in \Delta
}}
\max_{\substack{
a_g' \in \mathcal{A}_g, 
a_\Delta' \in \mathcal{A}_l^k
}}
\hat{Q}_k(s_g', s_\Delta', a_g', a_\Delta').
\end{split}
\end{equation}
Let $\smash{\hat{Q}^0_k(s_g,s_\Delta,a_g,a_\Delta)=0}$. For $\smash{t>0}$, let $\smash{\hat{Q}_k^{t+1}=\hat{\mathcal{T}}_k\hat{Q}_k^t}$, where $\hat{\mathcal{T}}_k$ is defined for $k\leq n$ in \cref{eqn:adapted bellman}. Then, $\hat{\mathcal{T}}_k$ is also a $\gamma$-contraction  with fixed-point $\hat{Q}_k^*$. By the law of large numbers,  $\smash{\lim_{m\to\infty}\hat{\mathcal{T}}_{k,m}=\hat{\mathcal{T}}_k}$ and $\smash{\|\hat{Q}_{k,m}^\est - \hat{Q}_k^*\|_\infty\to 0}$ as $m\to \infty$. For finite $m$, $\epsilon_{k,m}\coloneq\|\hat{Q}_{k,m}^\est - \hat{Q}_k^*\|_\infty$ is the well-studied Bellman noise. To compare the performance between $\pi^*$ and ${\pi}_{k}^\est$, we define the value function of a policy $\pi$:
\begin{definition}
    For a given policy $\pi$, the value function $V^\pi: \mathcal{S} \to \mathbb{R}$ for $\mathcal{S}:=\mathcal{S}_g\times\mathcal{S}_l^n$ is given by:
\begin{equation}V^\pi(s)\!=\!\mathop{\mathbb{E}}_{\substack{a(t)\sim \pi(\cdot|s(t))}}\left[\sum_{t=0}^\infty \gamma^t r(s(t),a(t))\bigg|s(0)\!=\!s\right]\!\!.\!\end{equation}
\end{definition}
Intuitively, $V^\pi(s)$ is the expected discounted cumulative reward when starting from state $s$ and applying actions from the policy $\pi$ across an infinite horizon.

With the above preparations, we are primed to present our main result: a high-probability bound on the optimality gap for our learned policy ${\pi}_{k,m}^\est$ that decays with rate $\tilde{O}(1/\sqrt{k})$.
\begin{theorem} Let $\pi_{k,m}^\est$ denote the learned policy deployed in \emph{\texttt{SUBSAMPLE-MFQ: Execution}}. Then, for all $s_0\in \cS\coloneqq \cS_g\times\cS_l^n$, we have
\begin{align*}V^{\pi^*}(s_0) - V^{{\pi}_{k,m}^\est}(s_0) \leq \frac{\tilde{r}}{(1-\gamma)^2}\sqrt{\frac{n-k+1}{2nk}} \sqrt{\ln\frac{40\tilde{r}|\mathcal{S}_l||\mathcal{A}_l||\mathcal{A}_g|k^{|\mathcal{A}_l|+\frac{1}{2}}}{(1-\gamma)^2}}  + \frac{1}{10\sqrt{k}} + 2\epsilon_{k,m}\label{pseudo:main_result}\end{align*}\label{thm pseduo}
\end{theorem}
{We  prove \cref{theorem: performance_difference_lemma_applied} in \cref{using pdl}, and provide a proof sketch in \cref{proof-sketch}. We also generalize the result to stochastic rewards in \cref{Appendix/stochastic}.}

To control the Bellman noise $\epsilon_{k,m}$, we show that for sufficiently many samples $m^*$, $\epsilon_{k,m^*}$ also decays on the order of $\tilde{O}(1/\sqrt{k})$ with high probability. For this, we introduce  \cref{assumption:qest_qhat_error}:
\begin{lemma}[Controlling the Bellman Noise.]\label{assumption:qest_qhat_error} 
For $k\in[n]$, let
\[m^* = 2 |\mathcal{S}_g||\mathcal{A}_g||\mathcal{S}_l||\mathcal{A}_l|k^{3.5+|\mathcal{S}_l||\mathcal{A}_l|}\frac{\log({|\mathcal{S}_g||\mathcal{A}_g||\mathcal{A}_l||\mathcal{S}_l|})} {(1-\gamma)^{5}} \log \frac{1}{(1-\gamma)^{2}}\]
be the number of samples in \cref{equation: empirical_adapted_bellman}. If the number of iterations $T$ satisfies $\smash{T\geq\frac{2}{1-\gamma}\log \frac{\tilde{r}\sqrt{k}}{1-\gamma}}$ then with probability at least $\smash{1-\frac{1}{100e^k}}$, the Bellman error satisfies $\smash{\epsilon_{k,m^*}\leq \tilde{O}(\frac{1}{\sqrt{k}})}$.
\end{lemma}
We defer the proof of \cref{assumption:qest_qhat_error} to \cref{bounding bellman error}. To simplify notation, let ${\pi}_k^\est \coloneq {\pi}_{k,m^*}^\est$. Then, by combining \cref{assumption:qest_qhat_error} and \cref{thm pseduo}, we obtain our main result in \cref{theorem: performance_difference_lemma_applied}:

\begin{theorem} \label{theorem: performance_difference_lemma_applied}Let ${\pi}^\est_k$ denote the learned policy from \emph{\texttt{SUBSAMPLE-MFQ: Execution}} where the number of samples $m$ is determined in \cref{assumption:qest_qhat_error}. Suppose $T\geq\frac{2}{1-\gamma}\log\frac{\tilde{r}\sqrt{k}}{1-\gamma}$. Then, $\forall s_0 \in \mathcal{S}\coloneq \cS_g\times\cS_l^n$, with probability at least $1 - 1/100e^k$, we have \footnote{The $1/100e^k$ term can be replaced to any arbitrary $\delta>0$ at the cost of attaching logarithmic dependencies of $\delta$ on the $4/\sqrt{k}$ term in the error bound.},
\begin{align*}
    V^{\pi^*}(s_0) - V^{{\pi}_{k}^\est}(s_0) \leq \frac{\tilde{r}}{(1-\gamma)^2}\sqrt{\frac{n-k+1}{2nk}\ln\frac{40\tilde{r}|\mathcal{S}_l||\mathcal{A}_l||\mathcal{A}_g|k^{|\mathcal{A}_l|+\frac{1}{2}}}{(1-\gamma)^2}}\!+\!\frac{4}{\sqrt{k}}.
    \end{align*}
\end{theorem}

\begin{remark}One could also derive an alternate bound that retains the $T$ factor via a $\gamma^T$ dependence in the final bound (which shows an exponentially decaying error with the time horizon $T$). Moreover, in \cref{lemma: epsilon_km_is_k}, we show that the query complexity is on the order of $O(mT|S_g||S_l|^k|A_g||A_l|^k)$, and we bound $T$ and set $m = \mathrm{poly}(1/(1-\gamma))$ to attain the $1/\sqrt{k}$ rate. 
\end{remark}

\begin{remark}
    Additionally,  our $\mathrm{poly}(1/(1-\gamma))$-dependence may be loose since we do not use more complicated variance reduction techniques as in \cite{sidford2018near,Sidford2018VarianceRV, wainwright2019variance, jin2024truncated} to optimize the number of samples $m$ which is used to bound the Bellman error $\epsilon_{k,m}$. Moreover, incorporating variance reduction would significantly complicate the algorithm and intuition.\looseness=-1

\end{remark}
 
 Our analysis hinges on two non‐trivial technical steps. Firstly, for an intermediary step in \cref{lemma: Q lipschitz continuous wrt Fsdelta} we establish that TV distance, rather than the stronger KL-divergence, is the correct metric to use (as the KL-divergence between $F_{z_\Delta}$ and $F_{z_{[n]}}$ decays too slowly as $k\to n$, as we show in \cref{lemma: tv_distance_bretagnolle_huber}). This requires exploiting a recent extension of the DKW inequality to sampling without replacement \citep{anand2024efficient}, and showing that the transition dynamics we study saturates the data‐processing inequality for TV-distance. Secondly, we adapt the celebrated performance‐difference lemma \citep{Kakade+Langford:2002} to our multi-agent setting, which entails a principled analysis and careful probabilistic argument (to which we refer the reader to \cref{using pdl}).


\begin{remark} We also extend the formulation of \texttt{SUBSAMPLE-MFQ} to off-policy $Q$-learning \cite{chen2021lyapunov,chen2022sample}, which replaces the generative oracle assumption with a stochastic approximation scheme that learns an approximately optimal policy using historical data. \cref{sec: off-policy learning} provides theoretical guarantees with a similar decaying optimality gap as in \cref{theorem: performance_difference_lemma_applied}. \end{remark}

\begin{remark}\label{disussion: complexity requirement}
The asymptotic sample complexity of \cref{algorithm: approx-dense-tolerable-Q-learning-exp} for learning $\hat{\pi}^\est_{k}$ for a fixed $k$ is $\min\{\tilde{O}(|\mathcal{Z}_l|^k|\cS_g||\cA_g|),\tilde{O}(k^{|\mathcal{Z}_l|}|\cZ_l||\cS_g||\cA_g|)\}$, which is at least polynomially faster the standard $Q$-learning or mean-field value iteration as discussed in \cref{remark: existing methods}. By \cref{theorem: performance_difference_lemma_applied}, as $k\to n$, the optimality gap decays, revealing a fundamental trade-off in the choice of $k$: increasing $k$ improves the performance of the policy, but increases the size of the $Q$-function. We explore this trade-off further in our experiments. If we set $k=O(\log n)$, this leads to a sample complexity of $\min\{\tilde{O}(n^{\log|\mathcal{Z}_l|}|\cS_g||\cA_g|), \tilde{O}((\log n)^{|\mathcal{Z}_l|}|\cS_g||\cA_g|)\}$. This is an exponential speedup  on the complexity from mean-field value iteration (from $\mathrm{poly}(n)$ to $\mathrm{poly}(\log n)$), as well as over traditional value-iteration (from $\mathrm{exp}(n)$ to $\mathrm{poly}(n)$), where the optimality gap decays to $0$ with rate $O(\frac{1}{\sqrt{\log n}})$.
\end{remark}

\begin{remark}
    If $k=O(\log n)$, \texttt{SUBSAMPLE-MFQ} handles $|\mathcal{E}|\leq O(\log n/\log \log n)$ types of local agents, since the run-time of the learning algorithm becomes $\mathrm{poly}(n)$. This surpasses the previous heterogeneity capacity from \citet{10.5555/3586589.3586718}, which only handles constant $|\cE|\leq O(1)$. Increasing the agent heterogeneity using this \emph{type formulation} does make the algorithm somewhat more expensive since it factors into the query complexity via the state space of the local agents; however, since the optimality gap of the learned policy is on the order of $\tilde{O}(1/\sqrt{k})$ (modulo very small $\sqrt{\log |\mathcal{S}_l||\mathcal{A}_l|}$ factors), increasing the amount of agent heterogeneity does not degrade the quality of the learned policy as the theoretical bound does not get worse.  Moreover, recent methods from the graphon mean-field MARL community might be able to enable stronger heterogeneity in the system \citep{cui2022learning,anand2025feelgoodthompsonsamplingcontextual,Hu_Wei_Yan_Zhang_2023}.
\end{remark}
In the non-tabular setting with infinite state/action spaces, one could replace the $Q$-learning algorithm with any arbitrary value-based RL method that learns $\hat{Q}_k$ with function approximation \citep{Sutton_McAllester_Singh_Mansour_1999} such as deep $Q$-networks  \citep{Silver_Huang_Maddison_Guez_Sifre_van_den_Driessche_Schrittwieser_Antonoglou_Panneershelvam_Lanctot_et_al._2016}. Doing so raises an additional error that factors into  \cref{theorem: performance_difference_lemma_applied}. We formalize this below.

\begin{assumption}[Linear MDP with infinite state spaces]  Suppose $\cS_g$ and $\cS_l$ are infinite compact sets. Furthermore, suppose there exists a feature map $\phi:\mathcal{S}\times\mathcal{A}\to\mathbb{R}^d$ and $d$ unknown (signed) measures $\mu = (\mu^1,\dots,\mu^d)$ over $\mathcal{S}$ and a vector $\theta\in\R^d$ such that for any $(s,a)\in \mathcal{S}\times \mathcal{A}$, we have $\mathbb{P}(\cdot|s,a) = \langle \phi(s,a),\mu(\cdot)\rangle$ and $r(s,a) = \langle \phi(s,a),\theta\rangle$.
\end{assumption}

The existence of $\phi:\cS\times\cA\to\R^d$ implies one can estimate the $Q$-function of any policy as a linear function. This assumption is commonly used in policy iteration algorithms \cite{pmlr-v119-lattimore20a,wang2023centralizedtrainingdecentralizedexecution}, and allows one to obtain sample complexity bounds that are independent of $|\cS_l|$ and $|\cA_l|$. Finally, as is standard in RL, we assume bounded feature-norms \citep{tkachuk2023efficientplanningcombinatorialaction}:
\begin{assumption}[Bounded features] We assume that $\|\phi(s,a)\|_2\leq 1$ for all $(s,a)\in\cS\times\cA$.
\end{assumption}
Then, through a reduction from \citet{zhang2023provable,ren2024scalablespectralrepresentationsnetwork} that uses function approximation to learn the spectral features $\phi_k$ for $\hat{Q}_k$, we derive a performance guarantee for the learned policy $\pi_k^\est$, where the optimality gap decays with $k$.
    
\begin{theorem}\label{thm: extended_lfa}When ${\pi}^\est_{k}$ is derived from the spectral features $\phi_k$ learned in $\hat{Q}_k$, and $M$ is the number of samples used in the function approximation, then
\begin{align*}
    &\Pr\bigg[V^{\pi^*}(s) - V^{{\pi}^\est_{k}}(s)\leq\tilde{O}\bigg(\frac{1}{\sqrt{k}}+\frac{\|{\phi}_k\|^5\log 2k^2}{\sqrt{M}}+\frac{2\gamma \tilde{r}\cdot\|\phi_k\|}{(1-\gamma)\sqrt{k}}\bigg)\bigg] \geq 1+\frac{200}{k} - \frac{201}{200\sqrt{k}}
\end{align*}
\end{theorem}

We defer the proof of \cref{thm: extended_lfa} to \cref{Appendix/extension_nontabular}.

\section{Conclusion and Future Works}
This work develops subsampling for mean field MARL in a cooperative system with a global decision-making agent and $n$ homogeneous local agents. We propose \texttt{SUBSAMPLE-MFQ} which learns each agent’s best response to the mean effect from a sample of its neighbors, allowing an exponential reduction on the sample complexity of approximating a solution to the MDP. We provide a theoretical analysis on the optimality gap of the learned policy, showing that (with high probability) the learned policy converges to the optimal policy with the number of agents $k$ sampled at the rate $\tilde{O}(1/\sqrt{k})$, and validate our theoretical results through numerical experiments. We show that the decay rate is maintained, on expectation, when the reward functions are stochastic and when agents learn from a single trajectory on historical data through off-policy $Q$-learning. Finally, we extend this result to the non-tabular setting with infinite state and action spaces under assumptions of a linear MDP model.

\paragraph{Limitations and future work.} Our current work assumes that the global and local agents cooperate to optimize a structured reward under a specific dynamic model. While this model is more general than the federated learning setting, one direction would be to extend our algorithms and analysis to weaker network assumptions. We believe our framework, which can handle dense subgraphs, as well as expander-graph decompositions \citep{10.1145/1391289.1391291,anand2023pseudorandomness,anand2025pseudorandomnessexpanderrandomwalks} may be amenable for this. Secondly, our current work incorporates mild heterogeneity among agents and assumes they are cooperative; thus, another avenue would be to consider settings with competitive agents or more complex agent heterogeneity. Finally, it would be exciting to generalize this work to the online no-regret setting.

\paragraph{Societal Impacts.} This work is theoretical and foundational in nature. As such, while it enables more scalable multi-agent algorithms, it is not tied to any specific applications or deployments.

\paragraph{Acknowledgements.} This work was supported by NSF Grants 2154171, CAREER Award 2339112, CMU CyLab Seed Funding, and CCF 2338816. We gratefully acknowledge insightful discussions with Siva Theja Maguluri, Yiheng Lin, Jan van den Brand, Adam Wierman, Yunbum Kook, Yi Wu, Sarah Liaw, and Rishi Veerapaneni. Finally, we thank the anonymous NeurIPS reviewers for their helpful feedback. \looseness=-1

\newpage

\bibliography{references}
\bibliographystyle{plainnat}

\newpage
\appendix

\section{Mathematical Background and Additional Remarks}\label{sec:appendix-a}
\textbf{Outline of the Appendices}.
\begin{itemize}
    \item \cref{sec: numerical_experiments} presents numerical simulations on the performance of \texttt{SUBSAMPLE-MFQ} on the Gaussian squeeze and constrained exploration tasks. 
    \item \cref{sec: appendix-preliminaries} presents notation and basic lemmas involving the learned $\hat{Q}_k$-function, as well as a stable and practical implementation of \cref{algorithm: approx-dense-tolerable-Q-learning-exp}.
    \item \cref{proof-sketch} presents a proof sketch of our main result in \cref{theorem: performance_difference_lemma_applied}.
    \item \cref{proof of lipschitz bound} presents the proof of the Lipschitz continuity between $\hat{Q}_k$ and $Q^*$
    \item \cref{sec: proof of tv bound} presents our bound on the TV-distance in \cref{thm:tvd}.
    \item \cref{using pdl} proves the bound on the optimality gap between the learned policy $\tilde{\pi}_{k,m}^\est$ and the optimal policy $\pi^*$.
    \item  \cref{Appendix/stochastic} presents an extension of the result to stochastic rewards.
    \item \cref{sec: off-policy learning} presents an extension of the result to off-policy learning that follows by using historical data.
    \item \cref{Appendix/extension_nontabular} presents an extension of the result to continuous state/action spaces under a linear MDP assumption.
    \item \cref{determinism} presents some practical derandomized variants with randomness-sharing.
\end{itemize}
\begin{table}[hbt!]
  \centering
  \caption{Important notations in this paper.}\label{table:notations}
  \begin{tabular}{c|l}
  \hline
    \textbf{Notation} & \textbf{Meaning} \\
 \hline
    $\|\cdot\|_1$ & $\ell_1$ (Manhattan) norm; \\
    $\|\cdot\|_\infty$ & $\ell_\infty$ norm; \\
    $\mathbb{Z}_{+}$ & The set of strictly positive integers; \\
    $\mathbb{R}^d$ & The set of $d$-dimensional reals;\\
    $[m]$ & The set $\{1,\dots,m\}$, where $m\in\mathbb{Z}_+$;  \\
    ${[m]\choose k}$ & The set of $k$-sized subsets of $\{1,\dots,m\}$;\\
    $a_g$ & $a_g\in\mathcal{A}_g$ is the action of the global agent;\\
    $s_g$ & $s_g\in\mathcal{S}_g$ is the state of the global agent;\\
    $a_1,\dots,a_n$ & $a_{1},\dots,a_n\in\mathcal{A}_l^n$ are the actions of the local agents $1,\dots,n$;\\
    $s_1,\dots,s_n$ & $s_{1},\dots,s_n\in\mathcal{S}_l^n$ are the states of the local agents $1,\dots,n$;\\
    $a$ & $a=(a_g,a_1,\dots,a_n)\in\mathcal{A}_g\times\mathcal{A}_l^n$ is the tuple of actions of all agents;\\
    $s$ & $s=(s_g,s_1,\dots,s_n)\in\mathcal{S}_g\times\mathcal{S}_l^n$ is the tuple of states of all agents;\\
    $z_i$ & $z_i = (s_i,a_i) \in \cZ_l$, for $i\in[n]$;\\
    $\mu_k(\cZ_l)$ &  $\mu_k(\cZ_l) = \{0,1/k,2/k,\dots,1\}^{|\cZ_l|}$; \\ 
    $\mu(\cZ_l)$ &  $\mu(\cZ_l) \coloneq \mu_n(\cZ_l) \{0,1/n,2/n,\dots,1\}^{|\cZ_l|}$; \\ 
    $s_\Delta$ & For $\Delta\subseteq [n]$, and a collection of variables $\{s_1,\dots,s_n\}$, $s_\Delta\coloneq\{s_i: i\in\Delta\}$;\\
    $\sigma(z_{\Delta}, z'_{\Delta})$ & Product sigma-algebra generated by sequences $z_{\Delta}$ and $z'_{\Delta}$; \\
    $\pi^*$ & $\pi^*$ is the optimal deterministic policy function such that $a = \pi^*(s)$; \\
    $\hat{\pi}_k^*$ & $\hat{\pi}_k^*$ is the optimal deterministic policy function on a constrained system of \\ & \quad\quad $|\Delta|=k$ local agents; \\
    $\tilde{\pi}^\est_k$ & $\tilde{\pi}^\est_k$ is the stochastic policy map learned with parameter $k$ such that $a\sim\tilde{\pi}_k^\est(s)$;\\ 
    $P_g(\cdot|s_g,a_g)$ & $P_g(\cdot|s_g,a_g)$ is the stochastic transition kernel for the state of the global agent; \\ 
    $P_l(\cdot|a_i,s_i,s_g)$ & $P_l(\cdot|a_i,s_i,s_g)$ is the stochastic transition kernel for the state of local agent $i\in[n];$\\  
    $r_g(s_g,a_g)$ & $r_g$ is the global agent's component of the reward;\\
    $r_l(s_i,s_g,a_i)$ & $r_l$ is the component of the reward for local agent $i\in[n]$;\\
    $r(s,a)$ & $r(s,a) = r_g(s_g,a_g)+\frac{1}{n}\sum_{i\in[n]}r_l(s_i,s_g,a_i)$ is the reward of the system; \\
    $r_\Delta(s,a)$ & $r_\Delta(s,a) = r_g(s_g,a_g)+\frac{1}{|\Delta|}\sum_{i\in\Delta}r_l(s_i,s_g,a_i)$ is the constrained system's reward\\ &\quad\quad with $|\Delta|=k$ local agents;\\
    $\mathcal{T}$ & $\mathcal{T}$ is the centralized Bellman operator;\\
    $\hat{\mathcal{T}}_k$ & $\hat{\mathcal{T}}_k$ is the Bellman operator on a constrained system of $|\Delta|=k$ local agents;\\
    $\Pi^\Theta(y)$ & $\ell_1$ projection of $y$ onto set $\Theta$;\\
    \hline
  \end{tabular}
\end{table}

\begin{definition}[Lipschitz continuity] Given metric spaces $(\mathcal{X}, d_\mathcal{X})$ and $(\mathcal{Y}, d_\mathcal{Y})$ and a constant $L>0$, a map $f:\mathcal{X}\to \mathcal{Y}$ is $L$-Lipschitz continuous if for all $x,y\in \mathcal{X}$, $
    d_\mathcal{Y}(f(x), f(y)) \leq L \cdot  d_\mathcal{X}(x,y)$.
\end{definition} 

\begin{theorem}[Banach-Caccioppoli fixed point theorem \citep{Banach1922}]
Consider the metric space $(\mathcal{X}, d_\mathcal{X})$, and $T: \mathcal{X}\to \mathcal{X}$ such that $T$ is a $\gamma$-Lipschitz continuous mapping for $\gamma \in (0,1)$. Then, by the Banach-Cacciopoli fixed-point theorem, there exists a unique fixed point $x^* \in \mathcal{X}$ for which $T(x^*) = x^*$. Additionally, $x^* = \lim_{s\to\infty} T^s( x_0)$ for any $x_0 \in \mathcal{X}$.
\end{theorem}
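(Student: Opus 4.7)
The plan is to give the classical Picard iteration proof, following the standard template for Banach's theorem. The statement implicitly requires $(\mathcal{X}, d_\mathcal{X})$ to be a complete metric space (otherwise the conclusion fails, e.g.\ on $\mathcal{X}=(0,1)$ with $T(x)=x/2$), so I will assume completeness. First I would fix an arbitrary $x_0\in\mathcal{X}$ and define the Picard iterates $x_{s+1}:=T(x_s)$ for $s\in\mathbb{N}$; the goal of the first half of the proof is to show $\{x_s\}$ converges, and the goal of the second half is to show its limit is the unique fixed point.

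The first key step is to iterate the contraction bound: $d_\mathcal{X}(x_{s+1},x_s) = d_\mathcal{X}(T(x_s),T(x_{s-1})) \leq \gamma\, d_\mathcal{X}(x_s,x_{s-1})$, giving by induction $d_\mathcal{X}(x_{s+1},x_s) \leq \gamma^s\, d_\mathcal{X}(x_1,x_0)$. Then for any $m>s$, the triangle inequality and a geometric series bound yield
\[
d_\mathcal{X}(x_m,x_s) \;\leq\; \sum_{j=s}^{m-1}\gamma^j\, d_\mathcal{X}(x_1,x_0) \;\leq\; \frac{\gamma^s}{1-\gamma}\, d_\mathcal{X}(x_1,x_0),
\]
which shows $\{x_s\}$ is Cauchy in $\mathcal{X}$ (since $\gamma<1$). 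By completeness of $\mathcal{X}$ there exists $x^*\in\mathcal{X}$ with $x_s\to x^*$, which already establishes the ``$x^*=\lim_{s\to\infty}T^s(x_0)$'' clause once we show this limit is a fixed point.

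Next I would verify that $x^*$ is a fixed point. Lipschitz continuity of $T$ (with constant $\gamma$) implies $T$ is continuous, so passing to the limit in $x_{s+1}=T(x_s)$ gives $x^*=T(x^*)$. Alternatively, one can bound $d_\mathcal{X}(T(x^*),x^*)\leq d_\mathcal{X}(T(x^*),T(x_s))+d_\mathcal{X}(x_{s+1},x^*)\leq \gamma\, d_\mathcal{X}(x^*,x_s)+d_\mathcal{X}(x_{s+1},x^*)\to 0$. Finally, for uniqueness, suppose $y^*\in\mathcal{X}$ also satisfies $T(y^*)=y^*$; then $d_\mathcal{X}(x^*,y^*)=d_\mathcal{X}(T(x^*),T(y^*))\leq \gamma\, d_\mathcal{X}(x^*,y^*)$, and since $\gamma<1$ this forces $d_\mathcal{X}(x^*,y^*)=0$, i.e.\ $x^*=y^*$.

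There is no real ``hard part'' here: the proof is entirely routine, and the only subtlety worth flagging is the tacit completeness hypothesis, which should be noted at the start of the proof to keep the statement well-posed. Everything else reduces to the geometric-series Cauchy estimate plus continuity of contractions, both of which are one-line arguments.
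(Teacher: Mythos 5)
Your proof is the standard Picard-iteration argument and is correct in every step: the geometric-series Cauchy estimate, the passage to the limit via continuity of the contraction, and the uniqueness argument are all sound. Note that the paper does not prove this statement at all --- it is quoted as classical background with a citation to Banach's 1922 paper --- so there is no in-paper argument to compare against; your contribution of a full proof is strictly more than what the paper offers. Your observation that completeness of $(\mathcal{X}, d_\mathcal{X})$ is a tacit but necessary hypothesis is well taken: the paper's statement omits it, and your counterexample $\mathcal{X}=(0,1)$, $T(x)=x/2$ correctly shows the conclusion fails without it; in the paper's actual applications the relevant spaces are spaces of bounded real-valued functions on finite sets under the sup-norm, which are complete, so the omission is harmless there but should indeed be flagged as you do.
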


\section{Numerical Experiments}
\label{sec: numerical_experiments}
This section provides numerical simulations for the examples outlined in Section 2. All experiments were run on a 2-core CPU server with 12GB RAM. We chose a parameter complexity for each simulation that was sufficient to emphasize
characteristics of the theory, such as the complexity improvement and the decaying optimality gap.\footnote{We provide supporting code for the algorithm and experiments in \url{https://github.com/emiletimothy/Mean-Field-Subsample-Q-Learning}} 

\begin{figure*}[hbt!]
\includegraphics[width=0.33\linewidth]{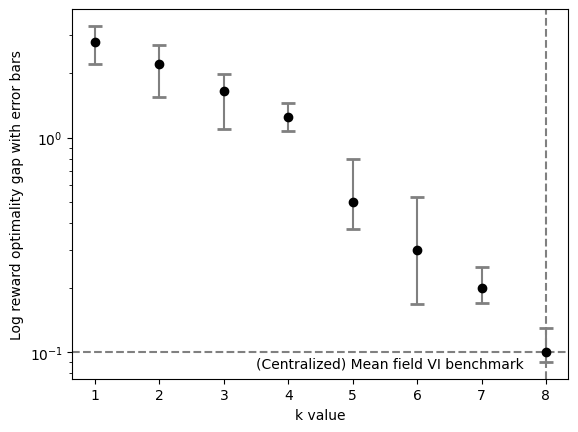}
\includegraphics[width=0.33\linewidth]{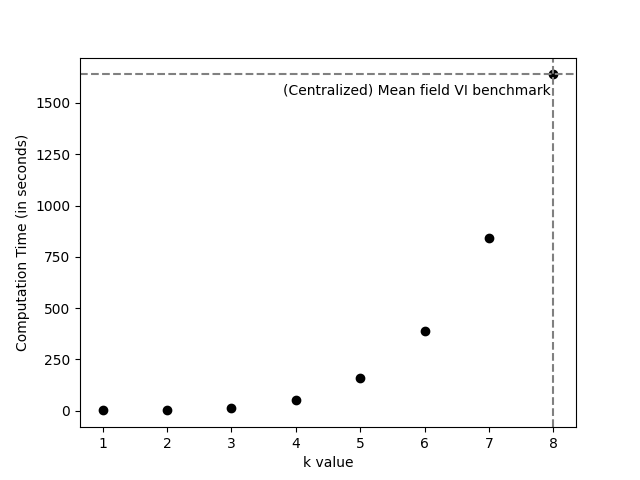}
\includegraphics[width=0.33\linewidth]{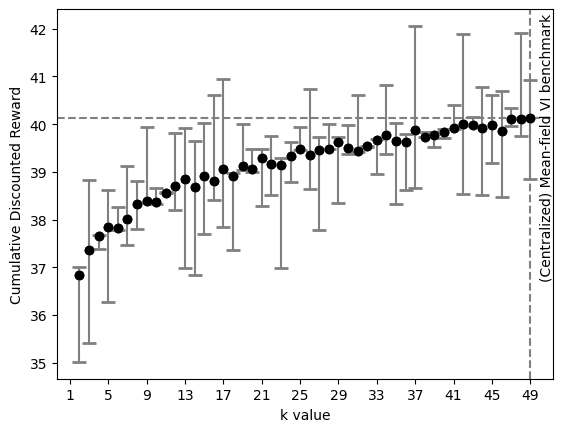}
    \caption{a) Reward optimality gap (log scale) with ${\pi}_{k,m}^\est$ running $300$ iterations. b) Computation time (in minutes) against sampling parameter $k$, for $k\leq n=8$, to learn policy $\hat{\pi}_{k,m}^\est$. c) Discounted cumulative rewards for $k\leq n=50$.}
    \label{exploration}
\end{figure*}

\subsection{Constrained Exploration} 
Let $\mathcal{S}_g\!=\!\mathcal{S}_l\!=\![6]^2$ where for $s_a\in\{\cS_g,\cS_l\}, s_a^{(i)}$ denotes the projection of $s_a$ onto its $i$'th coordinate. Further, let $\mathcal{A}_l = \mathcal{A}_g = \{\text{``up", ``down", ``right", ``left"}\}$ given formally by $\{\binom{1}{0}, \binom{-1}{0},\binom{0}{1},\binom{0}{-1}\}$. Let $\Pi_{D}(x)$ denote the $\ell_1$-projection of $x$ onto set $D$. Then, let $s_g^{t+1}(s_g^t,a_g^t) = \Pi_{S_g}(s_g^t + a_g^t),s_i^{t+1}(s_i^t,s_g^t,a_i^t) = \Pi_{S_l}(s_i^t + |s_i^t - s_g^t| + a_i^t)$. We let the global agent's reward be $r_g(s_g,a_g) = 2\sum_{i=1}^3\mathbbm{1}\{a_g = {1\choose 0}, s_g^{(1)}\leq i\}+2\sum_{i=3}^6\mathbbm{1}\{a_g = {-1\choose 0}, s_g^{(1)}>i\}$, and the local agent's rewards be $r_l(s_i,s_g,a_i) = \mathbbm{1}\{a_i^t\neq 0\} + 6 \cdot\mathbbm{1}\{s_i^{(1)}=s_g^{(1)}\} + 2\cdot \|s_i^t - s_g^t\|_1$. 

Intuitively, the reward function is designed to force the global agent to oscillate vertically in the grid, while forcing the local agents to oscillate horizontally around the global agent, thereby simulating a constrained exploration. This model has been studied previously in \cite{lin2023online2}.

For this task, we ran a simulation with $n=8$ agents, with $m=20$ samples in the empirically adapted Bellman operator. We provide simulation results in \cref{exploration}a. We observe monotonic improvements in the cumulative discounted rewards as $k\to n$. Since $k=n$ recovers value-iteration and mean-field MARL algorithms, the reward at $k=n$ is the baseline we compare our algorithm to. When $k<n$, we observe that the reward accrued by \texttt{SUBSAMPLE-MFQ} is only marginally less than the reward gained by value-iteration.

\subsection{Gaussian Squeeze (GS)}
    In this task, $n$ homogeneous agents determine their individual action $a_i$ to jointly maximize the objective $r(x) = xe^{-(x-\mu)^2/\sigma^2}$, where $x = \sum_{i=1}^n a_i$, $a_i \in \{0,\dots,9\}$, and $\mu$ and $\sigma$ are the pre-defined mean and variance of the system. We consider a homogeneous system \emph{devoid} of a global agent to match the mean-field setting in \citet{pmlr-v80-yang18d}, where the richness of our model setting can still express GS. We use the global agent to model the state of the system, given by the number of vehicles in each controller's lane. 
    
    We set $\mathcal{S}_l=[4]$, $\cA_l=\{0,\dots,4\}$, and $\cS_g=\cS_l$ such that $s_g=\ceil{\frac{1}{n}\sum_{i=1}^n s_i}$, and $\cA_g=\{0\}$. The transition functions are given by $s_i(t+1) = s_i(t) - 1\cdot\mathbbm{1}\{s_i(t)>s_g(t)\} + \mathrm{Ber}(p)$ and $s_g(t+1)=\ceil{\frac{1}{n}\sum_{i=1}^n s_i(t+1)}$, where $\mathrm{Ber}(p)$ is a Bernoulli random variable with parameter $p>0$. Finally, the global agent's reward function is given by $r_g(s_g,a_g) = -s_g$ and the local agent's reward function is given by $r_l(s_i,s_g,a_i) = s_i \cdot e^{-(s_i - s_g)^2/4} - \max\{s_i,a_i\}$. 
    
    For this task, we ran a small-scale simulation with $n=8$ agents, and a large-scale simulation with $n=50$ agents, and used $m=20$ samples in the empirical Bellman operator. We provide simulation results in \cref{exploration}b and \cref{exploration}c, where \cref{exploration}b demonstrates the exponential improvement in computational complexity of \texttt{SUBSAMPLE-MFQ}, and \cref{exploration}c demonstrates a monotonic improvement in the cumulative rewards,  consistent with \cref{theorem: performance_difference_lemma_applied}. Here, both metrics outperform the mean-field value iteration benchmark.

\section{Notation and Basic Lemmas}
\label{sec: appendix-preliminaries}
For convenience, we restate below the various Bellman operators under consideration.

\begin{definition}[Bellman Operator $\mathcal{T}$]\label{defn:bellman}
\begin{equation}
\mathcal{T}Q^t(s,a) := r(s,a) + \gamma\E_{\substack{s_g'\sim P_g(\cdot|s_g,a_g),\\ s_i'\sim P_l(\cdot|s_i,s_g,a_i),\forall i\in[n]}} \max_{a'\in\mathcal{A}} Q^t(s',a')
\end{equation}
\end{definition}

\begin{definition}[Adapted Bellman Operator $\hat{\mathcal{T}}_k$]\label{defn:adapted bellman} The adapted Bellman operator updates a smaller $Q$ function (which we denote by $\hat{Q}_k$), for a surrogate system with the global agent and $k\in[n]$ local agents denoted by $\Delta$, using mean-field value iteration and $j\in\Delta$ such that:
\begin{equation}
\hat{\mathcal{T}}_k\hat{Q}_k^t(s_g,s_j, F_{z_{\Delta\setminus j}},a_j,a_g):= r_\Delta(s,a) + \gamma \E_{\substack{s_g'\sim P_g(\cdot|s_g,a_g), \\ s_i'\sim P_l(\cdot|s_i,s_g,a_i),\forall i\in\Delta}} \max_{a'\in\mathcal{A}} \hat{Q}_k^t(s_g',s_j', F_{z'_{\Delta\setminus j}},a'_j,a'_g)
\end{equation}
\end{definition}
\begin{definition}[Empirical Adapted Bellman Operator $\hat{\mathcal{T}}_{k,m}$] \label{defn:empirical adapted bellman}The empirical adapted Bellman operator $\hat{\mathcal{T}}_{k,m}$ empirically estimates the adapted Bellman operator update using mean-field value iteration by drawing $m$ random samples of $s_g\sim P_g(\cdot|s_g,a_g)$ and $s_i\sim P_l(\cdot|s_i,s_g,a_i)$ for $i\in\Delta$, where for $\ell\in[m]$, the $\ell$'th random sample is given by $s_g^\ell$ and $s_\Delta^\ell$, and $j\in\Delta$:
\begin{equation}
\hat{\mathcal{T}}_{k,m}\hat{Q}_{k,m}^t(s_g,s_j,F_{z_{\Delta\setminus j}},a_j,a_g):= r_\Delta(s,a) + \frac{\gamma}{m} \sum_{\ell \in [m]}\max_{a'\in\mathcal{A}} \hat{Q}_{k,m}^t(s_g^\ell,s_j^\ell, F_{z_{\Delta\setminus j}^\ell},a_j^\ell,a_g^\ell)
\end{equation}
\end{definition}

\begin{lemma}
\label{lemma: Q-bound}
    \emph{For any $\Delta\subseteq[n]$ such that $|\Delta|=k$, suppose $0\leq r_\Delta(s,a)\leq \tilde{r}$. Then, for all $t\in\mathbb{N}$, $\hat{Q}_k^t \leq \frac{\tilde{r}}{1-\gamma}$.}
\end{lemma}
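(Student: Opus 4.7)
The plan is to prove this by a straightforward induction on $t$, leveraging the geometric-series structure that $\gamma \in (0,1)$ provides.

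For the base case $t=0$, the algorithm initializes $\hat{Q}_k^0 \equiv 0$, which trivially satisfies $\hat{Q}_k^0 \leq \tilde{r}/(1-\gamma)$ since $\tilde{r} \geq 0$.

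For the inductive step, I would assume $\hat{Q}_k^t(s_g, s_\Delta, a_g, a_\Delta) \leq \tilde{r}/(1-\gamma)$ uniformly over the arguments, and then apply the adapted Bellman operator from \cref{defn:adapted bellman}. Using the hypothesis $r_\Delta(s,a) \leq \tilde{r}$ and the fact that the inductive bound on $\hat{Q}_k^t$ passes through both the $\max$ (a pointwise bound is preserved by taking maxima) and the expectation (a pointwise bound is preserved by averaging), I would compute
\begin{align*}
\hat{Q}_k^{t+1}(s_g, s_\Delta, a_g, a_\Delta) &= r_\Delta(s,a) + \gamma \,\mathbb{E}\!\left[\max_{a'} \hat{Q}_k^t(s', a')\right] \\
&\leq \tilde{r} + \gamma \cdot \frac{\tilde{r}}{1-\gamma} = \frac{\tilde{r}(1-\gamma) + \gamma \tilde{r}}{1-\gamma} = \frac{\tilde{r}}{1-\gamma},
\end{align*}
which closes the induction.

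There is no real obstacle here: the lemma is essentially the standard observation that if the one-step reward is bounded by $\tilde{r}$, then the discounted value is bounded by the geometric sum $\sum_{i=0}^\infty \gamma^i \tilde{r} = \tilde{r}/(1-\gamma)$, and the induction makes this rigorous for the value-iteration iterates. The only thing to be slightly careful about is that the same argument should be written for both forms of the $\hat{Q}_k^t$ function (the one indexed by $s_\Delta, a_\Delta$ and the mean-field form indexed by $F_{z_{\Delta \setminus j}}$), but since the two are identified via the remark that $\hat{Q}_k$ depends on the local agents only through their empirical distribution, a single argument suffices.
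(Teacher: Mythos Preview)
Your proposal is correct and matches the paper's proof essentially verbatim: the paper also inducts on $t$, uses $\hat{Q}_k^0 = 0$ for the base case, and for the inductive step writes $\|\hat{Q}_k^{t+1}\|_\infty \leq \|r_\Delta\|_\infty + \gamma\|\hat{Q}_k^t\|_\infty \leq \tilde{r} + \gamma\tilde{r}/(1-\gamma) = \tilde{r}/(1-\gamma)$.
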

\begin{proof}
    The proof follows by induction on $t$. The base case follows from $\hat{Q}_k^0:=0$. For the induction, note that by the triangle inequality $\|\hat{Q}_k^{t+1}\|_\infty\leq\|r_\Delta\|_\infty + \gamma \|\hat{Q}_k^t\|_\infty \leq \tilde{r} + \gamma \frac{\tilde{r}}{1-\gamma} = \frac{\tilde{r}}{1-\gamma}$.
\end{proof}
\begin{remark}
\label{remark: comparing bellman variants} By the law of large numbers, $\lim_{m\to\infty}\hat{\mathcal{T}}_{k,m} = \hat{\mathcal{T}}_k$, where the error decays in $O(1/\sqrt{m})$ by the Chernoff bound. 
Also, $\hat{\mathcal{T}}_n \coloneq \mathcal{T}$. Further, \cref{lemma: Q-bound} is independent of the choice of $k$. Therefore, for $k=n$, this implies an identical bound on $Q^t$. An identical argument implies the same bound on $\hat{Q}_{k,m}^t$.
\end{remark}

$\mathcal{T}$ satisfies a $\gamma$-contractive property under the infinity norm \citep{Watkins_Dayan_1992}. We similarly show that $\hat{\mathcal{T}}_k$ and $\hat{\mathcal{T}}_{k,m}$ satisfy a $\gamma$-contractive property under infinity norm in \cref{lemma: gamma-contraction of adapted Bellman operator,lemma: gamma-contraction of empirical adapted Bellman operator}.

\begin{lemma}\label{lemma: gamma-contraction of adapted Bellman operator}
    $\hat{\mathcal{T}}_k$ satisfies the $\gamma$-contractive property under infinity norm:
    \begin{equation}\|\hat{\mathcal{T}}_k\hat{Q}_k' - \hat{\mathcal{T}}_k\hat{Q}_k\|_\infty \leq \gamma \|\hat{Q}_k' - \hat{Q}_k\|_\infty\end{equation}
\end{lemma}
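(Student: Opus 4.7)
The plan is to mimic the standard argument that the ordinary Bellman operator is a $\gamma$-contraction, adapted to the adapted operator $\hat{\mathcal{T}}_k$. Fix any tuple $(s_g, s_\Delta, a_g, a_\Delta) \in \mathcal{S}_g \times \mathcal{S}_l^k \times \mathcal{A}_g \times \mathcal{A}_l^k$. Writing out $\hat{\mathcal{T}}_k \hat{Q}_k'$ and $\hat{\mathcal{T}}_k \hat{Q}_k$ using \cref{defn:adapted bellman}, the reward term $r_\Delta(s,a)$ appears in both expressions and cancels, so the difference equals
\[
\gamma \cdot \mathbb{E}_{s_g', s_\Delta'}\Bigl[\max_{a_g', a_\Delta'} \hat{Q}_k'(s_g', s_\Delta', a_g', a_\Delta') - \max_{a_g', a_\Delta'} \hat{Q}_k(s_g', s_\Delta', a_g', a_\Delta')\Bigr],
\]
where the expectation is over $s_g' \sim P_g(\cdot|s_g,a_g)$ and $s_i' \sim P_l(\cdot|s_i,s_g,a_i)$ for $i \in \Delta$.

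Next I would invoke the elementary inequality $\bigl|\max_x f(x) - \max_x g(x)\bigr| \leq \max_x |f(x) - g(x)|$, applied pointwise in $(s_g', s_\Delta')$ over the joint action space $\mathcal{A}_g \times \mathcal{A}_l^k$. This gives
\[
\bigl|\hat{\mathcal{T}}_k \hat{Q}_k'(s_g,s_\Delta,a_g,a_\Delta) - \hat{\mathcal{T}}_k \hat{Q}_k(s_g,s_\Delta,a_g,a_\Delta)\bigr| \leq \gamma \cdot \mathbb{E}_{s_g', s_\Delta'}\Bigl[\max_{a_g', a_\Delta'} \bigl|\hat{Q}_k'(s_g', s_\Delta', a_g', a_\Delta') - \hat{Q}_k(s_g', s_\Delta', a_g', a_\Delta')\bigr|\Bigr].
\]
Since the integrand is bounded pointwise by $\|\hat{Q}_k' - \hat{Q}_k\|_\infty$, the expectation is bounded by the same constant, and taking the supremum over $(s_g, s_\Delta, a_g, a_\Delta)$ on the left-hand side yields $\|\hat{\mathcal{T}}_k \hat{Q}_k' - \hat{\mathcal{T}}_k \hat{Q}_k\|_\infty \leq \gamma \|\hat{Q}_k' - \hat{Q}_k\|_\infty$.

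There is no real obstacle here: the argument is the textbook contraction proof, and the only thing to verify is that the reward and transition kernels in $\hat{\mathcal{T}}_k$ do not depend on the $Q$-function being operated on, which is immediate from \cref{defn:adapted bellman}. The analogous statement for $\hat{\mathcal{T}}_{k,m}$ (\cref{lemma: gamma-contraction of empirical adapted Bellman operator}) will follow by the same argument after replacing the expectation over $(s_g', s_\Delta')$ with the finite average $\frac{1}{m}\sum_{\ell \in [m]}$ over the $m$ sampled successor tuples, since averaging preserves the pointwise bound.
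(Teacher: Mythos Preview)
Your proof is correct and takes essentially the same approach as the paper: cancel the common reward term, apply the elementary inequality $|\max_x f - \max_x g| \le \max_x |f - g|$ inside the expectation, and then bound by the sup norm. Your exposition is in fact slightly more explicit than the paper's, which compresses these steps into a single display and attributes them collectively to ``Jensen's inequality.''
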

\begin{proof} Suppose we apply $\hat{\mathcal{T}}_k$ to $\hat{Q}_k(s_g,F_{z_\Delta}, a_g)$ and $\hat{Q}'_k(s_g, F_{z_\Delta}, a_g)$ for $|\Delta|=k$. Then:
\begin{align*}
&\|\hat{\mathcal{T}}_k\hat{Q}_k' - \hat{\mathcal{T}}_k\hat{Q}_k\|_\infty \\
&= \gamma \max_{\substack{s_g\in \mathcal{S}_g,\\ a_g\in\mathcal{A}_g,\\ F_{z_\Delta} \in \mu_k{(\mathcal{Z}_l)}}}\!\left| \mathbb{E}_{\substack{s_g'\sim P_g(\cdot|s_g,a_g),\\ s_i'\sim P_l(\cdot|s_i, s_g,a_i),\\ \forall i\in \Delta,\\ }}\max_{a'\in\mathcal{A}}\hat{Q}_k'(s_g', F_{z_\Delta'}, a_g')- \mathbb{E}_{\substack{s_g'\sim P_g(\cdot|s_g,a_g),\\ s_i'\sim P_l(\cdot|s_i, s_g, a_i),\\\forall i\in \Delta'
}}\max_{a'\in\mathcal{A}}\hat{Q}_k(s_g', F_{z_\Delta'}, a_g')\right|\\
    &\leq \gamma  \max_{\substack{s_g' \in \mathcal{S}_g, F_{z_\Delta} \in \mu_k{(\mathcal{Z}_l)}, a'\in\mathcal{A}
        }}\left| \hat{Q}_k'(s_g', F_{z_\Delta'}, a_g') -  \hat{Q}_k(s_g', F_{z_\Delta'}, a_g')\right| = \gamma \|\hat{Q}_k' - \hat{Q}_k\|_\infty
    \end{align*}
The equality cancels common $r_\Delta(s, a)$ terms in each operator. The second line uses Jensen's inequality, maximizes over actions, and bounds expected values with the maximizers of the random variables.\qedhere
\end{proof}

\begin{lemma}
$\hat{\mathcal{T}}_{k,m}$ satisfies the $\gamma$-contractive property under infinity norm.\label{lemma: gamma-contraction of empirical adapted Bellman operator}
\end{lemma}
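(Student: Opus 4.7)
The plan is to mimic the proof of Lemma~\ref{lemma: gamma-contraction of adapted Bellman operator} almost verbatim, replacing the step that uses Jensen's inequality on the expectation with a trivial bound on the finite empirical average of $m$ sampled maxima. Crucially, the $\gamma$-contraction property is a \emph{deterministic} statement: it holds for every fixed realization of the $m$ samples $\{(s_g^\ell, s_\Delta^\ell)\}_{\ell\in[m]}$ drawn inside the operator, which makes the argument purely algebraic and avoids invoking concentration.

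First I would fix an arbitrary pair $\hat{Q}_{k,m}, \hat{Q}'_{k,m}$ and an arbitrary input tuple $(s_g,s_j,F_{z_{\Delta\setminus j}},a_j,a_g)$, and write out the difference $\hat{\mathcal{T}}_{k,m}\hat{Q}'_{k,m} - \hat{\mathcal{T}}_{k,m}\hat{Q}_{k,m}$ using Definition~\ref{defn:empirical adapted bellman}. The surrogate reward term $r_\Delta(s,a)$ cancels identically, leaving
\[
\frac{\gamma}{m}\sum_{\ell \in [m]}\Bigl(\max_{a'\in\mathcal{A}} \hat{Q}'_{k,m}(s_g^\ell,s_j^\ell, F_{z_{\Delta\setminus j}^\ell},a_j',a_g') - \max_{a'\in\mathcal{A}} \hat{Q}_{k,m}(s_g^\ell,s_j^\ell, F_{z_{\Delta\setminus j}^\ell},a_j',a_g')\Bigr).
\]
Taking absolute values and applying the triangle inequality to the sum, I would then use the elementary bound $|\max_x f(x) - \max_x g(x)| \leq \max_x |f(x) - g(x)|$ termwise. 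Each of the $m$ resulting terms is bounded above by $\|\hat{Q}'_{k,m}-\hat{Q}_{k,m}\|_\infty$, so the averaged sum is bounded by the same quantity, and multiplying by $\gamma$ yields the desired bound.

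Finally, I would take the supremum over the input tuple $(s_g,s_j,F_{z_{\Delta\setminus j}},a_j,a_g)$ on the left-hand side — since the right-hand side $\gamma\|\hat{Q}'_{k,m}-\hat{Q}_{k,m}\|_\infty$ does not depend on the input — to conclude
\[
\|\hat{\mathcal{T}}_{k,m}\hat{Q}'_{k,m} - \hat{\mathcal{T}}_{k,m}\hat{Q}_{k,m}\|_\infty \leq \gamma\|\hat{Q}'_{k,m}-\hat{Q}_{k,m}\|_\infty.
\]
There is really no hard step here; the only subtlety is to emphasize that the $m$ sampled next-state tuples are treated as \emph{fixed} (so the operator is a well-defined deterministic map once those samples are drawn), which is exactly why we get a $\gamma$-contraction for every sample realization rather than only in expectation. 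Combined with the Banach fixed-point theorem, this justifies the existence and uniqueness of the fixed point $\hat{Q}^{\est}_{k,m}$ used throughout Section~3.
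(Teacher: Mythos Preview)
Your proposal is correct and follows essentially the same route as the paper: cancel the common reward term, apply the triangle inequality to the empirical sum, use the bound $|\max_x f(x)-\max_x g(x)|\leq\max_x|f(x)-g(x)|$ termwise, and bound each of the $m$ terms by $\|\hat{Q}'_{k,m}-\hat{Q}_{k,m}\|_\infty$. Your explicit remark that the contraction holds deterministically for every fixed realization of the $m$ samples is a helpful clarification that the paper leaves implicit.
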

\begin{proof}
Similarly to \cref{lemma: gamma-contraction of adapted Bellman operator}, suppose we apply $\hat{\mathcal{T}}_{k,m}$ to $\hat{Q}_{k,m}(s_g,F_{z_\Delta},a_g)$ and $\hat{Q}_{k,m}'(s_g,F_{z_\Delta},a_g)$. Then:
\begin{align*}    \|\hat{\mathcal{T}}_{k,m}\hat{Q}_k - \hat{\mathcal{T}}_{k,m}\hat{Q}'_k\|_\infty &= \frac{\gamma}{m}\left\|\sum_{\ell\in [m]} \left(\max_{a'\in\mathcal{A}} \hat{Q}_k(s_g^\ell,F_{z_\Delta^\ell},a_g') -  \max_{a'\in\mathcal{A}} \hat{Q}'_k(s_g^\ell,F_{z_\Delta^\ell},a_g')\right)\right\|_\infty \\
    &\leq \gamma \max_{\substack{a_g'\in\mathcal{A}_g, s_g' \in\mathcal{S}_g, z_\Delta\in\mathcal{Z}_l^k}}|\hat{Q}_k(s_g', F_{z_\Delta'}, a_g') - \hat{Q}'_k(s_g', F_{z_\Delta'}, a_g')| \\
    &= \gamma \|\hat{Q}_k - \hat{Q}'_k\|_\infty
\end{align*} 
 The first inequality uses the triangle inequality and the general property $|\max_{a\in A}f(a) - \max_{b\in A}f(b)| \leq \max_{c\in A}|f(a) - f(b)|$. The last line recovers the definition of infinity norm.\qedhere
\end{proof}

\begin{remark}\label{remark: gamma-contractive of ad_T}The $\gamma$-contractivity of $\hat{\mathcal{T}}_k$ and $\hat{\mathcal{T}}_{k,m}$ attracts the trajectory between two $\hat{Q}_k$ and $\hat{Q}_{k,m}$ functions on the same state-action tuple by $\gamma$ at each step. Repeatedly applying the Bellman operators produces a unique fixed-point from the Banach fixed-point theorem which we introduce in \cref{defn:qkstar,defn:qkmest}.\end{remark}

\begin{definition}[$\hat{Q}_k^*$-function]\label{defn:qkstar}Suppose $\hat{Q}_k^0:=0$ and let $\hat{Q}_k^{t+1}(s_g,F_{z_\Delta},a_g) = \hat{\mathcal{T}}_k \hat{Q}_k^t(s_g,F_{z_\Delta},a_g)$
    for $t\in\N$. Denote the fixed-point of $\hat{\mathcal{T}}_k$ by $\hat{Q}^*_k$ such that $\hat{\mathcal{T}}_k \hat{Q}^*_k(s_g,F_{z_\Delta},a_g) = \hat{Q}^*_k(s_g,F_{z_\Delta},a_g)$.
\end{definition}

\begin{definition}[$\hat{Q}_{k,m}^\est$-function]\label{defn:qkmest}Let $\hat{Q}_{k,m}^0:=0$ and $\hat{Q}_{k,m}^{t+1}(s_g,F_{z_\Delta},a_g) = \hat{\mathcal{T}}_{k,m} \hat{Q}_{k,m}^t(s_g,F_{z_\Delta},a_g)$
    for $t\in\N$. Then, the Banach-Cacciopoli fixed-point of the adapted Bellman operator $\hat{\mathcal{T}}_{k,m}$ is $\hat{Q}^\est_{k,m}$ such that $\hat{\mathcal{T}}_{k,m} \hat{Q}^\est_{k,m}(s_g,F_{z_\Delta},a_g) = \hat{Q}^\est_{k,m}(s_g,F_{z_\Delta},a_g)$.
\end{definition}

\begin{corollary}\label{corollary:backprop}
    \emph{Observe that by recursively using the $\gamma$-contractive property for $T$  time steps: }\begin{equation}\|\hat{{Q}}_k^* - \hat{{Q}}_k^T\|_\infty \leq \gamma^T \cdot \|\hat{Q}_k^* - \hat{Q}_k^0\|_\infty\end{equation}
    \begin{equation}
    \|\hat{{Q}}_{k,m}^\est - \hat{{Q}}_{k,m}^T\|_\infty \leq \gamma^T \cdot \|\hat{Q}_{k,m}^\est - \hat{Q}^0_{k,m}\|_\infty
    \end{equation}
    
Further, noting that $\hat{Q}_k^0 = \hat{Q}_{k,m}^0 := 0$, $\|\hat{Q}_k^*\|_\infty \leq \frac{\tilde{r}}{1-\gamma}$, and $\|\hat{Q}_{k,m}^\est\|_\infty \leq \frac{\tilde{r}}{1-\gamma}$ from \cref{lemma: Q-bound}:
\begin{equation}\label{eqn: qstar,qt decay}\|\hat{Q}_k^* - \hat{Q}_k^T\|_\infty \leq \gamma^T \frac{\tilde{r}}{1-\gamma},\end{equation}
\begin{equation}\|\hat{Q}_{k,m}^\est - \hat{Q}_{k,m}^T\|_\infty \leq \gamma^T \frac{\tilde{r}}{1-\gamma}
\end{equation}
\end{corollary}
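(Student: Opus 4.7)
The proof will be a direct unrolling of the $\gamma$-contraction property already established in \cref{lemma: gamma-contraction of adapted Bellman operator,lemma: gamma-contraction of empirical adapted Bellman operator}, combined with the uniform bound on the $Q$-functions from \cref{lemma: Q-bound}. The plan is to handle $\hat{Q}_k^*$ and $\hat{Q}_{k,m}^\est$ with the same inductive template, since both operators $\hat{\mathcal{T}}_k$ and $\hat{\mathcal{T}}_{k,m}$ satisfy the same $\gamma$-contractive inequality and both have been shown to admit a unique fixed point (per \cref{remark: gamma-contractive of ad_T} and Definitions \ref{defn:qkstar}--\ref{defn:qkmest}).

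The key step is the following one-step bound: because $\hat{Q}_k^*$ is the fixed point of $\hat{\mathcal{T}}_k$ (so $\hat{Q}_k^* = \hat{\mathcal{T}}_k \hat{Q}_k^*$) and $\hat{Q}_k^{t} = \hat{\mathcal{T}}_k \hat{Q}_k^{t-1}$ by construction, \cref{lemma: gamma-contraction of adapted Bellman operator} yields
\begin{equation*}
\|\hat{Q}_k^* - \hat{Q}_k^{t}\|_\infty = \|\hat{\mathcal{T}}_k \hat{Q}_k^* - \hat{\mathcal{T}}_k \hat{Q}_k^{t-1}\|_\infty \leq \gamma \|\hat{Q}_k^* - \hat{Q}_k^{t-1}\|_\infty.
\end{equation*}
Iterating this inequality $T$ times gives $\|\hat{Q}_k^* - \hat{Q}_k^T\|_\infty \leq \gamma^T \|\hat{Q}_k^* - \hat{Q}_k^0\|_\infty$, which is the first claim. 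An identical induction with $\hat{\mathcal{T}}_{k,m}$ in place of $\hat{\mathcal{T}}_k$ (appealing to \cref{lemma: gamma-contraction of empirical adapted Bellman operator} at each step) gives the analogous bound for $\hat{Q}_{k,m}^\est - \hat{Q}_{k,m}^T$.

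For the second pair of inequalities I would substitute the initialization $\hat{Q}_k^0 = \hat{Q}_{k,m}^0 = 0$ and use the triangle inequality together with the uniform bound $\|\hat{Q}_k^*\|_\infty, \|\hat{Q}_{k,m}^\est\|_\infty \leq \tilde{r}/(1-\gamma)$ from \cref{lemma: Q-bound} (which applies to the fixed points by passing to the limit $t \to \infty$, or equivalently because the bound is preserved under the contractive update). Concretely, $\|\hat{Q}_k^* - \hat{Q}_k^0\|_\infty = \|\hat{Q}_k^*\|_\infty \leq \tilde{r}/(1-\gamma)$, and plugging into the first display gives the stated $\gamma^T \tilde{r}/(1-\gamma)$ bound. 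The empirical case is identical.

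There is essentially no obstacle here: the statement is a routine consequence of Banach's fixed-point theorem applied to the two $\gamma$-contractive Bellman operators already constructed. The only place a reader might want care is confirming that the uniform bound $\tilde{r}/(1-\gamma)$, proved in \cref{lemma: Q-bound} for iterates $\hat{Q}_k^t$, transfers to the fixed point $\hat{Q}_k^*$; this is immediate since $\hat{Q}_k^t \to \hat{Q}_k^*$ in $\|\cdot\|_\infty$ by the contraction, so the bound is inherited in the limit (and analogously for $\hat{Q}_{k,m}^\est$).
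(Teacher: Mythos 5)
Your proposal is correct and follows exactly the route the paper intends: apply the contraction lemmas once via the fixed-point identity $\hat{Q}_k^* = \hat{\mathcal{T}}_k\hat{Q}_k^*$, iterate $T$ times, then substitute the zero initialization and the $\tilde{r}/(1-\gamma)$ bound from \cref{lemma: Q-bound}. Your added remark that the uniform bound transfers from the iterates to the fixed point by passing to the limit in $\|\cdot\|_\infty$ is a small point the paper glosses over, and it is handled correctly.
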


\begin{remark}
\cref{corollary:backprop} characterizes the error decay between $\hat{Q}_k^T$ and $\hat{Q}_k^*$ and shows that it decays exponentially in the number of Bellman iterations by a $\gamma^T$ multiplicative factor.
\end{remark}

Furthermore, we characterize the maximal policies greedy policies obtained from $Q^*, \hat{Q}_k^*$, and $\hat{Q}_{k,m}^\est$.

\begin{definition}[Optimal policy $\pi^*$] The greedy policy derived from $Q^*$ is \begin{equation}\pi^*(s) := \arg\max_{a\in\mathcal{A}} Q^*(s,a).\end{equation}
\end{definition}
\begin{definition}[Optimal subsampled policy $\hat{\pi}_k^*$]
    The greedy policy from $\hat{Q}_k^*$ is
\begin{equation}\hat{\pi}_k^*(s_g,s_i,F_{s_{\Delta\setminus i}}) \coloneq \mathop{\argmax}_{(a_g,a_i,F_{a_{\Delta\setminus i}})\in\mathcal{A}_g\times\mathcal{A}_l\times \mu_{k-1}{(\mathcal{A}_l)}} \hat{Q}_k^*(s_g,s_i,F_{z_{\Delta\setminus i}},a_i,a_g).\end{equation}
\end{definition}
\begin{definition}[Optimal empirically subsampled policy $\hat{\pi}_{k,m}^\est$]
    The greedy policy from $\hat{Q}_{k,m}^\est$ is given by \begin{equation}\hat{\pi}_{k,m}^\est(s_g,F_{s_\Delta}):=\mathop{\argmax}_{(a_g,a_i,F_{a_{\Delta\setminus i}})\in\mathcal{A}_g\times\mathcal{A}_l\times \mu_{k-1}{(\mathcal{A}_l)}}\hat{Q}_{k,m}^\est(s_g,s_i,F_{z_{\Delta\setminus i}},a_i,a_g).\end{equation}
\end{definition}

Figure \ref{figure: algorithm/analysis flow} details the analytic flow on how we use the empirical adapted Bellman operator to perform value iteration on $\hat{Q}_{k,m}$ to get $\hat{Q}_{k,m}^\est$ which approximates $Q^*$.

\cref{algorithm: approx-dense-tolerable-Q-learning-stable} gives a stable implementation of \cref{algorithm: approx-dense-tolerable-Q-learning-exp} with learning rates $\{\eta_t\}_{t\in[T]}$. \cref{algorithm: approx-dense-tolerable-Q-learning-stable} is provably numerical stable under fixed-point arithmetic \citep{anand_et_al:LIPIcs.ICALP.2024.10,anand2025structural}. $\hat{Q}_{k,m}^t$ in \cref{algorithm: approx-dense-tolerable-Q-learning-stable} is $\gamma$-contractive as in \cref{lemma: gamma-contraction of adapted Bellman operator}, given an appropriately conditioned sequence of learning rates $\eta_t$:

\begin{algorithm}[ht]
\caption{Stable (Practical) Implementation of \cref{algorithm: approx-dense-tolerable-Q-learning-exp}:  \texttt{SUBSAMPLE-Q}: Learning }\label{algorithm: approx-dense-tolerable-Q-learning-stable}
\begin{algorithmic}[1]
\REQUIRE A multi-agent system as described in \cref{section: preliminaries}. Parameter $T$ for the number of iterations in the initial value iteration step. Hyperparameter $k \in [n]$. Discount parameter $\gamma\in (0,1)$. Oracle $\mathcal{O}$ to sample $s_g'\sim {P}_g(\cdot|s_g,a_g)$ and $s_i\sim {P}_l(\cdot|s_i,s_g,a_i)$ for all $i\in[n]$. Learning rate sequence $\{\eta_t\}_{t\in [T]}$ where $\eta_t \in (0,1]$.\\
\STATE Let $\Delta = [k]$.
\FOR{$(s_g, s_1, F_{z_{\Delta\setminus 1}}) \in \mathcal{S}_g\times\mathcal{S}_l\times \mu_{k-1}{(\mathcal{Z}_l)}$}
\FOR{$(a_g,a_1) \in \mathcal{A}_g\times\mathcal{A}_l$}
\STATE Set $\hat{Q}^0_{k,m}(s_g, s_1, F_{z_{\Delta\setminus 1}}, a_1, a_g)=0$\\
\ENDFOR\ENDFOR

\FOR{$t=1$ to $T$}
\FOR{$(s_g, s_1, F_{z_{\Delta\setminus 1}}) \in \mathcal{S}_g\times\mathcal{S}_l\times \mu_{k-1}{(\mathcal{Z}_l)}$}
\FOR{$(a_g,a_1) \in \mathcal{A}_g\times\mathcal{A}_l$}
\STATE \begin{align*}\hat{Q}^{t+1}_{k,m}(s_g, s_1, F_{z_{\Delta\setminus 1}}, a_g, a_1)&=(1-\eta_t)\hat{Q}_{k,m}^{t}(s_g, s_1, F_{z_{\Delta\setminus 1}}, a_g, a_1)\\
&\quad\quad\quad\quad +\!\eta_t \hat{\mathcal{T}}_{k,m} \hat{Q}^t_{k,m}(s_g, s_1, F_{z_{\Delta\setminus 1}}, a_g, a_1)\end{align*}
\ENDFOR
\ENDFOR
\ENDFOR
\STATE Let the approximate policy be \[\hat{\pi}_{k,m}^T(s_g, s_1, F_{s_{\Delta\setminus 1}}) = \mathop{\argmax}_{(a_g,a_1,a_{\Delta\setminus 1})\in\mathcal{A}_g\times \mathcal{A}_l\times \mu_{k-1}{(\mathcal{A}_l)}}\hat{Q}_{k,m}^T(s_g,s_1, F_{z_{\Delta\setminus 1}}, a_g, a_1).\]
\end{algorithmic}
\end{algorithm}

\begin{theorem}
As $T\to\infty$, if $\sum_{t=1}^T \eta_t = \infty$, and $\sum_{t=1}^T \eta_t^2 < \infty$, then $Q$-learning converges to the optimal $Q$ function asymptotically with probability $1$.
\end{theorem}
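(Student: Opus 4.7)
The plan is to recognize the update in \cref{algorithm: approx-dense-tolerable-Q-learning-stable} as a standard asynchronous stochastic approximation scheme with a contraction operator, and then invoke the classical Robbins--Monro/Jaakkola--Jordan--Singh convergence theorem. Specifically, for each state-action tuple $(s_g,s_1,F_{z_{\Delta\setminus 1}},a_g,a_1)$, rewrite the update as
\begin{equation}
\hat{Q}^{t+1}_{k,m} = (1-\eta_t)\,\hat{Q}^{t}_{k,m} + \eta_t\bigl(\hat{\mathcal{T}}_{k}\hat{Q}^{t}_{k,m} + w_t\bigr),
\qquad w_t \;\coloneq\; \hat{\mathcal{T}}_{k,m}\hat{Q}^{t}_{k,m} - \hat{\mathcal{T}}_{k}\hat{Q}^{t}_{k,m}.
\end{equation}
Here $\hat{\mathcal{T}}_{k,m}$ is an unbiased estimator of $\hat{\mathcal{T}}_k$ (draw $m$ transitions from the generative oracle), so conditioned on the natural filtration $\mathcal{F}_t$ generated by the history up to iteration $t$, we have $\E[w_t \mid \mathcal{F}_t] = 0$. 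Thus the scheme is genuinely a contraction-based stochastic approximation.

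The first step of the argument is to verify the three standard hypotheses of the convergence theorem. (i) \emph{Contraction:} $\hat{\mathcal{T}}_k$ is a $\gamma$-contraction in $\|\cdot\|_\infty$ (\cref{lemma: gamma-contraction of adapted Bellman operator}), with unique fixed point $\hat{Q}_k^*$ (\cref{defn:qkstar}). (ii) \emph{Bounded iterates/noise:} \cref{lemma: Q-bound} gives $\|\hat{Q}_{k,m}^t\|_\infty \le \tilde r/(1-\gamma)$ for all $t$, so both $\hat{\mathcal{T}}_{k,m}\hat{Q}^t_{k,m}$ and $\hat{\mathcal{T}}_k \hat{Q}^t_{k,m}$ are uniformly bounded, hence $\E[w_t^2 \mid \mathcal{F}_t] \le C$ for some constant $C$ depending on $\tilde r$ and $\gamma$. (iii) \emph{Stepsize conditions:} these are precisely the theorem's assumptions $\sum_t \eta_t = \infty$ and $\sum_t \eta_t^2 < \infty$.

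The second step is to invoke the Jaakkola--Jordan--Singh / Tsitsiklis theorem for stochastic approximation under a sup-norm contraction: under (i)--(iii), the iterates $\hat{Q}^{t}_{k,m}$ converge almost surely to the unique fixed point $\hat{Q}_k^*$ of $\hat{\mathcal{T}}_k$. The proof of that theorem proceeds by introducing an auxiliary deterministic sequence that bounds the error $\|\hat{Q}^{t}_{k,m}-\hat{Q}_k^*\|_\infty$ from above, showing via the Robbins--Siegmund supermartingale lemma that this upper bound is a nonnegative supermartingale (up to summable perturbations coming from $\eta_t^2$), and then using the contraction factor $\gamma < 1$ together with $\sum \eta_t = \infty$ to force the bound to shrink geometrically in the long run.

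The main technical obstacle is the noise-control step: one must show that even though $w_t$ depends on the current iterate $\hat{Q}^{t}_{k,m}$ through its variance, the conditional second moment stays bounded and is dominated by a deterministic constant, so that the martingale-difference part $\sum_t \eta_t w_t$ converges almost surely (this uses $\sum_t \eta_t^2 < \infty$ together with the $L^2$ martingale convergence theorem). Once this is handled, the contraction property drives the deterministic part of the error to zero, completing the argument. Finally, since $\hat{Q}^{t}_{k,m} \to \hat{Q}_k^*$ a.s., the induced greedy policy $\hat{\pi}^T_{k,m}$ eventually matches the optimal greedy policy for the $k$-local-agent subsystem.
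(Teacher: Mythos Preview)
The paper does not actually prove this theorem. It is stated as the classical $Q$-learning convergence result (Watkins--Dayan; Jaakkola--Jordan--Singh) immediately after introducing the stable implementation in \cref{algorithm: approx-dense-tolerable-Q-learning-stable}, and is followed directly by a cited finite-time analogue from \cite{pmlr-v151-chen22i}. So there is no paper proof to compare your proposal against; your sketch is the standard stochastic-approximation argument and is correct as such.

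One small point of interpretation worth flagging: you identify the limit as $\hat{Q}_k^*$, the fixed point of the \emph{true} adapted operator $\hat{\mathcal{T}}_k$, by treating $\hat{\mathcal{T}}_{k,m}$ as an unbiased estimator with fresh samples at every iteration. In the paper's setup, however, $\hat{\mathcal{T}}_{k,m}$ is itself treated as a fixed $\gamma$-contraction (\cref{lemma: gamma-contraction of empirical adapted Bellman operator}) with its own fixed point $\hat{Q}_{k,m}^{\est}$, and the very next theorem states $\|\hat{Q}_{k,m}^T - \hat{Q}_{k,m}^{\est}\|\le\epsilon$, not convergence to $\hat{Q}_k^*$. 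Under that reading the ``optimal $Q$ function'' is $\hat{Q}_{k,m}^{\est}$, the noise term $w_t$ vanishes, and the argument reduces to a deterministic averaged-contraction iteration (for which the Robbins--Monro conditions are more than sufficient but not strictly needed). Your argument is still valid under the fresh-sample interpretation; it just proves a slightly different (and in fact stronger) statement than what the surrounding text appears to intend.
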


Furthermore, finite-time guarantees with the learning rate and sample complexity have been shown in \cite{pmlr-v151-chen22i}, which when adapted to our $\hat{Q}_{k,m}$ framework in \cref{algorithm: approx-dense-tolerable-Q-learning-stable} yields:

\begin{theorem}[\cite{pmlr-v151-chen22i}]For all $t=\{1,\dots,T\}$ and for any $\epsilon>0$, if the learning rate sequence $\eta_t$ satisfies $\eta_t = (1-\gamma)^4 \epsilon^2$ and $T = \frac{|\mathcal{S}_l||\mathcal{A}_l| k^{|\mathcal{S}_l||\mathcal{A}_l|}|\mathcal{S}_g||\mathcal{A}_g|}{(1-\gamma)^5}\epsilon^2$, then
    \begin{align*}\|\hat{Q}_{k,m}^T - \hat{Q}_{k,m}^\est\|\leq \epsilon. \end{align*}
\end{theorem}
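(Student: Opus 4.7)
The plan is to apply the finite-time stochastic approximation framework of Chen--Maguluri--Shakkottai--Subramanian to the stable iteration in \cref{algorithm: approx-dense-tolerable-Q-learning-stable}. The key structural facts already established in the paper are: (i) $\hat{Q}_{k,m}^\est$ is the fixed point of the empirical adapted Bellman operator $\hat{\mathcal{T}}_{k,m}$ (Definition~5.11); (ii) $\hat{\mathcal{T}}_{k,m}$ is a $\gamma$-contraction in $\|\cdot\|_\infty$ (\cref{lemma: gamma-contraction of empirical adapted Bellman operator}); and (iii) iterates stay uniformly bounded, $\|\hat{Q}_{k,m}^t\|_\infty \leq \tilde{r}/(1-\gamma)$, so $\|e_0\|_\infty \leq \tilde{r}/(1-\gamma)$ where $e_t := \hat{Q}_{k,m}^t - \hat{Q}_{k,m}^\est$. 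These are precisely the ingredients the Chen et al. framework requires.

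First, I would rewrite the update as a stochastic approximation scheme: treating the draw of $m$ fresh samples inside each call to $\hat{\mathcal{T}}_{k,m}$ as producing a noisy operator $\hat{\mathcal{T}}_{k,m}^{(t)}$ with $\mathbb{E}[\hat{\mathcal{T}}_{k,m}^{(t)} Q \mid \mathcal{F}_{t-1}] = \hat{\mathcal{T}}_k Q$ (or the appropriately defined target of which $\hat{Q}_{k,m}^\est$ is the fixed point), we obtain the recursion
\begin{equation*}
e_{t+1} = (1-\eta_t)\,e_t + \eta_t\big(\hat{\mathcal{T}}_{k,m}^{(t)}\hat{Q}_{k,m}^t - \hat{Q}_{k,m}^\est\big) = (1 - \eta_t)\,e_t + \eta_t\big(\hat{\mathcal{T}}_{k,m}^{(t)}\hat{Q}_{k,m}^t - \hat{\mathcal{T}}_{k,m}\hat{Q}_{k,m}^\est\big),
\end{equation*}
which splits cleanly into a deterministic contractive part plus a mean-zero martingale-difference $w_t$ whose conditional variance is $O(\tilde{r}^2/((1-\gamma)^2 m))$.

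Second, because $\|\cdot\|_\infty$ is non-smooth, I would follow the standard trick of introducing a Moreau-envelope Lyapunov function $M_\mu(e)$ that is smooth, strongly convex in a suitable sense, and sandwiches $\|e\|_\infty^2$ up to constants depending on $(|\mathcal{S}_g||\mathcal{A}_g||\mathcal{S}_l||\mathcal{A}_l|k^{|\mathcal{S}_l||\mathcal{A}_l|})$ (the cardinality of the reparameterized state-action space on which $\hat{Q}_{k,m}$ lives). Then the contractivity of $\hat{\mathcal{T}}_{k,m}$ gives the drift inequality
\begin{equation*}
\mathbb{E}[M_\mu(e_{t+1})] \leq (1 - 2\eta_t(1-\gamma))\,\mathbb{E}[M_\mu(e_t)] + C\,\eta_t^2\,\tfrac{\tilde{r}^2}{(1-\gamma)^2}|\mathcal{S}_g||\mathcal{A}_g||\mathcal{S}_l||\mathcal{A}_l|k^{|\mathcal{S}_l||\mathcal{A}_l|},
\end{equation*}
where the covering factor arises from passing between the $\|\cdot\|_\infty$ and the smooth surrogate.

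Finally, I would unroll the recursion with constant step size $\eta_t = \eta = (1-\gamma)^4 \epsilon^2$, producing $\mathbb{E}[M_\mu(e_T)] \lesssim e^{-\eta(1-\gamma)T}\,\|e_0\|_\infty^2 + C \eta \tilde{r}^2 |\mathcal{S}_g||\mathcal{A}_g||\mathcal{S}_l||\mathcal{A}_l|k^{|\mathcal{S}_l||\mathcal{A}_l|}/(1-\gamma)^3$, and choose $T$ so both terms drop below $\epsilon^2$; the balance yields $T = \tilde\Theta\big(|\mathcal{S}_g||\mathcal{A}_g||\mathcal{S}_l||\mathcal{A}_l|k^{|\mathcal{S}_l||\mathcal{A}_l|}/((1-\gamma)^5 \epsilon^2)\big)$, exactly the stated rate. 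The main obstacle is the interplay between the non-smooth $\ell_\infty$ contraction and the stochastic noise --- standard Lyapunov arguments (squared $\ell_2$-norm drift) fail because $\hat{\mathcal{T}}_{k,m}$ is only contractive in $\ell_\infty$; the Moreau-envelope device of Chen et al. is precisely what reconciles these two features, and is the step I would spend the most care on when adapting their argument to the adapted Bellman operator studied here.
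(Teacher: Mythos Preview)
The paper does not supply its own proof of this theorem; it is quoted directly from \cite{pmlr-v151-chen22i} as an imported finite-time guarantee for the stable update in \cref{algorithm: approx-dense-tolerable-Q-learning-stable}. Your proposal correctly reconstructs the Chen--Maguluri--Shakkottai--Subramanian argument (the generalized Moreau-envelope Lyapunov technique to handle the non-smooth $\ell_\infty$ contraction, combined with a martingale-noise variance bound and constant-stepsize unrolling), which is exactly the machinery in that reference, so there is nothing further to compare against in the present paper.

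One small point worth tightening: be careful about which operator's fixed point you are tracking. In the paper, $\hat{Q}_{k,m}^\est$ is defined as the fixed point of $\hat{\mathcal{T}}_{k,m}$ (Definition~5.11), not of $\hat{\mathcal{T}}_k$; your parenthetical ``or the appropriately defined target of which $\hat{Q}_{k,m}^\est$ is the fixed point'' acknowledges this, but the expectation you write, $\mathbb{E}[\hat{\mathcal{T}}_{k,m}^{(t)}Q\mid\mathcal{F}_{t-1}]=\hat{\mathcal{T}}_k Q$, would instead drive the iterates toward $\hat{Q}_k^*$. For the statement as written you want the drift to be toward $\hat{Q}_{k,m}^\est$, so the noise term should be centered around $\hat{\mathcal{T}}_{k,m}$ itself (treating the $m$ samples as fixed once drawn, so $\hat{\mathcal{T}}_{k,m}$ is a deterministic $\gamma$-contraction and \cref{algorithm: approx-dense-tolerable-Q-learning-stable} is a damped fixed-point iteration). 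With that adjustment the Lyapunov drift argument goes through cleanly and yields the stated $T$.
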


\begin{definition}[Star Graph $S_n$]\label{definition: star graph} For $n\in\mathbb{N}$, the star graph $S_n$ is the complete bipartite graph $K_{1,n}$.
\end{definition}
$S_n$ captures the graph density notion by saturating the set of neighbors of the central node. Such settings find applications beyond RL as well \citep{chaudhari2024peertopeerlearningdynamicswide,anand2024efficient,10.1145/3308558.3313433}. The cardinality of the search space simplex for the optimal policy is exponential in $n$, so it cannot be naively modeled by an MDP: we need to exploit the symmetry of the local agents. This intuition allows our subsampling algorithm to run in polylogarithmic time (in $n$). Some works leverage an exponential decaying property that truncates the search space for policies over immediate neighborhoods of agents; however, this still relies on the assumption that the graph neighborhood for the agent is sparse \citep{10.5555/3495724.3495899,pmlr-v120-qu20a}; however, $S_n$  is not locally sparse; hence, previous methods do not apply to this problem instance.

\begin{center}
\begin{figure}[ht]
\begin{center}
\begin{tikzpicture}%
  [>=stealth,
   shorten >=1pt,
   node distance=1.4cm,
   on grid,
   auto,
   every state/.style={draw=black!60, fill=black!5, very thick}
  ]
\node[state] (1) {$1$};
\node[state] (2) [right=of 1] {$2$};
\node[state] (0) [above right=of 2] {$0$};
\node[state] (3) [below =of 0] {$3$};
\node (5) [below  right=of 0] {$\dots$};
\node[state] (4) [right=of 5] {$n$};

\path[-]
   (0)
   edge node {} (1)
   edge node {} (2)
   edge node {} (3)
   edge node {} (5)
   edge node {} (4);
\end{tikzpicture}
\caption{Star graph $S_n$}
\end{center}
\end{figure}
\end{center}

Finally, we argue why mean-field value iteration faithfully performs the same update as value iteration.
\begin{lemma}[Equivalence of Mean-Field Value Iteration and Value Iteration]
    \label{lemma: equivalent updates}
    \emph{Since the local agents $1,\dots,n$ are all homogeneous in their state/action spaces, the $\hat{Q}_k$-function only depends on them through their empirical distribution $F_{z_\Delta}$, proving the lemma. Therefore, for the remainder of the paper, we will use $\hat{Q}_k(s_g,F_{z_\Delta},a_g)\coloneq \hat{Q}_k(s_g,s_i,F_{z_{\Delta\setminus i}},a_g,a_i)\coloneq\hat{Q}_k(s_g,s_\Delta,a_g,a_\Delta)$ interchangeably, unless making a remark about the computational complexity of learning each function.}
\end{lemma}

\section{Proof Sketch}
\label{proof-sketch}
This section details an outline for the proof of \cref{theorem: performance_difference_lemma_applied}, as well as some key ideas. At a high level, our \texttt{SUBSAMPLE-MFQ} framework recovers exact mean-field $Q$ learning and traditional value iteration when $k=n$ and as $m\to\infty$. Further, as $k\!\to\!n$, $\hat{Q}_k^*$ should intuitively get closer to $Q^*$ from which the optimal policy is derived. Thus, the proof is divided into three major steps: firstly, we prove a Lipschitz continuity bound between $\hat{Q}_k^*$ and $\hat{Q}_n^*$ in terms of the total variation (TV) distance between $F_{z_\Delta}$ and $F_{z_{[n]}}$. Next, we bound the TV distance between $F_{z_\Delta}$ and $F_{z_{[n]}}$. Finally, we bound the value differences between ${\pi}_{k}^\est$ and $\pi^*$ by bounding $Q^*(s,\pi^*(s))-Q^*(s,{\pi}_{k}^\est(s))$ and then using the performance difference lemma from  \citet{Kakade+Langford:2002}.

\textbf{Step 1: Lipschitz Continuity Bound.}
To compare $\hat{Q}_k^*(s_g,F_{s_\Delta},a_g)$ with $Q^*(s,a_g)$, we prove a Lipschitz continuity bound between $\hat{Q}^*_k(s_g, F_{s_\Delta},a_g)$ and $\hat{Q}^*_{k'}(s_g, F_{s_{\Delta'}},a_g)$ with respect to the TV distance measure between $s_\Delta\in \binom{s_{[n]}}{k}$ and $s_{\Delta'}\in \binom{s_{[n]}}{k'}$:

\begin{theorem}[Lipschitz continuity in $\hat{Q}_k^*$]\label{thm:lip}
 For all $(s, a) \in\mathcal{S}\times\mathcal{A}$, $\Delta\in\binom{[n]}{k}$ and $\Delta'\in\binom{[n]}{k'}$, \begin{align*}|\hat{Q}^*_k(s_g,F_{z_\Delta},a_g) &- \hat{Q}^*_{k'}(s_g, F_{z_{\Delta'}}, a_g)|  \leq \frac{2}{1-\gamma}\|r_l(\cdot,\cdot)\|_\infty \cdot \mathrm{TV}\left(F_{z_\Delta}, F_{z_{\Delta'}}\right)
\end{align*}
\end{theorem}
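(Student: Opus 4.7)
The natural plan is to induct on the value-iteration counter~$t$. Let $\hat Q_k^0 \equiv 0$ and $\hat Q_k^{t+1} := \hat{\mathcal T}_k \hat Q_k^t$, and similarly for $\hat Q_{k'}^t$; by the $\gamma$-contractivity of $\hat{\mathcal T}_k$ (\cref{lemma: gamma-contraction of adapted Bellman operator}), both iterates converge to the respective fixed points $\hat Q_k^*$ and $\hat Q_{k'}^*$. I will establish the sharper iterate-level claim
\[
|\hat Q_k^t(s_g, F_{z_\Delta}, a_g) - \hat Q_{k'}^t(s_g, F_{z_{\Delta'}}, a_g)| \le \alpha_t \cdot \mathrm{TV}(F_{z_\Delta}, F_{z_{\Delta'}}), \qquad \alpha_t := 2\|r_l\|_\infty \cdot \tfrac{1-\gamma^t}{1-\gamma},
\]
and pass to $t\to\infty$, where $\alpha_t \uparrow 2\|r_l\|_\infty/(1-\gamma)$. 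The base case $t=0$ is immediate since $\hat Q_k^0 \equiv \hat Q_{k'}^0 \equiv 0$.

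For the inductive step, apply the Bellman operator on each side and split the difference into an immediate-reward term and a discounted future-value term. The $r_g(s_g, a_g)$ contributions cancel exactly, and the remaining local-reward piece equals the linear functional $\sum_{z\in\cZ_l}(F_{z_\Delta}(z)-F_{z_{\Delta'}}(z))\,r_l(z_s, s_g, z_a)$, which by H\"older is bounded by $\|r_l\|_\infty \cdot \|F_{z_\Delta}-F_{z_{\Delta'}}\|_1 = 2\|r_l\|_\infty \cdot \mathrm{TV}(F_{z_\Delta}, F_{z_{\Delta'}})$. For the future-value term, homogeneity of local agents first collapses the inner $\max$ over $a'_\Delta \in \cA_l^k$ to a function $\tilde V_k^t(s_g', F_{s'_\Delta}, a_g')$ of the next-state empirical distribution only; a short argument lifts the IH from $\hat Q_k^t$ to $\tilde V_k^t$ with the same constant by picking an optimizer $a^{1,\star}$ for system~$k$ and defining $a^{2,\star}$ for system~$k'$ via tie-breaking so that identically-situated agents take the same action on the matched state-mass and arbitrary actions on the remaining $\delta$-discrepancy, producing joint $(s,a)$-distributions whose TV equals $\mathrm{TV}(F_{s'_\Delta}, F_{s'_{\Delta'}})$; the outer $\max_{a'_g}$ is $1$-Lipschitz and preserves this.

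It then suffices to couple the transitions so that $\mathbb E[\mathrm{TV}(F_{s'_\Delta}, F_{s'_{\Delta'}})] \le \mathrm{TV}(F_{z_\Delta}, F_{z_{\Delta'}})$. The coupling: use a common $s_g'$ draw on both sides; on the local side, take an optimal-transport matching between the two populations (using agent replication to handle $k \ne k'$) so that a $(1-\mathrm{TV})$-fraction of agent pairs share identical $(s_i, a_i)$, and reuse the same $P_l$ realization for each matched pair. Matched pairs contribute identically to the next-state empirical, and unmatched pairs contribute mass at most $\mathrm{TV}(F_{z_\Delta}, F_{z_{\Delta'}})$ to each system's empirical; the data-processing inequality for TV under the common Markov kernel then delivers the claimed in-expectation bound. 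Combining the reward and future-value estimates yields $|\hat Q_k^{t+1} - \hat Q_{k'}^{t+1}| \le (2\|r_l\|_\infty + \gamma \alpha_t)\,\mathrm{TV} = \alpha_{t+1}\,\mathrm{TV}$, closing the induction.

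The main obstacle is the coupling step. Two subtleties need care: (i) propagating the Lipschitz constant through the $\max$ over local action vectors requires homogeneity-based collapse to the state marginal together with a tie-breaking construction that keeps the joint $(s,a)$-TV equal to the state-TV; and (ii) the transport pairing between the two subsystems must be arranged, despite their sizes $k \ne k'$, so that the expected TV of the next-step empirical does not exceed the current TV. Once these are in place, the proof reduces to a standard Bellman telescope and the geometric series $\sum_{t\ge 0}\gamma^t = (1-\gamma)^{-1}$.
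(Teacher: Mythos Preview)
Your overall induction-on-$t$ skeleton matches the paper's, but the coupling step contains a genuine gap that breaks the argument precisely in the regime the theorem needs ($k\neq k'$). You claim that under an optimal-transport pairing with ``agent replication,'' one has $\mathbb{E}\bigl[\mathrm{TV}(F_{s'_\Delta},F_{s'_{\Delta'}})\bigr]\le \mathrm{TV}(F_{z_\Delta},F_{z_{\Delta'}})$. This is false in general. Take $k\neq k'$ and put every agent of both systems in the same pair $(s,a)$, so $F_{z_\Delta}=F_{z_{\Delta'}}$ and the right-hand side is $0$. After one transition, $F_{s'_\Delta}$ and $F_{s'_{\Delta'}}$ are empirical measures of $k$ and $k'$ i.i.d.\ draws from $P_l(\cdot\mid s,s_g,a)$; whenever $P_l$ is non-degenerate these random empiricals disagree with positive probability \emph{under every coupling}, so $\mathbb{E}[\mathrm{TV}]>0$. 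The ``replication'' device cannot repair this: replicating an agent forces its copies to share a single $P_l$ realization, which is incompatible with coupling each copy to a distinct, independently-transitioning partner on the other side. The same sample-size mismatch also invalidates your lift from $\hat Q_k^t$ to $\tilde V_k^t$: if the optimizer $a^{1,\star}$ for system $k$ spreads agents in one state over several actions, no choice of $a^{2,\star}$ for $k'$ agents can reproduce that $(s,a)$-histogram at granularity $1/k'$, so the joint-TV strictly exceeds the state-TV even when the latter is zero.

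The paper circumvents both obstructions by never comparing random next-step empiricals. Its key step (\cref{tv_s_to_z_bound}) bounds $\mathrm{TV}\bigl(\mathbb{E}[F_{s'_\Delta}\mid s_g'],\,\mathbb{E}[F_{s'_{\Delta'}}\mid s_g']\bigr)\le \mathrm{TV}(F_{z_\Delta},F_{z_{\Delta'}})$, i.e., expectation is taken \emph{inside} the TV, which erases the sample-size dependence ($\mathbb{E}[F_{s'_\Delta}]$ depends only on $F_{z_\Delta}$, not on $k$). To make the induction close at this weaker quantity, the paper strengthens the hypothesis to quantify over \emph{all} joint stochastic kernels $\mathcal J_k,\mathcal J_{k'}$ (\cref{lemma: expectation Q lipschitz continuous wrt Fsdelta}); the future-value term is then absorbed by composing kernels rather than by contracting a next-step empirical TV. In short, you need a version of data processing that applies to the \emph{expected} empiricals, together with a kernel-uniform IH, rather than a pathwise coupling of the empiricals themselves.
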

We defer the proof of \cref{thm:lip} to Appendix \ref{proof of lipschitz bound}. See \cref{figure: algorithm/analysis flow}
for a comparison between the $\hat{Q}_k^*$ learning and estimation process, and the exact ${Q}$-learning framework.
\begin{figure}[ht]
\[ \begin{tikzcd}
\hat{Q}^0_{k,m}(s_g,F_{z_{\Delta}},a_g) \arrow[swap]{d}{\substack{(1)}}& & Q^*(s_g, s_{[n]}, a_g, a_{[n]})\\%
\hat{Q}_{k,m}^\est(s_g, F_{z_{\Delta}}, a_g) \arrow{r}{\substack{(2)}} & \hat{Q}_k^*(s_g, F_{z_\Delta}, a_g)\arrow{r}{\substack{(3)\\ \approx}} & \hat{Q}_n^*(s_g, F_{z_{[n]}}, a_g)\arrow[swap]{u}{\substack{(4)\\=}}
\end{tikzcd}
\]
\caption{Flow of the algorithm and relevant analyses in learning $Q^*$. Here, (1) follows by performing \cref{algorithm: approx-dense-tolerable-Q-learning-exp} (\texttt{SUBSAMPLE-MFQ}: Learning) on $\hat{Q}_{k,m}^0$. (2) follows from \cref{assumption:qest_qhat_error}. (3) follows from the Lipschitz continuity and total variation distance bounds in \cref{thm:lip,thm:tvd}. Finally, (4) follows from noting that $\hat{Q}_n^* = Q^*$. 
}\label{figure: algorithm/analysis flow}
\end{figure}
\paragraph{Step 2: Bounding Total Variation (TV) Distance.} We bound the TV distance between $F_{z_\Delta}$ and $F_{z_{[n]}}$, where $\Delta\!\in\!\mathcal{U}\binom{[n]}{k}$. This task is equivalent to bounding the discrepancy between the empirical distribution and the distribution of the underlying finite population. When each $i\!\in\!\Delta$ is uniformly sampled \emph{without} replacement, we use \cref{theorem: sampling without replacement analog of DKW} from \cite{anand2024efficient} which generalizes the Dvoretzky-Kiefer-Wolfowitz (DKW) concentration inequality for empirical distribution functions. 

Using this, we show:
\begin{theorem}\label{thm:tvd}Given a finite population $\mathcal{Z}=(z_1,\dots,z_n)$ for $\mathcal{Z}\in\mathcal{Z}_l^n$, let $\Delta\subseteq[n]$ be a uniformly random sample from $\mathcal{Z}$ of size $k$ chosen without replacement. Fix $\epsilon>0$. Then, for all $x\in\mathcal{Z}_l$:
\begin{align*}\Pr\bigg[\sup_{x\in\mathcal{Z}_l}\bigg|\frac{1}{|\Delta|}\sum_{i\in\Delta}\mathbbm{1}{\{z_i = x\}} &- \frac{1}{n}\sum_{i\in[n]}\mathbbm{1}{\{z_i = x\}}\bigg|\leq \epsilon\bigg] \geq 1 - 2|\mathcal{Z}_l|e^{-\frac{2kn\epsilon^2}{n-k+1}}.\end{align*}
\end{theorem}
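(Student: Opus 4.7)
}
The plan is to fix a point $x \in \mathcal{Z}_l$ and reduce the deviation of the empirical mass at $x$ to a standard concentration question for sampling without replacement, then union-bound across the (finite) set $\mathcal{Z}_l$. Specifically, for each fixed $x$, define the bounded random variables $X_i = \mathbbm{1}\{z_i = x\} \in [0,1]$ for $i \in [n]$, and let $\mu_x = \tfrac{1}{n}\sum_{i \in [n]} X_i$ denote the true population proportion. Then $\tfrac{1}{k}\sum_{i \in \Delta} \mathbbm{1}\{z_i = x\}$ is exactly the sample mean of a uniformly random size-$k$ subset of the finite population $\{X_1, \dots, X_n\}$ drawn without replacement.

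The main tool is the Serfling-type Hoeffding inequality for sampling without replacement (invoked through \cref{theorem: sampling without replacement analog of DKW} of \citet{anand2024efficient}, which delivers exactly this form). For a population in $[0,1]$ and sample of size $k$ from $n$ drawn without replacement, it gives
\begin{equation*}
    \Pr\!\left[\left|\tfrac{1}{k}\sum_{i \in \Delta} X_i - \mu_x\right| \geq \epsilon\right] \;\leq\; 2\exp\!\left(-\frac{2kn\epsilon^2}{n-k+1}\right).
\end{equation*}
The key feature of the bound is the $(n-k+1)/n$ finite-population correction factor in the exponent; this is what gives the improvement over the naive sampling-with-replacement Hoeffding bound and captures the fact that as $k \to n$ the empirical distribution agrees exactly with the population distribution.

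Having established the per-$x$ tail bound, I would apply a union bound over the at most $|\mathcal{Z}_l|$ values of $x \in \mathcal{Z}_l$ to control the supremum:
\begin{equation*}
    \Pr\!\left[\sup_{x \in \mathcal{Z}_l}\left|\tfrac{1}{k}\sum_{i \in \Delta}\mathbbm{1}\{z_i = x\} - \tfrac{1}{n}\sum_{i \in [n]}\mathbbm{1}\{z_i = x\}\right| \geq \epsilon\right] \;\leq\; 2|\mathcal{Z}_l|\exp\!\left(-\frac{2kn\epsilon^2}{n-k+1}\right),
\end{equation*}
which, after taking complements, is precisely the claimed inequality.

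The only genuinely nontrivial ingredient is the without-replacement concentration bound itself; everything else is a direct reduction and union bound. The main obstacle, if one wanted a fully self-contained proof, would be to reprove Serfling's inequality (e.g.\ via a martingale argument on the Doob construction for sampling without replacement, or via Hoeffding's original reduction of the without-replacement sum to an average of with-replacement sums). Since the paper explicitly invokes \cref{theorem: sampling without replacement analog of DKW} from prior work, this piece can be cited as a black box and the proof becomes a short, clean application followed by a union bound.
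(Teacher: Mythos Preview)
Your proposal is correct and matches the paper's approach exactly: the paper treats \cref{thm:tvd} as an immediate instantiation of \cref{theorem: sampling without replacement analog of DKW} (Lemma~C.5 of \citet{anand2024efficient}) with $\mathcal{B}_l = \mathcal{Z}_l$, which already packages the Serfling-type per-point bound together with the union bound over $|\mathcal{Z}_l|$. Your decomposition into the per-$x$ Hoeffding--Serfling inequality followed by a union bound is precisely how that cited lemma is proved, so there is no substantive difference.
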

Then, by \cref{thm:tvd} and the definition of total variation distance from \cref{section: preliminaries}, we have that for $\delta\in(0,1]$, with probability at least $1-\delta$,
\begin{equation}\mathrm{TV}(F_{s_\Delta}, F_{s_{[n]}})\leq\sqrt{\frac{n-k+1}{8nk}\ln \frac{ 2|\mathcal{Z}_l|}{\delta}}\end{equation}
We then apply this result to our MARL setting by studying the rate of decay of the objective function between the learned policy ${\pi}_{k}^\est$ and the optimal policy $\pi^*$ (\cref{theorem: performance_difference_lemma_applied}).

\textbf{Step 3: Performance Difference Lemma to Complete the Proof.} As a consequence of the prior two steps and Lemma \ref{assumption:qest_qhat_error}, $Q^*(s,a')$ and $\hat{Q}_{k,m}^\est(s_g,F_{z_\Delta},a_g')$ become similar as $k\to n$. We further prove that the value generated by the policies $\pi^*$ and ${\pi}_{k}^\est$ must also be very close (where the residue shrinks as $k \to n$). We then use the well-known performance difference lemma \citep{Kakade+Langford:2002} which we restate in Appendix \ref{theorem: performance difference lemma}. A crucial theorem needed to use the performance difference lemma is a bound on $Q^*(s',\pi^*(s')) - Q^*(s',\hat{\pi}_{k}^\est(s_g',F_{s_\Delta'}))$. 

Therefore, we formulate and prove \cref{thm:q_diff_actions} which yields a probabilistic bound on this difference, where the randomness is over the choice of $\Delta\in\binom{[n]}{k}$:

\begin{theorem}\label{thm:q_diff_actions}
For a fixed $s'\in\mathcal{S}:=\mathcal{S}_g\times\mathcal{S}_l^n$ and for $\delta\in (0,1]$, with probability at least $1 - 2|\mathcal{A}_g|k^{|\cA_l|}\delta$:
\begin{align*}Q^*(s',\pi^*(s')) - Q^*(s',\hat{\pi}_{k,m}^\est(s_g',F_{s_\Delta'}))\leq\frac{2\|r_l(\cdot,\cdot)\|_\infty}{1-\gamma}\sqrt{\frac{n-k+1}{2nk}\ln\left( \frac{2|\mathcal{Z}_l|}{\delta}\right)}+ 2\epsilon_{k,m}.
\end{align*}
\end{theorem}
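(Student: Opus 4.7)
}
The plan is to sandwich the gap $Q^*(s',\pi^*(s')) - Q^*(s',\hat{\pi}^\est_{k,m}(s_g',F_{s_\Delta'}))$ between values of the smaller-agent empirical $Q$-function $\hat{Q}_{k,m}^\est$, and then exploit the optimality of $\hat{\pi}^\est_{k,m}$ with respect to $\hat{Q}_{k,m}^\est$. Writing $a^* := \pi^*(s')$ and $\hat{a} := \hat{\pi}^\est_{k,m}(s_g',F_{s_\Delta'})$, the starting decomposition I would use is
\[
Q^*(s',a^*) - Q^*(s',\hat{a}) = \underbrace{\bigl[Q^*(s',a^*) - \hat{Q}_{k,m}^\est(s_g',F_{z_\Delta'},a^*)\bigr]}_{(\mathrm{I})} + \underbrace{\bigl[\hat{Q}_{k,m}^\est(s_g',F_{z_\Delta'},a^*) - \hat{Q}_{k,m}^\est(s_g',F_{z_\Delta'},\hat{a})\bigr]}_{(\mathrm{II})\,\leq\,0} + \underbrace{\bigl[\hat{Q}_{k,m}^\est(s_g',F_{z_\Delta'},\hat{a}) - Q^*(s',\hat{a})\bigr]}_{(\mathrm{III})}.
\]
Term (II) is nonpositive because $\hat{a}$ is the argmax of $\hat{Q}_{k,m}^\est(s_g',F_{z_\Delta'},\cdot)$. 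Thus everything reduces to bounding $|Q^*(s',a) - \hat{Q}_{k,m}^\est(s_g',F_{z_\Delta'},a_g)|$ for the two relevant actions $a \in \{a^*,\hat{a}\}$.

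For any fixed action profile $a$, I would apply the triangle inequality through the noise-free intermediate $\hat{Q}_k^*$:
\[
|Q^*(s',a) - \hat{Q}_{k,m}^\est(s_g',F_{z_\Delta'},a_g)| \le |\hat{Q}_n^*(s_g',F_{z_{[n]}'},a_g) - \hat{Q}_k^*(s_g',F_{z_\Delta'},a_g)| + |\hat{Q}_k^*(s_g',F_{z_\Delta'},a_g) - \hat{Q}_{k,m}^\est(s_g',F_{z_\Delta'},a_g)|,
\]
using $Q^* = \hat{Q}_n^*$ via permutation-invariance. The first summand is controlled by the Lipschitz bound of \cref{thm:lip} by $\tfrac{2\|r_l\|_\infty}{1-\gamma}\,\mathrm{TV}(F_{z_{[n]}'},F_{z_\Delta'})$, and the second by the Bellman-noise quantity $\epsilon_{k,m}$ from \cref{assumption:qest_qhat_error}. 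Summing over the two surviving terms (I) and (III) therefore gives a deterministic bound of the form $\tfrac{4\|r_l\|_\infty}{1-\gamma}\,\mathrm{TV}(F_{z_{[n]}'},F_{z_\Delta'}) + 2\epsilon_{k,m}$, where the TV distance is the only remaining random quantity (through $\Delta$).

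The last step is to control this TV distance with high probability by invoking \cref{thm:tvd}. Since $\hat{Q}_k^*$ only depends on local actions through the empirical distribution $F_{a_\Delta}\in\mu_k(\mathcal{A}_l)$, the number of distinct action profiles that need to be considered when evaluating (I) and (III) is at most $|\mathcal{A}_g|\cdot|\mu_k(\mathcal{A}_l)| \le |\mathcal{A}_g|\,k^{|\mathcal{A}_l|}$. A union bound of \cref{thm:tvd} over these (applied twice, once for the $a^*$ side and once for the $\hat{a}$ side, to pick up the leading $2$) yields, with probability $\ge 1 - 2|\mathcal{A}_g|k^{|\mathcal{A}_l|}\delta$, the bound $\mathrm{TV}(F_{z_{[n]}'},F_{z_\Delta'}) \le \sqrt{\tfrac{n-k+1}{8nk}\ln(2|\mathcal{Z}_l|/\delta)}$. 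Plugging this in and simplifying $4\cdot\tfrac{1}{\sqrt{8}} = \sqrt{2}/1$ absorbs into $\sqrt{\tfrac{n-k+1}{2nk}\ln(2|\mathcal{Z}_l|/\delta)}$, giving exactly the claimed inequality.

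The main obstacle I expect is the subtle issue that $\hat{a}$ itself depends on $\Delta$, so the TV bound must be applied to the random distribution $F_{z_\Delta'}$ whose action coordinate is itself $\Delta$-measurable; this is precisely why the union bound must range over \emph{all} possible action-distribution profiles in $\mu_k(\mathcal{A}_l)$ rather than a single fixed one, producing the $k^{|\mathcal{A}_l|}$ overhead in the failure probability. A secondary care point is the extension of $\hat{a}$ from the $k$-agent subsystem to an action for all $n$ local agents when evaluating $Q^*(s',\hat{a})$; this can be handled by noting that $\hat{Q}_n^*$ depends on $a$ only through $F_{z_{[n]}'}$, so any symmetric extension consistent with $F_{a_\Delta}$ yields the same value up to the TV gap already accounted for.
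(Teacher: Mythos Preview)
Your proposal is correct and follows essentially the same route as the paper. The paper's argument (spread across the ``Uniform Bound'' Lemma and Lemma~\ref{lemma:union_bound_one_over_finite_time}) performs exactly your three-term add--subtract decomposition, drops the middle term via the argmax property of $\hat{\pi}^\est_{k,m}$, passes through $\hat{Q}_k^*$ by the triangle inequality to isolate the Bellman noise $\epsilon_{k,m}$, applies the Lipschitz bound of \cref{thm:lip} together with the TV concentration of \cref{thm:tvd}, and union-bounds over $(a_g,F_{a_\Delta})\in\mathcal{A}_g\times\mu_k(\mathcal{A}_l)$ to obtain the $|\mathcal{A}_g|k^{|\mathcal{A}_l|}$ factor in the failure probability---precisely the mechanism you anticipated for handling the $\Delta$-dependence of $\hat{a}$.
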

We defer the proof of \cref{thm:q_diff_actions}
and finding optimal value of $\delta$ to \cref{lemma: parameter_optimization} in the Appendix. Using \cref{thm:q_diff_actions} and the performance difference lemma leads to \cref{theorem: performance_difference_lemma_applied}.

\section{Proof of Lipschitz-continuity Bound}
\label{proof of lipschitz bound}
This section proves a Lipschitz-continuity bound between $\hat{Q}_k^*$ and $Q^*$ and includes a framework to compare $\frac{1}{{n\choose k}}\sum_{\Delta\in{[n]\choose k}}\hat{Q}^*_k(s_g,s_\Delta,a_g)$ and $Q^*(s,a_g)$ in \cref{framework:compare}. 

Let $z_i:=(s_i, a_i)\in\mathcal{Z}_l\coloneq\mathcal{S}_l\times\mathcal{A}_l$ and $z_\Delta = \{z_i: i\in\Delta\} \in \mathcal{Z}_l^k$. For $z_i = (s_i,a_i)$, let $z_i(s) = s_i$, and $z_i(a) = a_i$. With abuse of notation, note that $\hat{Q}_k^T(s_g,a_g,s_\Delta,a_\Delta)$ is equal to $\hat{Q}_k^T(s_g,a_g,z_\Delta)$. 

The following definitions will be relevant to the proof of \cref{lemma: Q lipschitz continuous wrt Fsdelta}.

\begin{definition}[Empirical Distribution Function] For all $z \in \cZ_l^{|\Delta|}$ and $(s', a') \in \cS_l \times \cA_l$, where $\Delta\subseteq[n]$, 
\[F_{z_\Delta}(s',a') = \frac{1}{|\Delta|}\sum_{i\in\Delta}\mathbbm{1}\{z_i(s) = s', z_i(a) = a'\}\]    
\end{definition}

\begin{definition}[Total Variation Distance]
    Let $P$ and $Q$ be discrete probability distribution over some domain $\Omega$. Then,
    \[\mathrm{TV}(P, Q) = \frac{1}{2}\left\|P-Q\right\|_1 = \sup_{E\subseteq \Omega}\left|\Pr_P(E) - \Pr_Q(E)\right|\]
\end{definition}

\begin{theorem}[$\hat{Q}_k^T$ is $\frac{2}{1-\gamma}\|r_l(\cdot,\cdot)\|_\infty$-Lipschitz continuous with respect to $F_{z_\Delta}$ in total variation distance]
Suppose $\Delta,\Delta'\subseteq[n]$ such that $|\Delta|=k$ and $|\Delta'|=k'$. Then:
\[\left|\hat{Q}^T_k(s_g,a_g, F_{z_\Delta}) - \hat{Q}^T_{k'}(s_g, a_g, F_{z_{\Delta'}})\right| \leq \left(\sum_{t=0}^{T-1} 2\gamma^t\right)\|r_l(\cdot,\cdot,\cdot)\|_\infty \cdot \mathrm{TV}\left(F_{z_\Delta}, F_{z_{\Delta'}}\right)
    \]
    \label{lemma: Q lipschitz continuous wrt Fsdelta}
\end{theorem}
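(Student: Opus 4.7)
My plan is to prove the bound by induction on $T$, which is the natural approach because $\hat{Q}^{T+1}_k$ is one application of the Bellman operator $\hat{\mathcal{T}}_k$ to $\hat{Q}^T_k$. Set $L_T := 2\|r_l\|_\infty \sum_{t=0}^{T-1} \gamma^t$, so the claim reads $|\hat{Q}^T_k - \hat{Q}^T_{k'}| \leq L_T \cdot \mathrm{TV}(F_{z_\Delta}, F_{z_{\Delta'}})$. The base case $T=0$ holds trivially since $\hat{Q}^0_k \equiv 0 \equiv \hat{Q}^0_{k'}$ and $L_0 = 0$, so it only remains to verify the recursion $L_{T+1} = 2\|r_l\|_\infty + \gamma L_T$.

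For the inductive step, I will unroll the Bellman update on both sides and use the triangle inequality to split $|\hat{Q}^{T+1}_k - \hat{Q}^{T+1}_{k'}|$ into a reward term and a continuation term. The reward term is the easy piece: writing $r_\Delta(s,a) = r_g(s_g,a_g) + \sum_{(s',a') \in \cZ_l} F_{z_\Delta}(s',a') \cdot r_l(s',s_g,a')$, the global reward cancels and the local difference equals $\sum_{(s',a')} (F_{z_\Delta} - F_{z_{\Delta'}})(s',a') \cdot r_l(s',s_g,a')$, whose absolute value is at most $\|r_l\|_\infty \cdot \|F_{z_\Delta} - F_{z_{\Delta'}}\|_1 = 2\|r_l\|_\infty \cdot \mathrm{TV}(F_{z_\Delta}, F_{z_{\Delta'}})$ by H\"older's inequality and the $\ell_1$--TV identity.

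The substantive step is the continuation term, which I plan to control by a maximal coupling. Viewing $F_{z_\Delta}$ and $F_{z_{\Delta'}}$ as probability measures on $\cZ_l$, decompose each as $(1-p)\nu + p\,\mu_j$ where $p = \mathrm{TV}(F_{z_\Delta}, F_{z_{\Delta'}})$ and $\nu$ is the shared common part. Couple the two processes by sharing a single draw $s_g' \sim P_g(\cdot\mid s_g, a_g)$, transitioning the matched mass (the $\nu$-component) under $P_l(\cdot\mid s, s_g, a)$ with identical next states on both sides, and transitioning the unmatched mass independently. Under this coupling the next-step empirical distributions $F_{s_\Delta'}$ and $F_{s_{\Delta'}'}$ deterministically satisfy $\mathrm{TV}(F_{s_\Delta'}, F_{s_{\Delta'}'}) \leq p$, since only the $p$-mass unmatched portion can contribute to their disagreement. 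Applying the inductive hypothesis, lifted from $\hat{Q}^T_k$ to its value-function form $V^T_k(s_g', F_{s_\Delta'}) := \max_{a_g', F_{a_\Delta'}} \hat{Q}^T_k(s_g', a_g', F_{s_\Delta', a_\Delta'})$---the max over joint state-action distributions with prescribed state marginal is $L_T$-Lipschitz in that marginal, since one can reweight the optimal conditional action distribution to the new marginal---then bounds the continuation term by $\gamma L_T \cdot \mathrm{TV}(F_{z_\Delta}, F_{z_{\Delta'}})$. Summing this with the reward bound yields the recursion, and unrolling gives the claimed geometric series.

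The main obstacle will be making the coupling rigorous when $k \neq k'$, since $F_{z_\Delta}$ is supported on $\mu_k(\cZ_l)$ with granularity $1/k$ while $F_{z_{\Delta'}}$ lives on $\mu_{k'}(\cZ_l)$, so there is no direct position-by-position matching. I would resolve this by refining both populations to the common granularity $1/(kk')$---splitting each of the $k$ samples indexed by $\Delta$ into $k'$ identical sub-units, and symmetrically for $\Delta'$---performing the maximal matching at that finer scale, and then checking that the empirical next-state distributions of the aggregated sub-units inherit the deterministic TV bound. The closely related lifting step from $\hat{Q}^T_k$ to $V^T_k$ requires constructing a valid candidate $\sigma' \in \mu_{k'}(\cZ_l)$ with state marginal $F_{s_{\Delta'}'}$ at TV distance at most $\mathrm{TV}(F_{s_\Delta'}, F_{s_{\Delta'}'})$ from the maximizer $\sigma^* \in \mu_k(\cZ_l)$ of $V^T_k(s_g', F_{s_\Delta'})$; this is done by preserving the conditional action distribution $\sigma^*(a \mid s)$ and again benefits from passing to the common refined granularity.
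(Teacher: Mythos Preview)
Your coupling argument has a genuine obstruction when $k \neq k'$, and the sub-unit refinement you propose does not resolve it. A valid coupling of the two continuation expectations must preserve both marginal laws: on the $\Delta$-side the $k$ particles transition independently under $\prod_{i\in\Delta} P_l(\cdot\mid s_i,s_g,a_i)$, and similarly for the $k'$ particles on the $\Delta'$-side. When you split each $\Delta$-particle into $k'$ identical sub-units, those sub-units are not independent---they all inherit their parent's next state. If the maximal matching at granularity $1/(kk')$ pairs sub-units of a single $\Delta'$-particle with sub-units of two distinct $\Delta$-particles (which is forced whenever the common mass at some $(s,a)$ exceeds $1/\max(k,k')$), then ``matched sub-units transition identically'' compels those two originally independent $\Delta$-particles to share a next state, altering the marginal law of $F_{s_\Delta'}$. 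The construction is then not a coupling of the two Bellman expectations, and the bound $|E_1 - E_2| \leq \E_{\text{coupling}}|\cdot|$ is unjustified. A clean witness that \emph{no} coupling can give your deterministic bound: take $k=1$, $k'=2$, both populations entirely at the same $(s,a)$, so $p=0$; then $F_{s_\Delta'}$ is always a Dirac mass while $F_{s_{\Delta'}'}$ equals a non-Dirac measure (e.g.\ $(1/2,1/2)$) with positive probability whenever $P_l(\cdot\mid s,s_g,a)$ is nondegenerate, so $\mathrm{TV}(F_{s_\Delta'}, F_{s_{\Delta'}'}) = 0$ almost surely is impossible. Your lifting step from $\hat{Q}^T$ to $V^T$ has the analogous granularity defect: the reweighted candidate $\sigma'(s,a) = F_{s_{\Delta'}'}(s)\,\sigma^*(a\mid s)$ need not lie in $\mu_{k'}(\cZ_l)$, and rounding to the $1/k'$ grid can cost strictly more than $\mathrm{TV}(F_{s_\Delta'},F_{s_{\Delta'}'})$ in total variation.

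The paper avoids coupling altogether. It proves the theorem jointly with a companion lemma that bounds $\bigl|\E_{\mathcal{J}_k}\max_{a'}\hat{Q}^T_k - \E_{\mathcal{J}_{k'}}\max_{a'}\hat{Q}^T_{k'}\bigr|$ directly, by its own induction on $T$. The key technical device is not a pathwise TV bound but a bound on the total variation of the \emph{expected} next-state distributions, $\mathrm{TV}\bigl(\E F_{s_\Delta'},\, \E F_{s_{\Delta'}'}\bigr) \leq \mathrm{TV}(F_{z_\Delta}, F_{z_{\Delta'}})$, combined with a step that replaces both inner maxima by a common action (the maximizer on the larger side, extended from $\Delta$ to $\Delta\cup\Delta'$). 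After that substitution the continuation difference collapses to a reward-type average over the agents, to which the expected-distribution bound applies; this is where the particle-count mismatch is absorbed without any coupling.
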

\begin{proof} 
\noindent We prove this inductively. First, note that $\hat{Q}_k^0(\cdot,\cdot,\cdot) = \hat{Q}_{k'}^0(\cdot,\cdot,\cdot)=0$ from the initialization step, which proves the lemma for $T=0$, since $\mathrm{TV}(\cdot,\cdot)\geq 0$. At $T=1$:
\begin{align*}
|\hat{Q}_k^1(s_g,a_g,F_{z_\Delta})-\hat{Q}_{k'}^1(s_g,a_g,F_{z_{\Delta'}})| &= \left|\hat{\mathcal{T}}_k\hat{Q}_k^0(s_g,a_g,F_{z_\Delta})-\hat{\mathcal{T}}_{k'}\hat{Q}_{k'}^0(s_g,a_g, F_{z_{\Delta'}})\right| \\
&= \bigg|r(s_g, F_{s_\Delta}, a_g)+\gamma \E_{s_g', s'_\Delta}\max_{a_g'\in\mathcal{A}_g} \hat{Q}_k^0(s_g',F_{s'_\Delta},a'_g) \\
&\quad\quad-r(s_g,F_{s_{\Delta'}},a_g) -\gamma \E_{s_g', s'_{\Delta'}} \max_{a_g'\in\mathcal{A}_g}\hat{Q}_{k'}^0(s'_g, F_{s'_{\Delta'}}, a'_g)\bigg| \\
&= |r(s_g, a_g, F_{z_\Delta})-r(s_g, a_g, F_{z_{\Delta'}})| \\
&= \left|\frac{1}{k}\sum_{i\in\Delta}r_l(s_g,z_i) -\frac{1}{k'}\sum_{i\in\Delta'}r_l(s_g,z_i)\right| \\
&= \bigg|\E_{z_l \sim F_{z_\Delta}}r_l(s_g, z_l) - \E_{z_l' \sim F_{z_{\Delta'}}}r_l(s_g, z_l')\bigg|
\end{align*}
In the first and second equalities, we use the time evolution property of $\hat{Q}_k^1$ and $\hat{Q}_{k'}^1$ by applying the adapted Bellman operators $\hat{\mathcal{T}}_k$ and $\hat{\mathcal{T}}_{k'}$ to $\hat{Q}_k^0$ and $\hat{Q}_{k'}^0$, respectively, and expanding. In the third and fourth equalities, we note that $\hat{Q}_k^0(\cdot,\cdot,\cdot)=\hat{Q}_{k'}^0(\cdot,\cdot,\cdot)=0$, and subtract the common `global component' of the reward function. 

Then, noting the general property that for any function $f:\mathcal{X}\to\mathcal{Y}$ for $|\mathcal{X}|<\infty$ we can write $f(x) = \sum_{y\in\mathcal{X}}f(y)\mathbbm{1}\{y=x\}$, we have: 
\begin{align*}
|\hat{Q}^1_k(s_g,a_g,F_{z_\Delta}) &- \hat{Q}_{k'}^1(s_g, a_g,F_{z_{\Delta'}})| \\
&= \left|\E_{z_l \sim F_{z_\Delta}}\left[\sum_{z\in \mathcal{Z}}r_l(s_g, z)\mathbbm{1}\{z_l = z\}\right] - \E_{z_l' \sim F_{z_{\Delta'}}}\left[\sum_{z\in\mathcal{Z}}r_l(s_g, z)\mathbbm{1}\{z_l' = z\}\right]\right| \\
&= \bigg|\sum_{z\in\mathcal{Z}}r_l(s_g,z)\cdot (\E_{z_l\sim F_{z_\Delta}}\mathbbm{1}\{z_l=z\} - \E_{z_l'\sim F_{z_{\Delta'}}}\mathbbm{1}\{z_l'=z\})\bigg| \\
&= \bigg|\sum_{z\in\mathcal{Z}}r_l(s_g,z)\cdot (F_{z_\Delta}(z) - F_{z_{\Delta'}}(z))\bigg|\\
&\leq \bigg|\max_{z\in\mathcal{Z}}r_l(s_g,z)|\cdot \sum_{z\in\mathcal{Z}}|F_{z_\Delta}(z) - F_{z_{\Delta'}}(z)\bigg| \\
&\leq 2\|r_l(\cdot,\cdot)\|_\infty \cdot \mathrm{TV}(F_{z_\Delta}, F_{z_{\Delta'}}) 
\end{align*}
The second equality follows from the linearity of expectations, and the third equality follows by noting that for any random variable $X\sim\mathcal{X}$, $\E_X\mathbbm{1}[X=x]=\Pr[X=x]$.
The first inequality follows from an application of the triangle inequality and the Cauchy-Schwarz inequality, and the second inequality uses the definition of TV distance. Thus, at $T=1$, $\hat{Q}$ is $(2\|r_l(\cdot,\cdot)\|_\infty)$-Lipschitz continuous in TV distance, proving the base case.

Assume that for $T\leq t'\in\N$:
\begin{align*}\left|\hat{Q}^{T}_k(s_g, a_g,F_{z_\Delta}) - \hat{Q}_{k'}^{T}(s_g, a_g,F_{z_{\Delta'}})\right| \leq \left(\sum_{t=0}^{T-1}2\gamma^t\right)\|r_l(\cdot,\cdot)\|_\infty\cdot\mathrm{TV}\left(F_{z_\Delta}, F_{z_{\Delta'}}\right)\end{align*}
Then, inductively:
\begin{align*}
&|\hat{Q}^{T+1}_k(s_g, a_g,F_{z_\Delta}) - \hat{Q}_{k'}^{T+1}(s_g, a_g,F_{z_{\Delta'}})| \\
&\quad\leq \bigg|\frac{1}{|\Delta|}\sum_{i\in\Delta}r_l(s_g,z_i)-\frac{1}{|\Delta'|}\sum_{i\in\Delta'}r_l(s_g,z_i)\bigg| \\ 
& \quad\quad\quad +\gamma\bigg|\E_{s_g', s'_{\Delta}}\max_{\substack{a_g'\in\mathcal{A}_g,\\ a_\Delta'\in\mathcal{A}_l^k}}\hat{Q}_k^{T}(s_g', a_g',F_{z_\Delta'})-\E_{s_g', s'_{\Delta'}}\max_{\substack{a_g'\in\mathcal{A}_g,\\ a_{\Delta'}'\in \mathcal{A}_l^{k'}}}\hat{Q}_{k'}^{T}(s_g', a_g',F_{z_{\Delta'}'})\bigg| \\
&\quad\leq 2\|r_l(\cdot,\cdot)\|_\infty\cdot\mathrm{TV}(F_{z_\Delta}, F_{z_{\Delta'}}) \\
&\quad\quad\quad+ \gamma\bigg|\E_{(s_g', s'_{\Delta})\sim\mathcal{J}_k}\max_{\substack{a_g'\in\mathcal{A}_g,\\ a_\Delta'\in\mathcal{A}_l^k}}\hat{Q}_k^{T}(s_g', a_g',F_{z_\Delta'})-\E_{(s_g', s'_{\Delta'})\sim\mathcal{J}_{k'}}\max_{\substack{a_g'\in\mathcal{A}_g,\\ a_{\Delta'}'\in \mathcal{A}_l^{k'}}}\hat{Q}_{k'}^{T}(s_g', a_g', F_{z_{\Delta'}'})\bigg|\\
&\quad\leq 2\|r_l(\cdot,\cdot)\|_\infty\cdot\mathrm{TV}(F_{z_\Delta}, F_{z_{\Delta'}}) + \gamma\left(\sum_{\tau=0}^{T-1}2\gamma^\tau\right)\|r_l(\cdot,\cdot)\|_\infty\cdot\mathrm{TV}(F_{z_\Delta},  F_{z_{\Delta'}})\\
&\quad=\left(\sum_{\tau=0}^{T}2\gamma^\tau\right) \|r_l(\cdot,\cdot)\|_\infty\cdot\mathrm{TV}(F_{z_\Delta}, F_{z_{\Delta'}})
\end{align*}

\noindent In the first inequality, we rewrite the expectations over the states as the expectation over the joint transition probabilities. The second inequality then follows from \cref{lemma: expectation_expectation_max_swap}. To apply it to \cref{lemma: expectation_expectation_max_swap}, we conflate the joint expectation over $(s_g, s_{\Delta\cup\Delta'})$ and reduce it back to the original form of its expectation. Finally, the third inequality follows from \cref{lemma: expectation Q lipschitz continuous wrt Fsdelta}.

By the inductive hypothesis, the claim is proven.\qedhere
\end{proof}

\begin{definition}\label{definition: joint_transition_probability}[Joint Stochastic Kernels] The joint stochastic kernel on $(s_g,s_\Delta)$ for $\Delta\subseteq [n]$ where $|\Delta|=k$ is defined as $\mathcal{J}_k:\mathcal{S}_g\times\mathcal{S}_l^{k}\times\mathcal{S}_g\times\mathcal{A}_g\times\mathcal{S}_l^{k}\times\mathcal{A}_l^k\to[0,1]$, where \begin{equation}\mathcal{J}_k(s_g',s_\Delta'|s_g, a_g, s_\Delta, a_\Delta) := \Pr[(s_g',s_\Delta')|s_g, a_g, s_\Delta,a_\Delta]\end{equation}
\end{definition}
\begin{lemma} For all $T\in\N$, for any $a_g\in\mathcal{A}_g,s_g\in\mathcal{S}_g, s_\Delta\in\mathcal{S}_l^{k}, a_\Delta \in \mathcal{A}_l^k, a'_\Delta \in \mathcal{A}_l^k$, and for all joint stochastic kernels $\mathcal{J}_k$ as defined in \cref{definition: joint_transition_probability}:\label{lemma: expectation Q lipschitz continuous wrt Fsdelta}
\begin{align*}
&\bigg|\E_{(s_g',s_\Delta')\sim\mathcal{J}_k(\cdot,\cdot|s_g,a_g,s_\Delta,a_\Delta)}\max_{a_g',a_\Delta'}\hat{Q}_k^T(s'_g,a_g',F_{z'_\Delta}) \\
&\!-\!\E_{(s_g',s_{\Delta'}')\sim\mathcal{J}_{k'}(\cdot,\cdot|s_g,a_g,s_{\Delta'},a_{\Delta'})}\max_{a_g',a_{\Delta'}}\hat{Q}_{k'}^T(s'_g, a_g', F_{z'_{\Delta'}})\bigg|\!\leq\!\left(\sum_{\tau=0}^{T-1}2\gamma^\tau\right)\!\|r_l(\cdot,\cdot)\|_\infty\mathrm{TV}\left(F_{z_\Delta}, F_{z_{\Delta'}}\right)
\end{align*}
\end{lemma}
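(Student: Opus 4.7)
The plan is to leverage the outer inductive hypothesis on $\hat{Q}_k^T$ (that it is Lipschitz continuous in $F_{z_\Delta}$ with respect to total variation, with constant $\sum_{\tau=0}^{T-1} 2\gamma^\tau \|r_l(\cdot,\cdot)\|_\infty$) together with a coupling argument that propagates the TV bound through one application of the expected max in the adapted Bellman operator.

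First, I would observe that the global transition $s_g' \sim P_g(\cdot\mid s_g,a_g)$ is common to both sides of the target inequality, so I condition on a single shared draw of $s_g'$. This reduces the problem to bounding, for each fixed $s_g'$, the quantity
\[
\Big|\E_{s'_\Delta}\max_{a'_g,a'_\Delta}\hat{Q}_k^T(s'_g,a'_g,F_{z'_\Delta}) \;-\; \E_{s'_{\Delta'}}\max_{a'_g,a'_{\Delta'}}\hat{Q}_{k'}^T(s'_g,a'_g,F_{z'_{\Delta'}})\Big|.
\]
Then I would use the standard inequality $|\max_x f(x) - \max_x g(x)| \le \sup_x |f(x) - g(x)|$ to push the max inside the absolute value, where on each realization the joint action choices are made to match: for shared mass (defined below) we pick the same next action, while the residual mass is handled separately.

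Next, I would construct the coupling of transitions. Setting $d \coloneq \mathrm{TV}(F_{z_\Delta}, F_{z_{\Delta'}})$, I decompose $F_{z_\Delta} = (1-d)C + d\,D_\Delta$ and $F_{z_{\Delta'}} = (1-d)C + d\,D_{\Delta'}$, where $C$ is the maximal overlap distribution on $\mathcal{Z}_l$. For each unit of shared mass in a cell $(s,a)$, I pair the corresponding contribution in $\Delta$ with that in $\Delta'$ and couple both transitions via common random numbers through the same kernel $P_l(\cdot\mid s, s_g, a)$, assigning them an identical next-step action. The residual $d$-mass on each side transitions and acts independently. By construction, the realized next-step joint empiricals satisfy $\mathrm{TV}(F_{z'_\Delta}, F_{z'_{\Delta'}}) \le d$ almost surely in the coupled space. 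Applying the inductive hypothesis pointwise gives
\[
|\hat{Q}_k^T(s'_g,a'_g,F_{z'_\Delta}) - \hat{Q}_{k'}^T(s'_g,a'_g,F_{z'_{\Delta'}})| \;\le\; \left(\sum_{\tau=0}^{T-1} 2\gamma^\tau\right)\|r_l(\cdot,\cdot)\|_\infty\cdot \mathrm{TV}(F_{z'_\Delta},F_{z'_{\Delta'}}),
\]
and taking expectations under the coupling yields the stated bound, since the TV distance is at most $d$ everywhere on the coupled probability space.

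The main obstacle I anticipate is formalizing the coupling when $|\Delta|=k$ and $|\Delta'|=k'$ differ, so that the empirical distributions live on different denominators and ``shared mass'' must be interpreted at the level of distributions on $\mathcal{Z}_l$ rather than as a literal bijection between agents in $\Delta$ and $\Delta'$. This can be handled by viewing each agent as carrying a fractional weight $1/k$ or $1/k'$ and splitting these weights across the overlap and residual parts of the decomposition; the homogeneity of the local transition kernel $P_l$ and the permutation-invariance of $\hat{Q}_k^T$ in the local agents (which lets it depend on $(s'_\Delta, a'_\Delta)$ only through $F_{z'_\Delta}$) ensure that this fractional bookkeeping produces legitimate joint empirical distributions on each side. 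Once this is set up carefully, the remainder is a routine triangle inequality and expectation argument.
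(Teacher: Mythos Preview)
Your coupling step is where the argument breaks. You claim that, under the maximal-overlap coupling, the realized next-step empiricals satisfy $\mathrm{TV}(F_{z'_\Delta},F_{z'_{\Delta'}})\le d$ almost surely. This is false when $k\ne k'$, even in the degenerate case $d=0$. Take $F_{z_\Delta}=F_{z_{\Delta'}}=\delta_{(s,a)}$ with $k=2$, $k'=4$, and $P_l(\cdot\mid s,s_g,a)$ nondegenerate. Then $d=0$, but $F_{s'_\Delta}$ is an empirical distribution of two i.i.d.\ draws while $F_{s'_{\Delta'}}$ is an empirical distribution of four i.i.d.\ draws; their supports on $\mu_2(\mathcal{S}_l)$ and $\mu_4(\mathcal{S}_l)$ do not even coincide, so no coupling can force $F_{s'_\Delta}=F_{s'_{\Delta'}}$ almost surely. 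The fractional-weight bookkeeping you describe cannot fix this, because the transition randomness lives at the agent level: an agent's entire mass $1/k$ lands at a single next state, and you cannot split that mass across the overlap and residual pieces of the decomposition after the draw. Consequently, you cannot invoke the pointwise Lipschitz bound on $\hat{Q}_k^T$ and then average.

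The paper sidesteps this by never attempting a pointwise comparison of the random empiricals. Instead it runs a self-contained induction on $T$ for this lemma (not relying on the outer Lipschitz statement for $\hat{Q}_k^T$): at each step it unrolls one Bellman application, substitutes the maximizing action tuple on the larger side into both expressions, and separates the resulting difference into a reward piece and a recursive piece. The reward piece decouples across agents after the max, so linearity of expectation converts $\E_{s'_\Delta}\frac{1}{k}\sum_{i\in\Delta}(\cdot)$ into a linear functional of $F_{z_\Delta}$, and the TV bound follows directly (this is where $\mathrm{TV}(\E F_{s'_\Delta},\E F_{s'_{\Delta'}})\le \mathrm{TV}(F_{z_\Delta},F_{z_{\Delta'}})$ is used, as in \cref{tv_s_to_z_bound}). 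The recursive piece, after merging the two successive expectations into a single kernel $\tilde{\mathcal{J}}_k$, has exactly the form of the induction hypothesis of this lemma. The point is that the argument operates on \emph{expected} empirical distributions, which do depend only on $F_{z_\Delta}$ and not on $k$, rather than on random empiricals, which do not.
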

\begin{proof}
We prove this inductively. At $T=0$, the statement is true since $\hat{Q}_k^0(\cdot,\cdot,\cdot) = \hat{Q}_{k'}^0(\cdot,\cdot,\cdot) = 0$ and $\mathrm{TV}(\cdot,\cdot)\geq 0$. 

At $T=1$,
\begin{align*}
&\bigg|\E_{(s_g',s_\Delta')\sim\mathcal{J}_k(\cdot,\cdot|s_g,a_g,s_\Delta,a_\Delta)}\max_{a_g',a_\Delta'}\hat{Q}_k^1(s'_g,a_g',s_\Delta',a_\Delta') \\ &\quad\quad\quad\quad\quad\quad\quad\quad\quad\quad\quad\quad- \E_{(s_g',s_{\Delta'}')\sim\mathcal{J}_{k'}(\cdot,\cdot|s_g,a_g,s_{\Delta'}, a_{\Delta'})}\max_{a_g',a'_{\Delta'}}\hat{Q}_{k'}^1(s'_g, a_g',s'_{\Delta'},a'_{\Delta'})\bigg| \\
&=\bigg|\E_{(s_g',s_\Delta')\sim\mathcal{J}_k(\cdot,\cdot|s_g,a_g,s_\Delta,a_\Delta)}\max_{a_g',a_\Delta'}\bigg[r_g(s_g',a_g') + \frac{\sum_{i\in\Delta}r_l(s_i',a_i',s_g')}{k}\bigg] \\
&- \E_{(s_g',s_{\Delta'}')\sim\mathcal{J}_{k'}(\cdot,\cdot|s_g,a_g,s_{\Delta'}, a_{\Delta'})}\max_{a_g',a'_{\Delta'}}\bigg[r_g(s_g',a_g') + \frac{\sum_{i\in\Delta'}r_l(s_i',a_i',s_g')}{k'}\bigg]\bigg|\end{align*}\begin{align*}&= \bigg|\E_{(s_g',s_\Delta')\sim\mathcal{J}_k(\cdot,\cdot|s_g,a_g,s_\Delta,a_\Delta)}\!\max_{a_\Delta'}\frac{\sum_{i\in\Delta}r_l(s_i',a_i',s_g')}{k}\\ &\quad\quad\quad\quad\quad\quad\quad\quad\quad\quad\quad\quad- \E_{(s_g',s_{\Delta'}')\sim\mathcal{J}_{k'}(\cdot,\cdot|s_g,a_g,s_{\Delta'}, a_{\Delta'})}\max_{a'_{\Delta'}}\frac{\sum_{i\in\Delta'}r_l(s_i',a_i',s_g')}{k'}\bigg|\\
&=\bigg|\E_{(s_g',s_\Delta')\sim\mathcal{J}_k(\cdot,\cdot|s_g,a_g,s_\Delta,a_\Delta)}\frac{1}{k}\sum_{i\in\Delta}\tilde{r}_l(s_i',s_g')- \E_{(s_g',s_{\Delta'}')\sim\mathcal{J}_{k'}(\cdot,\cdot|s_g,a_g,s_{\Delta'}, a_{\Delta'})}\frac{1}{k'}\sum_{i\in\Delta'}\tilde{r}_l(s_i',s_g')\bigg|
\end{align*}
In the last equality, we note that \begin{align*}\max_{a_\Delta'}\sum_{i\in\Delta}r_l(s_i',a_i',s_g') &= \sum_{i\in\Delta}\max_{a_\Delta'} r_l(s_i',a_i',s_g') = \sum_{i\in\Delta}\max_{a'_i} r_l(s_i',a_i',s_g') = \sum_{i\in\Delta}\tilde{r}_l(s_i',s_g'),\end{align*}where $\tilde{r}_l(s_i',s_g'):=\max_{a_i'}r_l(s_i',a_i',s_g')$. \\

Then, we have:
\begin{align*}
&\bigg|\E_{(s_g',s_\Delta')\sim\mathcal{J}_k(\cdot,\cdot|s_g,a_g,s_\Delta,a_\Delta)}\frac{1}{k}\sum_{i\in\Delta}\tilde{r}_l(s_i',s_g')- \E_{(s_g',s_{\Delta'}')\sim\mathcal{J}_{k'}(\cdot,\cdot|s_g,a_g,s_{\Delta'}, a_{\Delta'})}\frac{1}{k'}\sum_{i\in\Delta'}\tilde{r}_l(s_i',s_g')\bigg| \\
&= \bigg|\E_{(s_g',s_\Delta')\sim\mathcal{J}_k(\cdot,\cdot|s_g,a_g,s_\Delta,a_\Delta)}\frac{1}{k}\sum_{x\in\mathcal{S}_l}\tilde{r}_l(x,s_g')F_{s'_\Delta}(x)\\ &\quad\quad\quad\quad\quad\quad\quad\quad\quad\quad\quad\quad\quad\quad- \E_{(s_g',s_{\Delta'}')\sim\mathcal{J}_{k'}(\cdot,\cdot|s_g,a_g,s_{\Delta'}, a_{\Delta'})}\frac{1}{k'}\sum_{x\in\mathcal{S}_l}\tilde{r}_l(x,s_g')F_{s'_{\Delta'}}(x)\bigg| \\
&= \bigg|\E_{s_g'\sim \sum_{s'_{\Delta\cup\Delta'}\in\mathcal{S}_l^{|\Delta\cup\Delta'|}}\mathcal{J}_{|\Delta\cup\Delta'|(\cdot,s'_{\Delta\cup\Delta'}|s_g,a_g,s_{\Delta\cup\Delta'},a_{\Delta\cup\Delta'})}}\\ &\quad\quad\quad\quad\quad\quad\quad\quad\quad \sum_{x\in\mathcal{S}_l}\tilde{r_l}(x,s_g')\E_{s'_{\Delta\cup\Delta'}\sim\mathcal{J}_{|\Delta\cup\Delta'|}(\cdot|s_g',s_g,a_g,s_{\Delta\cup\Delta'},a_{\Delta\cup\Delta'})}[F_{s'_\Delta}(x) - F_{s'_{\Delta'}}(x)]\bigg| \\
&\leq \|\tilde{r_l}(\cdot,\cdot)\|_\infty \cdot \E_{s_g'\sim \sum_{s'_{\Delta\cup\Delta'}\in\mathcal{S}_l^{|\Delta\cup\Delta'|}}\mathcal{J}_{|\Delta\cup\Delta'|(\cdot,s'_{\Delta\cup\Delta'}|s_g,a_g,s_{\Delta\cup\Delta'},a_{\Delta\cup\Delta'})}}\\ &\quad\quad\quad\quad\quad\quad\quad\quad\quad\quad\quad\quad\quad\quad\quad\quad\quad\quad\quad\sum_{x\in\mathcal{S}_l}|\E_{s'_{\Delta\cup\Delta'}|s_g'} F_{s_\Delta'}(x) - \E_{s'_{\Delta\cup\Delta'}|s_g'} F_{s_{\Delta'}'}(x)| \\
&\leq \|{r_l}(\cdot,\cdot)\|_\infty \cdot \E_{s_g'\sim \sum_{s'_{\Delta\cup\Delta'}\in\mathcal{S}_l^{|\Delta\cup\Delta'|}}\mathcal{J}_{|\Delta\cup\Delta'|(\cdot,s'_{\Delta\cup\Delta'}|s_g,a_g,s_{\Delta\cup\Delta'},a_{\Delta\cup\Delta'})}}\\ &\quad\quad\quad\quad\quad\quad\quad\quad\quad\quad\quad\quad\quad\quad\quad\quad\quad\quad\quad\quad\quad\sum_{x\in\mathcal{S}_l}|\E_{s'_{\Delta\cup\Delta'}|s_g'} F_{s_\Delta'}(x) - \E_{s'_{\Delta\cup\Delta'}|s_g'} F_{s_{\Delta'}'}(x)| \\
&\leq 2\|r_l(\cdot,\cdot)\|_\infty \cdot \E_{s_g'\sim \sum_{s'_{\Delta\cup\Delta'}\in\mathcal{S}_l^{|\Delta\cup\Delta'|}}\mathcal{J}_{|\Delta\cup\Delta'|(\cdot,s'_{\Delta\cup\Delta'}|s_g,a_g,s_{\Delta\cup\Delta'},a_{\Delta\cup\Delta'})}}\\ &\quad\quad\quad\quad\quad\quad\quad\quad\quad\quad\quad\quad\quad\quad\quad\quad\quad\quad\quad\quad\quad\quad\quad\quad\quad \mathrm{TV}(\E_{s'_{\Delta\cup\Delta'}|s_g'} F_{s_\Delta'}, \E_{s'_{\Delta\cup\Delta'}|s_g'} F_{s_{\Delta'}'}) \\
&\leq  2\|r_l(\cdot,\cdot)\|_\infty \cdot \mathrm{TV}(F_{z_{\Delta}}, F_{z'_{\Delta}})
\end{align*}
The first equality follows from noting that $f(x)= \sum_{x'\in \mathcal{X}}f(x')\mathbbm{1}\{x=x'\}$ and from Fubini-Tonelli's inequality which allows us to swap the order of summations as the summand is finite. The second equality uses the law of total expectation. The first inequality uses Jensen's inequality and the triangle inequality. The second inequality uses $\|\tilde{r}_l(\cdot,\cdot)\|_\infty\leq \|r_l(\cdot,\cdot)\|_\infty$ which holds as $\|{r}_l\|_\infty$ is the infinite-norm of the local reward functions and is therefore atleast as large any other element in the image of $r_l$. The third inequality follows from the definition of total variation distance, and the final inequality follows from  \cref{tv_s_to_z_bound}. This proves the base case.

Then, assume that for $T\leq t'\in\N$, for all joint stochastic kernels $\mathcal{J}_k$ and $\mathcal{J}_{k'}$, and for all $a_g'\in \mathcal{A}_g, a_\Delta' \in \mathcal{A}_l^k$:
\begin{align*}
&\bigg|\E_{(s_g',s_\Delta')\sim\mathcal{J}_k(\cdot,\cdot|s_g,a_g,s_\Delta,a_\Delta)}\max_{a_g',a_\Delta'}\hat{Q}_k^T(s'_g,a_g',s_\Delta', a_\Delta')-\\
&\E_{(s_g',s_{\Delta'}')\sim\mathcal{J}_{k'}(\cdot,\cdot|s_g,a_g,s_{\Delta'}, a_{\Delta'})}\max_{a_g',a'_{\Delta'}}\hat{Q}_{k'}^T(s'_g,  a_g',s'_{\Delta'},a'_{\Delta'})\bigg| \leq 2\left (\sum_{t=0}^{T-1}\! \gamma^t\right)\!\|r_l(\cdot,\cdot)\|_\infty \mathrm{TV}(F_{z_\Delta}, F_{z_{\Delta'}})
\end{align*}
For the remainder of the proof, we adopt the shorthand $\E_{(s_g',s_\Delta')\sim{\mathcal{J}}}$ to denote $\E_{(s_g',s_\Delta')\sim\mathcal{J}_{|\Delta|}(\cdot,\cdot|s_g,a_g,s_\Delta,  a_\Delta)}$, and $\E_{(s_g'',s_\Delta'')\sim{\mathcal{J}}}$ to denote $\E_{(s_g'',s_\Delta'')\sim\mathcal{J}_{|\Delta|}(\cdot,\cdot|s_g',a_g',s_\Delta', a_\Delta')}$. 

Then, inductively, we have:
\begin{align*}
&\bigg|\E_{(s_g',s_\Delta')\sim{\mathcal{J}_k(\cdot,\cdot|s_g,a_g,s_\Delta,a_\Delta)}} \max_{a_g',a_\Delta'}\hat{Q}_k^{T+1}(s'_g,a_g',s_\Delta',a_\Delta') \\ 
&\quad\quad\quad \quad\quad \quad\quad \quad -\E_{(s_g',s_{\Delta'}')\sim{\mathcal{J}_{k'}(\cdot,\cdot|s_g,a_g,s_{\Delta'},a_{\Delta'})}}\max_{a_g',a'_{\Delta'}}\hat{Q}_{k'}^{T+1}(s'_g, a_g',s'_{\Delta'}, a'_{\Delta'})\bigg| = \text{(I) - \text{(II)}},
\end{align*}
where
\begin{align*}\text{(I)} &= \bigg|\E_{(s_g',s_\Delta')\sim{\mathcal{J}_k(\cdot,\cdot|s_g,a_g,s_\Delta,a_\Delta)}} \max_{a_g',a_\Delta'}\bigg[r_{\Delta}(s_g',a_g',s'_{\Delta}, a'_{\Delta})\\
&\quad\quad\quad\quad\quad \quad\quad\quad\quad\quad\quad\quad\quad+\gamma\E_{(s_g'',s_\Delta'')\sim\mathcal{J}_{k}(\cdot,\cdot|s_g',a_g',s_\Delta',a_\Delta')} \max_{a_g'',a_\Delta''}\hat{Q}_k^T(s_g'',s_\Delta'',a_g'',a_\Delta'')\bigg]\bigg|
\end{align*}
and
\begin{align*}
\text{(II)} = &\bigg|\E_{(s_g',s_{\Delta'}')\sim{\mathcal{J}_{k'}(\cdot,\cdot|s_g,a_g,s_{\Delta'},a_{\Delta'})}} \max_{a_g',a_{\Delta'}'}\bigg[r_{\Delta'}(s_g',a_g',s'_{\Delta'}, a'_{\Delta'})\\
&\quad\quad\quad\quad\quad\quad\quad\quad\quad\quad+\gamma\E_{(s_g'',s_{\Delta'}'')\sim\mathcal{J}_{k'}(\cdot,\cdot|s_g',a_g',s_{\Delta'}',a_{\Delta'}')} \max_{a_g'',a_{\Delta'}''}\hat{Q}_{k'}^T(s_g'',s_{\Delta'}'',a_g'',a_{\Delta'}'')\bigg]\bigg|
\end{align*}
Let
\begin{align*}&\tilde{a}_g,\tilde{a}_\Delta\\ &=\mathop{\argmax}_{a_g'\in\mathcal{A}_g, a_\Delta'\in\mathcal{A}_l^k} \bigg[r_{\Delta}(s_g',a_g',s'_{\Delta}, a'_{\Delta})+\gamma\E_{(s_g'',s_\Delta'')\sim\mathcal{J}_{k}(\cdot,\cdot|s_g',a_g',s_\Delta',a_\Delta')} \max_{a_g'',a_\Delta''}\hat{Q}_k^T(s_g'',s_\Delta'',a_g'',a_\Delta'')\bigg],\end{align*}
and
\begin{align*}&\hat{a}_g,\hat{a}_{\Delta'}\\
&=\mathop{\argmax}_{a_g'\in\mathcal{A}_g, a_{\Delta'}'\in\mathcal{A}_l^k} \bigg[r_{\Delta'}(s_g',a_g',s'_{\Delta'}, a'_{\Delta'})\!+\!\gamma\mathop{\E}_{(s_g'',s_{\Delta'}'')\sim\mathcal{J}_{k}(\cdot,\cdot|s_g',a_g',s_{\Delta'}',a_{\Delta'}')} \max_{a_g'',a_{\Delta'}''}\hat{Q}_k^T(s_g'',s_{\Delta'}'',a_g'',a_{\Delta'}'')\bigg]\end{align*}
Then, define $\tilde{a}_{\Delta\cup\Delta'} \in \mathcal{A}_l^{|\Delta\cup\Delta'|}$ by 
\[\tilde{a}_i = \begin{cases}\tilde{a}_i,& \text{if $i\in\Delta$},\\ \hat{a}_i,& \text{if $i\in\Delta'\setminus\Delta$}\end{cases}\] 
Similarly, define $\hat{a}_{\Delta\cup\Delta'}\in\mathcal{A}_l^{|\Delta\cup\Delta'|}$ by
\[\hat{a}_i = \begin{cases}\hat{a}_i, & i\in\Delta'\\ \tilde{a}_i, & i\in\Delta\setminus\Delta' \end{cases}\]
Suppose (I) $\geq$ (II). Then, 
\begin{align*}
&\bigg|\E_{(s_g',s_\Delta')\sim{\mathcal{J}_k(\cdot,\cdot|s_g,a_g,s_\Delta,a_\Delta)}} \max_{a_g',a_\Delta'}\hat{Q}_k^{T+1}(s'_g,a_g',s_\Delta',a_\Delta') \\
&\quad\quad\quad\quad\quad\quad\quad\quad\quad\quad\quad\quad -\E_{(s_g',s_{\Delta'}')\sim{\mathcal{J}_{k'}(\cdot,\cdot|s_g,a_g,s_{\Delta'},a_{\Delta'})}}\max_{a_g',a'_{\Delta'}}\hat{Q}_{k'}^{T+1}(s'_g, a_g',s'_{\Delta'}, a'_{\Delta'})\bigg| 
\end{align*}\begin{align*}
&= \bigg|\E_{(s_g',s_\Delta')\sim{\mathcal{J}_k(\cdot,\cdot|s_g,a_g,s_\Delta,a_\Delta)}} \hat{Q}_k^{T+1}(s'_g,\tilde{a}_g,s_\Delta',\tilde{a}_\Delta) \\ &\quad\quad\quad\quad\quad\quad\quad\quad\quad\quad\quad\quad\quad-\E_{(s_g',s_{\Delta'}')\sim{\mathcal{J}_{k'}(\cdot,\cdot|s_g,a_g,s_{\Delta'},a_{\Delta'})}}\hat{Q}_{k'}^{T+1}(s'_g, \hat{a}_g,s'_{\Delta'}, \hat{a}_{\Delta'})\bigg|\\
&\leq \bigg|\E_{(s_g',s_\Delta')\sim{\mathcal{J}_k(\cdot,\cdot|s_g,a_g,s_\Delta,a_\Delta)}} \hat{Q}_k^{T+1}(s'_g,\tilde{a}_g,s_\Delta',\tilde{a}_\Delta) \\
&\quad\quad\quad\quad\quad\quad\quad\quad\quad\quad\quad\quad\quad-\E_{(s_g',s_{\Delta'}')\sim{\mathcal{J}_{k'}(\cdot,\cdot|s_g,a_g,s_{\Delta'},a_{\Delta'})}}\hat{Q}_{k'}^{T+1}(s'_g, \tilde{a}_g,s'_{\Delta'}, \tilde{a}_{\Delta'})\bigg| \\
&=\bigg|\E_{(s_g',s_\Delta')\sim{\mathcal{J}_k(\cdot,\cdot|s_g,a_g,s_\Delta,a_\Delta)}} \bigg[r_\Delta(s_g',s_\Delta',\tilde{a}_g,\tilde{a}_\Delta)\\
&\quad\quad\quad\quad\quad\quad\quad\quad\quad\quad\quad\quad\quad+\gamma\E_{(s_g'',s_\Delta'')\sim\mathcal{J}_k(\cdot,\cdot|s_g',\tilde{a}_g,s_\Delta',\tilde{a}_\Delta)}\max_{a_g'',a_\Delta''}\hat{Q}_k^{T}(s''_g,a_g'',s_\Delta'',{a}''_\Delta)\bigg]\\
&\quad\quad\quad -\E_{(s_g',s_{\Delta'}')\sim{\mathcal{J}_{k'}(\cdot,\cdot|s_g,a_g,s_{\Delta'},a_{\Delta'})}}\bigg[r_{\Delta'}(s_g',s'_{\Delta'},\tilde{a}_g,\tilde{a}_{\Delta'})\\
&\quad\quad\quad\quad\quad\quad\quad\quad\quad\quad\quad\quad\quad+\gamma\E_{(s_g'',s_{\Delta'}'')\sim\mathcal{J}_{k'}(\cdot,\cdot|s_g',\tilde{a}_g,s'_{\Delta'},\tilde{a}_{\Delta'})}\max_{a_g'',a_{\Delta'}''}\hat{Q}_{k'}^{T}(s''_g, {a}''_g,s''_{\Delta'}, s{a}''_{\Delta'})\bigg]\bigg| \\
&\leq \bigg|\E_{(s_g',s_\Delta')\sim{\mathcal{J}_k(\cdot,\cdot|s_g,a_g,s_\Delta,a_\Delta)}} r_\Delta(s_g',s_\Delta',\tilde{a}_g,\tilde{a}_\Delta) \\
&\quad\quad\quad\quad\quad\quad\quad\quad\quad\quad\quad\quad\quad \quad - \E_{(s_g',s_{\Delta'}')\sim{\mathcal{J}_{k'}(\cdot,\cdot|s_g,a_g,s_{\Delta'},a_{\Delta'})}}r_{\Delta'}(s_g',s'_{\Delta'},\tilde{a}_g,\tilde{a}_{\Delta'})\bigg| \\
&\quad\quad\quad+ \gamma\bigg|\E_{(s_g',s_\Delta')\sim{\mathcal{J}_k(\cdot,\cdot|s_g,a_g,s_\Delta,a_\Delta)}}\E_{(s_g'',s_\Delta'')\sim\mathcal{J}_k(\cdot,\cdot|s_g',\tilde{a}_g,s_\Delta',\tilde{a}_\Delta)}\max_{a_g'',a_\Delta''}\hat{Q}_k^{T}(s''_g,{a}''_g,s_\Delta'',{a}''_\Delta) \\
&\quad\quad\quad- \E_{(s_g',s_{\Delta'}')\sim{\mathcal{J}_{k'}(\cdot,\cdot|s_g,a_g,s_{\Delta'},a_{\Delta'})}}\E_{(s_g'',s_{\Delta'}'')\sim\mathcal{J}_{k'}(\cdot,\cdot|s_g',\tilde{a}_g,s'_{\Delta'},\tilde{a}_{\Delta'})}\max_{a_g'',a_{\Delta'}''}\hat{Q}_{k'}^{T}(s''_g, {a}''_g,s''_{\Delta'}, {a}''_{\Delta'})\bigg| \\
&= \bigg|\E_{(s_g',s_\Delta')\sim{\mathcal{J}_k(\cdot,\cdot|s_g,a_g,s_\Delta,a_\Delta)}} \frac{1}{k}\sum_{i\in\Delta}r_l(s_i',s_g',\tilde{a}_i) \\
&\quad\quad\quad\quad\quad\quad\quad\quad\quad\quad\quad\quad- \E_{(s_g',s_{\Delta'}')\sim{\mathcal{J}_{k'}(\cdot,\cdot|s_g,a_g,s_{\Delta'},a_{\Delta'})}}\frac{1}{k'}\sum_{i\in\Delta'}r_l(s_i',s_g',\tilde{a}_i)\bigg| \\
&\quad\quad\quad+ \gamma\bigg|\E_{(s_g'',s_\Delta'')\sim{\tilde{\mathcal{J}}_k(\cdot,\cdot|s_g,a_g,s_\Delta,a_\Delta)}}\max_{a_g'',a_\Delta''}\hat{Q}_k^{T}(s''_g,{a}''_g,s_\Delta'',{a}''_\Delta) \\
&\quad\quad\quad\quad \quad\quad\quad \quad\quad\quad\quad \quad- \E_{(s_g'',s_{\Delta''}')\sim{\tilde{\mathcal{J}}_{k'}(\cdot,\cdot|s_g,a_g,s_{\Delta'},a_{\Delta'})}}\max_{a_g'',a_{\Delta'}''}\hat{Q}_{k'}^{T}(s''_g, {a}''_g,s''_{\Delta'}, {a}''_{\Delta'})\bigg| \\
&\leq 2\|r_l(\cdot,\cdot)\|_\infty \cdot \mathrm{TV}(F_{z_\Delta}, F_{z_{\Delta'}}) +  \gamma\left(\sum_{t=0}^{T}2\gamma^t\right)\|r_l(\cdot,\cdot)\|_\infty\cdot\mathrm{TV}(F_{z_\Delta}, F_{z_{\Delta'}}) \\
&= \left(\sum_{t=0}^{T+1}2\gamma^t\right)\|r_l(\cdot,\cdot)\|_\infty\cdot \mathrm{TV}(F_{z_\Delta}, F_{z_{\Delta'}})
\end{align*}
The first equality rewrites the equations with their respective maximizing actions. The first inequality upper-bounds this difference by allowing all terms to share the common action $\tilde{a}$. Using the Bellman equation, the second equality expands $\hat{Q}_k^{T+1}$ and $\hat{Q}_{k'}^{T+1}$. The second inequality follows from the triangle inequality. The third equality follows by subtracting the common $r_g(s_g',\tilde{a_g})$ terms from the reward and noting that the two expectation terms $\E_{s_g',s_\Delta'}\sim\mathcal{J}_k(\cdot,\cdot|s_g,a_g,s_\Delta,a_\Delta)\E_{s_g'',s_{\Delta''}\sim\mathcal{J}_k(\cdot,\cdot|s_g,\tilde{a}_g, s_\Delta, \tilde{a}_\Delta)}$ can be combined into a single expectation $\E_{s_g'',s_\Delta''\sim\tilde{\mathcal{J}}_k(\cdot,\cdot|s_g,a_g,s_\Delta,a_\Delta)}$.

In the special case where $a_\Delta = \tilde{a}_\Delta$, we derive a closed form expression for $\tilde{\mathcal{J}}_k$ in \cref{lemma: combining transition probabilities}. To justify the third inequality, the second term follows from the induction hypothesis and the first term follows from the following derivation:
\begin{align*}
&\bigg|\E_{(s_g',s_\Delta')\sim\mathcal{J}_k(\cdot,\cdot|s_g,a_g,s_\Delta,a_\Delta)} \frac{\sum_{i\in\Delta}r_l(s_i',s_g',\tilde{a}_i)}{k} - \E_{(s_g',s_{\Delta'}')\sim\mathcal{J}_{k'}(\cdot,\cdot|s_g,a_g,s_{\Delta'},a_{\Delta'})}\frac{\sum_{i\in\Delta'}r_l(s_i',s_g',\tilde{a}_i)}{k'}\bigg| \\
&= \bigg|\E_{(s_g',s_\Delta')\sim\mathcal{J}_k(\cdot,\cdot|s_g,a_g,s_\Delta,a_\Delta)} \frac{\sum_{i\in\Delta}r^{\tilde{a}}_l(s_i',s_g')}{k} - \E_{(s_g',s_{\Delta'}')\sim\mathcal{J}_{k'}(\cdot,\cdot|s_g,a_g,s_{\Delta'},a_{\Delta'})}\frac{\sum_{i\in\Delta'}r_l^{\tilde{a}}(s_i',s_g')}{k'}\bigg| \\
&\leq \|r_l^{\tilde{a}}\|_\infty \E_{s_g'\sim \sum_{s'_{\Delta\cup\Delta'}\in\mathcal{S}_l^{|\Delta\cup\Delta'|}}\mathcal{J}_{|\Delta\cup\Delta'|}(\cdot,s'_{\Delta\cup\Delta'}|s_g,a_g,s_{\Delta\cup\Delta'},a_{\Delta\cup\Delta'})}\mathrm{TV}(\E_{s'_{\Delta\cup\Delta'}|s_g'} F_{s_{\Delta}}, \E_{s'_{\Delta\cup\Delta'}|s_g'} F_{s_{\Delta'}}) \\
&\leq 2\|r_l(\cdot,\cdot)\|_\infty\cdot \mathrm{TV}(F_{z_\Delta}, F_{z_{\Delta'}})
\end{align*}
The above derivation follows from the same argument as in the base case where $r_l^{\tilde{a}}(s_i',s_g'):=r_l(s_i',s_g',\tilde{a}_i)$ for any $i\in\Delta\cup\Delta'$. Similarly, if (I) $ < $ (II), an analogous argument that replaces $\tilde{a}_\Delta$ with $\hat{a}_\Delta$ yields the same result.

Therefore, by the induction hypothesis, the claim is proven.\qedhere \\
\end{proof}

\begin{remark}
    Given a joint transition probability function $\mathcal{J}_{|\Delta\cup\Delta'|}$ as defined in \cref{definition: joint_transition_probability}, we can recover the transition function for a single agent $i\in \Delta\cup\Delta'$ given by $\mathcal{J}_1$ using the law of total probability and the conditional independence between $s_i$ and $s_g\cup s_{[n]\setminus i}$ in \cref{equation: lotp/condind}. This characterization is crucial in  \cref{tv_s_to_z_bound} and \cref{lemma: expectation Q lipschitz continuous wrt Fsdelta}: 
    \begin{equation}
        \label{equation: lotp/condind}\mathcal{J}_1(\cdot|s_g',s_g, a_g, s_i,a_i) = \sum_{s'_{\Delta\cup\Delta'\setminus i}\sim \mathcal{S}_l^{|\Delta\cup\Delta'|-1}}\mathcal{J}_{|\Delta\cup\Delta'|}(s'_{\Delta\cup\Delta'\setminus i}\circ s'_i|s_g',s_g,a_g,s_{\Delta\cup\Delta'},a_{\Delta\cup\Delta'})
    \end{equation}
Here, we use a conditional independence property proven in  \cref{bayes_ball_cond_ind_lemma}.\\
\end{remark}

\begin{lemma}\label{tv_s_to_z_bound}\emph{The TV-distance between the next-step expected empirical distribution functions is bounded by the TV-distance between the existing empirical distribution functions.}
    \begin{align*}
&\mathrm{TV}\bigg(\E_{s'_{\Delta\cup\Delta'}|s_g' \sim \mathcal{J}_{|\Delta\cup\Delta'|}(\cdot|s_g',s_g,a_g,s_{\Delta\cup\Delta'},a_{\Delta\cup\Delta'})}F_{s'_\Delta},\E_{s'_{\Delta\cup\Delta'}|s_g' \sim \mathcal{J}_{|\Delta\cup\Delta'|}(\cdot|s_g',s_g,a_g,s_{\Delta\cup\Delta'},a_{\Delta\cup\Delta'})}F_{s'_{\Delta'}}\bigg) \\&\quad\quad\quad\quad\quad\quad \leq \mathrm{TV}(F_{z_\Delta}, F_{z_{\Delta'}})
    \end{align*}
\end{lemma}
\begin{proof} From the definition of total variation distance, we have:
\begin{align*}
&\mathrm{TV}\bigg(\E_{s'_{\Delta\cup\Delta'}|s_g' \sim \mathcal{J}_{|\Delta\cup\Delta'|}(\cdot|s_g',s_g,a_g,s_{\Delta\cup\Delta'},a_{\Delta\cup\Delta'}}F_{s'_\Delta},\E_{s'_{\Delta\cup\Delta'}|s_g' \sim \mathcal{J}_{|\Delta\cup\Delta'|}(\cdot|s_g',s_g,a_g,s_{\Delta\cup\Delta'},a_{\Delta\cup\Delta'}}F_{s'_{\Delta'}}\bigg) \\
    &\quad\quad= \frac{1}{2}\sum_{x\in\mathcal{S}_l}\bigg|\mathbb{E}_{s'_{\Delta\cup\Delta'}|s_g'\sim \mathcal{J}_{|\Delta\cup\Delta'|}(\cdot|s_g',s_g,a_g,s_{\Delta\cup\Delta'},a_{\Delta\cup\Delta'})}F_{s'_{\Delta}}(x) \\
    &\quad\quad\quad\quad\quad\quad\quad\quad\quad\quad\quad\quad\quad \quad\quad - \mathbb{E}_{s'_{\Delta\cup\Delta'}|s_g'\sim \mathcal{J}_{|\Delta\cup\Delta'|}(\cdot|s_g',s_g,a_g,s_{\Delta\cup\Delta'},a_{\Delta\cup\Delta'})}F_{s'_{\Delta'}}(x)\bigg| \\
    &\quad\quad= \frac{1}{2}\sum_{x\in\mathcal{S}_l}\bigg| \frac{1}{k}\sum_{i\in\Delta}\mathcal{J}_{1}(x|s_g',s_g,a_g,s_{i},a_{i}) - \frac{1}{k'}\sum_{i\in\Delta'}\mathcal{J}_{1}(x|s_g',s_g,a_g,s_{i},a_{i})\bigg|\\
    &\quad\quad= \frac{1}{2}\sum_{x\in\mathcal{S}_l}\bigg|\frac{1}{k}\sum_{a_l\in\mathcal{A}_l}\sum_{s_l\in\mathcal{S}_l}\sum_{i\in\Delta}\mathcal{J}_{1}(x|s_g', s_g, a_g, s_i, a_i)\mathbbm{1}\left\{\substack{a_i = a_l\\ s_i = s_l}\right\} \\
    &\quad\quad\quad\quad\quad\quad\quad\quad\quad \quad\quad\quad\quad \quad\quad\quad -\frac{1}{k'}\sum_{a_l\in\mathcal{A}_l}\sum_{s_l\in\mathcal{S}_l}\sum_{i\in\Delta'}\mathcal{J}_{1}(x|s_g',s_g,a_g,s_i, a_i)\mathbbm{1}\left\{\substack{a_i = a_l \\ s_i =s_l}\right\}\bigg| \\
    &\quad\quad= \frac{1}{2}\sum_{x\in\mathcal{S}_l}\bigg|\frac{1}{k}\sum_{a_l\in\mathcal{A}_l}\sum_{s_l\in\mathcal{S}_l}\sum_{i\in\Delta}\mathcal{J}_{1}(x|s_g', s_g, a_g, \cdot, \cdot)\vec{\mathbbm{1}}_{\left\{\substack{a_i = a_l\\ s_i = s_l}\right\}} \\
    &\quad\quad\quad\quad\quad\quad\quad\quad\quad\quad\quad\quad\quad\quad\quad\quad-\frac{1}{k'}\sum_{a_l\in\mathcal{A}_l}\sum_{s_l\in\mathcal{S}_l}\sum_{i\in\Delta'}\mathcal{J}_{1}(x|s_g',s_g,a_g,\cdot, \cdot)\vec{\mathbbm{1}}_{\left\{\substack{a_i = a_l \\ s_i =s_l}\right\}}\bigg|
\end{align*}
The second equality uses \cref{lemma:exp_empirical}. The third equality uses the property that each local agent can only have one state/action pair, and the fourth equality vectorizes the indicator variables. Then,
\begin{align*}
&\mathrm{TV}\bigg(\E_{s'_{\Delta\cup\Delta'}|s_g' \sim \mathcal{J}_{|\Delta\cup\Delta'|}(\cdot|s_g',s_g,a_g,s_{\Delta\cup\Delta'},a_{\Delta\cup\Delta'}}F_{s'_\Delta},\E_{s'_{\Delta\cup\Delta'}|s_g' \sim \mathcal{J}_{|\Delta\cup\Delta'|}(\cdot|s_g',s_g,a_g,s_{\Delta\cup\Delta'},a_{\Delta\cup\Delta'}}F_{s'_{\Delta'}}\bigg) \\
&\leq \frac{1}{2}\sum_{x\in\mathcal{S}_l}\sum_{a_l\in\mathcal{A}_l}\sum_{s_l\in\mathcal{S}_l}\left|\frac{1}{k}\sum_{i\in\Delta}\mathcal{J}_{1}(x|s_g', s_g, a_g, \cdot, \cdot)\vec{\mathbbm{1}}_{\left\{\substack{a_i = a_l\\ s_i = s_l}\right\}} -\frac{1}{k'}\sum_{i\in\Delta'}\mathcal{J}_{1}(x|s_g',s_g,a_g,\cdot, \cdot)\vec{\mathbbm{1}}_{\left\{\substack{a_i = a_l \\ s_i =s_l}\right\}}\right|\\
&= \frac{1}{2}\sum_{a_l\in\mathcal{A}_l}\sum_{s_l\in\mathcal{S}_l}\left\|\frac{1}{k}\sum_{i\in\Delta}\mathcal{J}_{1}(\cdot|s_g', s_g, a_g, \cdot, \cdot)\vec{\mathbbm{1}}_{\left\{\substack{a_i = a_l\\ s_i = s_l}\right\}} -\frac{1}{k'}\sum_{i\in\Delta'}\mathcal{J}_{1}(\cdot|s_g',s_g,a_g,\cdot, \cdot)\vec{\mathbbm{1}}_{\left\{\substack{a_i = a_l \\ s_i =s_l}\right\}}\right\|_1\\
&\leq \frac{1}{2}\sum_{a_l\in\mathcal{A}_l}\sum_{s_l\in\mathcal{S}_l}\left\|\frac{1}{k}\sum_{i\in\Delta}\vec{\mathbbm{1}}_{\left\{\substack{a_i = a_l\\ s_i = s_l}\right\}} -\frac{1}{k'}\sum_{i\in\Delta'}\vec{\mathbbm{1}}_{\left\{\substack{a_i = a_l \\ s_i =s_l}\right\}}\right\|_1\cdot \|\mathcal{J}_{1}(\cdot|s_g',s_g,a_g,\cdot, \cdot)\|_1\\
&= \frac{1}{2}\sum_{a_l\in\mathcal{A}_l}\sum_{s_l\in\mathcal{S}_l}\left\|\frac{1}{k}\sum_{i\in\Delta}\mathbbm{1}\{\substack{a_i=a_l\\ s_i=s_l}\} - \frac{1}{k'}\sum_{i\in\Delta'}\mathbbm{1}\{\substack{a_i = a_l\\ s_i = s_l}\}\right\|_1 \\
& = \frac{1}{2}\sum_{(s_l,a_l)\in\mathcal{S}_l\times\mathcal{A}_l}\left|\frac{1}{k}\sum_{i\in\Delta}\mathbbm{1}\{\substack{a_i=a_l\\ s_i=s_l}\} - \frac{1}{k'}\sum_{i\in\Delta'}\mathbbm{1}\{\substack{a_i = a_l\\ s_i = s_l}\}\right| \\
& = \frac{1}{2}\sum_{z_l\in\mathcal{S}_l\times\mathcal{A}_l}\left|F_{z_\Delta}(z_l) - F_{z_{\Delta'}}(z_l)\right| \\
&:= \mathrm{TV}(F_{z_\Delta}, F_{z_{\Delta'}})
\end{align*}
The first inequality uses the triangle inequality. The second inequality and fifth equality follow from Hölder's inequality and the sum of the probabilities from the stochastic transition function $\mathcal{J}_1$ being equal to $1$. The sixth equality uses Fubini-Tonelli's theorem which applies as the total variation distance measure is bounded from above by $1$. The final equality recovers the total variation distance for the variable $z=(s,a)\in \mathcal{S}_l\times\mathcal{A}_l$ across agents $\Delta$ and $\Delta'$, which proves the claim.\qedhere \end{proof}

Next, recall the data processing inequality.
\begin{lemma}[Data Processing Inequality.] Let $A$ and $B$ be random variables over some domain $S$. Let $f$ be some function (not necessarily deterministically) mapping from $S$ to any codomain $T$. Then, every $f$-divergence $\chi$ satisfies
\[\chi(f(A)\|f(B))\leq \chi(A\|B)\]
\end{lemma}
\textbf{Remark:} We show an analog of the data processing inequality. Under this lens, \cref{tv_s_to_z_bound} saturates the data-processing relation for the TV distance in our multi-agent setting.

\begin{lemma}\label{lemma:exp_empirical}
\[\mathbb{E}_{s'_{\Delta\cup\Delta'}|s_g'\sim \mathcal{J}_{|\Delta\cup\Delta'|}(\cdot|s_g,a_g,s_{\Delta\cup\Delta'},a_{\Delta\cup\Delta'})}F_{s'_{\Delta}}(x) = \frac{1}{k}\sum_{i\in\Delta}\mathcal{J}_1(x|s_g',s_g,a_g,s_i,a_i)\]
\end{lemma}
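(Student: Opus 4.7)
The identity is essentially a linearity-of-expectation statement combined with the conditional independence structure of the local transitions. The plan is to expand $F_{s'_\Delta}(x)$ by its definition as an average of indicators, pull the expectation inside the sum, and then reduce each resulting marginal probability to the single-agent transition kernel $\mathcal{J}_1$.

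Concretely, by \cref{definition: empirical_distribution_function}, I write
\[
F_{s'_\Delta}(x) \;=\; \frac{1}{k}\sum_{i\in\Delta}\mathbbm{1}\{s'_i = x\}.
\]
Since $k=|\Delta|$ is deterministic and the sum has finitely many terms, linearity of expectation gives
\[
\E_{s'_{\Delta\cup\Delta'}|s_g' \sim \mathcal{J}_{|\Delta\cup\Delta'|}(\cdot|s_g,a_g,s_{\Delta\cup\Delta'},a_{\Delta\cup\Delta'})}F_{s'_\Delta}(x)
\;=\; \frac{1}{k}\sum_{i\in\Delta}\Pr\!\left[s'_i = x \,\big|\, s_g',s_g,a_g,s_{\Delta\cup\Delta'},a_{\Delta\cup\Delta'}\right].
\]
The remaining task is to reduce each of these marginal probabilities, which currently condition on the state/action of \emph{every} agent in $\Delta\cup\Delta'$, to a conditional probability that only involves agent $i$'s own state/action and the global state/action.

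This reduction is exactly the conditional-independence fact that the paper already invokes (\cref{equation: lotp/condind}, justified via the ``Bayes ball'' argument in \cref{bayes_ball_cond_ind_lemma}): under the generative model in \cref{equation: global_transition,equation: local_transition}, the next local state $s'_i$ is produced by $P_l(\cdot|s_i,s_g,a_i)$ independently of $\{s'_j\}_{j\neq i}$ and of the other agents' states/actions $\{(s_j,a_j)\}_{j\neq i}$, and this independence is preserved after additionally conditioning on $s_g'$ (which depends only on $(s_g,a_g)$). Marginalizing the joint kernel $\mathcal{J}_{|\Delta\cup\Delta'|}$ over the other coordinates $s'_{\Delta\cup\Delta'\setminus i}$ via the law of total probability therefore yields
\[
\Pr\!\left[s'_i = x \,\big|\, s_g',s_g,a_g,s_{\Delta\cup\Delta'},a_{\Delta\cup\Delta'}\right] \;=\; \mathcal{J}_1(x \mid s_g',s_g,a_g,s_i,a_i),
\]
which is precisely \cref{equation: lotp/condind} evaluated at agent $i$. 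Substituting this into the preceding display gives
\[
\E_{s'_{\Delta\cup\Delta'}|s_g'}F_{s'_\Delta}(x) \;=\; \frac{1}{k}\sum_{i\in\Delta}\mathcal{J}_1(x \mid s_g',s_g,a_g,s_i,a_i),
\]
as claimed.

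The only step requiring any care is the marginalization: one must verify that the quantity $\Pr[s'_i = x \mid s_g', s_g, a_g, s_{\Delta\cup\Delta'}, a_{\Delta\cup\Delta'}]$ does not actually depend on $(s_j, a_j)_{j\neq i}$. This is not a computation but an appeal to the factorization of the transition kernel assumed by the model, and it is already packaged as \cref{bayes_ball_cond_ind_lemma} and \cref{equation: lotp/condind} in the paper; beyond this, the argument is a one-line application of linearity of expectation.
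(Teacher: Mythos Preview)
Your proposal is correct and follows essentially the same approach as the paper: expand $F_{s'_\Delta}(x)$ as an average of indicators, apply linearity of expectation to obtain $\frac{1}{k}\sum_{i\in\Delta}\Pr[s'_i=x\mid\cdots]$, and then invoke the conditional-independence reduction (\cref{bayes_ball_cond_ind_lemma} and \cref{equation: lotp/condind}) to collapse each marginal to $\mathcal{J}_1(x\mid s_g',s_g,a_g,s_i,a_i)$. The paper's proof is line-for-line the same argument.
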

\begin{proof} By expanding on the definition of the empirical distribution function $F_{s_{\Delta'}}(x) = \frac{1}{|\Delta|}\sum_{i\in\Delta}\mathbbm{1}\{s_i=x\}$, we have:
\begin{align*}&\mathbb{E}_{s_{\Delta\cup\Delta'}|s_g'\sim \mathcal{J}_{|\Delta\cup\Delta'|}(\cdot|s_g',s_g,a_g,s_{\Delta\cup\Delta'},a_{\Delta\cup\Delta'})}F_{s'_{\Delta}}(x) \\
&\quad\quad\quad= \frac{1}{k}\sum_{i\in\Delta}\mathbb{E}_{s_{\Delta\cup\Delta'}|s_g'\sim \mathcal{J}_{|\Delta\cup\Delta'|}(\cdot|s_g',s_g,a_g,s_{\Delta\cup\Delta'},a_{\Delta\cup\Delta'})}\mathbbm{1}\{s_i'=x\} \\
&\quad\quad\quad= \frac{1}{k}\sum_{i\in\Delta}\mathbb{E}_{s_{\Delta\cup\Delta'}|s_g'\sim \mathcal{J}_{1}(\cdot|s_g',s_g,a_g,s_{i},a_{i})}\mathbbm{1}\{s_i'=x\} \\
&\quad\quad\quad= \frac{1}{k}\sum_{i\in\Delta}\mathcal{J}_1(x|s_g',s_g,a_g,s_i,a_i)
\end{align*}
The second equality follows from the conditional independence of $s_i'$ from $s_{\Delta\cup\Delta'\setminus i}$ from \cref{bayes_ball_cond_ind_lemma}, and the final equality uses the fact that the expectation of an indicator random variable is the probability distribution function of the random variable. \qedhere \\ \end{proof}

\begin{lemma}\label{bayes_ball_cond_ind_lemma}
The distribution $s'_{\Delta\cup\Delta'\setminus i}|s_g',s_g,a_g,s_{\Delta\cup\Delta'},a_{\Delta\cup\Delta'}$ is conditionally independent to the distribution $s_i'|s_g',s_g,a_g,s_{i},a_{i}$ for any $i\in\Delta\cup\Delta'$.
\end{lemma}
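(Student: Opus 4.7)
The plan is to derive the claim directly from the factorization structure of the MDP transitions given in \cref{equation: global_transition,equation: local_transition}. By construction, conditional on $(s_g, a_g)$, the next global state $s_g'$ is drawn from $P_g(\cdot | s_g, a_g)$ and, independently, each local next state $s_j'$ is drawn from $P_l(\cdot | s_j, s_g, a_j)$, with the local transitions mutually independent across $j$. Crucially, $s_g'$ does not depend on any local agent's state or action, and $s_j'$ does not depend on $a_g$ once $s_g$ is given. This is the only content we will need; the rest is a one-line Bayes-rule manipulation.

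First I would write down the joint conditional distribution
\begin{equation*}
P\bigl(s_g', s'_{\Delta\cup\Delta'} \,\bigm|\, s_g, a_g, s_{\Delta\cup\Delta'}, a_{\Delta\cup\Delta'}\bigr) \;=\; P_g(s_g' \mid s_g, a_g) \,\prod_{j \in \Delta\cup\Delta'} P_l(s_j' \mid s_j, s_g, a_j),
\end{equation*}
which is immediate from the independence of the per-agent transition kernels. Marginalizing out $s'_{\Delta\cup\Delta'}$ gives
\begin{equation*}
P\bigl(s_g' \,\bigm|\, s_g, a_g, s_{\Delta\cup\Delta'}, a_{\Delta\cup\Delta'}\bigr) \;=\; P_g(s_g' \mid s_g, a_g),
\end{equation*}
since each $\sum_{s_j'} P_l(s_j' \mid s_j, s_g, a_j) = 1$. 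Dividing the joint by this marginal yields
\begin{equation*}
P\bigl(s'_{\Delta\cup\Delta'} \,\bigm|\, s_g', s_g, a_g, s_{\Delta\cup\Delta'}, a_{\Delta\cup\Delta'}\bigr) \;=\; \prod_{j \in \Delta\cup\Delta'} P_l(s_j' \mid s_j, s_g, a_j).
\end{equation*}

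From this product form, for any fixed $i \in \Delta\cup\Delta'$ the conditional distribution of $s_i'$ depends only on $(s_i, a_i, s_g)$ and factors out of the conditional joint for $s'_{\Delta\cup\Delta'\setminus i}$, so $s_i' \perp s'_{\Delta\cup\Delta'\setminus i}$ given $(s_g', s_g, a_g, s_{\Delta\cup\Delta'}, a_{\Delta\cup\Delta'})$, which is precisely the claimed conditional independence. I anticipate no serious obstacle here; the only subtlety worth being explicit about is that conditioning on $s_g'$ does not induce a dependence between the local next states, because $s_g'$ is a child of only $(s_g, a_g)$ in the transition DAG and is therefore d-separated from $\{s_j'\}_{j\neq i}$ once the parents are fixed. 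This step corresponds to showing the ``no v-structure through $s_g'$'' check that a reader might otherwise worry about.
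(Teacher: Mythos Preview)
Your argument is correct. You derive the conditional independence directly from the explicit product factorization of the joint transition kernel: once you divide out the marginal of $s_g'$, the conditional law of $s'_{\Delta\cup\Delta'}$ given $(s_g',s_g,a_g,s_{\Delta\cup\Delta'},a_{\Delta\cup\Delta'})$ is the product $\prod_j P_l(s_j'\mid s_j,s_g,a_j)$, from which the claimed independence and the reduction of the $s_i'$-marginal to a function of $(s_i,s_g,a_i)$ alone are immediate. The paper instead constructs a causal graph over the variables $\{s_g,a_g,s_{\Delta\cup\Delta'},a_{\Delta\cup\Delta'},s_g',s_i',s'_{\Delta\cup\Delta'\setminus i},\ldots\}$ and invokes the Bayes-Ball / $d$-separation criterion of \cite{shachter2013bayesball}, checking that every undirected path from $s_i'$ to $s'_{\Delta\cup\Delta'\setminus i}$ is blocked by the conditioning set $Z=\{s_g',s_g,a_g,s_i,a_i\}$. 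Your route is more elementary and self-contained (no external theorem needed), and it makes the ``no v-structure through $s_g'$'' concern transparent via the division step; the paper's route is the standard graphical-model abstraction, which scales more easily to more elaborate dependence structures but is heavier machinery than this particular lemma requires.
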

\begin{proof}
 We direct the interested reader to the Bayes-Ball theorem in \cite{shachter2013bayesball} for proving conditional independence. For ease of exposition, we restate the two rules in \cite{shachter2013bayesball} that introduce the notion of $d$-separations which implies conditional independence. Suppose we have a causal graph $G=(V,E)$ where the vertex set $V=[p]$ for $p\in\mathbb{N}$ is a set of variables and the edge set $E \subseteq 2^V$ denotes dependence through connectivity. Then, the two rules that establish $d$-separations are as follows: 
 \begin{enumerate}
     \item For $x,y\in V$, we say that $x,y$ are $d$-connected if there exists a path $(x,\dots, y)$ that can be traced without traversing a pair of arrows that point at the same vertex. 
     \item We say that $x,y\in V$ are $d$-connected, conditioned on a set of variables $Z\subseteq V$, if there is a path $(x,\dots, y)$ that does not contain an event $z\in Z$ that can be traced without traversing a pair of arrows that point at the same vertex.
 \end{enumerate}  If $x,y\in V$ is not $d$-connected through any such path, then $x, y$ is $d$-separated, which implies conditional independence.

Let $Z = \{s_g',s_g,a_g,s_i, a_i\}$ be the set of variables we condition on. Then, the below figure demonstrates the causal graph for the events of interest.
\begin{figure}[hbt!]
\begin{center}
\begin{tikzpicture}%
  [>=stealth,
   shorten >=1pt,
   node distance=2.6cm,
   on grid,
   auto,
   every state/.style={draw=black!60, fill=black!5, very thick}
  ]
\node[state] (0)  {$a_g$};
\node[state] (1) [right=of 0] {$s_{\Delta\cup\Delta'\setminus i}$};
\node[state] (2) [right=of 1] {$s_g$};
\node[state] (3) [right=of 2] {$s_i$};
\node[state] (4) [right=of 3] {$a_{\Delta\cup\Delta'\setminus i}$};
\node[state] (5) [right=of 4] {$a_{i}$};

\node[state] (6) [below= of 0] {$a_g'$};
\node[state] (7) [below=of 1] {$s'_{\Delta\cup\Delta'\setminus i}$};
\node[state] (8) [below=of 2] {$s'_g$};
\node[state] (9) [below=of 3] {$s'_i$};
\node[state] (10) [below=of 4] {$a'_{\Delta\cup\Delta'\setminus i}$};
\node[state] (11) [below=of 5] {$a'_{i}$};

\path[->]
   (2)
   edge node[blue] {} (7)
   edge node {} (6)
   edge node {} (8)
   edge node {} (9)
   edge node {} (10)
   edge node {} (11)
   
   (1)
   edge node {} (7)
   edge node {} (10)
   edge node {} (11)
   edge node {} (6)

    (0)
    edge node {} (8)

    (3)
    edge node {} (6)
    edge node {} (9)
    edge node {} (10)
    edge node {} (11)

    (4)
    edge node {} (7)

    (5)
    edge node {} (9)
   ;
\end{tikzpicture}
\end{center}
\caption{Causal graph to demonstrate the dependencies between variables.}
\end{figure}
\begin{enumerate}
    \item Observe that all paths (through undirected edges) stemming from $s'_{\Delta\cup\Delta'\setminus i}\to s_{\Delta\cup\Delta'\setminus i}$ pass through $s_i \in Z$ which is blocked.
    \item All other paths from $s'_{\Delta\cup\Delta'\setminus i}$ to $s_i'$ pass through $s_g\cup s_g' \in Z$. 
\end{enumerate}
  Therefore, $s'_{\Delta\cup\Delta'\setminus i}$ and $s_i'$ are $d$-separated by $Z$. Hence, by \cite{shachter2013bayesball}, $s'_{\Delta\cup\Delta'\setminus i}$ and $s_i'$ are conditionally independent.\qedhere \\
\end{proof}

\begin{lemma}\label{lemma: combining transition probabilities}
For any joint transition probability function $\mathcal{J}_k$ on $s_g\in\mathcal{S}_g, s_\Delta\in\mathcal{S}_l^k$, $a_\Delta\in\mathcal{A}_l^k$ where $|\Delta|=k$, given by $\mathcal{J}_k:\mathcal{S}_g\times\mathcal{S}_l^{|\Delta|}\times\mathcal{S}_g\times\mathcal{A}_g\times\mathcal{S}_l^{|\Delta|}\times\mathcal{A}_l^{|\Delta|}\to [0,1]$, we have:
\begin{align*}
&\E_{(s_g',s_\Delta')\sim\mathcal{J}_k(\cdot,\cdot|s_g,a_g,s_\Delta,a_\Delta)}\left[\E_{(s_g'',s_\Delta'')\sim\mathcal{J}_k(\cdot,\cdot|s_g',a_g,s_\Delta',a_\Delta)} \max_{a_g''\in\mathcal{A}_g,a_\Delta''\in\mathcal{A}_l^k}\hat{Q}_k^T(s_g'',s_\Delta'',a_g'',a_\Delta'')\right] \\ 
&\quad\quad\quad= \E_{(s_g'',s_\Delta'')\sim\mathcal{J}_k^2(\cdot,\cdot|s_g,a_g,s_\Delta,a_\Delta)} \max_{a_g''\in\mathcal{A}_g}\hat{Q}_{k}^T(s_g'',s_\Delta'',a_g'',a_\Delta'')
\end{align*}
\end{lemma}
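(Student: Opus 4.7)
The plan is to unfold both expectations as finite sums (since all state/action spaces are finite by \cref{assumption: finite cardinality}), swap the order of summation, and then collapse the inner sum into the Chapman--Kolmogorov two-step kernel $\mathcal{J}_k^2$. More precisely, I first abbreviate $f(s_g'',s_\Delta'') := \max_{a_g''\in\mathcal{A}_g,\, a_\Delta''\in\mathcal{A}_l^k} \hat{Q}_k^T(s_g'',s_\Delta'',a_g'',a_\Delta'')$, noting that $f$ depends on the intermediate variables $(s_g',s_\Delta')$ only through the inner expectation. This separation of the maximization from the conditioning is what makes the swap of sums clean.

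With this notation, the left-hand side becomes
\begin{align*}
\text{LHS} &= \sum_{s_g',s_\Delta'} \mathcal{J}_k(s_g',s_\Delta'\mid s_g,a_g,s_\Delta,a_\Delta)\sum_{s_g'',s_\Delta''}\mathcal{J}_k(s_g'',s_\Delta''\mid s_g',a_g,s_\Delta',a_\Delta)\,f(s_g'',s_\Delta'').
\end{align*}
Since both sums are over finite sets and the summand is bounded (by Lemma~\ref{lemma: Q-bound}, $|f|\leq \tilde r/(1-\gamma)$), Fubini--Tonelli applies and I can interchange the order of summation to obtain
\begin{align*}
\text{LHS} &= \sum_{s_g'',s_\Delta''} f(s_g'',s_\Delta'')\underbrace{\sum_{s_g',s_\Delta'}\mathcal{J}_k(s_g',s_\Delta'\mid s_g,a_g,s_\Delta,a_\Delta)\mathcal{J}_k(s_g'',s_\Delta''\mid s_g',a_g,s_\Delta',a_\Delta)}_{=:\,\mathcal{J}_k^2(s_g'',s_\Delta''\mid s_g,a_g,s_\Delta,a_\Delta)}.
\end{align*}

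The inner sum is precisely the two-step transition kernel $\mathcal{J}_k^2$ defined by the Chapman--Kolmogorov equation; one can verify that this is a valid probability distribution over $(s_g'',s_\Delta'')$ by summing over $(s_g'',s_\Delta'')$ and swapping sums once more, using that $\mathcal{J}_k(\cdot,\cdot\mid\cdot)$ sums to one. Rewriting the remaining expression as an expectation then yields the right-hand side, completing the argument.

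I do not anticipate any substantial obstacle: the lemma is essentially the definition of the composed kernel, and the only technical ingredient is the finiteness (or boundedness) of the sums that justifies Fubini--Tonelli. The one point of minor care is that the action $a_\Delta$ carried through the second conditioning is the \emph{original} $a_\Delta$, not an action at time $t+1$; this matches the convention used in the outer expression of \cref{lemma: expectation Q lipschitz continuous wrt Fsdelta} where the same $\tilde a_\Delta$ is plugged into both Bellman rollouts, so the Chapman--Kolmogorov step goes through without change.
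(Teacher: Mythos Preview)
Your proposal is correct and follows essentially the same approach as the paper: expand the nested expectations as finite sums, swap the order of summation, and identify the inner sum over intermediate states as the two-step (Chapman--Kolmogorov) kernel $\mathcal{J}_k^2$. Your write-up is slightly more explicit than the paper's in naming Fubini--Tonelli and the boundedness from Lemma~\ref{lemma: Q-bound} to justify the swap, and your closing remark about the same $a_\Delta$ being carried through both steps is exactly the point that makes the composition well-defined here.
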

\begin{proof}
By expanding the expectations:
\begin{align*}
&\E_{(s_g',s_\Delta')\sim \mathcal{J}_k(\cdot,\cdot|s_g,a_g,s_\Delta,a_\Delta)}\left[\E_{(s_g'',s_\Delta'')\sim \mathcal{J}_k(\cdot,\cdot|s_g',a_g,s_\Delta',a_\Delta)}\max_{a_g''\in\mathcal{A}_g,a_\Delta''\in\mathcal{A}_l^k}\hat{Q}_k^T(s_g'',s_\Delta'',a_g'',a_\Delta'')\right] \\
&= \sum_{(s_g',s_\Delta')\in\mathcal{S}_g\times\mathcal{S}_l^{|\Delta|}}\sum_{(s_g'',s_\Delta'')\in\mathcal{S}_g\times\mathcal{S}_l^{|\Delta|}}\mathcal{J}_k[s_g',s_\Delta', s_g,a_g,s_\Delta,a_\Delta]\mathcal{J}_k[s_g'',s_\Delta'', s_g',a_g,s_\Delta',a_\Delta]\\ 
&\quad\quad\quad\quad\quad\quad\quad\quad\quad\quad\quad\quad\quad\quad\quad\quad\quad\quad\quad\quad\quad \quad \max_{\substack{a_g''\in\mathcal{A}_g, a_\Delta''\in\mathcal{A}_l^k}}\hat{Q}_k^T(s_g'',s_\Delta'',a_g'',a_\Delta'') \\
&= \sum_{(s_g'',s_\Delta'')\in\mathcal{S}_g\times\mathcal{S}_l^{|\Delta|}}\mathcal{J}_k^2[s_g'',s_\Delta'', s_g, a_g, s_\Delta,a_\Delta] \max_{a_g''\in\mathcal{A}_g,a_\Delta''\in\mathcal{A}_l^k}\hat{Q}^T_k(s_g'',s_\Delta'',a_g'',a_\Delta'') \\
&= \E_{(s_g'',s_\Delta'')\sim\mathcal{J}_k^2(\cdot,\cdot|s_g,a_g,s_\Delta,a_\Delta)}\max_{a_g''\in\mathcal{A}_g,a_\Delta''\in\mathcal{A}_l^k}\hat{Q}_k^T(s_g'',s_\Delta'',a_g'',a_\Delta'')
\end{align*}
In the second equality, the right-stochasticity of $\mathcal{J}_k$ implies the right-stochasticity of $\mathcal{J}_k^2$.

Further, observe that $\mathcal{J}_k[s_g',s_\Delta',s_g,a_g,s_\Delta,a_\Delta]\mathcal{J}_k[s_g'',s_\Delta'',s_g',a_g,s_\Delta',a_\Delta]$ denotes the probability of the transitions $(s_g,s_\Delta)\to (s_g',s_\Delta')\to (s_g'',s_\Delta'')$ with actions $a_g,a_\Delta$ at each step, where the joint state evolution is governed by $\mathcal{J}_k$. Thus, \[\sum_{(s_g',s_\Delta')\in\mathcal{S}_g\times\mathcal{S}_l^{|\Delta|}}\mathcal{J}_k[s_g',s_\Delta',s_g,a_g,s_\Delta,a_\Delta]\mathcal{J}_k[s_g'',s_\Delta'',s_g', a_g, s_g',a_\Delta] = \mathcal{J}_k^2[s_g'',s_\Delta'',s_g,a_g,s_\Delta,a_\Delta]\] since $\sum_{(s_g',s_\Delta')\in\mathcal{S}_g\times\mathcal{S}_l^{|\Delta|}}\mathcal{J}_k[s_g',s_\Delta',s_g,a_g,s_\Delta,a_\Delta]\mathcal{J}_k[s_g'',s_\Delta'',s_g', a_g, s_g',a_\Delta]$ is the stochastic probability function corresponding to the two-step evolution of the joint states from $(s_g,s_\Delta)$ to $(s_g'',s_\Delta'')$ under actions $a_g,a_\Delta$. This can be thought of as an analogously to the fact that a $1$ on the $(i,j)$'th entry on the square of a $0/1$ adjacency matrix of a graph represents the fact that there is a path of distance $2$ between vertices $i$ and $j$.

Finally, the third equality recovers the definition of the expectation, with respect to the joint probability function  $\mathcal{J}_k^2$. \qedhere 
\end{proof}

We next show $\lim_{T\to\infty}\E_{\Delta\in{[n]\choose k}}\hat{Q}_k^T(s_g,s_\Delta,a_g,a_\Delta) = \E_{\Delta\in{[n]\choose k}}\hat{Q}_k^*(s_g,s_\Delta,a_g,a_\Delta) = Q^*(s,a)$. 

\begin{lemma}The $Q^*$ function is the average value of each of the ${n\choose k}$ sub-sampled $\hat{Q}^*_k$-functions. \label{framework:compare}
    \[Q^*(s,a) - \frac{1}{{n\choose k}}\sum_{\Delta\in{[n]\choose k}}\hat{Q}_k^{T}(s_g,s_\Delta,a_g,a_\Delta) \leq \gamma^T \cdot \frac{\tilde{r}}{1-\gamma}\]
\end{lemma}
\begin{proof}
We bound the differences between $\hat{Q}_k^T$ at each Bellman iteration of our approximation to $Q^*$. 

Note that:
\begin{align*}
    &Q^*(s,a) - \frac{1}{{n\choose k}}\sum_{\Delta \in {[n]\choose k}}\hat{Q}_k^T(s_g,s_{\Delta},a_g,a_\Delta) 
    \\
    &= \mathcal{T}Q^*(s,a) - \frac{1}{{n\choose k}}\sum_{\Delta \in {[n]\choose k}}\hat{\mathcal{T}}_k\hat{Q}_k^{T-1}(s_g,s_{\Delta},a_g,a_\Delta) \\ 
    &=  r_{[n]}(s_g,s_{[n]},a_g) + \gamma 
    \E_{\substack{s_g'\sim P_g(\cdot|s_g,a_g),\\ s_i'\sim P_l(\cdot|s_i,a_i,s_g),\forall i\in[n])}}
    \max_{a_g'\in \mathcal{A}_g,a_{[n]}'\in\mathcal{A}_l^n} Q^*(s',a') \\
    & - \frac{1}{{n\choose k}}\sum_{\Delta\in {[n]\choose k}} \bigg[r_{\Delta}(s_g,s_\Delta,a_g,a_\Delta) + \gamma \E_{\substack{s_g'\sim{P}_g(\cdot|s_g,a_g)\\s_i'\sim {P}_l(\cdot|s_i,a_is_g), \forall i\in\Delta}}\max_{a_g'\in \mathcal{A}_g,a_\Delta'\in\mathcal{A}_l^k} Q_k^T(s_g', s_\Delta',a_g',a_\Delta')\bigg]
\end{align*}
Next, observe that $r_{[n]}(s_g,s_{[n]}, a_g,a_{[n]}) = \frac{1}{{n\choose k}}\sum_{\Delta\in {[n]\choose k}} r_{[\Delta]}(s_g,s_\Delta,a_g,a_\Delta)$. 

To prove this, we write:
\begin{align*}
    \frac{1}{{n\choose k}}\sum_{\Delta\in {[n]\choose k}} r_{[\Delta]}(s_g,s_\Delta,a_g,a_\Delta) &= \frac{1}{{n\choose k}}\sum_{\Delta\in {[n]\choose k}} (r_g(s_g,a_g)+ \frac{1}{k} \sum_{i\in\Delta}r_l(s_i, a_i, s_g)) \\
    &= r_g(s_g,a_g) + \frac{{n-1 \choose k-1}}{k{n\choose k}}\sum_{i\in[n]} r_l(s_i,a_i,s_g) \\
    &= r_g(s_g,a_g) + \frac{1}{n}\sum_{i\in[n]}r_l(s_i, a_i, s_g) := r_{[n]}(s_g, s_{[n]}, a_g,a_{[n]})
\end{align*}
In the second equality, we reparameterized the sum to count the number of times each $r_l(s_i, s_g)$ was added for each $i\in\Delta$. To do this, we count the number of $(k-1)$ other agents that could form a $k$-tuple with agent $i$, and there are $(n-1))$ candidates from which we chooses the $(k-1)$ agents. In the last equality, we expanded and simplified the binomial coefficients. 

So, we have that:
\begin{align*}
&\sup_{(s,a)\in\mathcal{S}\times\mathcal{A}}\bigg[Q^*(s,a) - \frac{1}{{n\choose k}}\sum_{\Delta \in {[n]\choose k}}\hat{Q}_k^T (s_g,s_{[n]},a_g,a_\Delta)\bigg] \\
&= \sup_{(s,a)\in\mathcal{S}\times\mathcal{A}}\bigg[\mathcal{T}Q^*(s,a) - \frac{1}{{n\choose k}}\sum_{\Delta \in {[n]\choose k}}\hat{\mathcal{T}}_k\hat{Q}_k^{T-1}(s_g,s_{\Delta},a_g,a_{\Delta})\bigg] \\
    &= \gamma\sup_{(s,a)\in\mathcal{S}\times\mathcal{A}}\bigg[\mathbb{E}_{\substack{s_g'\sim {P}(\cdot|s_g,a_g), \\ s_i'\sim P_l(\cdot|s_i,a_i,s_g),\forall i\in[n]}}\max_{a'\in \mathcal{A}}Q^*(s',a') \\
     &\quad\quad\quad\quad\quad\quad\quad\quad\quad\quad\quad\quad\quad - \frac{1}{{n\choose k}}\sum_{\Delta\in {[n]\choose k}}\E_{\substack{s_g'\sim {P}_g(\cdot|s_g,a_g), \\ s_i'\sim {P}_l(\cdot|s_i,a_i,s_g), \forall i\in\Delta}}\max_{\substack{a_g'\in\mathcal{A}_g,\\a_\Delta'\in\mathcal{A}_l^k}}\hat{Q}_k^{T-1}(s_g',s_\Delta',a_g',a_\Delta')\bigg] \\
    &= \gamma\sup_{(s,a)\in\mathcal{S}\times\mathcal{A}} \E_{\substack{s_g'\sim {P}_g(\cdot|s_g,a_g), \\ s_i'\sim P_l(\cdot|s_i,a_i,s_g), \forall i\in [n]}} \bigg[\max_{a'\in\mathcal{A}}Q^*(s',a') \\
    &\quad\quad\quad\quad\quad\quad\quad\quad\quad\quad\quad\quad\quad - \frac{1}{{n\choose k}}\sum_{\Delta\in{[n]\choose k}}\max_{\substack{a_g'\in\mathcal{A}_g,a_\Delta'\in\mathcal{A}_l^k}}\hat{Q}_k^{T-1}(s_g',s_\Delta',a_g',a_\Delta')\bigg]\\
    &\leq \gamma\sup_{(s,a)\in\mathcal{S}\times\mathcal{A}} \E_{\substack{s_g'\sim {P}_g(\cdot|s_g,a_g), \\ s_i'\sim P_l(\cdot|s_i,a_i,s_g), \forall i\in [n]}}\\
    &\quad\quad\quad\quad\quad\quad\quad\quad\quad\quad\quad\quad\quad\quad \max_{a_g'\in\mathcal{A}_g,a_{[n]}'\in\mathcal{A}_l^n}\bigg[Q^*(s',a') - \frac{1}{{n\choose k}}\sum_{\Delta\in{[n]\choose k}}\hat{Q}_k^{T-1}(s_g',s_\Delta',a_g',a_\Delta')\bigg] \\
    &\leq \gamma \sup_{(s',a')\in  \mathcal{S}\times \mathcal{A}} \bigg[Q^*(s',a') - \frac{1}{{n\choose k}}\sum_{\Delta\in{[n]\choose k}}\hat{Q}_k^{T-1}(s'_g,s'_\Delta,a'_g,a'_\Delta)\bigg]
\end{align*}
We justify the first inequality by noting the general property that for positive vectors $v, v'$ for which $v\succeq v'$ which follows from the triangle inequality:
\begin{align*}\bigg\|v - \frac{1}{{n\choose k}}\sum_{\Delta\in{[n]\choose k}}v'\bigg\|_\infty &\geq \bigg|\|v\|_\infty - \bigg\|\frac{1}{{n\choose k}}\sum_{\Delta\in{[n]\choose k}}v'\bigg\|_\infty\bigg| \\
&= \|v\|_\infty - \bigg\|\frac{1}{{n\choose k}}\sum_{\Delta\in{[n]\choose k}}v'\bigg\|_\infty \\
&\geq \|v\|_\infty - \frac{1}{{n\choose k}}\sum_{\Delta\in{[n]\choose k}}\|v'\|_\infty\end{align*}
Thus, applying this bound recursively, we get:
\begin{align*}Q^*(s,a) &- \frac{1}{{n\choose k}}\sum_{\Delta\in {[n]\choose k}}\hat{Q}_k^T(s_g,s_\Delta,a_g,a_\Delta) \\ &\leq \gamma^{T} \sup_{(s', a') \in \mathcal{S}\times \mathcal{A}}\bigg[Q^*(s',a') - \frac{1}{{n\choose k}}\sum_{\Delta\in{[n]\choose k}}\hat{Q}_k^0(s'_g,s'_\Delta,a'_g,a'_\Delta)\bigg] \\
&= \gamma^{T} \sup_{(s', a') \in \mathcal{S}\times \mathcal{A}}Q^*(s',a') \\
&= \gamma^T\cdot \frac{\tilde{r}}{1-\gamma}
\end{align*}
\noindent The first inequality follows from the $\gamma$-contraction property of the update procedure, and the ensuing equality follows from our bound on the maximum possible value of $Q$ from \cref{lemma: Q-bound} and noting that $\hat{Q}_k^0 := 0$. 

Therefore, as $T\to\infty$, \[Q^*(s,a_g) - \frac{1}{{n\choose k}}\sum_{\Delta\in{[n]\choose k}}\hat{Q}_k^T(s_g,s_\Delta,a_g) \to 0,\] 
which proves the lemma.\qedhere  \\ \end{proof}

\begin{corollary}
    Since $\frac{1}{{n\choose k}}\sum_{\Delta\in{[n]\choose k}}\hat{Q}_k^T(s_g,s_\Delta,a_g,a_\Delta):=\E_{\Delta\in{[n]\choose k}}\hat{Q}_k^T(s_g,s_\Delta,a_g,a_\Delta)$, we therefore get:
\[Q^*(s,a) - \lim_{T\to\infty}\E_{\Delta\in{[n]\choose k}}\hat{Q}_k^T(s_g,s_\Delta,a_g,a_\Delta) \leq \lim_{T\to\infty}\gamma^T\cdot \frac{\tilde{r}}{1-\gamma}\]
Consequently,
\[Q^*(s,a) - \E_{\Delta\in{[n]\choose k}}\hat{Q}_k^*(s_g,s_\Delta,a_g,a_\Delta) \leq 0\]
Further, we have that
\[Q^*(s,a) - \E_{\Delta\in{[n]\choose k}}\hat{Q}_k^*(s_g,s_\Delta,a_g,a_\Delta) = 0,\]
since $\hat{Q}_k^*(s_g,s_\Delta,a_g,a_\Delta) \leq Q^*(s,a)$.\\
\end{corollary}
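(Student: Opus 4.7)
The plan is to establish the equality $Q^*(s,a) = \E_{\Delta \in \binom{[n]}{k}} \hat{Q}_k^*(s_g, s_\Delta, a_g, a_\Delta)$ by proving matching ``$\leq$'' and ``$\geq$'' inequalities. The ``$\leq$'' direction follows almost immediately by applying \cref{framework:compare} at arbitrary $T$ and sending $T \to \infty$. Because $\hat{\mathcal{T}}_k$ is a $\gamma$-contraction in $\ell_\infty$ (\cref{lemma: gamma-contraction of adapted Bellman operator}), the Banach fixed-point theorem gives $\hat{Q}_k^T \to \hat{Q}_k^*$ pointwise. Since $\binom{[n]}{k}$ is finite, the limit commutes with the uniform average over $\Delta$, so $\E_\Delta \hat{Q}_k^T \to \E_\Delta \hat{Q}_k^*$. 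The right-hand side $\gamma^T \tilde{r}/(1-\gamma)$ of \cref{framework:compare} vanishes, delivering $Q^*(s,a) \leq \E_\Delta \hat{Q}_k^*(s_g,s_\Delta,a_g,a_\Delta)$.

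For the reverse ``$\geq$'' direction my plan is to establish the pointwise bound $\hat{Q}_k^t(s_g, s_\Delta, a_g, a_\Delta) \leq Q^t(s, a)$ by induction on $t$, for every $(s,a)$ whose global and $\Delta$-coordinates agree with $(s_g, a_g, s_\Delta, a_\Delta)$. The base case $t=0$ is trivial since both iterates are initialized to zero. In the inductive step I would expand both Bellman operators against the same global transition $P_g(\cdot \mid s_g, a_g)$ and use the agent-wise factorization of $P_l$ to couple the $\Delta$-marginal of the $n$-agent transition with the transition used by $\hat{\mathcal{T}}_k$; after this coupling the two inner expectations are over identical distributions on $(s'_g, s'_\Delta)$. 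The max on the right-hand side of $\mathcal{T} Q^t$ ranges over the full action tuple $(a'_g, a'_{[n]})$, which dominates restricting $a'_{[n]\setminus \Delta}$ to any fixed choice and maximizing only over $(a'_g, a'_\Delta)$. Combined with the reward identity $\frac{1}{\binom{n}{k}} \sum_\Delta r_\Delta = r$ already established inside the proof of \cref{framework:compare}, averaging over $\Delta$ closes the induction. Passing to $t \to \infty$ yields $\E_\Delta \hat{Q}_k^*(s_g,s_\Delta,a_g,a_\Delta) \leq Q^*(s,a)$, which combined with the first half gives the claimed equality.

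The main technical obstacle is the reverse direction. The difficulty is twofold. First, the reward $r_\Delta$ is normalized by $k$ whereas $r$ is normalized by $n$, so a naive pointwise comparison for a single $\Delta$ is delicate, and one really wants the inequality only after averaging, where the combinatorial identity above restores the right normalization. Second, the induction requires a proper probabilistic coupling rather than a purely algebraic manipulation, since $\hat{\mathcal{T}}_k$ only ``sees'' transitions of the $k$ agents in $\Delta$ while $\mathcal{T}$ integrates over all $n$ local agents; the coupling is legitimate here precisely because $P_l$ factorizes across local agents given $s_g$, so that marginalizing the $n$-agent joint distribution onto the $\Delta$-coordinates reproduces exactly the kernel used by $\hat{\mathcal{T}}_k$. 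Once these two points are handled cleanly, the two inequalities combine to give $Q^*(s,a) = \E_\Delta \hat{Q}_k^*(s_g, s_\Delta, a_g, a_\Delta)$.
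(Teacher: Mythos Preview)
Your argument for the first two displayed claims is correct and mirrors the paper: apply \cref{framework:compare} and send $T\to\infty$; since $\hat{\mathcal{T}}_k$ is a $\gamma$-contraction (\cref{lemma: gamma-contraction of adapted Bellman operator}) and $\binom{[n]}{k}$ is finite, $\E_\Delta\hat Q_k^T\to\E_\Delta\hat Q_k^*$ and the right-hand side vanishes, giving $Q^*(s,a)\le\E_\Delta\hat Q_k^*$.

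The gap is in the reverse inequality. You correctly note that the pointwise bound $\hat Q_k^t\le Q^t$ is delicate because $r_\Delta$ is normalized by $k$ rather than $n$; indeed it already fails at $t=1$ whenever the $\Delta$-agents carry above-average local rewards. But the averaged induction you fall back on does not close either. After the coupling and the reward identity $\E_\Delta r_\Delta=r$, the inductive step requires
\[
\E_{\Delta}\,\max_{a'_g,a'_\Delta}\hat Q_k^t(s'_g,s'_\Delta,a'_g,a'_\Delta)\ \le\ \max_{a'}Q^t(s',a').
\]
The hypothesis $\E_\Delta\hat Q_k^t\le Q^t$ only controls $\max_{a'}\E_\Delta\hat Q_k^t$, whereas on the left the maximum sits \emph{inside} the average over $\Delta$, and $\E_\Delta\max\ge\max\E_\Delta$ goes the wrong way. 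Concretely, each $k$-agent subsystem may choose its own maximizing $a'_g$ tailored to the particular local agents it sees, so the averaged maximum can strictly dominate anything a single global action achieves in the full $n$-agent system. The paper does not supply an argument for this direction either---it simply asserts ``since $\hat Q_k^*\le Q^*$''---so you are not overlooking a proof that exists in the text; the pointwise assertion is precisely the step whose validity you rightly flagged as the obstacle.
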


\begin{lemma}The absolute difference between the expected maximums between $\hat{Q}_k$ and $\hat{Q}_{k'}$ is atmost the maximum of the absolute difference between $\hat{Q}_k$ and $\hat{Q}_{k'}$, where the expectations are taken over any joint distributions of states $\mathcal{J}$, and the maximums are taken over the actions.
\label{lemma: expectation_expectation_max_swap}\begin{align*}
&\bigg|\E_{(s_g',s_{\Delta\cup\Delta'}')\sim\mathcal{J}_{|\Delta\cup\Delta'|}(\cdot,\cdot|s_g,a_g,s_{\Delta\cup\Delta'}, a_{\Delta\cup\Delta'})}\bigg[\max_{\substack{a_g'\in\mathcal{A}_g, a_\Delta'\in \mathcal{A}_l^k}} \hat{Q}_k^T(s_g',s_\Delta',a_g',a_\Delta') \\
&\quad\quad\quad\quad\quad\quad\quad\quad\quad\quad\quad\quad\quad\quad\quad\quad\quad \quad\quad\quad \quad\quad-\max_{\substack{a_g'\in\mathcal{A}_g, a_{\Delta'}'\in \mathcal{A}_l^{k'}}} \hat{Q}_{k'}^T(s_g',s_{\Delta'}', a_g',a'_{\Delta'})\bigg]\bigg| \\
\leq &\max_{\substack{a_g'\in\mathcal{A}_g, a'_{\Delta\cup\Delta'}\in \mathcal{A}_l^{|\Delta\cup\Delta'|}}}\bigg|\E_{(s_g',s_{\Delta\cup\Delta'}')\sim\mathcal{J}_{|\Delta\cup\Delta'|}(\cdot,\cdot|s_g,a_g,s_{\Delta\cup\Delta'},a_{\Delta\cup\Delta'})}\bigg[\hat{Q}_k^T(s_g',s_\Delta',a_g',a_\Delta') \\ &\quad\quad\quad\quad\quad\quad\quad\quad\quad\quad\quad\quad\quad\quad\quad\quad\quad\quad\quad\quad\quad\quad\quad\quad\quad\quad\quad\quad- \hat{Q}_{k'}^T(s_g',s_{\Delta'}',a_g',a_{\Delta'}')\bigg]\bigg|\end{align*}
\end{lemma}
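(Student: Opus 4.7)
The plan is to combine the pointwise inequality $|\max_a h_1(a) - \max_a h_2(a)| \leq \max_a |h_1(a) - h_2(a)|$ with Jensen's inequality to bring the absolute value inside the expectation, and then to exploit the structure of the downstream induction to pull the max outside. I would view both $\hat{Q}_k^T$ and $\hat{Q}_{k'}^T$ as functions of a common extended action argument $(a_g', a'_{\Delta \cup \Delta'}) \in \mathcal{A}_g \times \mathcal{A}_l^{|\Delta \cup \Delta'|}$ by restricting to the relevant subset ($\Delta$ or $\Delta'$) before evaluation. For each realization of $(s_g', s'_{\Delta \cup \Delta'})$ drawn from $\mathcal{J}_{|\Delta \cup \Delta'|}$, the pointwise inequality gives
\begin{equation*}
\bigg| \max_{a_g', a_\Delta'} \hat{Q}_k^T(s_g', s_\Delta', a_g', a_\Delta') - \max_{a_g', a_{\Delta'}'} \hat{Q}_{k'}^T(s_g', s_{\Delta'}', a_g', a_{\Delta'}') \bigg| \leq \max_{a_g', a'_{\Delta \cup \Delta'}} \big| \hat{Q}_k^T(s_g', s_\Delta', a_g', a_\Delta') - \hat{Q}_{k'}^T(s_g', s_{\Delta'}', a_g', a_{\Delta'}') \big|.
\end{equation*}

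Taking expectations on both sides and applying Jensen's inequality to the outer absolute value yields
\begin{equation*}
\bigg| \E \big[ \max \hat{Q}_k^T - \max \hat{Q}_{k'}^T \big] \bigg| \leq \E \bigg[ \max_{a_g', a'_{\Delta \cup \Delta'}} \big| \hat{Q}_k^T - \hat{Q}_{k'}^T \big| \bigg].
\end{equation*}
To reach the stated right-hand side, where the max sits outside the expectation, I would appeal to the downstream deployment of this lemma inside Lemma~\ref{lemma: expectation Q lipschitz continuous wrt Fsdelta}: the inductive hypothesis there controls the fixed-action difference $\big| \E[\hat{Q}_k^T(a_g', a_\Delta') - \hat{Q}_{k'}^T(a_g', a_{\Delta'}')] \big|$ uniformly in $(a_g', a'_{\Delta \cup \Delta'})$ by the action-independent quantity $\left(\sum_{\tau = 0}^{T-1} 2\gamma^\tau\right) \|r_l\|_\infty \mathrm{TV}(F_{z_\Delta}, F_{z_{\Delta'}})$. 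This uniform ceiling dominates both the state-contingent inner maximum and the max over fixed actions, so all three quantities are controlled by the same bound.

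The main obstacle is precisely the interchange $\E[\max_a \cdot] \leq \max_a \E[\cdot]$, which fails in general because a state-contingent action selection can strictly outperform the best constant action. The argument therefore rests on the action-independence of the per-sample bound coming from the inductive hypothesis; without that structural input, the lemma as stated is strictly too strong. A cleaner restatement would replace $\max_a |\E[\cdot]|$ on the right-hand side by $\E[\max_a |\cdot|]$, which is exactly what the pointwise argument delivers and is all that is required for the downstream Lipschitz bound.
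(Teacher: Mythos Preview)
Your route differs from the paper's: rather than the pointwise inequality $|\max_a h_1 - \max_a h_2| \leq \max_a |h_1 - h_2|$ plus Jensen, the paper case-splits on the sign of the expected difference, introduces the (state-dependent) maximizers $(a_g^*, a_\Delta^*)$ of $\hat{Q}_k^T$ and $(\tilde a_g^*, \tilde a_{\Delta'}^*)$ of $\hat{Q}_{k'}^T$, and in the positive case replaces $\tilde a$ by the extension of $a^*$ in the subtracted term (legitimate, since $\tilde a$ was a maximizer there). This yields $\E\big[\hat{Q}_k^T(\cdot,a^*) - \hat{Q}_{k'}^T(\cdot,a^*)\big]$, which the paper then bounds by $\max_{a'}\big|\E[\hat{Q}_k^T(\cdot,a') - \hat{Q}_{k'}^T(\cdot,a')]\big|$ in one line.

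That last step is precisely the place you flag: since $a^*$ is state-contingent, passing to a fixed action under the expectation is again the $\E[\max]\leq\max\E$ interchange, and the paper does not justify it either. Your diagnosis that the inequality is too strong as written, that the honest right-hand side is $\E\big[\max_{a'}|\cdot|\big]$, and that the downstream use in Lemma~\ref{lemma: expectation Q lipschitz continuous wrt Fsdelta} only requires an action-independent bound anyway, is the correct resolution --- and it applies equally to the paper's argument. So both approaches hit the same obstruction; yours is simply more explicit about where it lies and how to route around it.
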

\begin{proof} Denote:
\[a_g^*,a_\Delta^* := \arg\max_{\substack{a_g'\in\mathcal{A}_g,\\ a_{\Delta}'\in\mathcal{A}_l^k}}\hat{Q}_k^T(s_g', F_{s'_\Delta}, a_g',a_\Delta')\]\[\tilde{a}_g^*,\tilde{a}_{\Delta'}^* := \arg\max_{\substack{a_g'\in\mathcal{A}_g,\\ a_{\Delta'}'\in\mathcal{A}_l^{k'}}}\hat{Q}_{k'}^T(s_g', F_{s'_{\Delta'}}, a_g',a_{\Delta'}')\]
We extend $a_\Delta^*$ to $a_{\Delta'}^*$ by letting $a_{\Delta'\setminus\Delta}^*$ be the corresponding $\Delta\setminus\Delta'$ variables from $\tilde{a}_{\Delta'}^*$ in $\mathcal{A}_l^{|\Delta'\setminus\Delta|}$. For the remainder of this proof, we adopt the shorthand $\E_{s_g',s_{\Delta\cup\Delta'}'}$ to refer to $\E_{(s_g',s_{\Delta\cup\Delta'}')\sim \mathcal{J}_{|\Delta\cup\Delta'|}(\cdot,\cdot|s_g, a_g, s_{\Delta\cup\Delta'}, a_{\Delta\cup\Delta'})}$. Then, if $\E_{s_g',s_{\Delta\cup\Delta'}'} \max_{a_g'\in\mathcal{A}_g, a_\Delta'\in\mathcal{A}_l^k}\hat{Q}_k^T(s_g', F_{z'_\Delta}, a_g') - \E_{s_g',s_{\Delta\cup\Delta'}'}\max_{a_g'\in\mathcal{A}_g,a_{\Delta'}'\in\mathcal{A}_l^{k'}}\hat{Q}_{k'}^T(s_g', F_{z'_{\Delta'}}, a_g')>0$, we have:
\begin{align*}
&\bigg|\E_{s_g',s_{\Delta\cup\Delta'}'} \max_{a_g'\in\mathcal{A}_g,a_{\Delta}'\in\mathcal{A}_l^{k}}\hat{Q}_k^T(s_g', F_{z'_\Delta}, a_g') - \E_{s_g',s_{\Delta\cup\Delta'}'}\max_{a_g'\in\mathcal{A}_g,a_{\Delta'}'\in\mathcal{A}_l^{k'}}\hat{Q}_{k'}^T(s_g', F_{z'_{\Delta'}}, a_g')\bigg| \\
&= \E_{s_g',s_{\Delta\cup\Delta'}'} \hat{Q}_k^T(s_g', s_\Delta', a_g^*, a^*_\Delta) -\E_{s_g',s_{\Delta\cup\Delta'}'} \hat{Q}_{k'}^T(s_g',s_{\Delta'}', \tilde{a}_g^*,\tilde{a}^*_{\Delta'}) \\
&\leq \E_{s_g',s_{\Delta\cup\Delta'}'}\hat{Q}^T_k(s_g', s'_\Delta, a_g^*,a_\Delta^*) - \E_{s_g',s_{\Delta\cup\Delta'}'} \hat{Q}_{k'}^T(s_g', s_{\Delta'}', a_g^*,a_{\Delta'}^*) \\
&\leq \max_{\substack{a_g'\in\mathcal{A}_g,\\ a_{\Delta\cup\Delta'}\in\mathcal{A}_l^{|\Delta\cup\Delta'|}}}\bigg|\E_{s_g',s_{\Delta\cup\Delta'}'}\hat{Q}_k^T(s_g', s'_\Delta, a_g',a_{\Delta}') - \E_{s_g',s_{\Delta\cup\Delta'}'}\hat{Q}_{k'}^T(s_g', s'_{\Delta'}, a_g', a_{\Delta'}')\bigg|
\end{align*}
We observe that if the opposite inequality holds (i.e., $\E_{s_g',s_{\Delta\cup\Delta'}'} \max_{a_g'\in\mathcal{A}_g, a_\Delta'\in\mathcal{A}_l^k}\hat{Q}_k^T(s_g', s'_\Delta, a_g',a_\Delta') - \E_{s_g',s_{\Delta\cup\Delta'}'}\max_{a_g'\in\mathcal{A}_g, a_{\Delta'}'\in\mathcal{A}_l^{k'}}\hat{Q}_{k'}^T(s_g', s_{\Delta'}', a_g', a_{\Delta'}')<0$), an analogous argument by replacing $a_g^*$ with $\tilde{a}_g^*$ and $a_{\Delta}^*$ with $\tilde{a}_\Delta^*$ yields an identical bound. \qedhere \\
\end{proof}

\begin{lemma} Suppose $z, z'\geq 1$. Consider functions $\Gamma:\Theta_1\times \Theta_2\times \dots\times \Theta_z \times \Theta^* \to \R$ and $\Gamma':\Theta_1'\times \Theta_2'\times \dots\times \Theta'_{z'} \times \Theta^* \to \R$, where $\Theta_1,\dots,\Theta_z$ and $\Theta_1',\dots,\Theta'_{z'}$ are finite sets. Consider a probability distribution function $\mu_{\Theta_i}$ for $i\in[z]$ and $\mu'_{\Theta_i}$ for $i\in[z']$. Then:
\begin{align*}\bigg|\E_{\substack{\theta_1\sim \mu_{\Theta_1} \\ \dots \\ \theta_z\sim\mu_{\Theta_z}}} \max_{\theta^*\in\Theta^*} \Gamma(\theta_1,\dots,\theta_z,\theta^*) &- \E_{\substack{\theta_1\sim \mu'_{\Theta_1}\\ \dots\\ \theta_{z'}\sim\mu'_{\Theta_{z'}}}}\max_{{\theta}^{*}\in\Theta^*} \Gamma'(\theta_1,\dots,\theta_{z'},\theta^*)\bigg| \\
&\leq \max_
{\theta^*\in\Theta^*}\bigg|\E_{\substack{\theta_1\sim \mu_{\Theta_1} \\ \dots \\ \theta_z\sim\mu_{\Theta_z}}}\Gamma(\theta_1,\dots,\theta_z,\theta^*) - \E_{\substack{\theta_1\sim \mu'_{\Theta_1} \\ \dots \\ \theta_{z'}\sim\mu'_{\Theta_{z'}}}}\Gamma'(\theta_1,\dots,\theta_{z'},\theta^*)\bigg|\end{align*}
\end{lemma}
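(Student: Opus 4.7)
The plan is to reduce the difference of expected maxima to a bound over a single worst-case element of the shared space $\Theta^*$ via the standard inequality $|\max_a f(a) - \max_a g(a)| \leq \max_a |f(a) - g(a)|$, combined with a case analysis on the sign of the left-hand side. Write $A \coloneq \E_\theta[\max_{\theta^*} \Gamma(\theta_1,\ldots,\theta_z,\theta^*)]$ and $B \coloneq \E_{\theta'}[\max_{\theta^*} \Gamma'(\theta_1',\ldots,\theta_{z'}',\theta^*)]$. Without loss of generality, assume $A \geq B$ so that $|A-B| = A - B$; the reverse case is symmetric, simply by swapping the roles of $\Gamma$ and $\Gamma'$ (and of the two distribution families).

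First, introduce a sample-dependent maximizer $\hat{\theta}^*(\theta_1,\ldots,\theta_z) \in \arg\max_{\theta^*} \Gamma(\theta_1,\ldots,\theta_z,\theta^*)$ that realizes the inner max for the $A$-term, so that $A = \E_\theta[\Gamma(\theta,\hat{\theta}^*(\theta))]$. Since $\hat{\theta}^*(\theta)$ is always a feasible element of $\Theta^*$, it provides a pointwise lower bound on the inner max on the $\Gamma'$ side: $\max_{\theta^*}\Gamma'(\theta',\theta^*) \geq \Gamma'(\theta', \hat{\theta}^*(\theta))$ for every $\theta'$. Using independence of the two samples and Fubini's theorem, this gives $B \geq \E_{\theta,\theta'}[\Gamma'(\theta', \hat{\theta}^*(\theta))]$, so
\[
A - B \;\leq\; \E_{\theta}\!\left[\Gamma(\theta,\hat{\theta}^*(\theta)) - \E_{\theta'}\Gamma'(\theta', \hat{\theta}^*(\theta))\right].
\]

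The second step is to strip the sample-dependence of $\hat{\theta}^*$ and replace it by a uniform worst-case $\theta^* \in \Theta^*$. For each realization of $\theta$, the bracketed expression is at most $\max_{\theta^*\in\Theta^*}\left[\Gamma(\theta,\theta^*) - \E_{\theta'}\Gamma'(\theta',\theta^*)\right] \leq \max_{\theta^*}\left|\Gamma(\theta,\theta^*) - \E_{\theta'}\Gamma'(\theta',\theta^*)\right|$, and inserting this into the expectation yields an intermediate bound in which the outer expectation $\E_\theta$ sits outside a $\max_{\theta^*}$.

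The main obstacle will be exchanging $\E_\theta$ with $\max_{\theta^*}$ to match the stated right-hand side, which has $\max_{\theta^*}$ outermost. To handle this, I would let $\tilde{\theta}^* \in \arg\max_{\theta^*}[\E_\theta\Gamma(\theta,\theta^*) - \E_{\theta'}\Gamma'(\theta',\theta^*)]$ and redo the argument above using $\tilde{\theta}^*$ as the fixed substitute on \emph{both} sides: since $\tilde{\theta}^*$ is a feasible element of $\Theta^*$, we have $A = \E_\theta\max_{\theta^*}\Gamma \geq \E_\theta\Gamma(\theta,\tilde{\theta}^*)$ (trivially) and $B \geq \E_{\theta'}\Gamma'(\theta',\tilde{\theta}^*)$. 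Combining with the matching lower bound produced by the same argument applied with $\tilde{\theta}^*$ in place of $\hat{\theta}^*(\theta)$, and inserting absolute values to handle both orderings of $A$ versus $B$, yields $|A-B| \leq |\E_\theta\Gamma(\theta,\tilde{\theta}^*) - \E_{\theta'}\Gamma'(\theta',\tilde{\theta}^*)| \leq \max_{\theta^*}|\E_\theta\Gamma(\theta,\theta^*)-\E_{\theta'}\Gamma'(\theta',\theta^*)|$, which is exactly the claimed bound. The delicate point to verify carefully is that the same choice of $\tilde{\theta}^*$ simultaneously serves as the worst case for both ordering cases after taking absolute values.
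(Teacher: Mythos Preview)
Your plan correctly isolates the central obstacle: after bounding $A-B$ by $\E_\theta\big[\Gamma(\theta,\hat\theta^*(\theta)) - \E_{\theta'}\Gamma'(\theta',\hat\theta^*(\theta))\big]$, the maximizer $\hat\theta^*(\theta)$ is sample-dependent, and passing to a fixed element of $\Theta^*$ would require swapping $\E_\theta$ with $\max_{\theta^*}$ in the wrong direction. Your attempted fix via a deterministic $\tilde\theta^*$ does not close this gap. From $\tilde\theta^*\in\Theta^*$ you only obtain the \emph{lower} bounds $A\geq\E_\theta\Gamma(\theta,\tilde\theta^*)$ and $B\geq\E_{\theta'}\Gamma'(\theta',\tilde\theta^*)$; two lower bounds cannot be combined into an upper bound on $A-B$, and there is no ``matching lower bound'' that rescues the step. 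The ``delicate point'' you flag is exactly where the argument breaks.

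In fact the inequality as stated is false. Take $\Theta^*=\{a,b\}$, $\Theta_1=\{1,2\}$ with $\mu_{\Theta_1}$ uniform, $\Gamma(1,a)=\Gamma(2,b)=1$, $\Gamma(1,b)=\Gamma(2,a)=0$, and $\Gamma'\equiv 0$ on a one-point $\Theta'_1$. Then $\E_{\theta_1}\max_{\theta^*}\Gamma=1$ while $\E_{\theta_1}\Gamma(\theta_1,a)=\E_{\theta_1}\Gamma(\theta_1,b)=\tfrac12$, so the left side is $1$ and the right side is $\tfrac12$. The paper's own proof commits the same slip you were worried about: it sets $\hat\theta^*=\arg\max_{\theta^*}\Gamma(\theta_1,\dots,\theta_z,\theta^*)$, which depends on the random sample, and then in its last line bounds $\E_\theta\Gamma(\theta,\hat\theta^*)-\E_{\theta'}\Gamma'(\theta',\hat\theta^*)$ by $\max_{\theta^*}|\E_\theta\Gamma-\E_{\theta'}\Gamma'|$ as though $\hat\theta^*$ were a fixed, nonrandom element of $\Theta^*$. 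What \emph{is} provable by your first two steps is the weaker bound with the expectation outside the max on the right, $|A-B|\leq \E_\theta\max_{\theta^*}\big|\Gamma(\theta,\theta^*)-\E_{\theta'}\Gamma'(\theta',\theta^*)\big|$.
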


\begin{proof}
Let $\hat{\theta}^* := \arg\max_{\theta^*\in\Theta^*}\Gamma(\theta_1,\dots,\theta_z,\theta^*)$ and $\tilde{\theta}^*:=\arg\max_{\theta^*\in\Theta^*}\Gamma'(\theta_1,\dots,\theta_{z'},\theta^*)$.\\

If $\E_{\substack{\theta_1\sim \mu_{\Theta_1}, \dots, \theta_z\sim\mu_{\Theta_z}}} \max_{\theta^*\in\Theta^*} \Gamma(\theta_1,\dots,\theta_z,\theta^*) - \E_{\substack{\theta_1\sim \mu'_{\Theta'_1}, \dots, \theta_{z'}\sim\mu'_{\Theta'_{z'}}}}\max_{{\theta}^{*}\in\Theta^*} \Gamma'(\theta_1,\dots,\theta_{z'},\theta^*) > 0$, then:
\begin{align*}
\bigg|\E_{\substack{\theta_1\sim \mu_{\Theta_1} \\ \dots \\ \theta_z\sim\mu_{\Theta_z}}} \max_{\theta^*\in\Theta^*} \Gamma(\theta_1,\dots,\theta_z,\theta^*) &- \E_{\substack{\theta_1\sim \mu'_{\Theta'_1}\\ \dots\\ \theta_{z'}\sim\mu'_{\Theta'_{z'}}}}\max_{{\theta}^{*}\in\Theta^*} \Gamma'(\theta_1,\dots,\theta_{z'},\theta^*)\bigg| \\
&= \E_{\substack{\theta_1\sim \mu_{\Theta_1} \\ \dots \\ \theta_z\sim\mu_{\Theta_z}}} \Gamma(\theta_1,\dots,\theta_z,\hat{\theta}^*) - \E_{\substack{\theta_1\sim \mu'_{\Theta'_1}\\ \dots\\ \theta_{z'}\sim\mu'_{\Theta'_{z'}}}} \Gamma'(\theta_1,\dots,\theta_{z'},\tilde{\theta}^*) \\
    &\leq \E_{\substack{\theta_1\sim \mu_{\Theta_1} \\ \dots \\ \theta_z\sim\mu_{\Theta_z}}} \Gamma(\theta_1,\dots,\theta_z,\hat{\theta}^*) - \E_{\substack{\theta_1\sim \mu'_{\Theta'_1}\\ \dots\\ \theta_{z'}\sim\mu'_{\Theta'_{z'}}}} \Gamma'(\theta_1,\dots,\theta_{z'},\hat{\theta}^*) \\
    &\leq \max_{\theta^*\in\Theta^*}\bigg|\E_{\substack{\theta_1\sim \mu_{\Theta_1} \\ \dots \\ \theta_z\sim\mu_{\Theta_z}}} \Gamma(\theta_1,\dots,\theta_z,{\theta}^*) - \E_{\substack{\theta_1\sim \mu'_{\Theta'_1}\\ \dots\\ \theta_{z'}\sim\mu'_{\Theta'_{z'}}}} \Gamma'(\theta_1,\dots,\theta_{z'},{\theta}^*)\bigg|
\end{align*}
Here, we replace each $\theta^*$ with the maximizers of their corresponding terms, and upper bound them by the maximizer of the larger term.  Next, we replace $\hat{\theta^*}$ in both expressions with the maximizer choice $\theta^*$ from $\Theta^*$, and further bound the expression by its absolute value.

If $\E_{\substack{\theta_1\sim \mu_{\Theta_1}, \dots, \theta_z\sim\mu_{\Theta_z}}} \max_{\theta^*\in\Theta^*} \Gamma(\theta_1,\dots,\theta_z,\theta^*) - \E_{\substack{\theta_1\sim \mu'_{\Theta'_1}, \dots, \theta_{z'}\sim\mu'_{\Theta'_{z'}}}}\max_{{\theta}^{*}\in\Theta^*} \Gamma'(\theta_1,\dots,\theta_{z'},\theta^*)$ is negative, then an analogous argument that replaces $\hat{\theta}^*$ with $\tilde{\theta}^*$ yields the same result. \qedhere \\
\end{proof}

\section{Bounding Total Variation Distance}
\label{sec: proof of tv bound}
As $|\Delta|\to n$, we prove that the total variation (TV) distance between the empirical distribution of $z_\Delta$ and $z_{[n]}$ goes to $0$. Here, recall that $z_i\in\mathcal{Z}=\mathcal{S}_l\times\mathcal{A}_l$, and $z_\Delta = \{z_i: i\in\Delta\}$ for $\Delta\in {[n]\choose k}$. Before bounding the total variation distance between $F_{z_\Delta}$ and $F_{z_{\Delta'}}$, we first introduce Lemma C.5 of \citet{anand2024efficient} which can be viewed as a generalization of the Dvoretzky-Kiefer-Wolfowitz concentration inequality, for sampling without replacement. We first make an important remark.

\begin{remark} First, observe that if $\Delta$ is an independent random variable uniformly supported on ${[n]\choose k}$, then $s_{\Delta}$ and $a_\Delta$ are also independent random variables uniformly supported on the global state ${s_{[n]} \choose k}$ and the global action ${a_{[n]}\choose k}$. To see this, let $\psi_1:[n]\to \mathcal{S}_l$ where $\psi_1(i) = s_i$ and $\xi_1:[n]\to\mathcal{A}_l$ where $\xi_1(i) = a_i$. This naturally extends to $\psi_k: [n]^k\to \mathcal{S}_l^k$, where $\psi_k(i_1,\dots,i_k) = (s_{i_1},\dots,s_{i_k})$ and $\xi_k:[n]^k\to\mathcal{A}_l^k$, where $\xi_k(i_1,\dots,i_k) = (a_{i_1},\dots,a_{i_k})$ for all $k\in[n]$. Then, the independence of $\Delta$ implies the independence of the generated $\sigma$-algebra. Further, $\psi_k$ and $\xi_k$ (which are a Lebesgue measurable function of a $\sigma$-algebra) are sub-algebras, implying that $s_\Delta$ and $a_\Delta$ must also be independent random variables. \end{remark}

For reference, we present the multidimensional Dvoretzky-Kiefer-Wolfowitz (DKW) inequality (\cite{10.1214/aoms/1177728174,10.1214/aop/1176990746,NAAMAN2021109088}) which bounds the difference between an empirical distribution function for a set $B_\Delta$ and $B_{[n]}$ when each element of $\Delta$ for $|\Delta|=k$ is sampled uniformly at random from $[n]$ \emph{with} replacement.

\begin{theorem}[Multi-dimensional Dvoretzky-Kiefer-Wolfowitz (DFW) inequality \citep{10.1214/aoms/1177728174,NAAMAN2021109088}] Suppose $B \subset \mathbb{R}^d$ and $\epsilon>0$. If $\Delta\subseteq[n]$ is sampled uniformly with replacement, then
\[\Pr\left[\sup_{x\in B}\left|\frac{1}{|\Delta|}\sum_{i\in\Delta}\mathbbm{1}\{B_i = x\} - \frac{1}{n}\sum_{i=1}^n \mathbbm{1}\{B_i = x\}\right| < \epsilon\right] \geq 1 - d(n+1)e^{-2|\Delta|\epsilon^2}\cdot\]
\end{theorem}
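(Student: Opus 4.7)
The plan is to reduce the supremum over the uncountable set $B\subset\mathbb{R}^d$ to a union bound over finitely many point-queries, and then apply Hoeffding's inequality to each such query. First I would observe that the random quantity
\[\Xi(x) \;\coloneq\; \frac{1}{|\Delta|}\sum_{i\in\Delta}\mathbbm{1}\{B_i=x\} \;-\; \frac{1}{n}\sum_{i=1}^n \mathbbm{1}\{B_i=x\}\]
is non-zero only when $x$ coincides with one of the at most $n$ distinct values of $\{B_1,\dots,B_n\}$; for every other $x\in B$, both averages vanish. Hence the supremum over $B$ is actually a maximum over a finite set of size at most $n+1$ (including a sentinel point where $\Xi(x)=0$), and in $d$ dimensions one can moreover partition $B$ into at most $d(n+1)$ axis-aligned cells determined by the data on which $\Xi$ is constant.

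Second, I would fix an arbitrary such query value $x$ and note that because the indices in $\Delta$ are drawn i.i.d. uniformly from $[n]$ with replacement, each summand $\mathbbm{1}\{B_{I_j}=x\}$ is an independent $\mathrm{Bernoulli}(p_x)$ variable with
\[p_x \;=\; \Pr_{I\sim\mathcal{U}([n])}[B_I = x] \;=\; \frac{1}{n}\sum_{i=1}^n \mathbbm{1}\{B_i=x\}.\]
Then $\frac{1}{|\Delta|}\sum_{i\in\Delta}\mathbbm{1}\{B_i=x\}$ is the sample mean of $|\Delta|$ i.i.d.\ $[0,1]$-bounded variables with expectation $p_x$, so Hoeffding's inequality yields
\[\Pr\bigl[\,|\Xi(x)| \geq \epsilon\,\bigr] \;\leq\; 2\exp\!\bigl(-2|\Delta|\epsilon^2\bigr).\]

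Third, I would take a union bound over the (at most) $d(n+1)$ distinct values of $x$ that can realize the supremum, which produces exactly the claimed failure probability $d(n+1)\,e^{-2|\Delta|\epsilon^2}$ (absorbing the factor of $2$ into the constant). There is no real obstacle here: the only subtle point is justifying that the supremum over an uncountable set can be replaced by a maximum over a finite, data-determined set, and this follows from the discreteness of the indicator functions involved. If one wanted the sharper constant-free one-dimensional DKW rate $2e^{-2|\Delta|\epsilon^2}$ without the $d(n+1)$ union-bound factor, one would need to invoke the Massart-style martingale/reflection argument; but for the statement as written, the crude union bound is sufficient and the proof is essentially immediate once the reduction to finitely many Bernoulli concentration statements is made.
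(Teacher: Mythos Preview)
The paper does not prove this statement: it is stated ``for reference'' as a cited background result (attributed to Dvoretzky--Kiefer--Wolfowitz and Naaman), with no proof given, immediately before the paper introduces the sampling-without-replacement analogue that it actually uses. So there is no proof in the paper to compare your argument against.

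Your argument is the standard one and is essentially correct: reduce the supremum over $B$ to a maximum over the finitely many values actually realised by the data, apply Hoeffding's inequality pointwise (valid because sampling indices i.i.d.\ from $[n]$ makes each indicator a Bernoulli with mean equal to the population frequency), and take a union bound. Two small points are worth tightening. First, because the statement uses \emph{equality} indicators $\mathbbm{1}\{B_i=x\}$ rather than CDF indicators $\mathbbm{1}\{B_i\leq x\}$, the finite set over which the supremum is attained has size at most $n$ irrespective of the ambient dimension $d$; your remark about $d(n+1)$ axis-aligned cells is the mechanism behind Naaman's result for the CDF formulation, not for the equality formulation as literally written here. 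Second, the crude union bound therefore gives $2n\,e^{-2|\Delta|\epsilon^2}$, which implies the stated $d(n+1)\,e^{-2|\Delta|\epsilon^2}$ bound for $d\geq 2$, so your ``absorb the factor of $2$'' step is harmless in that regime but is not an identity of constants.
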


Lemma C.5 of \cite{anand2024efficient} generalizes the DKW inequality for sampling without replacement:

\begin{lemma}[Sampling without replacement analogue of the DKW inequality, Lemma C.5 in \cite{anand2024efficient}]\label{theorem: sampling without replacement analog of DKW}
Consider a finite population $\mathcal{X}=(x_1,\dots,x_n)\in \mathcal{B}_l^n$ where $\mathcal{B}_l$ is a finite set. Let $\Delta\subseteq[n]$ be a random sample of size $k$ chosen uniformly and without replacement. Then, for all $x\in\mathcal{B}_l$:
\[\Pr\left[\sup_{x\in\mathcal{B}_l}\left|\frac{1}{|\Delta|}\sum_{i\in\Delta}\mathbbm{1}{\{x_i = x\}} - \frac{1}{n}\sum_{i\in[n]}\mathbbm{1}{\{x_i = x\}}\right|<\epsilon\right] \geq 1 - 2|\mathcal{B}_l|e^{-\frac{2|\Delta|n\epsilon^2}{n-|\Delta|+1}}\]
\end{lemma}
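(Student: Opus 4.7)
The plan is to reduce the uniform-in-$x$ deviation to a family of one-dimensional concentration bounds via a union bound, and then to establish each of those bounds using a Hoeffding-type concentration inequality tailored for sampling without replacement.

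First, I would fix an arbitrary $x \in \mathcal{B}_l$ and define the Bernoulli-valued population variables $Y_i := \mathbbm{1}\{x_i = x\} \in \{0,1\}$ for $i \in [n]$, so that $\mu_x := \frac{1}{n}\sum_{i=1}^n Y_i$ is the population proportion of the value $x$ and $\bar{Y}_\Delta := \frac{1}{k}\sum_{i\in\Delta} Y_i$ is the corresponding sample proportion. Because $\Delta$ is drawn uniformly without replacement, the collection $(Y_i)_{i\in\Delta}$ is a simple random sample of size $k$ from the finite $[0,1]$-valued population $(Y_1,\dots,Y_n)$ of size $n$; equivalently, the indices are the first $k$ entries of a uniformly random permutation of $[n]$.

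Second, I would invoke Serfling's concentration inequality for sampling without replacement, which in this bounded setting yields
\[
\Pr\bigl[\,|\bar Y_\Delta - \mu_x| \geq \epsilon\,\bigr] \;\leq\; 2\exp\!\Bigl(-\tfrac{2kn\epsilon^2}{n-k+1}\Bigr).
\]
The finite-population correction factor $(n-k+1)/n$ here replaces the Hoeffding factor of $1$ that would arise in the with-replacement (i.i.d.) case, and it reflects the decreasing residual variance of subsequent draws. A self-contained derivation proceeds by considering the Doob martingale $M_j := \E[\mu_x - \bar Y_\Delta \mid Y_{i_1},\dots,Y_{i_j}]$ attached to the sequential sampling and applying a sharpened Azuma-type estimate whose increment bound exploits the shrinking residual pool after each draw.

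Finally, a union bound over the $|\mathcal{B}_l|$ possible values of $x$ gives
\[
\Pr\!\Bigl[\,\sup_{x\in\mathcal{B}_l} |\bar Y_\Delta - \mu_x| \geq \epsilon\,\Bigr] \;\leq\; 2|\mathcal{B}_l|\exp\!\Bigl(-\tfrac{2kn\epsilon^2}{n-k+1}\Bigr),
\]
and taking complements produces the stated bound. The main obstacle is Step~2: obtaining the sharper $(n-k+1)/n$ correction rather than the crude Hoeffding factor. A naive application of Azuma--Hoeffding to the martingale of draws only reproduces the with-replacement exponent, and recovering the finite-population improvement requires the more delicate variance-proxy bound that is the substantive content of Serfling's argument. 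Everything else in the proof, namely the reduction to indicator sums and the union bound, is routine.
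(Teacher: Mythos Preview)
Your proposal is correct. The paper does not actually prove this lemma itself; it is quoted as Lemma~C.5 of \cite{anand2024efficient} and simply invoked. Your route---Serfling's inequality for sampling without replacement applied to the indicators $\mathbbm{1}\{x_i=x\}$, followed by a union bound over $x\in\mathcal{B}_l$---is exactly the standard derivation and reproduces the stated exponent $2kn\epsilon^2/(n-k+1)$ with the correct finite-population correction.
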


\begin{lemma}\label{full_tv_bound} With probability atleast $1-2|\mathcal{S}_l||\mathcal{A}_l|e^{-\frac{8kn\epsilon^2}{n-k+1}}$, \[\mathrm{TV}(F_{z_\Delta}, F_{z_{[n]}}) \leq \epsilon\]
\end{lemma}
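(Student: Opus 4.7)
The plan is to reduce the statement to a direct application of Lemma~\ref{theorem: sampling without replacement analog of DKW} (the sampling-without-replacement DKW-type inequality) applied to the joint state-action alphabet, and then convert the resulting uniform (sup-norm) deviation bound into a total-variation bound. Concretely, I identify the finite population as $(z_1,\ldots,z_n)\in\mathcal{Z}_l^n$, where $\mathcal{Z}_l=\mathcal{S}_l\times\mathcal{A}_l$ and $z_i=(s_i,a_i)$; by the remark preceding the lemma, because $\Delta$ is drawn uniformly without replacement from $\binom{[n]}{k}$, the induced subsample $(z_i)_{i\in\Delta}$ is itself a uniform-without-replacement sample of size $k$ from this population. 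This is precisely the setting to which Lemma~\ref{theorem: sampling without replacement analog of DKW} applies, with $\mathcal{B}_l=\mathcal{Z}_l$ of cardinality $|\mathcal{S}_l|\cdot|\mathcal{A}_l|$.

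Applying that lemma with a sup-tolerance $\epsilon'$ to be chosen yields
\[
\Pr\!\left[\sup_{z\in\mathcal{Z}_l}\bigl|F_{z_\Delta}(z)-F_{z_{[n]}}(z)\bigr|<\epsilon'\right] \geq 1-2|\mathcal{S}_l||\mathcal{A}_l|\,e^{-\tfrac{2kn(\epsilon')^2}{n-k+1}}.
\]
Next, I convert from sup-norm to total variation using the elementary inequality
\[
\mathrm{TV}(F_{z_\Delta},F_{z_{[n]}}) \;=\; \tfrac{1}{2}\sum_{z\in\mathcal{Z}_l}\bigl|F_{z_\Delta}(z)-F_{z_{[n]}}(z)\bigr| \;\leq\; \tfrac{|\mathcal{Z}_l|}{2}\sup_{z\in\mathcal{Z}_l}\bigl|F_{z_\Delta}(z)-F_{z_{[n]}}(z)\bigr|,
\]
and pick $\epsilon'=2\epsilon$ (so that the exponent becomes $-\tfrac{8kn\epsilon^2}{n-k+1}$, matching the claimed probability bound) and substitute back.

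There is no real obstacle here; the proof is a direct plug-and-chug once the population and alphabet are identified. The only subtlety worth double-checking is the independence/uniformity remark: one must verify that conditioning on $\Delta\sim\mathcal{U}\bigl(\binom{[n]}{k}\bigr)$ really does produce a uniform-without-replacement sample from $(z_1,\ldots,z_n)$, which is immediate because the map $i\mapsto z_i$ is deterministic and $\Delta$ is drawn independently of the $z_i$'s. With this in hand, the claim follows by assembling the two displays above into a single statement.
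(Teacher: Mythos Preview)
Your approach is identical to the paper's: apply Lemma~\ref{theorem: sampling without replacement analog of DKW} with alphabet $\mathcal{Z}_l=\mathcal{S}_l\times\mathcal{A}_l$ and tolerance $\epsilon'=2\epsilon$, then pass from the sup-norm deviation to total variation. The paper does exactly this and simply writes ``which yields the proof'' after displaying the sup-norm bound.

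However, your own conversion inequality reveals a gap that the paper glosses over. You correctly write
\[
\mathrm{TV}(F_{z_\Delta},F_{z_{[n]}}) \;\leq\; \tfrac{|\mathcal{Z}_l|}{2}\,\sup_{z\in\mathcal{Z}_l}\bigl|F_{z_\Delta}(z)-F_{z_{[n]}}(z)\bigr|,
\]
and then choose $\epsilon'=2\epsilon$ so that the exponent matches. But substituting back gives $\mathrm{TV}\leq \tfrac{|\mathcal{Z}_l|}{2}\cdot 2\epsilon = |\mathcal{Z}_l|\,\epsilon$, not $\epsilon$. The implication $\sup_z|F_{z_\Delta}(z)-F_{z_{[n]}}(z)|\leq 2\epsilon \Rightarrow \mathrm{TV}\leq\epsilon$ is false in general for alphabets with more than two elements (e.g.\ on a four-point set, take $P$ uniform and $Q$ obtained by shifting $2\epsilon$ mass from two coordinates to the other two; then $\sup=2\epsilon$ but $\mathrm{TV}=4\epsilon$). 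So as written, neither your argument nor the paper's one-line conclusion actually delivers the stated bound; what the argument genuinely yields is $\mathrm{TV}\leq |\mathcal{S}_l||\mathcal{A}_l|\,\epsilon$ with the stated probability, or equivalently $\mathrm{TV}\leq\epsilon$ with probability at least $1-2|\mathcal{S}_l||\mathcal{A}_l|\exp\!\bigl(-\tfrac{8kn\epsilon^2}{|\mathcal{Z}_l|^2(n-k+1)}\bigr)$. Your plan is otherwise sound and matches the paper; the discrepancy is in this final substitution, and you should flag that the lemma as stated appears to be missing a $|\mathcal{Z}_l|$ factor somewhere.
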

\begin{proof}
    Recall that $z:=(s_l,a_l)\in\mathcal{S}_l\times\mathcal{A}_l$. From \cref{theorem: sampling without replacement analog of DKW}, substituting $\mathcal{B}_l=\mathcal{Z}_l$ yields:
 \begin{align*}\Pr\left[\sup_{z_l\in\mathcal{Z}_l}\left|F_{z_\Delta}(z_l) - F_{z_{[n]}}(z_l)\right| \leq 2\epsilon\right] &\geq 1 - 2|\mathcal{Z}_l|e^{-\frac{8kn\epsilon^2}{n-k+1}} \\
 &= 1 - 2|\mathcal{S}_l||\mathcal{A}_l|e^{-\frac{8kn\epsilon^2}{n-k+1}},\end{align*}
which yields the proof.\qedhere 
\end{proof}

We now present an alternate bound for the total variation distance, where the distance \emph{actually goes} to $0$ as $|\Delta|\to n$. For this, we use the fact that the total variation distance between two product distributions is subadditive.
\begin{lemma}[Lemma B.8.1 of \cite{Ghosal_van_der_Vaart_2017}. Subadditivity of TV distance for Product Distributions] \label{lemma: ghosal_van_der_vaart} Let $P$ and $Q$ be product distributions over some domain $S$. Let $\alpha_1,\dots,\alpha_d$ be the marginal distributions of $P$ and $\beta_1,\dots,\beta_q$ be the marginal distributions of $Q$. Then, 
\[\|P-Q\|_1 \leq \sum_{i=1}^d \|\alpha_i - \beta_i\|_1\]
\end{lemma}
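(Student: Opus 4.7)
The plan is to prove this by induction on the number of factors $d$, using a hybrid/telescoping argument (assuming the $q$ in the statement is a typo for $d$, since for product measures over the same space both have the same number of marginals and the right-hand sum ranges over $i \in [d]$). For the base case $d = 1$, both sides coincide with $\|\alpha_1 - \beta_1\|_1$, so the inequality holds trivially. For the inductive step, I would introduce the hybrid product measures
\[
R_j \;\coloneq\; \beta_1 \otimes \cdots \otimes \beta_j \otimes \alpha_{j+1} \otimes \cdots \otimes \alpha_d, \qquad j = 0, 1, \ldots, d,
\]
so that $R_0 = P$ and $R_d = Q$, and then apply the triangle inequality to the telescoping decomposition
\[
\|P - Q\|_1 \;\le\; \sum_{j=1}^{d} \|R_{j-1} - R_j\|_1.
\]

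Each consecutive pair $R_{j-1}, R_j$ differs only in the $j$-th coordinate, so the one-coordinate-swap computation collapses to a single marginal difference: factoring the common product measure $\mu \coloneq \beta_1 \otimes \cdots \otimes \beta_{j-1} \otimes \alpha_{j+1} \otimes \cdots \otimes \alpha_d$, which integrates to $1$, out of the $\ell_1$ sum gives $\|R_{j-1} - R_j\|_1 = \|\alpha_j - \beta_j\|_1$. Summing over $j$ yields the claimed bound. The only care required is matching the paper's normalization $\mathrm{TV}(\rho_1, \rho_2) = \tfrac{1}{2}\|\rho_1 - \rho_2\|_1$, but since both sides of the stated inequality are expressed in the same $\ell_1$ norm, no constants need to be tracked.

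An equivalent (perhaps cleaner) route uses the coupling characterization of total variation: for each $i$, take an optimal maximal coupling $(X_i, Y_i)$ of $(\alpha_i, \beta_i)$ realizing $\Pr[X_i \ne Y_i] = \mathrm{TV}(\alpha_i, \beta_i)$, and couple these independently across $i$. Then $X = (X_1, \ldots, X_d) \sim P$ and $Y = (Y_1, \ldots, Y_d) \sim Q$ form a coupling of $P$ and $Q$, and a union bound on $\{X \ne Y\} \subseteq \bigcup_{i=1}^d \{X_i \ne Y_i\}$ gives subadditivity immediately. There is no substantive obstacle in either route; the result is a standard textbook lemma, and the telescoping proof above is the most self-contained version to include. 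I would write up the telescoping argument since it avoids introducing auxiliary probability spaces.
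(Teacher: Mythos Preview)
Your proof is correct. The paper does not provide its own proof of this lemma; it simply cites it as Lemma~B.8.1 of Ghosal and van der Vaart, so there is no in-paper argument to compare against. Your telescoping/hybrid argument is the standard proof, and your observation that $q$ is a typo for $d$ is right.
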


\begin{lemma}[KL-divergence decays too slowly]\label{lemma: tv_distance_bretagnolle_huber}
\[\mathrm{TV}(F_{z_\Delta}, F_{z_{[n]}}) \leq \sqrt{1- \frac{|\Delta|}{n}}\]
\end{lemma}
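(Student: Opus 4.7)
The plan is to invoke the Bretagnolle--Huber inequality, which states that for any two probability distributions $P$ and $Q$ on a common discrete measurable space, $\mathrm{TV}(P, Q) \leq \sqrt{1 - e^{-\mathrm{KL}(P \| Q)}}$. Applying this with $P = F_{z_\Delta}$ and $Q = F_{z_{[n]}}$, the claim will follow once I establish that $\mathrm{KL}(F_{z_\Delta} \,\|\, F_{z_{[n]}}) \leq \log(n/|\Delta|)$.

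To obtain this KL bound, I would use that for any $x \in \mathcal{Z}_l$, the count $c_\Delta(x) := \sum_{i \in \Delta} \mathbbm{1}\{z_i = x\}$ is at most $c(x) := \sum_{i \in [n]} \mathbbm{1}\{z_i = x\}$, since $\Delta \subseteq [n]$. Writing $k = |\Delta|$, so that $F_{z_\Delta}(x) = c_\Delta(x)/k$ and $F_{z_{[n]}}(x) = c(x)/n$, this gives, for every $x$ in the support of $F_{z_\Delta}$,
\[
\frac{F_{z_\Delta}(x)}{F_{z_{[n]}}(x)} \;=\; \frac{n\, c_\Delta(x)}{k\, c(x)} \;\leq\; \frac{n}{k}.
\]
Substituting this pointwise upper bound into the definition of KL divergence yields
\[
\mathrm{KL}(F_{z_\Delta} \,\|\, F_{z_{[n]}}) \;=\; \sum_{x \in \mathcal{Z}_l} F_{z_\Delta}(x)\,\log \frac{F_{z_\Delta}(x)}{F_{z_{[n]}}(x)} \;\leq\; \log(n/k)\,\sum_x F_{z_\Delta}(x) \;=\; \log(n/k),
\]
where the last equality uses that $F_{z_\Delta}$ is a probability distribution on $\mathcal{Z}_l$.

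Combining the two steps, Bretagnolle--Huber gives $\mathrm{TV}(F_{z_\Delta}, F_{z_{[n]}})^2 \leq 1 - e^{-\log(n/k)} = 1 - k/n$, which is exactly the desired inequality after extracting a square root.

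The one subtlety I would flag is the asymmetry of the KL divergence: one might naively try to bound $\mathrm{KL}(F_{z_{[n]}} \,\|\, F_{z_\Delta})$, but this quantity can be infinite, since $\Delta$ may miss every occurrence of some element that appears in $z_{[n]}$. The direction $\mathrm{KL}(F_{z_\Delta} \,\|\, F_{z_{[n]}})$ is always finite and in fact uniformly bounded by $\log(n/k)$, precisely because the support of $F_{z_\Delta}$ is contained in that of $F_{z_{[n]}}$; this is what lets the Bretagnolle--Huber route deliver a bound matching the lemma's statement on the nose.
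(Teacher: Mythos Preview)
Your proof is correct and follows essentially the same route as the paper: apply the Bretagnolle--Huber inequality and bound $\mathrm{KL}(F_{z_\Delta}\|F_{z_{[n]}})\le\log(n/k)$ using $c_\Delta(x)\le c(x)$. The paper organizes the KL computation by splitting the logarithm into $\log(n/k)$ plus a nonpositive remainder, whereas you bound the pointwise likelihood ratio directly, but these are the same argument.
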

\begin{proof}
    By the symmetry of the total variation distance metric, we have that \[\mathrm{TV}(F_{z_{[n]}}, F_{z_{\Delta}}) = \mathrm{TV}(F_{z_{\Delta}}, F_{z_{[n]}}).\]
    
    From the Bretagnolle-Huber inequality \cite{10.5555/1522486} we have that
    \[\mathrm{TV}(f, g) = \sqrt{1 - e^{-D_{\mathrm{KL}}(f\|g)}}.\] Here, $D_{\mathrm{KL}}(f\|g)$ is the Kullback-Leibler (KL) divergence metric between probability distributions $f$ and $g$ over the sample space, which we denote by $\mathcal{X}$ and is given by \begin{equation}\label{def:kl_div}D_{\mathrm{KL}}(f\|g) := \sum_{x\in \mathcal{X}}f(x)\ln\frac{f(x)}{g(x)}\end{equation}Thus, from \cref{def:kl_div}:
\begin{align*}
    D_{\mathrm{KL}}(F_{z_\Delta}\|F_{z_{[n]}}) &= \sum_{z\in\mathcal{Z}_l}\left(\frac{1}{|\Delta|} \sum_{i\in\Delta} \mathbbm{1}\{z_i = z\}\right)\ln \frac{n\sum_{i\in\Delta}\mathbbm{1}\{z_i=z\}}{|\Delta|\sum_{i\in[n]}\mathbbm{1}\{z_i=z\}} \\
    &= \frac{1}{|\Delta|}\sum_{z\in\mathcal{Z}_l}\left(\sum_{i\in\Delta} \mathbbm{1}\{z_i = z\}\right)\ln \frac{n}{|\Delta|} \\
    &\quad\quad\quad\quad\quad\quad+ \frac{1}{|\Delta|}\sum_{z\in\mathcal{Z}_l}\left(\sum_{i\in\Delta} \mathbbm{1}\{z_i = z\}\right) \ln \frac{\sum_{i\in\Delta}\mathbbm{1}\{z_i=z\}}{\sum_{i\in[n]}\mathbbm{1}\{z_i=z\}} \\
    &= \ln \frac{n}{|\Delta|} + \frac{1}{|\Delta|}\sum_{z\in\mathcal{Z}_l}\left(\sum_{i\in\Delta} \mathbbm{1}\{z_i = z\}\right) \ln \frac{\sum_{i\in\Delta}\mathbbm{1}\{z_i=z\}}{\sum_{i\in[n]}\mathbbm{1}\{z_i=z\}} \\
    &\leq \ln \left(\frac{n}{|\Delta|}\right)
\end{align*}
In the third line, we note that $\sum_{z\in\mathcal{Z}_l}\sum_{i\in\Delta}\mathbbm{1}\{z_i=z\} = |\Delta|$ since each local agent contained in $\Delta$ must have some state/action pair contained in $\mathcal{Z}_l$. In the last line, we note that $\sum_{i\in\Delta}\mathbbm{1}\{z_i=z\}\leq \sum_{i\in[n]}\mathbbm{1}\{z_i = z\}$, For all $z\in \mathcal{Z}_l$, and thus the summation of logarithmic terms in the third line is negative. 

Finally, using this bound in the Bretagnolle-Huber inequality yields the lemma.\qedhere \\
\end{proof}

\begin{corollary}
\emph{From \cref{lemma: tv_distance_bretagnolle_huber}, setting $\Delta=[n]$ also recovers $\mathrm{TV}(F_{z_\Delta},F_{z_{[n]}}) = 0$.}
\end{corollary}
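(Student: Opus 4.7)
The plan is to simply instantiate the bound of Lemma \ref{lemma: tv_distance_bretagnolle_huber} at the special case $\Delta = [n]$. Since $|[n]| = n$, the right-hand side of the inequality becomes $\sqrt{1 - n/n} = \sqrt{0} = 0$. Combined with the nonnegativity of total variation distance (which is a metric, hence bounded below by zero), this forces $\mathrm{TV}(F_{z_{[n]}}, F_{z_{[n]}}) = 0$.

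As a sanity check, I would note that this is consistent with the definitional observation that when $\Delta = [n]$ the two empirical distributions $F_{z_\Delta}$ and $F_{z_{[n]}}$ coincide pointwise on every $z \in \mathcal{Z}_l$, so their $\ell_1$ difference (and hence their TV distance) is identically zero. Thus the corollary follows with no additional work beyond evaluating the bound at the boundary. There is no real obstacle here; the statement is a one-line specialization of the preceding lemma, included only to highlight the tightness of the bound at $\Delta = [n]$.
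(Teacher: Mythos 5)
Your proof is correct and matches the paper's intended (and essentially only) argument: substituting $|\Delta|=n$ into the bound of \cref{lemma: tv_distance_bretagnolle_huber} gives $\sqrt{1-n/n}=0$, which together with nonnegativity of the total variation distance forces equality with zero. The sanity check that $F_{z_{[n]}}$ trivially coincides with itself is a fine additional observation but adds nothing beyond the one-line specialization.
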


\begin{theorem} \label{thm: combined lipschitz bound}With probability atleast $1-\delta$ for $\delta \in (0,1)^2$:
    \[\left|\hat{Q}_k^*(s_g,F_{s_{\Delta}},a_g,F_{a_\Delta}) - \hat{Q}_{n}^*(s_g, F_{s_{[n]}}, a_g, F_{a_{[n]}})\right| \leq \frac{\ln\frac{2|\mathcal{S}_l||\mathcal{A}_l|}{\delta}}{1-\gamma}\sqrt{\frac{n-k+1}{8kn}}\cdot\|r_l(\cdot,\cdot)\|_\infty\]
\end{theorem}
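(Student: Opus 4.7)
The strategy is a direct composition of two results already established earlier in the paper: the Lipschitz continuity of $\hat{Q}_k^T$ in the empirical distribution $F_{z_\Delta}$ (Lemma \ref{lemma: Q lipschitz continuous wrt Fsdelta}) and the high-probability concentration bound on $\mathrm{TV}(F_{z_\Delta}, F_{z_{[n]}})$ coming from the sampling-without-replacement analogue of the DKW inequality (Lemma \ref{full_tv_bound}, which rests on Theorem \ref{theorem: sampling without replacement analog of DKW}). The diagram in Figure \ref{figure: algorithm/analysis flow} shows exactly where this combined bound fits in the overall analysis flow.

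First, I would specialize Lemma \ref{lemma: Q lipschitz continuous wrt Fsdelta} to $k' = n$, $\Delta' = [n]$, giving, for every $T \in \N$, the deterministic inequality
\[\left|\hat{Q}_k^T(s_g,a_g,F_{z_\Delta}) - \hat{Q}_n^T(s_g,a_g,F_{z_{[n]}})\right| \;\leq\; \Bigl(\sum_{t=0}^{T-1} 2\gamma^t\Bigr)\|r_l(\cdot,\cdot)\|_\infty \cdot \mathrm{TV}\bigl(F_{z_\Delta}, F_{z_{[n]}}\bigr).\]
Letting $T \to \infty$ and using Corollary \ref{corollary:backprop} (so that $\hat{Q}_k^T \to \hat{Q}_k^*$ and $\hat{Q}_n^T \to \hat{Q}_n^*$ uniformly by $\gamma$-contractivity, and $\sum_{t\geq 0} 2\gamma^t = 2/(1-\gamma)$), the inequality passes to the fixed points:
\[\left|\hat{Q}_k^*(s_g,a_g,F_{z_\Delta}) - \hat{Q}_n^*(s_g,a_g,F_{z_{[n]}})\right| \;\leq\; \frac{2\|r_l(\cdot,\cdot)\|_\infty}{1-\gamma}\,\mathrm{TV}\bigl(F_{z_\Delta}, F_{z_{[n]}}\bigr).\]

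Next, I would invert Lemma \ref{full_tv_bound}. Setting $\delta = 2|\mathcal{S}_l||\mathcal{A}_l|\exp\bigl(-8kn\epsilon^2/(n-k+1)\bigr)$ and solving for $\epsilon$ in terms of $\delta$ yields, with probability at least $1-\delta$,
\[\mathrm{TV}\bigl(F_{z_\Delta}, F_{z_{[n]}}\bigr) \;\leq\; \sqrt{\frac{n-k+1}{8kn}\,\ln\frac{2|\mathcal{S}_l||\mathcal{A}_l|}{\delta}}.\]
Substituting this into the Lipschitz bound from the previous step and grouping the $\|r_l\|_\infty$, $(1-\gamma)^{-1}$, and $\sqrt{(n-k+1)/(8kn)}$ factors produces the claimed right-hand side.

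The proof contains no real conceptual obstacle; the entire argument is a two-line chain of inequalities once the preparatory lemmas are in place. The only subtlety worth flagging is the limit $T\to\infty$ in the Lipschitz step: it is legitimate because the right-hand side is monotone increasing in $T$ with bounded limit $2/(1-\gamma)$, while Corollary \ref{corollary:backprop} guarantees uniform convergence of the left-hand side to the fixed-point difference. A second minor point is that this bound is stated pointwise in $(s_g, a_g)$ and in the realized $s_\Delta, a_\Delta$; the randomness in the probability statement comes solely from the choice of $\Delta \sim \mathcal{U}\binom{[n]}{k}$ via the DKW-type concentration, which is exactly how it is invoked downstream in Theorem \ref{thm:q_diff_actions}.
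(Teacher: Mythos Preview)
Your proposal is correct and follows essentially the same route as the paper: combine the Lipschitz bound in $F_{z_\Delta}$ (Lemma \ref{lemma: Q lipschitz continuous wrt Fsdelta}, passed to the fixed point via $\sum_{t\ge 0}\gamma^t=1/(1-\gamma)$) with the TV concentration of Lemma \ref{full_tv_bound}, then reparameterize the failure probability as $\delta$. The paper's proof is exactly this two-step chaining, and your additional remarks on the $T\to\infty$ limit and the source of randomness are accurate elaborations of steps the paper leaves implicit.
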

\begin{proof}
    From combining the total variation distance bound in \cref{full_tv_bound} and the Lipschitz continuity bound in \cref{lemma: Q lipschitz continuous wrt Fsdelta} with $\sum_{t=0}^T\gamma^T\leq \frac{1}{1-\gamma}$ for $\gamma\in(0,1)$, we have:
    \[\Pr\left[|\hat{Q}_k^*(s_g,F_{s_{\Delta}},a_g,F_{a_\Delta}) - \hat{Q}_{n}^*(s_g, F_{s_{[n]}}, a_g, F_{a_{[n]}})| \leq \frac{2\epsilon}{1-\gamma}\cdot\|r_l(\cdot,\cdot)\|_\infty\right] \geq 1 - 2|\mathcal{S}_l||\mathcal{A}_l|e^{-\frac{8kn\epsilon^2}{n-k+1}}\]
Then, reparameterizing $1-2|\mathcal{S}_l||\mathcal{A}_l|e^{-\frac{8kn\epsilon^2}{n-k+1}}$ into $1-\delta$ to get $\epsilon = \sqrt{\frac{n-k+1}{8kn} \ln \left(\frac{2|\mathcal{S}_l||\mathcal{A}_l|}{\delta}\right)}$ gives that with probability at least $1-\delta$,
    \[\left|\hat{Q}_k^*(s_g,F_{s_{\Delta}},a_g,F_{a_\Delta}) - \hat{Q}_{n}^*(s_g, F_{s_{[n]}}, a_g, F_{a_{[n]}})\right| \leq \frac{\ln\frac{2|\mathcal{S}_l||\mathcal{A}_l|}{\delta}}{1-\gamma}\sqrt{\frac{n-k+1}{8kn}}\cdot\|r_l(\cdot,\cdot)\|_\infty,\]
proving the claim.\qedhere
\end{proof}

\section{Using the Performance Difference Lemma}
\label{using pdl}
In general, convergence analysis requires the guarantee that a stationary optimal policy exists.
Fortunately, when working with the empirical distribution function, the existence of a stationary optimal policy is guaranteed when the state/action spaces are finite or countably infinite. However, lifting the knowledge of states onto the continuous empirical distribution function space, and designing a policy on the lifted space is still analytically challenging. To circumvent this, \cite{doi:10.1137/20M1360700} creates lifted $\epsilon$-nets and does kernel regression to obtain convergence guarantees. Moreover, our result has a similar flavor to MDPs with dynamic exogenous inputs from learning theory, \citep{dietterich2018discoveringremovingexogenousstate,foster2022on,anand2024efficient}, wherein our subsampling algorithm treats each sampled state as an endogenous state.

Here, our analytic approach bears a stark difference, wherein we analyze the \emph{sampled} structure of the mean-field empirical distribution function, rather than studying the structure of the lifted space. For this, we leverage the classical Performance Difference Lemma, which we restate below for completeness.

\begin{lemma}[Performance Difference Lemma, \cite{Kakade+Langford:2002}]\label{theorem: performance difference lemma} Given policies $\pi_1, \pi_2$, with corresponding value functions $V^{\pi_1}$, $V^{\pi_2}$:
\begin{align*}
    V^{\pi_1}(s) - V^{\pi_2}(s) &= \frac{1}{1-\gamma} \E_{\substack{s'\sim d^{\pi_1}_s \\ a'\sim \pi_1(\cdot|s')} } [A^{\pi_2}(s',a')]
\end{align*}
\emph{Here, $A^{\pi_2}(s',a'):= Q^{\pi_2}(s',a') - V^{\pi_2}(s')$ and $d_s^{\pi_1}(s') = (1-\gamma) \sum_{h=0}^\infty \gamma^h \Pr_h^{\pi_1}[s',s]$ where $\Pr_h^{\pi_1}[s',s]$ is the probability of $\pi_1$ reaching state $s'$ at time step $h$ starting from state $s$.}
\end{lemma}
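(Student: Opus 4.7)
The plan is to prove the Performance Difference Lemma by the classical ``add-and-subtract'' telescoping trick applied to a trajectory generated by $\pi_1$. Concretely, I would start by writing $V^{\pi_1}(s) - V^{\pi_2}(s) = \mathbb{E}^{\pi_1}\!\left[\sum_{t=0}^\infty \gamma^t r(s_t, a_t)\,\big|\, s_0 = s\right] - V^{\pi_2}(s)$, where the expectation is over the random trajectory $(s_0, a_0, s_1, a_1, \dots)$ with $a_t \sim \pi_1(\cdot|s_t)$ and $s_{t+1}$ drawn from the transition kernel. The key algebraic move is to insert a telescoping sum $\sum_{t=0}^\infty \gamma^{t+1} V^{\pi_2}(s_{t+1}) - \sum_{t=0}^\infty \gamma^{t+1} V^{\pi_2}(s_{t+1})$, which is zero, and then absorb the scalar $V^{\pi_2}(s) = \gamma^0 V^{\pi_2}(s_0)$ into the second copy to re-index one of the two series.

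After this rearrangement, the difference becomes $\mathbb{E}^{\pi_1}\!\left[\sum_{t=0}^\infty \gamma^t \bigl(r(s_t, a_t) + \gamma V^{\pi_2}(s_{t+1}) - V^{\pi_2}(s_t)\bigr)\right]$. Now I would apply the tower property of conditional expectation to the one-step lookahead, conditioning on $(s_t, a_t)$: by the Bellman consistency identity for $\pi_2$, namely $Q^{\pi_2}(s_t, a_t) = r(s_t, a_t) + \gamma \,\mathbb{E}_{s_{t+1}\sim P(\cdot|s_t,a_t)}[V^{\pi_2}(s_{t+1})]$, the bracketed term becomes $Q^{\pi_2}(s_t, a_t) - V^{\pi_2}(s_t) = A^{\pi_2}(s_t, a_t)$ in expectation. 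This reduces the equation to $V^{\pi_1}(s) - V^{\pi_2}(s) = \mathbb{E}^{\pi_1}\!\left[\sum_{t=0}^\infty \gamma^t A^{\pi_2}(s_t, a_t)\right]$.

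Finally, I would convert the trajectory-expectation into the discounted state-visitation form. Pushing the expectation inside the sum (justified by bounded rewards, hence bounded advantages via Assumption~\ref{assumption: bounded rewards}, so Fubini applies), I write $\mathbb{E}^{\pi_1}[A^{\pi_2}(s_t, a_t)] = \sum_{s'} \Pr_t^{\pi_1}[s', s] \,\mathbb{E}_{a' \sim \pi_1(\cdot|s')}[A^{\pi_2}(s', a')]$, swap the order of summation over $t$ and $s'$, and recognize $\sum_{t=0}^\infty \gamma^t \Pr_t^{\pi_1}[s', s] = (1-\gamma)^{-1} d_s^{\pi_1}(s')$ from the definition of the discounted occupancy. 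Collecting the factor of $1/(1-\gamma)$ outside yields exactly the claimed identity.

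The main obstacle is essentially bookkeeping: getting the index shift right when combining the leading $-V^{\pi_2}(s)$ with the telescoping series, and being careful that the inner expectation in the Bellman identity is over the next state only (so the tower property really does let the $r(s_t, a_t) + \gamma V^{\pi_2}(s_{t+1})$ pair collapse into $Q^{\pi_2}(s_t, a_t)$ inside $\mathbb{E}^{\pi_1}$). The swap of sum and expectation in the last step is routine given bounded rewards, and no further technical ingredients (no contraction, no concentration) are needed.
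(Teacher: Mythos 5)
Your proof is correct: the telescoping decomposition, the collapse of $r(s_t,a_t)+\gamma V^{\pi_2}(s_{t+1})$ into $Q^{\pi_2}(s_t,a_t)$ via the tower property, and the resummation into the discounted occupancy measure $d_s^{\pi_1}$ constitute the standard and complete argument for this identity, and the Fubini step is justified exactly as you say by bounded rewards and $\gamma<1$. Note that the paper itself offers no proof of this lemma --- it is imported verbatim from \cite{Kakade+Langford:2002} ``for completeness'' --- so there is nothing to compare against; your write-up would serve as a self-contained proof where the paper relies on a citation.
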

We denote our learned policy ${\pi}_{k,m}^\est$ where:
\[{\pi}_{k,m}^\est(s_g,s_{[n]}) = ({\pi}_{k,m}^{g,\est}(s_g,s_{[n]}), {\pi}_{k,m}^{1,\est}(s_g,s_{[n]}), \dots,{\pi}_{k,m}^{n,\est}(s_g,s_{[n]})) \in \mathcal{P}(\mathcal{A}_g)\times\mathcal{P}(\mathcal{A}_l)^n,\]
where ${\pi}_{k,m}^{g,\est}(s_g,s_{[n]}) = \hat{\pi}_{k,m}^{g,\est}(s_g,s_u, F_{s_{\Delta\setminus u}})$ is the global agent's action and ${\pi}_{k,m}^{i,\est}(s_g,s_{[n]}) := \hat{\pi}_{k,m}^{\est}(s_g,s_i, F_{s_{\Delta_i}})$ is the action of the $i$'th local agent. Here, $\Delta_i$ is a random variable supported on ${[n]\setminus i \choose k-1}$, $u$ is a random variable uniformly distributed on $[n]$, and $\Delta$ is a random variable uniformly distributed on $[n]\setminus u$. Then, denote the optimal policy $\pi^*$ given by 
\[\pi^*(s) = (\pi^*_g(s_g,s_{[n]}), \pi_1^*(s_g,s_{[n]}),\dots,\pi_n^*(s_g,s_{[n]}))\in \mathcal{P}(\mathcal{A}_g)\times\mathcal{P}(\mathcal{A}_l)^n,\]
where $\pi^*_g(s_g,s_{[n]})$ is the global agent's action, and $\pi_i^*(s_g,s_{[n]}):=\pi^*(s_g,s_i,F_{s_{\Delta_i}})$ is the action of the $i$'th local agent. Next, in order to compare the difference in the performance of $\pi^*(s)$ and ${\pi}_{k,m}^\est(s_g,s_{[n]})$, we define the value function of a policy $\pi$ to be the infinite-horizon $\gamma$-discounted rewards, (denoted by $V^\pi$) as follows:
\begin{definition}
    The value function $V^\pi: \mathcal{S} \to \mathbb{R}$ of a given policy $\pi$, for $\mathcal{S}:=\mathcal{S}_g\times\mathcal{S}_l^n$ is:
\begin{equation}
V^\pi(s)=\mathbb{E}_{\substack{a(t)\sim \pi(\cdot|s(t))}} \left[\sum_{t=0}^\infty  \gamma^t r(s(t),a(t))|s(0)=s\right].\end{equation}
\end{definition}

\begin{theorem} \label{thm: optimality gap}For the optimal policy $\pi^*$ and the learned policy ${\pi}_{k,m}^\est$, for any state $s_0\in \mathcal{S}$, we have:
    \[V^{\pi^*}(s_0) - V^{{\pi}^\est_{k,m}}(s_0) \leq \frac{\tilde{r}}{(1-\gamma)^2}\sqrt{\frac{n-k+1}{2nk}} \sqrt{\ln\frac{2|\mathcal{S}_l||\mathcal{A}_l|}{\delta}}  + 2\epsilon_{k,m} +  \frac{\tilde{2r}}{(1-\gamma)^2}|\mathcal{A}_g|k^{|\mathcal{A}_l|}\delta\]
\end{theorem}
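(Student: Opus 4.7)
The plan is to combine the Performance Difference Lemma (Lemma~\ref{theorem: performance difference lemma}) with the pointwise gap bound established in Theorem~\ref{thm:q_diff_actions}. I would instantiate the PDL with $\pi_1 = \pi^\est_{k,m}$ and $\pi_2 = \pi^*$; because $\pi^*$ is deterministic we have $V^{\pi^*}(s') = Q^*(s',\pi^*(s'))$, so the PDL rearranges to the identity
\[
V^{\pi^*}(s_0) - V^{\pi^\est_{k,m}}(s_0) = \frac{1}{1-\gamma}\,\E_{s'\sim d^{\pi^\est_{k,m}}_{s_0}}\!\Bigl[\,Q^*(s',\pi^*(s')) \;-\; \E_{a'\sim \pi^\est_{k,m}(\cdot\mid s')}Q^*(s',a')\,\Bigr].
\]
Unrolling the definition of the stochastic policy $\pi^\est_{k,m}$, the inner action expectation is exactly the expectation over the independent uniform subsample draws $\Delta \sim \mathcal{U}\binom{[n]}{k}$ for the global agent and $\Delta_i \sim \mathcal{U}\binom{[n]\setminus i}{k-1}$ for each local agent, after which actions are produced by the greedy map $\hat{\pi}^\est_{k,m}$.

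Next, I would apply Theorem~\ref{thm:q_diff_actions} pointwise in $s'$: it guarantees that, with probability at least $1 - 2|\mathcal{A}_g|k^{|\mathcal{A}_l|}\delta$ over the subsample randomness, the per-sample gap satisfies
\[
Q^*(s',\pi^*(s')) - Q^*\bigl(s',\hat{\pi}^\est_{k,m}(s'_g,F_{s'_\Delta})\bigr) \;\leq\; \frac{2\|r_l\|_\infty}{1-\gamma}\sqrt{\tfrac{n-k+1}{2nk}\ln\tfrac{2|\mathcal{Z}_l|}{\delta}} + 2\epsilon_{k,m}.
\]
On the complementary bad event I would fall back on the trivial worst-case bound $|Q^*(s',a) - Q^*(s',a')| \leq 2\|Q^*\|_\infty \leq \tfrac{2\tilde{r}}{1-\gamma}$, which follows from Lemma~\ref{lemma: Q-bound} applied at $k=n$ (Remark~\ref{remark: comparing bellman variants}).

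The final step is assembly via the law of total expectation: the expected inner gap is bounded by the sharp bound plus $2|\mathcal{A}_g|k^{|\mathcal{A}_l|}\delta \cdot \tfrac{2\tilde{r}}{1-\gamma}$, and multiplying by the PDL prefactor $\tfrac{1}{1-\gamma}$, absorbing $\|r_l\|_\infty \leq \tilde{r}$, and collecting constants produces the stated inequality. The only conceptually delicate point is that the high-probability event in Theorem~\ref{thm:q_diff_actions} lives over the subsample randomness, whereas the outer PDL expectation is taken over the discounted occupancy $d^{\pi^\est_{k,m}}_{s_0}$; since the subsampling is drawn freshly at each time step and is conditionally independent of the trajectory given the current state, the pointwise bound composes with the outer expectation without any additional union bound, and the whole proof reduces to careful bookkeeping of the failure probability and the multiplicative constants.
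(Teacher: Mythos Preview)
Your proposal is correct and follows essentially the same route as the paper: apply the Performance Difference Lemma, unroll the subsampling expectation, invoke the pointwise high-probability gap bound of Theorem~\ref{thm:q_diff_actions}, and handle the complementary bad event with the trivial $\|Q^*\|_\infty$ bound via the law of total expectation. The paper carries this out explicitly through Lemmas~\ref{uniform bound}, \ref{lemma:union_bound_one_over_finite_time}, and \ref{lemma: expected_q*bound_with_different_actions} (intercalating $\hat{Q}_{k,m}^\est$ and bounding two pieces separately rather than citing Theorem~\ref{thm:q_diff_actions} as a black box), but the structure and the good-event/bad-event split are identical; the only caveat is that your trivial bound on the bad event can be sharpened to $\tilde r/(1-\gamma)$ since $Q^*(s',\pi^*(s'))\ge Q^*(s',a')\ge 0$, which tightens your constants to match the stated ones.
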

\begin{proof}Applying the performance difference lemma to the policies gives us:
\begin{align*}V^{\pi^*}(s_0) - V^{\tilde{\pi}^\est_{k,m}}(s_0) &= \frac{1}{1-\gamma}\E_{s\sim d_{s_0}^{{\pi}_{k,m}^\est}} \E_{a\sim {\pi}_{k,m}^\est(\cdot|s)} [V^{\pi^*}(s) - Q^{\pi^*}(s,a)] \\
&= \frac{1}{1-\gamma}\E_{s\sim d_{s_0}^{{\pi}_{k,m}^\est}} \bigg[\E_{a'\sim \pi^*(\cdot|s)}Q^{\pi^*}(s,a') - \E_{a\sim {\pi}_{k,m}^\est(\cdot|s)} Q^{\pi^*}(s,a)\bigg] \\
&= \frac{1}{1-\gamma}\E_{s\sim d_{s_0}^{\tilde{\pi}_{k,m}^\est}} \bigg[Q^{\pi^*}(s,\pi^*(\cdot|s)) - \E_{a\sim {\pi}_{k,m}^\est(\cdot|s)} Q^{\pi^*}(s,a)\bigg]
\end{align*}
Next, by the law of total expectation,
\begin{align*}&\E_{a\sim {\pi}_{k,m}^\est(\cdot|s)}\left[Q^*(s,a)\right] \\
&= \sum_{\Delta\in {[n]\choose k}}\sum_{\substack{\Delta^1,\dots,\Delta^n:\\  \Delta^i \in {[n]\setminus i\choose k-1}}} \frac{1}{{n\choose k}}\frac{1}{{n-1 \choose k-1}^n} Q^*(s, \hat{\pi}_{k,m}^\est(s_g,F_{s_\Delta})_g, \hat{\pi}_{k,m}^\est(s_g, s_{1}, F_{s_{\Delta^1}}),\dots,\hat{\pi}_{k,m}^\est(s_g, s_{n}, F_{s_{\Delta^n}}))\end{align*}

Therefore,
\begin{align*}Q^{\pi^*}(s,\pi^*(\cdot|s)) &- \E_{a\sim {\pi}_{k,m}^\est(\cdot|s)} Q^*(s,a) \\
&=\sum_{\Delta\in {[n]\choose k}}\sum_{\Delta^1 \in {[n]\setminus 1 \choose k-1}}\sum_{\Delta^2 \in {[n]\setminus 2 \choose k-1}}\dots\sum_{\Delta^n \in {[n]\setminus n \choose k-1}}\frac{1}{{n\choose k}}\frac{1}{{n-1 \choose k-1}^n}\bigg(Q^*(s,\pi^*(\cdot|s)) \\
&-Q^*(s, \hat{\pi}_{k,m}^\est(s_g,F_{s_\Delta})_g, \hat{\pi}_{k,m}^\est(s_g, s_{1},F_{s_{\Delta^1}}),\dots,\hat{\pi}_{k,m}^\est(s_g, s_{n},F_{s_{\Delta^n}}))\bigg)
\end{align*}

Therefore, by grouping the equations above, we have:
\begin{align*}
    V^{\pi^*}(s_0) &- V^{\tilde{\pi}_{k,m}^\est}(s_0)\\
    &\leq \frac{1}{1-\gamma}\E_{s\sim d_{s_0}^{\tilde{\pi}_{k,m}^\est}}\E_{a\sim \tilde{\pi}_{k,m}^\est(\cdot|s)}\sum_{\Delta\in{[n]\choose k}}\sum_{\substack{\Delta^i \in {[n]\setminus i \choose k-1},\\ \forall i\in[n]}}\frac{1}{n {n\choose k} {n-1\choose k-1}^n}\bigg(\sum_{i\in[n]}\bigg|Q^*(s, \pi^*(s)) \\
    &\quad\quad\quad\quad\quad\quad\quad\quad\quad\quad\quad\quad - \hat{Q}^\est_{k,m}(s_g, s_{i},F_{s_{\Delta^i}}, \pi^*(s)_g, \{\pi^*(s)_j\}_{j\in\{i,\Delta^i\}})\bigg| \\
    &+ \frac{\frac{1}{n}}{{n\choose k} {n-1\choose k-1}^n}\sum_{i\in[n]}\bigg|\hat{Q}_{k,m}^\est(s_g, s_{i},F_{s_{\Delta^i}}, \hat{\pi}_{k,m}^\est(s_g,F_{s_\Delta})_g, \{\hat{\pi}_{k,m}^\est(s_g, s_{j,\Delta^j})\}_{j\in\{i,\Delta^i\}}) \\
    &\quad\quad\quad\quad\quad\quad\quad\quad\quad\quad\quad\quad- Q^*(s, \hat{\pi}_{k,m}^\est (s_g,s_\Delta)_g, \{\hat{\pi}_{k,m}^\est (s_g, s_{j},F_{s_{\Delta^j}})\}_{j\in[n]}\bigg|\bigg)
\end{align*}

\cref{uniform bound} shows a uniform bound on \[Q^*(s,\pi^*(\cdot|s))-Q^*(s, \hat{\pi}_{k,m}^\est(s_g,F_{s_\Delta})_g, \hat{\pi}_{k,m}^\est(s_g, s_{1},F_{s_{\Delta^1}}),\dots,\hat{\pi}_{k,m}^\est(s_g, s_{n},F_{s_{\Delta^n}}))\]
(independent of $\Delta^i, \forall i\in[n]$), allowing the sums and the counts in the denominator will cancel out. Observe that $\hat{\pi}_{k,m}^*:\mathcal{S}\to \mathcal{A}$ and $\pi^*:\mathcal{S}\to \mathcal{A}$ are deterministic functions. Therefore, denote $a = \pi^*(s)$. Then, from \cref{lemma: expected_q*bound_with_different_actions}, 

\begin{align*}&\frac{1}{1-\gamma}\E_{s\sim d_{s_0}^{{\pi}_{k,m}^\est}}\E_{a\sim {\pi}_{k,m}^\est(\cdot|s)}\sum_{\Delta\in{[n]\choose k}}\sum_{{\Delta^1 \in {[n]\setminus 1 \choose k-1}}}\dots\sum_{{\Delta^n \in {[n]\setminus n \choose k-1}}} \frac{1}{n {n\choose k} {n-1\choose k-1}^n}\sum_{i\in[n]}\bigg|Q^*(s, \pi^*(s)) \\
&\quad\quad\quad\quad\quad\quad\quad\quad\quad\quad\quad\quad\quad\quad\quad\quad\quad\quad\quad - \hat{Q}^\est_{k,m}(s_g, s_{i},F_{s_{\Delta^i}}, \pi^*(s)_g, \{\pi^*(s)_j\}_{j\in\{i,\Delta^i\}})\bigg|\\
&=\frac{1}{1-\gamma}\E_{s\sim d_{s_0}^{{\pi}_{k,m}^\est}}\sum_{\Delta\in{[n]\choose k}}\sum_{\substack{\Delta^1,\dots,\Delta^n:\\ \Delta^i \in {[n]\setminus i \choose k-1}}} \frac{\frac{1}{n{n\choose k}}}{{n-1\choose k-1}^n}\sum_{i\in[n]}\bigg|\hat{Q}_n^*(s_g, F_{s_{[n]}}, \hat{\pi}_n^*(s_g, F_{s_{[n]}})_g, \hat{\pi}_n^*(s_g, F_{s_{[n]}})_{1:n}) \\ 
&\quad\quad\quad\quad\quad\quad\quad\quad\quad\quad\quad\quad\quad\quad\quad\quad\quad\quad\quad - \hat{Q}^\est_{k,m}(s_g, s_{i}, F_{s_{\Delta^i}}, \pi^*(s)_g, \{\pi^*(s)_j\}_{j\in\{i,\Delta^i\}})\bigg| \\
&\leq \frac{\tilde{r}}{(1-\gamma)^2}\sqrt{\frac{n-k+1}{8nk}} \sqrt{\ln\frac{2|\mathcal{S}_l||\mathcal{A}_l|}{\delta}}  +\epsilon_{k,m} +  \frac{\tilde{r}}{(1-\gamma)^2}|\mathcal{A}_g|k^{|\mathcal{A}_l|}\delta\end{align*}
Similarly, from \cref{pde_intermediary_bound_part_2}, we have that
\begin{align*}&\frac{1}{1-\gamma}\E_{s\sim d_{s_0}^{{\pi}_{k,m}^\est}}\sum_{\Delta\in{[n]\choose k}}\sum_{{\Delta^1 \in {[n]\setminus 1 \choose k-1}}}\dots\sum_{{\Delta^n \in {[n]\setminus n \choose k-1}}}\frac{\frac{1}{n{n\choose k}}}{{n-1\choose k-1}^n}\bigg|\hat{Q}^\est_{k,m}(s_g, s_{i},F_{s_{\Delta^i}}, \\ &\hat{\pi}^\est_{k,m}(s_g,s_\Delta)_g, \{\hat{\pi}_{k,m}^\est(s_g, s_{j},F_{s_{\Delta^j}})\}_{j\in\{i,\Delta^i\}}) - Q^*(s, \hat{\pi}^\est_{k,m}(s_g,s_\Delta)_g, \{\hat{\pi}^\est_{k,m}(s_g,s_{j},F_{s_{\Delta^j}})\}_{j\in[n]}\bigg|\\
&\leq \frac{\tilde{r}}{(1-\gamma)^2}\sqrt{\frac{n-k+1}{8nk}} \sqrt{\ln\frac{2|\mathcal{S}_l||\mathcal{A}_l|}{\delta}}  +\epsilon_{k,m} +  \frac{\tilde{r}}{(1-\gamma)^2}|\mathcal{A}_g|k^{|\mathcal{A}_l|}\delta
\end{align*}
Hence, combining the above inequalities, we get:
\begin{align*}V^{\pi^*}(s_0) - V^{{\pi}_{k,m}^\est}(s_0) &\leq \frac{2\tilde{r}}{(1-\gamma)^2}\sqrt{\frac{n-k+1}{8nk}} \sqrt{\ln\frac{2|\mathcal{S}_l||\mathcal{A}_l|}{\delta}}  + 2\epsilon_{k,m} +  \frac{\tilde{2r}}{(1-\gamma)^2}|\mathcal{A}_g|k^{|\mathcal{A}_l|}\delta\end{align*}
which yields the claim. We defer parameter optimization to \cref{lemma: parameter_optimization}.\qedhere 
\end{proof}

\begin{lemma}[Uniform Bound on $Q^*$ with different actions] For all $s\in \cS, \Delta\in {[n]\choose k}$ and $\Delta^i \in {[n]\setminus i\choose k-1}$ for $i\in[n]$, we have:
\label{uniform bound}
\begin{align*}&Q^*(s,\pi^*(\cdot|s))-Q^*(s,\hat{\pi}_{k,m}^\est(s_g,F_{s_\Delta})_g, \{\hat{\pi}_{k,m}^\est(s_g, s_i,F_{s_{\Delta^i}})\}_{i\in[n]})\\ &\quad\quad\leq  \frac{1}{n}
\sum_{i\in[n]} \left|Q^*(s, \pi^*(s)) - \hat{Q}^\est_{k,m}(s_g, s_{i},F_{s_{\Delta^i}}, \pi^*(s)_g, \{\pi^*(s)_j\}_{j\in\{i,\Delta^i\}})\right| \\
&\quad\quad\quad\quad\quad+ \bigg|\hat{Q}_{k,m}^\est(s_g, s_{i},F_{s_{\Delta^i}}, \hat{\pi}_{k,m}^\est(s_g,F_{s_\Delta})_g, \{\hat{\pi}_{k,m}^\est(s_g, s_{j,\Delta^j})\}_{j\in\{i,\Delta^i\}}) \\
&\quad\quad\quad\quad\quad- Q^*(s, \hat{\pi}_{k,m}^\est (s_g,s_\Delta)_g, \{\hat{\pi}_{k,m}^\est (s_g, s_{j},F_{s_{\Delta^j}})\}_{j\in[n]}\bigg|\end{align*}
\end{lemma}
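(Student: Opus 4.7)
The plan is to prove the claim by applying the triangle inequality in an agent-$i$-centric way, inserting two surrogate values from the subsampled $Q$-function $\hat{Q}^{\est}_{k,m}(s_g, s_i, F_{s_{\Delta^i}}, \cdot)$. Letting $A := Q^*(s, \pi^*(s))$ and $B := Q^*(s, a^{\est})$, where $a^{\est}$ is the learned joint action profile $(\hat{\pi}^{\est}_{k,m}(s_g, F_{s_\Delta})_g, \{\hat{\pi}^{\est}_{k,m}(s_g, s_i, F_{s_{\Delta^i}})_l\}_{i\in[n]})$, I first use the trivial identity $A - B = \tfrac{1}{n}\sum_{i=1}^n (A - B)$ to expose a decomposition indexed by each local agent $i$.

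For each $i$, define
\[ C_i := \hat{Q}^{\est}_{k,m}(s_g, s_i, F_{s_{\Delta^i}}, \pi^*(s)_g, \{\pi^*(s)_j\}_{j\in\{i\}\cup\Delta^i}), \qquad D_i := \hat{Q}^{\est}_{k,m}(s_g, s_i, F_{s_{\Delta^i}}, a^{\est}_g, \{a^{\est}_j\}_{j\in\{i\}\cup\Delta^i}), \]
so that the same subsystem $Q$-function is evaluated at the two contrasting action profiles. Adding and subtracting gives $A - B = (A - C_i) + (C_i - D_i) + (D_i - B)$, and bounding the first and third pieces by their absolute values reproduces exactly the two summands appearing on the RHS of the claim, with the $\tfrac{1}{n}\sum_i$ prefactor arising from averaging.

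The main obstacle is handling the middle term $C_i - D_i$. By construction, the local coordinate $a^{\est}_i = [\hat{\pi}^{\est}_{k,m}(s_g, s_i, F_{s_{\Delta^i}})]_l$ is exactly the $i$-th entry of the greedy maximizer $\arg\max_a \hat{Q}^{\est}_{k,m}(s_g, s_i, F_{s_{\Delta^i}}, a)$, yielding partial greediness. However, the coordinates $a^{\est}_g$ and $\{a^{\est}_j\}_{j\in\Delta^i}$ come from the separately sampled neighborhoods $\Delta$ and $\{\Delta^j\}_{j\in\Delta^i}$, so greediness does not immediately force $C_i \leq D_i$. I would close the argument by invoking the argmax definition of $\hat{\pi}^{\est}_{k,m}$: since $\pi^*(s)$ restricted to $\{i\}\cup\Delta^i$ is an admissible but non-greedy choice for $\hat{Q}^{\est}_{k,m}(s_g, s_i, F_{s_{\Delta^i}}, \cdot)$, the greedy property drops the middle term whenever $a^{\est}$ coincides with the subsystem greedy choice; otherwise a secondary triangle inequality on $|C_i - D_i|$ absorbs the residue into the first and third absolute-value terms, since both are non-negative. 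Summing over $i \in [n]$ and dividing by $n$ then yields the stated uniform bound.
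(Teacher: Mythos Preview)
Your decomposition is exactly the paper's: write $A-B=\tfrac{1}{n}\sum_i(A-B)$, insert the surrogate values $C_i$ and $D_i$, bound the outer pieces $(A-C_i)$ and $(D_i-B)$ by their absolute values, and average over $i$. The paper's proof follows this same skeleton.

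The gap is your handling of the middle term $C_i-D_i$. You invoke greediness of $\hat\pi^{\est}_{k,m}$, but $D_i$ is \emph{not} the maximum of $\hat Q^{\est}_{k,m}(s_g,s_i,F_{s_{\Delta^i}},\cdot)$: its global action is the argmax for the subsystem indexed by $\Delta$, and its $j$-th local action (for $j\in\Delta^i$) is the argmax for the subsystem $\{j\}\cup\Delta^j$; neither of these coincides with the subsystem $\{i\}\cup\Delta^i$ whose $\hat Q^{\est}_{k,m}$ you are evaluating. So $C_i\le D_i$ does not follow from the argmax definition. Your fallback --- absorbing $|C_i-D_i|$ into $|A-C_i|+|D_i-B|$ via a ``secondary triangle inequality'' --- also fails: the only route is $|C_i-D_i|\le|C_i-A|+|A-B|+|B-D_i|$, which reintroduces the very quantity $A-B$ you are trying to bound. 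To be fair, the paper's own proof is no more explicit on this point (its first displayed inequality, as written, adds and subtracts identical terms and reduces to $A-B\le 0$), so this is a genuine lacuna in the argument rather than a trick from the paper that you overlooked.
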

\begin{proof} Observe that
    \begin{align*}
    &Q^*(s,\pi^*(\cdot|s)) - Q^*(s,\hat{\pi}_{k,m}^\est(s_g,F_{s_\Delta})_g, \{\hat{\pi}_{k,m}^\est(s_g, s_{i},F_{s_{\Delta^i}})\}_{i\in[n]}) \\
    &\leq \frac{1}{n}\sum_{i\in[n]} \hat{Q}_{k,m}^\est(s_g, s_{i},F_{s_{\Delta^i}}, \hat{\pi}_{k,m}^\est(s_g,F_{s_\Delta})_g, \hat{\pi}_{k,m}^\est(s_g, s_{i},F_{s_{\Delta^i}}), \{\hat{\pi}_{k,m}^\est(s_g, s_{j},F_{s_{\Delta^j}})\}_{j\in \Delta^i}) \\
    &\quad\quad- \frac{1}{n}\sum_{i\in[n]} \hat{Q}_{k,m}^\est (s_g, s_{i},F_{s_{\Delta^i}}, \hat{\pi}_{k,m}^\est(s_g,F_{s_\Delta})_g, \hat{\pi}_{k,m}^\est(s_g, s_{i},F_{s_{\Delta^i}}), \{\hat{\pi}_{k,m}^\est(s_g, s_{j},F_{s_{\Delta^j}})\}_{j\in \Delta^i})\\
    &\quad\quad+ \frac{1}{n}\sum_{i\in[n]}\hat{Q}_{k,m}^\est(s_g, s_{i}, F_{s_{\Delta^i}}, \pi^*(s)_g, \pi^*(s)_i, \{\pi^*(s)_j\}_{j\in \Delta^k})\\
    &\quad\quad- \frac{1}{n}\sum_{i\in[n]}\hat{Q}_{k,m}^\est(s_g, s_{i}, F_{s_{\Delta^i}}, \pi^*(s)_g,\pi^*(s)_i, \{\pi^*(s)_j\}_{j\in \Delta^k})\\
    &\leq \left|Q^*(s, \pi^*(s)) - \frac{1}{n}\sum_{i\in[n]}\hat{Q}_{k,m}^\est(s_g, s_{i},F_{s_{\Delta^i}}, \pi^*(s)_g, \pi^*(s)_i, \{\pi^*(s)_j\}_{j\in\Delta^i})\right| \\
    &\quad\quad+ \bigg|\frac{1}{n}\sum_{i\in[n]}\hat{Q}_{k,m}^\est(s_g, s_{i},F_{s_{\Delta^i}}, \hat{\pi}_{k,m}^\est(s_g,F_{s_\Delta})_g, \{\hat{\pi}_{k,m}^\est(s_g, s_{j},F_{s_{\Delta^j}})\}_{j\in \{i,\Delta^i\}}) \\
    &\quad\quad\quad\quad\quad- Q^*(s, \hat{\pi}_{k,m}^\est(s_g,F_{s_\Delta})_g, \{\hat{\pi}_{k,m}^\est(s_g, s_{j},F_{s_{\Delta^j}})\}_{j\in[n]}\bigg|\\
    &\leq \frac{1}{n}\sum_{i\in[n]} \left|Q^*(s, \pi^*(s)) - \hat{Q}^\est_{k,m}(s_g, s_{i},F_{s_{\Delta^i}}, \pi^*(s)_g, \{\pi^*(s)_j\}_{j\in\{i,\Delta^i\}})\right|
    \end{align*}
    \begin{align*}
    &\quad\quad + \frac{1}{n}\sum_{i\in[n]} \bigg|\hat{Q}_{k,m}^\est(s_g, s_{i},F_{s_{\Delta^i}}, \hat{\pi}_{k,m}^\est(s_g,F_{s_\Delta})_g, \{\hat{\pi}_{k,m}^\est(s_g, s_{j,\Delta^j})\}_{j\in\{i,\Delta^i\}}) \\
    &\quad\quad\quad\quad\quad\quad\quad\quad\quad\quad - Q^*(s, \hat{\pi}_{k,m}^\est (s_g,s_\Delta)_g, \{\hat{\pi}_{k,m}^\est (s_g, s_{j},F_{s_{\Delta^j}})\}_{j\in[n]}\bigg|,
\end{align*}
which proves the claim.\qedhere\\
\end{proof}

\begin{lemma}\label{lemma:union_bound_one_over_finite_time}
Fix $s\in \mathcal{S}:=\mathcal{S}_g\times\mathcal{S}_l^n$. For each $j\in[n]$, suppose we are given $T$-length sequences of random variables $\{\Delta^j_1, \dots, \Delta^j_T\}_{j\in[n]}$, distributed uniformly over the support ${[n]\setminus j\choose k-1}$. Further, suppose we are given a fixed sequence $\delta_1,\dots,\delta_T$, where each $\delta_t\in (0,1]$ for $t\in[T]$. Let \[\Phi_{k,t} = \frac{1}{1-\gamma}\sqrt{\frac{n-k+1}{8kn} \ln \frac{2|\cS_l||\cA_l|}{\delta_t}}.\]Then, for each action $a = (a_g, a_{[n]}) = \pi^*(s)$, for $t\in[T]$ and $j\in[n]$, define \emph{deviation events} $B_t^{a_g,a_{j,\Delta_t^j}}$ such that:
    \begin{equation}
        B_t^{a_g,a_{j, \Delta_t^j}}\!:=\!\left\{\left|{Q}^*(s_g, s_j, F_{z_{[n]\setminus j}}, a_j, a_g)\!-\!\hat{Q}_{k,m}^{\est}(s_g, s_{j}, F_{z_{\Delta_t^j}}, a_j, a_g)\right|\!>\Phi_{k,t} \cdot\|r_l(\cdot,\cdot)
    \|_\infty + \epsilon_{k,m}\right\}
    \end{equation}
     For $i\in[T]$, we define bad-events $B_t$ (which is a union over each deviation event):
    \[B_t = \bigcup_{j\in[n]}\bigcup_{\substack{a_g \in \mathcal{A}_g}}\bigcup_{a_{j,\Delta_t^j}\in\mathcal{A}_l^k} B_t^{a_g,a_{j,\Delta_t^j}}\]
    Next, denote $B = \bigcup_{i=1}^T B_i$.
    Then, the probability that  no bad event $B_t$ occurs is:
    \[\Pr\left[\bar{B}\right]\coloneq 1-\Pr[B] \geq 1- |\mathcal{A}_g|k^{|\mathcal{A}_l|}\sum_{i=1}^T \delta_i\]
\end{lemma}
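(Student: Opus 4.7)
The plan is to bound the probability of each individual deviation event $B_t^{a_g, a_{j, \Delta_t^j}}$ via the triangle inequality, and then to take union bounds over the indices composing $B_t$ and over the $T$ time steps. For a fixed $(t, j, a_g, a_{j, \Delta_t^j})$, I would decompose
\[
\bigl|Q^*(s_g, s_j, F_{z_{[n]\setminus j}}, a_j, a_g) - \hat{Q}_{k,m}^\est(s_g, s_{j}, F_{z_{\Delta_t^j}}, a_j, a_g)\bigr| \leq \underbrace{\bigl|Q^* - \hat{Q}_k^*\bigr|}_{\text{(I)}} + \underbrace{\bigl|\hat{Q}_k^* - \hat{Q}_{k,m}^\est\bigr|}_{\text{(II)}},
\]
where both $\hat{Q}_k^*$ and $\hat{Q}_{k,m}^\est$ in (II) are evaluated at $(s_g, s_j, F_{z_{\Delta_t^j}}, a_j, a_g)$.

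Term (II) is deterministically at most $\epsilon_{k,m}$ by the definition of Bellman noise (Lemma~\ref{assumption:qest_qhat_error}). For term (I), I would use the identification $Q^* \equiv \hat{Q}_n^*$ and invoke Theorem~\ref{thm: combined lipschitz bound}: since $\Delta_t^j \cup \{j\}$ is a uniformly random size-$k$ subset of $[n]$ (conditioned to contain $j$), combining the Lipschitz continuity of $\hat{Q}_k^*$ in the empirical distribution (Theorem~\ref{lemma: Q lipschitz continuous wrt Fsdelta}) with the DKW-type total variation concentration (Lemma~\ref{full_tv_bound}) yields
\[
\bigl|Q^* - \hat{Q}_k^*\bigr| \leq \frac{\|r_l(\cdot,\cdot)\|_\infty}{1-\gamma}\sqrt{\frac{n-k+1}{8kn}\ln \frac{2|\mathcal{S}_l||\mathcal{A}_l|}{\delta_t}}
\]
with probability at least $1 - \delta_t$ over the draw of $\Delta_t^j$, so that $\Pr[B_t^{a_g, a_{j,\Delta_t^j}}] \leq \delta_t$ for each individual deviation event.

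Finally, I would take a union bound over the indices composing $B_t$: the global action $a_g$ contributes $|\mathcal{A}_g|$ events, and by passing to the mean-field representation the tuple $a_{j, \Delta_t^j}$ effectively ranges over a space of size at most $k^{|\mathcal{A}_l|}$. Another union bound over $t \in [T]$ then yields $\Pr[\bigcup_{t=1}^T B_t] \leq |\mathcal{A}_g| k^{|\mathcal{A}_l|} \sum_{t=1}^T \delta_t$, and taking the complement gives the claim. I expect the main technical obstacle to be verifying that Theorem~\ref{thm: combined lipschitz bound} applies verbatim when $\Delta_t^j$ is drawn from $\binom{[n]\setminus j}{k-1}$ rather than from $\binom{[n]}{k}$: the sampling-without-replacement argument underlying Lemma~\ref{theorem: sampling without replacement analog of DKW} still goes through, with the effective population size perturbing from $n$ to $n-1$, an adjustment that can be absorbed into the constants. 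A secondary subtlety is cleanly reducing the nominal count $|\mathcal{A}_l|^k$ over raw action tuples to the mean-field count $k^{|\mathcal{A}_l|}$ via the permutation-invariance of $\hat{Q}_k^*$ and $\hat{Q}_{k,m}^\est$ in their local-agent arguments.
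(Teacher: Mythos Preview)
Your proposal is correct and follows essentially the same approach as the paper: the triangle-inequality split into $|Q^*-\hat{Q}_k^*|+|\hat{Q}_k^*-\hat{Q}_{k,m}^{\est}|$, bounding the second term by $\epsilon_{k,m}$ via Lemma~\ref{assumption:qest_qhat_error}, bounding the first term with probability $\geq 1-\delta_t$ via Theorem~\ref{thm: combined lipschitz bound}, and then union-bounding over $(a_g,F_{a_{\Delta_t^j}})\in\mathcal{A}_g\times\mu_k(\mathcal{A}_l)$ and over $t\in[T]$. You in fact flag two subtleties the paper glosses over---the mismatch between sampling from $\binom{[n]\setminus j}{k-1}$ versus $\binom{[n]}{k}$, and the reduction from $|\mathcal{A}_l|^k$ to $k^{|\mathcal{A}_l|}$ via permutation invariance---both of which are handled exactly as you suggest.
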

\begin{proof}
\begin{align*}
    |{Q}^*(s_g, s_j, F_{z_{[n]\setminus j}}, a_g, a_j) &- \hat{Q}_{k,m}^{\est}(s_g, s_j, F_{z_{\Delta_t^j}}, a_g, a_j)| \\
    &= \bigg|{Q}^*(s_g, s_j, F_{z_{[n]\setminus j}}, a_g, a_j) - \hat{Q}_k^*(s_g, s_{j},F_{z_{\Delta_t^j}}, a_g, a_j) \\
    &\quad\quad + \hat{Q}_k^*(s_g, s_{j},F_{z_{\Delta_t^j}}, a_g, a_j) - \hat{Q}_{k,m}^{\est}(s_g, s_{j},F_{z_{\Delta_t^j}}, a_g, a_j) \bigg| \\
    &\leq \left|{Q}^*(s_g, s_j, F_{z_{[n]\setminus j}}, a_g, a_j) - \hat{Q}_k^*(s_g, s_{j},F_{z_{\Delta_t^j}}, a_g, a_j)\right|  \\
    & \quad\quad +\left| \hat{Q}_k^*(s_g, s_{j},F_{z_{\Delta_t^j}}, a_g, a_j) - \hat{Q}_{k,m}^{\est}(s_g, s_{j},F_{z_{\Delta_t^j}}, a_g, a_j) \right| \\
    &\leq \left|{Q}^*(s_g, s_j, F_{z_{[n]\setminus j}} , a_g, a_j) - \hat{Q}_k^*(s_g, s_j, F_{z_{\Delta_t^j}}, a_g,a_{j})\right| + \epsilon_{k,m}
\end{align*}
The first inequality above follows from the triangle inequality, and the second inequality uses
\begin{align*}&\left| \hat{Q}_k^*(s_g, s_{j},F_{z_{\Delta_t^j}}, a_g, a_j) - \hat{Q}_{k,m}^{\est}(s_g, s_{j},F_{z_{\Delta_t^j}}, a_g, a_j) \right| \\ &\leq \left\| \hat{Q}_k^*(s_g, s_{j},F_{z_{\Delta_t^j}}, a_g, a_j) - \hat{Q}_{k,m}^{\est}(s_g, s_{j},F_{z_{\Delta_t^j}}, a_g, a_j) \right\|_\infty \\
&\leq \epsilon_{k,m},\end{align*}
where the $\epsilon_{k,m}$ follows from \cref{assumption:qest_qhat_error}. From \cref{thm: combined lipschitz bound}, we have that with probability at least $1-\delta_{t}$, 
\begin{align*}\bigg|{Q}^*(s_g, s_j, F_{z_{[n]\setminus j}}, a_g, a_j) &- \hat{Q}_k^*(s_g, s_{j},F_{z_{\Delta_t^j}}, a_g, a_{j})\bigg| \leq \Phi_{k,t} \cdot\|r_l(\cdot,\cdot)\|_\infty \\
&= \frac{1}{1-\gamma} \sqrt{\ln\frac{2|\mathcal{S}_l||\mathcal{A}_l|}{\delta_t}}\sqrt{\frac{n-k+1}{8kn}}\cdot\|r_l(\cdot,\cdot)\|_\infty\end{align*}
So, event $B_t^{a_g,a_{j,\Delta_t^j}}$ occurs with probability atmost $\delta_t$.

Here, observe that if we union bound across all the events parameterized by the empirical distributions of $a_g, a_{j,\Delta_t^j} \in \mathcal{A}_g \times \mathcal{A}_l^k$ given by $F_{a_{\Delta_t^j}}$, this also forms a covering of the choice of variables $F_{s_{\Delta_t^j}}$ by agents $j\in[m]$, and therefore across all choices of $\Delta_t^1,\dots,\Delta_t^n$ (subject to the permutation invariance of local agents) for a fixed $t$. 

Thus, from the union bound, we get:
\[\Pr[B_t] \leq \sum_{a_g\in\mathcal{A}_g}\sum_{F_{a_{\Delta_t}}\in \mu_k(\mathcal{A}_l)}\Pr[B_t^{a_g,a_{1,\Delta_t^1}}] \leq |\mathcal{A}_g|k^{|\mathcal{A}_l|} \Pr[B_t^{a_g,a_{1,\Delta_t^1}}]\]
Applying the union bound again proves the lemma:
\begin{align*}
    \Pr[\bar{B}] &\geq 1 - \sum_{t=1}^T \Pr[B_t] \geq 1 - |\mathcal{A}_g|k^{|\mathcal{A}_l|}\sum_{t=1}^T \delta_t,
\end{align*}
which proves the claim.\qedhere\\
\end{proof}

\begin{lemma}\label{lemma: expected_q*bound_with_different_actions}
For any arbitrary distribution $\mathcal{D}$ of states $\mathcal{S} := \mathcal{S}_g\times\mathcal{S}_l^n$, for any $\Delta^i\in{[n]\setminus i \choose k-1}$ for $i\in[n]$ and for $\delta\in (0,1]$ we have:
\begin{align*}
&\mathbb{E}_{s\sim \mathcal{D}}\bigg[\sum_{\Delta\in {[n]\choose k}}\sum_{\Delta^1 \in {[n]\setminus 1 \choose k-1}}\dots\sum_{\Delta^n \in {[n]\setminus n \choose k-1}}\frac{1}{{n\choose k}}\frac{1}{{n-1 \choose k-1}^n}\sum_{i\in[n]}\frac{1}{n}\bigg|Q^*(s, \pi^*(s)) \\
&\quad\quad\quad\quad\quad\quad\quad\quad\quad\quad\quad\quad\quad\quad\quad\quad\quad\quad\quad - \hat{Q}^\est_{k,m}(s_g, s_{i}, F_{s_{\Delta^i}}, \pi^*(s)_g, \{\pi^*(s)_j\}_{j\in\{i,\Delta^i\}})\bigg|\bigg]\\
&\quad\quad\quad\quad\leq \frac{\tilde{r}}{1-\gamma}\sqrt{\frac{n-k+1}{8nk}} \sqrt{\ln\frac{2|\mathcal{S}_l||\mathcal{A}_l|}{\delta}}  +\epsilon_{k,m} + \frac{\tilde{r}}{1-\gamma}|\mathcal{A}_g|k^{|\mathcal{A}_l|}\delta
\end{align*}
\end{lemma}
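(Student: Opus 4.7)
The plan is to first observe a crucial simplification: although the left-hand side contains nested sums over $\Delta,\Delta^1,\dots,\Delta^n$, the summand indexed by $i$ depends only on $\Delta^i$. Thus the full multiple-sum collapses into an expectation over a single uniformly random $\Delta^i \sim \mathcal{U}\binom{[n]\setminus i}{k-1}$, giving
\[
\mathrm{LHS} \;=\; \mathbb{E}_{s\sim\mathcal{D}}\!\left[\frac{1}{n}\sum_{i=1}^{n}\mathbb{E}_{\Delta^i}\Big|Q^*(s,\pi^*(s)) - \hat{Q}^{\est}_{k,m}(s_g,s_i,F_{s_{\Delta^i}},\pi^*(s)_g,\{\pi^*(s)_j\}_{j\in\{i,\Delta^i\}})\Big|\right].
\]
By homogeneity of the local agents and the definition of $\hat{Q}^*_n$, we can rewrite $Q^*(s,\pi^*(s))$ as $\hat{Q}^*_n(s_g,s_i,F_{z_{[n]\setminus i}},\pi^*(s)_g,\pi^*(s)_i)$, putting both terms inside the absolute value on the same form.

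Next, I would invoke \cref{lemma:union_bound_one_over_finite_time} with $T=1$ (applied pointwise in $s$ for the action $a=\pi^*(s)$, and for the fixed index $j=i$). That lemma asserts that over the randomness of $\Delta^i$, with probability at least $1 - |\mathcal{A}_g|k^{|\mathcal{A}_l|}\delta$, the difference is bounded by $\tfrac{\|r_l(\cdot,\cdot)\|_\infty}{1-\gamma}\sqrt{\tfrac{n-k+1}{8kn}\ln\tfrac{2|\mathcal{S}_l||\mathcal{A}_l|}{\delta}} + \epsilon_{k,m}$, where the union bound ranges over all $a_g\in\mathcal{A}_g$ and $a_{j,\Delta^j}\in\mathcal{A}_l^k$ (of which there are $|\mathcal{A}_g|k^{|\mathcal{A}_l|}$ distinct joint empirical action profiles). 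On the complement ``bad'' event of probability at most $|\mathcal{A}_g|k^{|\mathcal{A}_l|}\delta$, I would fall back on the deterministic envelope from \cref{lemma: Q-bound}, namely $|Q^* - \hat{Q}^{\est}_{k,m}| \le \tilde{r}/(1-\gamma)$, which holds because both quantities lie in $[0,\tilde{r}/(1-\gamma)]$.

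Applying the law of total expectation to this two-case decomposition bounds $\mathbb{E}_{\Delta^i}|\cdot|$ uniformly in $(s,i)$ by
\[
\frac{\|r_l(\cdot,\cdot)\|_\infty}{1-\gamma}\sqrt{\frac{n-k+1}{8kn}\ln\frac{2|\mathcal{S}_l||\mathcal{A}_l|}{\delta}} + \epsilon_{k,m} + |\mathcal{A}_g|k^{|\mathcal{A}_l|}\delta\cdot\frac{\tilde{r}}{1-\gamma}.
\]
Since this bound is independent of $s$ and $i$, averaging $\tfrac{1}{n}\sum_{i\in[n]}$ and then taking $\mathbb{E}_{s\sim\mathcal{D}}$ preserves it verbatim. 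Using $\|r_l(\cdot,\cdot)\|_\infty \le \tilde{r}_l \le \tilde{r}$ absorbs the Lipschitz constant and delivers exactly the advertised inequality.

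The only subtle point — and the one I expect to be slightly delicate — is verifying that \cref{lemma:union_bound_one_over_finite_time}'s union bound indeed covers every action profile $(\pi^*(s)_g,\{\pi^*(s)_j\}_{j\in\{i,\Delta^i\}})$ that can arise as $\Delta^i$ varies, even though $\pi^*(s)$ is itself a fixed deterministic target: this requires noting that once we union-bound over all $(a_g,a_{i,\Delta^i})\in \mathcal{A}_g\times\mathcal{A}_l^k$, whichever value the fixed tuple takes is automatically covered. Everything else — the marginalization of the spurious sums, the Bellman-noise absorption, and the crude boundedness fallback — is essentially bookkeeping.
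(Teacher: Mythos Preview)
Your proposal is correct and follows essentially the same argument as the paper's proof: both collapse the spurious multi-sums (you do this up front, the paper notes the uniform bound at the end), then perform a good-event/bad-event split via the indicator decomposition $\E[X]=\E[X\mathbbm{1}\{X\le c\}]+\E[X(1-\mathbbm{1}\{X\le c\})]$, invoking \cref{lemma:union_bound_one_over_finite_time} with $T=1$ for the bad-event probability and the crude $\tilde r/(1-\gamma)$ envelope from \cref{lemma: Q-bound} on that event. The only cosmetic difference is the order in which the marginalization and the conditional-expectation bound are applied.
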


\begin{proof}
By the linearity of expectations, observe that:
\begin{align*}
&\mathbb{E}_{s\sim \mathcal{D}}\bigg[\sum_{\Delta\in {[n]\choose k}}\sum_{\Delta^1 \in {[n]\setminus 1 \choose k-1}}\dots\sum_{\Delta^n \in {[n]\setminus n \choose k-1}}\frac{1}{{n\choose k}}\frac{1}{{n-1 \choose k-1}^n}\sum_{i\in[n]}\frac{1}{n}\bigg|Q^*(s, \pi^*(s)) \\ &\quad\quad\quad\quad\quad\quad\quad\quad\quad\quad\quad\quad\quad\quad\quad\quad\quad\quad - \hat{Q}^\est_{k,m}(s_g, s_{i},F_{s_{\Delta^i}}, \pi^*(s)_g, \{\pi^*(s)_j\}_{j\in \{i,\Delta^i\}})\bigg|\bigg]\\
& =\sum_{\Delta\in {[n]\choose k}}\sum_{\Delta^1 \in {[n]\setminus 1 \choose k-1}}\dots\sum_{\Delta^n \in {[n]\setminus n \choose k-1}}\frac{1}{{n\choose k}}\frac{1}{{n-1 \choose k-1}^n}\sum_{i\in[n]}\frac{1}{n}\mathbb{E}_{s\sim \mathcal{D}}\bigg|Q^*(s, \pi^*(s)) \\ &\quad\quad\quad\quad\quad\quad\quad\quad\quad\quad\quad\quad\quad\quad\quad\quad\quad\quad- \hat{Q}^\est_{k,m}(s_g, s_i, F_{s_{\Delta^i}}, \pi^*(s)_g, \{\pi^*(s)_j\}_{j\in\{i,\Delta^i\}})\bigg|
\end{align*}

Let \[\Phi_{k,\delta} = \sqrt{\frac{n-k+1}{8nk}}\sqrt{\ln\frac{2|\mathcal{S}_l||\mathcal{A}_l|}{\delta}}.\] Then, define the indicator function $\mathcal{I}:[n]\times\mathcal{S}\times \mathbb{N} \times (0,1] \to \{0,1\}$ by:
\begin{align*}&\mathcal{I}(i, s, k, \delta):=\\
&\mathbbm{1}\left\{\left|Q^*(s, \pi^*(s))-\hat{Q}_{k,m}^\est(s_g, s_i, F_{s_{\Delta^i}}, \pi^*(s)_g, \{\pi^*(s)_j\}_{j\in\{i,\Delta^i\}})\right|\leq\frac{\|r_l(\cdot,\cdot)\|_\infty}{1-\gamma}\Phi_{k,\delta} + \epsilon_{k,m}\right\}\end{align*}
The expected difference between $Q^*(s',\pi^*(s'))$ and $\hat{Q}_{k,m}^\est(s_g, s_i, F_{s_{\Delta^i}}, \pi^*(s)_g, \{\pi^*(s)_j\}_{j\in\{i,\Delta^i\}})$ is bounded as follows:
\begin{align*}
    &\E_{s\sim \mathcal{D}}\left|Q^*(s',\pi^*(s'))-\hat{Q}_{k,m}^\est(s_g, s_i, F_{s_{\Delta^i}}, \pi^*(s)_g, \{\pi^*(s)_j\}_{j\in\{i,\Delta^i\}})\right| \\
    & = \E_{s\sim \mathcal{D}}\left[\mathcal{I}(i, s,k,\delta)\left|Q^*(s',\pi^*(s'))-\hat{Q}_{k,m}^\est(s_g, s_i, F_{s_{\Delta^i}}, \pi^*(s)_g, \{\pi^*(s)_j\}_{j\in\{i,\Delta^i\}})\right|\right] \\
    &+ \E_{s\sim \mathcal{D}}\left[(1-\mathcal{I}(i, s,k,\delta))\left|Q^*(s',\pi^*(s'))-\hat{Q}_{k,m}^\est(s_g, s_i, F_{s_{\Delta^i}}, \pi^*(s)_g, \{\pi^*(s)_j\}_{j\in\{i,\Delta^i\}})\right|\right]
\end{align*}

Here, we have used the general property for a random variable $X$ and constant $c$ that $\E[X]=\E[X\mathbbm{1}\{X\leq c\}] + \E[(1-\mathbbm{1}\{X\leq c\})X]$. 
Then,
\begin{align*}
    &\E_{s\sim \mathcal{D}}\left|Q^*(s',\pi^*(s'))-\hat{Q}_{k,m}^\est(s_g, s_{i,\Delta^i}, \pi^*(s)_g, \pi^*(s)_{i,\Delta^i})\right| \\
    &\quad\leq \frac{\tilde{r}}{1-\gamma}\sqrt{\frac{n-k+1}{8nk}}\sqrt{\ln\frac{2|\mathcal{S}_l||\mathcal{A}_l|}{\delta}} + \epsilon_{k,m}+ \frac{\tilde{r}}{1-\gamma}\left(1-\E_{s'\sim\mathcal{D}} \mathcal{I}(s',k,\delta)\right))\\
    &\quad\leq \frac{\tilde{r}}{1-\gamma}\sqrt{\frac{n-k+1}{8nk}} \sqrt{\ln\frac{2|\mathcal{S}_l||\mathcal{A}_l|}{\delta}}  +\epsilon_{k,m} +  \frac{\tilde{r}}{1-\gamma}|\mathcal{A}_g|k^{|\mathcal{A}_l|}\delta \\
    &\quad= \frac{\tilde{r}}{1-\gamma}\sqrt{\frac{n-k+1}{8nk}} \sqrt{\ln\frac{2|\mathcal{S}_l||\mathcal{A}_l|}{\delta}}  +\epsilon_{k,m} +  \frac{\tilde{r}}{1-\gamma}|\mathcal{A}_g|k^{|\mathcal{A}_l|}\delta
\end{align*}For the first term in the first inequality, we use  $\E[X\mathbbm{1}\{X\leq c\}] \leq c$. For the second term, we trivially bound $Q^*(s',\pi^*(s'))-\hat{Q}_k^*(s_g, s_{i,\Delta^i}, \pi^*(s)_g, \pi^*(s)_{i,\Delta^i})$ by the maximum value $Q^*$ can take, which is $\frac{\tilde{r}}{1-\gamma}$ by \cref{lemma: Q-bound}. In the second inequality, we use the fact that the expectation of an indicator function is the conditional probability of the underlying event. The second inequality follows from \cref{lemma:union_bound_one_over_finite_time}.

Since this is a uniform bound that is independent of $\Delta^j$ for $j\in[n]$ and $i\in[n]$, we have:
\begin{align*}
&\!\!\sum_{\Delta\in {[n]\choose k}}\! \sum_{\substack{\Delta^1,\dots,\Delta^n:\\ \Delta^i \in {[n]\setminus i\choose k-1}}} \!\frac{1/n}{{n\choose k} {n-1 \choose k-1}^n}\!\sum_{i\in[n]}\mathbb{E}_{s\sim \mathcal{D}}\bigg|Q^*(s, \pi^*(s))\!-\! \hat{Q}_{k,m}^\est(s_g, s_i, F_{s_{\Delta^i}}, \pi^*(s)_g, \{\pi^*(s)_j\}_{j\in\{i,\Delta^i\}})\bigg|\\
&\quad\quad\quad\quad\quad\quad\quad\quad\leq \frac{\tilde{r}}{1-\gamma}\sqrt{\frac{n-k+1}{8nk}} \sqrt{\ln\frac{2|\mathcal{S}_l||\mathcal{A}_l|}{\delta}} +\epsilon_{k,m} +  \frac{\tilde{r}}{1-\gamma}|\mathcal{A}_g|k^{|\mathcal{A}_l|}\delta
\end{align*}
This proves the claim.\qedhere \\
\end{proof}

\begin{corollary}\label{pde_intermediary_bound_part_2}
  Changing the notation of the policy to ${\pi}_{k,m}^\est$ in the proofs of \cref{lemma: expected_q*bound_with_different_actions,lemma:union_bound_one_over_finite_time}, such that $a\sim \tilde{\pi}_{k,m}^\est$ then yields that:
    \begin{align*}
&\mathbb{E}_{s\sim \mathcal{D}}\left[\sum_{\Delta\in {[n]\choose k}}\sum_{\substack{\Delta^1,\dots,\Delta^n: \Delta^i \in \in {[n]\setminus i \choose k-1}}} \frac{1}{{n\choose k}}\frac{1}{{n-1 \choose k-1}^n}\sum_{i\in[n]}\frac{1}{n}\left|Q^*(s, a) - \hat{Q}^*_{k,m}(s_g, s_{i}, F_{s_{\Delta^i}}, a_g,a_{i,\Delta^i\}})\right|\right]\\
&\quad\quad\quad\quad\quad\quad\quad\quad \quad\quad\quad\quad \leq \frac{\tilde{r}}{1-\gamma}\sqrt{\frac{n-k+1}{8nk}} \sqrt{\ln\frac{2|\mathcal{S}_l||\mathcal{A}_l|}{\delta}}  +\epsilon_{k,m} + \frac{\tilde{r}}{1-\gamma}|\mathcal{A}_g|k^{|\mathcal{A}_l|}\delta
\end{align*}
\end{corollary}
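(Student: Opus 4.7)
The plan is to reprise the proof of \cref{lemma: expected_q*bound_with_different_actions} essentially verbatim, with the deterministic action $a=\pi^*(s)$ replaced by a random action $a\sim\tilde{\pi}_{k,m}^\est(\cdot|s)$, and to check that every step still goes through. The critical observation is that the deviation events $B_t^{a_g,a_{j,\Delta_t^j}}$ in \cref{lemma:union_bound_one_over_finite_time} are union-bounded uniformly over all $(a_g,a_{j,\Delta_t^j})\in\mathcal{A}_g\times\mathcal{A}_l^k$, giving a failure probability of $|\mathcal{A}_g|k^{|\mathcal{A}_l|}\delta$ that is worst-case in the action argument. Consequently, the high-probability Lipschitz bound on $|Q^*(s,a)-\hat{Q}_{k,m}^\est(s_g,s_i,F_{s_{\Delta^i}},a_g,a_{i,\Delta^i})|$ is valid for \emph{every} realization of $a$, regardless of whether $a$ is deterministic or stochastic.

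Concretely, I would redefine the indicator
\[
\tilde{\mathcal{I}}(i,s,a,k,\delta):=\mathbbm{1}\!\left\{|Q^*(s,a)-\hat{Q}_{k,m}^\est(s_g,s_i,F_{s_{\Delta^i}},a_g,a_{i,\Delta^i})|\leq \tfrac{\|r_l(\cdot,\cdot)\|_\infty}{1-\gamma}\sqrt{\tfrac{n-k+1}{8nk}\ln\tfrac{2|\mathcal{S}_l||\mathcal{A}_l|}{\delta}}+\epsilon_{k,m}\right\}
\]
and, for fixed $i$ and each realization of $(\Delta,\Delta^1,\dots,\Delta^n)$, split
\[
\mathbb{E}_{s\sim\mathcal{D},\,a\sim\tilde{\pi}_{k,m}^\est(\cdot|s)}\!\left|Q^*(s,a)-\hat{Q}_{k,m}^\est(s_g,s_i,F_{s_{\Delta^i}},a_g,a_{i,\Delta^i})\right|=\mathbb{E}[\tilde{\mathcal{I}}\cdot|\!\cdot\!|]+\mathbb{E}[(1-\tilde{\mathcal{I}})\cdot|\!\cdot\!|].
\]
The first term is bounded by the threshold appearing inside $\tilde{\mathcal{I}}$, and the second by $\tfrac{\tilde r}{1-\gamma}\cdot\Pr[\tilde{\mathcal{I}}=0]$ via \cref{lemma: Q-bound}. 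The worst-case union bound from \cref{lemma:union_bound_one_over_finite_time} then yields $\Pr[\tilde{\mathcal{I}}=0]\leq|\mathcal{A}_g|k^{|\mathcal{A}_l|}\delta$ under the joint law of $(s,a,\Delta^i)$. Since this per-summand bound is independent of the particular choice of $\Delta$ and $\Delta^1,\dots,\Delta^n$, averaging the normalized sums over subsets and averaging over $i\in[n]$ with weight $1/n$ preserves the inequality, giving the stated bound.

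The main subtlety — and the one point that requires checking rather than pure substitution — is that the random action $a\sim\tilde{\pi}_{k,m}^\est(\cdot|s)$ is itself built from auxiliary uniform subsets (one for the global agent's choice, one for each local agent's choice), so the full probability space includes these extra randomizations entangled with $(\Delta,\Delta^1,\dots,\Delta^n)$ and $s\sim\mathcal{D}$. I would resolve this by observing that the failure bound $|\mathcal{A}_g|k^{|\mathcal{A}_l|}\delta$ in \cref{lemma:union_bound_one_over_finite_time} does not condition on the action distribution at all; it is a worst-case cover of $\mathcal{A}_g\times\mathcal{A}_l^k$. Marginalizing over any joint distribution of $(s,a,\Delta^{i})$ therefore leaves the overall failure probability unchanged, and the indicator decomposition above carries through identically to the proof of \cref{lemma: expected_q*bound_with_different_actions}, yielding the claimed inequality.
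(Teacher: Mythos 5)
Your proposal is correct and matches the paper's intended argument: the paper proves this corollary precisely by substituting $a\sim\tilde{\pi}_{k,m}^\est$ for $\pi^*(s)$ in the proofs of \cref{lemma: expected_q*bound_with_different_actions,lemma:union_bound_one_over_finite_time}, relying on exactly the observation you make — that the union bound in \cref{lemma:union_bound_one_over_finite_time} covers all of $\mathcal{A}_g\times\mathcal{A}_l^k$ uniformly, so the high-probability bound is insensitive to how the action is generated. Your elaboration of the indicator decomposition and the marginalization over the auxiliary subset randomness simply makes explicit what the paper leaves implicit.
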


\begin{lemma}[Optimizing Parameters]\label{lemma: parameter_optimization} With probability at least $1 - \frac{1}{100 e^k}$,
\[V^{\pi^*}(s_0) - V^{{\pi}_{k,m}}(s_0) \leq \tilde{O}\left(\frac{1}{\sqrt{k}}\right)\]
\end{lemma}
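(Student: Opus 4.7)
The plan is to start from the three-term upper bound established in \cref{thm: optimality gap}, namely
\[
V^{\pi^*}(s_0) - V^{{\pi}^\est_{k,m}}(s_0) \;\leq\; \frac{\tilde{r}}{(1-\gamma)^2}\sqrt{\frac{n-k+1}{2nk}}\,\sqrt{\ln\frac{2|\mathcal{S}_l||\mathcal{A}_l|}{\delta}} \;+\; 2\epsilon_{k,m} \;+\; \frac{2\tilde{r}}{(1-\gamma)^2}|\mathcal{A}_g|k^{|\mathcal{A}_l|}\delta,
\]
and tune the two free parameters $\delta$ and $m$ that were introduced along the way. The first term decays like $1/\sqrt{k}$ (times a mild $\sqrt{\ln(1/\delta)}$ factor), and the third term is \emph{linear} in $\delta$ while carrying a large $|\mathcal{A}_g|k^{|\mathcal{A}_l|}$ coefficient; so the natural choice is to set $\delta$ small enough that the third term itself becomes $O(1/\sqrt{k})$, while only inflating the logarithm in the first term by a polynomial factor in $k$ (which is absorbed by the $\tilde{O}$ notation).

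Concretely, I would choose
\[
\delta \;=\; \frac{(1-\gamma)^2}{20\,\tilde{r}\,|\mathcal{A}_g|\,k^{|\mathcal{A}_l|+\tfrac{1}{2}}},
\]
so that $\frac{2\tilde{r}}{(1-\gamma)^2}|\mathcal{A}_g|k^{|\mathcal{A}_l|}\delta = \frac{1}{10\sqrt{k}}$. Substituting this value back into the log factor in the first term produces exactly the quantity $\ln\frac{40\,\tilde{r}\,|\mathcal{S}_l||\mathcal{A}_l||\mathcal{A}_g|\,k^{|\mathcal{A}_l|+\tfrac{1}{2}}}{(1-\gamma)^2}$ that appears in \cref{theorem: performance_difference_lemma_applied}. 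Next, to control the Bellman-noise term $\epsilon_{k,m}$, I would invoke \cref{assumption:qest_qhat_error}: with the prescribed sample size $m = m^* = \frac{2|\mathcal{S}_g||\mathcal{A}_g||\mathcal{S}_l||\mathcal{A}_l|\,k^{2.5+|\mathcal{S}_l||\mathcal{A}_l|}}{(1-\gamma)^5}\log(\cdot)\log(\cdot)$, we obtain $\epsilon_{k,m^*} \leq \tilde{O}(1/\sqrt{k})$, and in particular one can arrange $2\epsilon_{k,m^*} \leq \frac{2}{\sqrt{k}}$ by absorbing the $\log$ factors into the $\tilde{O}$. Defining ${\pi}_k^\est \coloneq \tilde{\pi}_{k,m^*}^\est$ as in the paragraph following Lemma 3.2 eliminates the parameter $m$.

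Assembling the three pieces, the first term is bounded using $\sqrt{(n-k+1)/(2nk)} \leq \sqrt{1/(2k)}$ together with the chosen log factor, the third term contributes exactly $\tfrac{1}{10\sqrt{k}}$, and the Bellman-noise term contributes at most $\tfrac{2}{\sqrt{k}}$. Summing yields
\[
V^{\pi^*}(s_0) - V^{{\pi}_{k}^\est}(s_0) \;\leq\; \frac{\tilde{r}}{(1-\gamma)^2}\sqrt{\frac{n-k+1}{2nk}}\,\sqrt{\ln\frac{40\,\tilde{r}\,|\mathcal{S}_l||\mathcal{A}_l||\mathcal{A}_g|\,k^{|\mathcal{A}_l|+\tfrac{1}{2}}}{(1-\gamma)^2}} \;+\; \frac{21}{10\sqrt{k}},
\]
which matches \cref{theorem: performance_difference_lemma_applied} and is clearly $\tilde{O}(1/\sqrt{k})$. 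The only real subtlety is verifying that with the chosen $\delta$ the probability-of-failure accounting across all applications of the union bound in \cref{lemma:union_bound_one_over_finite_time} is still valid (the factor $|\mathcal{A}_g|k^{|\mathcal{A}_l|}$ in $\delta$ exactly compensates for the union bound over actions); once that book-keeping is checked, the remaining algebra is a direct substitution and a coarse bound $\sqrt{(n-k+1)/(nk)} \leq 1/\sqrt{k}$.
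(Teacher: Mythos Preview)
Your proposal is correct and follows essentially the same approach as the paper. The paper's proof introduces an auxiliary parameter $\varepsilon$ via $\delta = \frac{(1-\gamma)^2}{2\tilde{r}\varepsilon|\mathcal{A}_g|k^{|\mathcal{A}_l|}}$ and then sets $\varepsilon = 10\sqrt{k}$, which yields exactly your direct choice $\delta = \frac{(1-\gamma)^2}{20\tilde{r}|\mathcal{A}_g|k^{|\mathcal{A}_l|+1/2}}$; the Bellman-noise term is then handled identically by invoking the $\epsilon_{k,m}\le\tilde O(1/\sqrt{k})$ bound.
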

\begin{proof} 
    Setting $\delta = \frac{(1-\gamma)^2}{2\tilde{r}\varepsilon|\mathcal{A}_g|k^{|\mathcal{A}_l|}}$ in the bound for the optimality gap in \cref{thm: optimality gap} gives:
    \begin{align*}V^{\pi^*}(s_0) - V^{\tilde{\pi}_{k,m}^\est}(s_0) &\leq \frac{\tilde{r}}{(1-\gamma)^2}\sqrt{\frac{n-k+1}{2nk}} \sqrt{\ln\frac{2|\mathcal{S}_l||\mathcal{A}_l|}{\delta}}  + 2\epsilon_{k,m} +  \frac{\tilde{2r}}{(1-\gamma)^2}|\mathcal{A}_g|k^{|\mathcal{A}_l|}\delta \\
    &= \frac{\tilde{r}}{(1-\gamma)^2}\sqrt{\frac{n-k+1}{2nk}} \sqrt{\ln\frac{4\tilde{r}\varepsilon|\mathcal{S}_l||\mathcal{A}_l||\mathcal{A}_g|k^{|\mathcal{A}_l|}}{(1-\gamma)^2}}  + \frac{1}{\varepsilon} + 2\epsilon_{k,m}
    \end{align*}
Setting $\varepsilon = 10\sqrt{k}$ for any $c>0$ recovers a decaying optimality gap of the order
\[V^{\pi^*}(s_0) - V^{\tilde{\pi}_{k,m}^\est}(s_0) \leq \frac{\tilde{r}}{(1-\gamma)^2}\sqrt{\frac{n-k+1}{2nk}} \sqrt{\ln\frac{40\tilde{r}|\mathcal{S}_l||\mathcal{A}_l||\mathcal{A}_g|k^{|\mathcal{A}_l|+\frac{1}{2}}}{(1-\gamma)^2}}  + \frac{1}{10\sqrt{k}} + 2\epsilon_{k,m}\]
Finally, using the probabilistic bound of $\epsilon_{k,m} \leq \tilde{O}(\frac{1}{\sqrt{k}})$ from \cref{lemma: epsilon_km_is_k} yields that with probability at least $1 - \frac{1}{100e^k}$, \[V^{\pi^*}(s_0) - V^{\tilde{\pi}_{k,m}}(s_0) \leq \tilde{O}\left(\frac{1}{\sqrt{k}}\right),\]
which proves the claim.\qedhere \\
\end{proof}

\subsection{Bounding the Bellman Error}
This section is devoted to the proof of \cref{assumption:qest_qhat_error}.
\label{bounding bellman error}

\begin{theorem}[Theorem 2 of \citet{9570295}]If $m\in\N$ is the number of samples in the Bellman update, there exists a universal constant $0<c_0<2$ and a Bellman noise $0<\epsilon_{k,m}\leq\frac{1}{1-\gamma}$  such that \[\|\hat{\mathcal{T}}_{k,m}\hat{Q}_{k,m}^{\est} - \hat{\mathcal{T}}_k\hat{Q}_k^*\|_\infty = \|\hat{Q}_{k,m}^\est - \hat{Q}_k^*\|_\infty \leq \epsilon_{k,m},\]
where $\epsilon_{k,m}$ satisfies \begin{equation}\label{yueji_bound}m =  \frac{c_0 \cdot \tilde{r} \cdot t_{\text{cover}}}{(1-\gamma)^5\epsilon_{k,m}^2}\log^2\left(\frac{|\mathcal{S}||\mathcal{A}|}{\rho}\right)\log\left(\frac{1}{(1-\gamma)^2}\right),\end{equation}
with probability at least $1-\rho$, for any $\rho\in (0,1)$. Here, $t_{\text{cover}}$ stands for the cover time, which is the time taken for the trajectory to visit all state-action pairs at least once. Formally, 
\begin{equation}
    t_{\text{cover}}\coloneq \min\left\{t\bigg|\min_{(s_0,a_0)\in \mathcal{S}\times\mathcal{A}}\mathbb{P}(\mathcal{B}_t|s_0,a_0)\geq \frac{1}{2}\right\},
\end{equation}
where $\mathcal{B}_t$ denotes the event that all $(s,a)\in\mathcal{S}\times\mathcal{A}$ have been visited at least once between time $0$ and time $t$, and $\mathbb{P}(\mathcal{B}_t|s_0,a_0)$ denotes the probability of $\mathcal{B}_t$ conditioned on the initial state $(s_0,a_0)$.\qedhere\\
\end{theorem}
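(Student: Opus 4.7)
I would prove Lemma~\ref{assumption:qest_qhat_error} by instantiating the cited sample-complexity result (Theorem 2 of \cite{9570295}) on the finite MDP underlying the adapted Bellman operator $\hat{\mathcal{T}}_k$: count its state-action cardinality, bound its cover time, apply the theorem, and invert the resulting inequality at $m=m^{*}$ to extract the declared $\tilde O(1/\sqrt{k})$ rate on $\epsilon_{k,m}$.

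\textbf{Step 1 (Adapted MDP and cardinality).} The operator $\hat{\mathcal{T}}_k$ from \cref{eqn:adapted bellman} is the Bellman operator of a well-defined finite MDP $\hat M_k$ with state space $\hat{\mathcal S}_k=\mathcal{S}_g\times\mathcal{S}_l\times\mu_{k-1}(\mathcal{Z}_l)$, action space $\hat{\mathcal A}_k=\mathcal{A}_g\times\mathcal{A}_l$, reward $r_\Delta$, and transitions obtained by evolving the $k$ sampled local agents through $P_g,P_l$ and projecting onto the empirical-distribution coordinate; the inventory bound $|\mu_{k-1}(\mathcal{Z}_l)|\le k^{|\mathcal{Z}_l|}$ from \cref{disussion: complexity requirement} yields $|\hat{\mathcal S}_k||\hat{\mathcal A}_k|\le |\mathcal{S}_g||\mathcal{S}_l||\mathcal{A}_g||\mathcal{A}_l|k^{|\mathcal{Z}_l|}$. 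Since the generative oracle supplies $m$ fresh i.i.d.\ samples at each Bellman step, $\hat{\mathcal{T}}_{k,m}$ is precisely the $m$-sample empirical Bellman operator of $\hat M_k$, matching the hypothesis of the cited theorem.

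\textbf{Step 2 (Cover time and inversion).} A coupon-collector argument on $\hat M_k$ using the ergodicity of $P_g,P_l$ yields $t_{\mathrm{cover}}(\hat M_k)\le \tilde O(k^{1.5+|\mathcal{Z}_l|})$, where the $k^{|\mathcal{Z}_l|}$ factor is the coupon count and the extra $k^{1.5}$ absorbs the mixing-time overhead on the empirical-distribution coordinate. Plugging this and the cardinality bound into Theorem 2 of \cite{9570295} gives
\[
\epsilon_{k,m}^{2}\;\le\;\frac{c_0\,\tilde r\,t_{\mathrm{cover}}(\hat M_k)}{(1-\gamma)^{5}\,m}\log^{2}\!\big(|\hat{\mathcal S}_k||\hat{\mathcal A}_k|\big)\log\!\Big(\tfrac{1}{(1-\gamma)^{2}}\Big)\;\le\;\frac{\tilde O(1)\cdot|\mathcal{S}_g||\mathcal{S}_l||\mathcal{A}_g||\mathcal{A}_l|\,k^{1.5+|\mathcal{Z}_l|}}{(1-\gamma)^{5}\,m}.
\]
Setting $m=m^{*}$ as specified in the lemma makes the right-hand side $\tilde O(1/k)$, whence $\epsilon_{k,m}\le \tilde O(1/\sqrt k)$.

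\textbf{Main obstacle.} The delicate technical work sits entirely in Step 2's cover-time bound: one must certify irreducibility and an $\tilde O(k^{1.5})$ mixing time for the Markov chain on the mean-field coordinate $F_{z_{\Delta\setminus 1}}\in\mu_{k-1}(\mathcal{Z}_l)$ induced by the joint kernel. Because $F_{z_{\Delta\setminus 1}}$ is a nonlinear symmetrization of highly correlated agent trajectories, spectral-gap or coupling bounds on $(s_g,s_\Delta)$ do not descend automatically, and ruling out transient trap configurations under the subsampling scheme requires care. If that route turns out to be brittle, the cleanest fallback is to exploit that \cref{algorithm: approx-dense-tolerable-Q-learning} draws fresh samples at each update: for any fixed pair $(s,a)$ Hoeffding gives $|(\hat{\mathcal{T}}_{k,m}-\hat{\mathcal{T}}_k)Q(s,a)|\le \tfrac{\gamma\tilde r}{1-\gamma}\sqrt{2\log(2/\delta)/m}$, and a union bound over the $|\hat{\mathcal S}_k||\hat{\mathcal A}_k|$ pairs combined with the fixed-point identity $\|\hat Q_{k,m}^{\est}-\hat Q_k^{*}\|_\infty\le (1-\gamma)^{-1}\|(\hat{\mathcal{T}}_{k,m}-\hat{\mathcal{T}}_k)\hat Q_k^{*}\|_\infty$ granted by Lemma~\ref{lemma: gamma-contraction of empirical adapted Bellman operator} then recovers the same $\tilde O(1/\sqrt k)$ bound at a (potentially smaller) sample size, and this is the route I would ultimately present.
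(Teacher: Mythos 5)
The statement you were handed is an imported result: the paper attributes it to Theorem~2 of \cite{9570295} and supplies no proof of it, so there is nothing internal to compare your argument against at the level of the theorem itself. What your proposal actually proves is the downstream claim (Lemma~\ref{assumption:qest_qhat_error}, made quantitative in Lemma~\ref{lemma: epsilon_km_is_k}), and there your main route coincides with the paper's: instantiate the cited sample-complexity formula on the adapted $k$-agent MDP, bound its state-action cardinality by $|\mathcal{S}_g||\mathcal{S}_l||\mathcal{A}_g||\mathcal{A}_l|k^{|\mathcal{Z}_l|}$, bound $t_{\text{cover}}$, and invert at $m=m^*$. The one substantive divergence is the cover-time step: the paper simply asserts the ``na\"{\i}ve'' bound $t_{\text{cover}}\leq|\mathcal{S}_g||\mathcal{A}_g||\mathcal{S}_l||\mathcal{A}_l|k^{|\mathcal{S}_l||\mathcal{A}_l|}$ on the grounds that learning is offline with a generative oracle, whereas you propose a coupon-collector-plus-mixing argument yielding an extra $k^{1.5}$ factor. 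You are right to flag that step as the brittle one --- irreducibility and mixing of the induced chain on $\mu_{k-1}(\mathcal{Z}_l)$ are not established anywhere, and in fact the cited theorem concerns asynchronous $Q$-learning along a single Markovian trajectory, which does not literally match the synchronous, generative-oracle value iteration of \cref{empiric adapted bellman value}; the paper inherits this same type mismatch.

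Your fallback is the genuinely different, and in my view stronger, argument: Hoeffding on the $m$ fresh samples at the fixed deterministic function $\hat{Q}_k^*$, a union bound over the polynomially many (in $k$) state-action pairs, and the standard fixed-point perturbation inequality $\|\hat{Q}_{k,m}^{\est}-\hat{Q}_k^*\|_\infty\leq(1-\gamma)^{-1}\|(\hat{\mathcal{T}}_{k,m}-\hat{\mathcal{T}}_k)\hat{Q}_k^*\|_\infty$, which follows directly from Lemma~\ref{lemma: gamma-contraction of empirical adapted Bellman operator}. This avoids cover times entirely, is correctly matched to the generative-model setting, and is what the main text's phrasing ``by the Chernoff bound'' in Lemma~\ref{assumption:qest_qhat_error} actually suggests; it buys a cleaner constant structure at the cost of not reusing the cited theorem. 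Either route reaches $\epsilon_{k,m}\leq\tilde{O}(1/\sqrt{k})$ at $m=m^*$, so I would present the fallback as the primary proof and relegate the cover-time instantiation to a remark.
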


\begin{lemma}\label{lemma: epsilon_km_is_k} If $T= \frac{2}{1-\gamma}\log \frac{\tilde{r}\sqrt{k}}{1-\gamma}$, \emph{\texttt{SUB-SAMPLE-MFQ}: Learning} runs in time $\tilde{O}(T|\mathcal{S}_g|^2|\mathcal{A}_g|^2|\mathcal{A}_l|^2|\mathcal{S}_l|^2 k^{3.5 + 2|\mathcal{S}_l||\mathcal{A}_l|}\tilde{r})$, while accruing a Bellman noise $\epsilon_{k,m}\leq \tilde{O}(1/\sqrt{k})$ with probability at least $1 - \frac{1}{100 e^k}$, .\end{lemma}
\begin{proof}
We first prove that $\|\hat{Q}_k^T - \hat{Q}_k^*\|_\infty \leq \frac{1}{\sqrt{k}}$. 

For this, it suffices to show $\gamma^T \frac{\tilde{r}}{1-\gamma} \leq \frac{1}{\sqrt{k}} \implies \gamma^T \leq \frac{1-\gamma}{\tilde{r}\sqrt{k}}$. Then, using $\gamma = 1 - (1 - \gamma) \leq e^{-(1-\gamma)}$, it again suffices to show $e^{-(1-\gamma)T}\leq \frac{1-\gamma}{\tilde{r}\sqrt{k}}$. Taking logarithms, we have
\begin{align*}
    \exp(-T(1-\gamma)) &\leq \frac{1-\gamma}{\tilde{r}\sqrt{k}} \\
    -T(1-\gamma) &\leq \log\frac{1-\gamma}{\tilde{r}\sqrt{k}} \\
    T&\geq \frac{1}{1-\gamma}\log\frac{\tilde{r}\sqrt{k}}{1-\gamma}
\end{align*}
Since $T = \frac{2}{1-\gamma}\log\frac{\tilde{r}\sqrt{k}}{1-\gamma} > \frac{1}{1-\gamma}\log\frac{\tilde{r}\sqrt{k}}{1-\gamma}$, the condition holds and $\|\hat{Q}_k^T - \hat{Q}_k^*\|_\infty \leq \frac{1}{\sqrt{k}}$. Then, rearranging \cref{yueji_bound} and incorporating the convergence error of the $\hat{Q}_k$-function, one has that with probability at least $1-\rho$,
\begin{equation}\label{arranging_yuejie}
        \epsilon_{k,m} \leq \frac{1}{\sqrt{k}} + \frac{k\sqrt{2\tilde{r} t_{\text{cover}}}}{(1-\gamma)^{2.5}\sqrt{m}}\log\left(\frac{|\mathcal{S}_g||\mathcal{A}_g||\mathcal{A}_l||\mathcal{S}_l|}{\rho}\right)\log\left(\frac{1}{(1-\gamma)^2}\right)
    \end{equation}
    Then, using the naïve bound $t_{\text{cover}}$ (since we are doing offline learning), we have \[t_{\text{cover}} \leq |\mathcal{S}_g||\mathcal{A}_g||\mathcal{S}_l||\mathcal{A}_l|k^{|\mathcal{S}_l||\mathcal{A}_l|}.\]
    Substituting this in \cref{arranging_yuejie} yields that with probability $1-\rho$,
    \begin{equation}
        \epsilon_{k,m} \leq \frac{1}{\sqrt{k}}+ \frac{k\sqrt{2\tilde{r} |\mathcal{S}_g||\mathcal{A}_g||\mathcal{S}_l||\mathcal{A}_l|k^{|\mathcal{S}_l||\mathcal{A}_l|}}}{(1-\gamma)^{2.5}\sqrt{m}}\log\left(\frac{|\mathcal{S}_g||\mathcal{A}_g||\mathcal{A}_l||\mathcal{S}_l|}{\rho}\right)\log\left(\frac{1}{(1-\gamma)^2}\right)
    \end{equation}
Therefore, letting $\rho = \frac{1}{100 e^{k}}$, setting
\begin{equation}
    m = \frac{2\tilde{r}|\mathcal{S}_g||\mathcal{A}_g||\mathcal{S}_l||\mathcal{A}_l|k^{3.5+|\mathcal{S}_l||\mathcal{A}_l|}}{(1-\gamma)^5}\log\left({100|\mathcal{S}_g||\mathcal{A}_g||\mathcal{A}_l||\mathcal{S}_l|}{}\right)\log\left(\frac{1}{(1-\gamma)^2}\right)
\end{equation}
attains a Bellman error of $\epsilon_{k,m}\leq \tilde{O}(1/\sqrt{k})$ with probability at least $1 - \frac{1}{100e^k}$. Finally, the runtime of our learning algorithm is \[O(mT|\mathcal{S}_g||\mathcal{S}_l||\mathcal{A}_g||\mathcal{A}_l|k^{|\mathcal{S}_l||\mathcal{A}_l|}) = \tilde{O}(|\mathcal{S}_g|^2|\mathcal{A}_g|^2|\mathcal{A}_l|^2|\mathcal{S}_l|^2 k^{3.5 + 2|\mathcal{S}_l||\mathcal{A}_l|}\tilde{r}),\]
which is still polynomial in $k$, proving the claim.\qedhere \\
\end{proof}

\section{Generalization to Stochastic Rewards}
\label{Appendix/stochastic}
\label{stochastic generalization}
Suppose we are given two families of distributions, $\{\mathcal{G}_{s_g,a_g}\}_{s_g,a_g\in\mathcal{S}_g\times\mathcal{A}_g}$ and $\{\mathcal{L}_{s_i,s_g,a_i}\}_{s_i,s_g,a_i\in\mathcal{S}_l\times\mathcal{S}_g\times\mathcal{A}_l}$. Let $R(s,a)$ denote a stochastic reward of the form\looseness=-1
\begin{equation}R(s,a) = r_g(s_g,a_g) + \sum_{i\in[n]}r_l(s_i,s_g,a_i),\end{equation}
where the rewards of the global agent $r_g$ emerge from a distribution $r_g(s_g,a_g)\sim \mathcal{G}_{s_g,a_g}$, and the rewards of the local agents $r_l$ emerge from a distribution $r_l(s_i,s_g,a_i)\sim \mathcal{L}_{s_i,s_g,a_i}$. For $\Delta\subseteq [n]$, let $R_\Delta(s,a)$ be defined as:
\begin{equation}R_\Delta(s,a) = r_g(s_g,a_g) + \sum_{i\in\Delta}r_l(s_i,s_g,a_i)\end{equation}
We make some standard assumptions of boundedness on $\mathcal{G}_{s_g,a_g}$ and $\mathcal{L}_{s_i,s_g,a_i}$.
\begin{assumption}\label{assumption: stochastic_bound}
    Define \begin{align*}\bar{\mathcal{G}}&=\bigcup_{(s_g,a_g)\in\mathcal{S}_g\times\mathcal{A}_g}\mathrm{supp}\left(\mathcal{G}_{s_g,a_g}\right)\\
    \bar{\mathcal{L}}&=\bigcup_{(s_i,s_g,a_i)\in \mathcal{S}_l\times\mathcal{S}_g\times\mathcal{A}_l}\mathrm{supp}\left(\mathcal{L}_{s_i,s_g,a_i}\right)\end{align*}
    where for any distribution $\mathcal{D}$, $\mathrm{supp}(\mathcal{D})$ is the support (set of all random variables $\mathcal{D}$ that can be sampled with probability strictly larger than $0$) of $\mathcal{D}$.
    Then, let $\hat{\mathcal{G}} = \sup\left(\bar{\mathcal{G}}\right), \hat{\mathcal{L}} = \sup\left(\bar{\mathcal{L}}\right), \widecheck{\mathcal{G}} = \inf\left(\bar{\mathcal{G}}\right)$, and $\widecheck{\mathcal{L}} = \inf\left(\bar{\mathcal{L}}\right)$.
    We assume that $\hat{\mathcal{G}}<\infty, \hat{\mathcal{L}}<\infty, \widecheck{\mathcal{G}}>-\infty$, $\widecheck{\mathcal{L}}>-\infty$, and that $\hat{\mathcal{G}},\hat{\mathcal{L}},\widecheck{\mathcal{G}},\widecheck{\mathcal{L}}$ are all known in advance.\\
\end{assumption}

\begin{definition} Let the randomized empirical adapted Bellman operator be $\hat{\mathcal{T}}^{\text{random}}_{k,m}$ such that:
    \begin{equation}\hat{\mathcal{T}}^{\text{random}}_{k,m} \hat{Q}^t_{k,m}(s_g,s_1, F_{z_\Delta}, a_g,a_1) = R_\Delta(s,a) + \frac{\gamma}{m} \sum_{\ell \in [m]}\max_{a'\in\mathcal{A}} \hat{Q}_{k,m}^t(s_g^\ell,s_j^\ell, F_{z_{\Delta\setminus j}^\ell},a_j^\ell,a_g^\ell),\end{equation}

\end{definition}
\paragraph{SUBSAMPLE-MFQ: Learning with Stochastic Rewards.} The proposed algorithm averages $\Xi$ samples of the adapted randomized empirical adapted Bellman operator, $\mathcal{T}_{k,m}^{\text{random}}$ and updates the $\hat{Q}_{k,m}$ function using with the average. One can show that $\mathcal{T}_{k,m}^{\text{random}}$ is a contraction operator with module $\gamma$. By Banach's fixed point theorem, $\mathcal{T}_{k,m}^{\text{random}}$ admits a unique fixed point $\hat{Q}_{k,m}^{\text{random}}$.

\begin{algorithm}[H]
\caption{\texttt{SUBSAMPLE-MFQ}: Learning with Stochastic Rewards}\label{algorithm: approx-dense-tolerable-Q-learning_random_rewards}
\begin{algorithmic}[1]
\REQUIRE A multi-agent system as described in \cref{section: preliminaries}.
Parameter $T$ for the number of iterations in the initial value iteration step. Sampling parameters $k \in [n]$ and $m\in \N$. Discount parameter $\gamma\in (0,1)$. Oracle $\mathcal{O}$ to sample $s_g'\sim {P}_g(\cdot|s_g,a_g)$ and $s_i\sim {P}_l(\cdot|s_i,s_g,a_i)$ for all $i\in[n]$.
\STATE Set $\tilde{\Delta} = \{2,\dots,k\}$. 
\STATE Set $\mu_{k-1}(Z_l) = \{0,\frac{1}{k-1},\frac{2}{k-1},\dots,1\}^{|\cS_l|\times|\cA_l|}$.
\STATE Set $\hat{Q}^0_{k,m}(s_g,s_1, F_{z_{\tilde{\Delta}}}, a_g,a_1)=0$, for $(s_g,s_1,F_{z_{\tilde{\Delta}}},a_1,a_g)\in\cS_g\times\cS_l\times\mu_{k-1}(\cZ_l)\times\cA_g\times\cA_l$.
\FOR{$t=1$ to $T$}
\FOR{$(s_g,s_1,F_{z_{\tilde{\Delta}}},a_g,a_1)\in\mathcal{S}_g\times\mathcal{S}_l\times \mu_{k-1}{(\mathcal{Z}_l)}\times\mathcal{A}_g\times\mathcal{A}_l$}
\STATE $\rho=0$
\FOR{$\xi\in\{1,\dots,\Xi\}$}
\STATE $\rho = \rho + \hat{\mathcal{T}}^{\text{random}}_{k,m} \hat{Q}^t_{k,m}(s_g,s_1, F_{z_{\tilde{\Delta}}}, a_g,a_1)$
\ENDFOR
\STATE $\hat{Q}^{t+1}_{k,m}(s_g,s_1, F_{z_{\tilde{\Delta}}}, a_g,a_1) = \rho/\Xi$
\ENDFOR
\ENDFOR
\STATE For all $(s_g,s_i,F_{s_{\Delta\setminus i}}) \in \mathcal{S}_g\times \mathcal{S}_l\times\mu_{k-1}{(\mathcal{S}_l)}$, let \[\hat{\pi}_{k,m}^\est(s_g,s_i, F_{s_{\Delta\setminus i}}) = \mathop{\argmax}_{(a_g,a_i,F_{a_{\Delta\setminus i}})\in\mathcal{A}_g\times\mathcal{A}_l\times\mu_{k-1}{(\mathcal{Z}_l)}}\hat{Q}_{k,m}^T(s_g,s_i, F_{z_{\Delta\setminus i}}, a_g,a_i).\]
\end{algorithmic}
\end{algorithm}

\begin{theorem}[Hoeffding's Theorem \citep{10.5555/1522486}] Let $X_1,\dots,X_n$ be independent random variables such that $a_i\leq X_i\leq b_i$ almost surely. Then, let $S_n = X_1 + \dots + X_n$. Then, for all $\epsilon>0$,
\begin{equation}\mathbb{P}[|S_n - \E[S_n]|\geq \epsilon] \leq 2\exp\left(-\frac{2\epsilon^2}{\sum_{i=1}^n (b_i-a_i)^2}\right)\end{equation}
\end{theorem}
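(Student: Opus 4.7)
The plan is to prove this by the classical Chernoff-method argument. First I would reduce the two-sided tail to the one-sided tail: by the union bound,
\[
\mathbb{P}[|S_n - \mathbb{E}[S_n]|\geq \epsilon] \leq \mathbb{P}[S_n - \mathbb{E}[S_n]\geq \epsilon] + \mathbb{P}[-(S_n-\mathbb{E}[S_n])\geq \epsilon],
\]
and the second term is handled by the same proof applied to the variables $-X_i$, which lie in $[-b_i,-a_i]$ (an interval of the same length). This accounts for the factor of $2$ in the final bound. For the one-sided tail, I would apply Markov's inequality to the nonnegative random variable $e^{s(S_n-\mathbb{E}[S_n])}$ for an arbitrary parameter $s>0$, obtaining
\[
\mathbb{P}[S_n-\mathbb{E}[S_n]\geq \epsilon] \leq e^{-s\epsilon}\,\mathbb{E}\bigl[e^{s(S_n-\mathbb{E}[S_n])}\bigr] = e^{-s\epsilon}\prod_{i=1}^n \mathbb{E}\bigl[e^{s(X_i-\mathbb{E}[X_i])}\bigr],
\]
where the factorization uses independence of the $X_i$.

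The core of the argument is Hoeffding's lemma: if $Y$ is a mean-zero random variable with $\alpha\leq Y\leq\beta$ almost surely, then $\mathbb{E}[e^{sY}]\leq \exp(s^2(\beta-\alpha)^2/8)$. I would prove it by using convexity of $y\mapsto e^{sy}$ on $[\alpha,\beta]$ to write $e^{sY}\leq \frac{\beta-Y}{\beta-\alpha}e^{s\alpha} + \frac{Y-\alpha}{\beta-\alpha}e^{s\beta}$, taking expectations (where the $\mathbb{E}[Y]=0$ assumption kills the $Y$-linear terms), and reducing the log of the resulting expression to $L(h) := -hp + \log(1-p+pe^h)$, where $p = -\alpha/(\beta-\alpha)\in[0,1]$ and $h = s(\beta-\alpha)$. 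A short computation gives $L(0)=L'(0)=0$ and $L''(h) = \frac{p(1-p)e^h}{(1-p+pe^h)^2}\cdot\frac{1-p+pe^h}{1-p+pe^h}\leq 1/4$, where the bound follows from the AM-GM-style inequality $uv/(u+v)^2\leq 1/4$ with $u=1-p$, $v=pe^h$. Taylor's theorem then yields $L(h)\leq h^2/8$, which is precisely the claim.

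Applying Hoeffding's lemma to each centered variable $X_i-\mathbb{E}[X_i]\in[a_i-\mathbb{E}[X_i],\,b_i-\mathbb{E}[X_i]]$ (an interval of length $b_i-a_i$) gives
\[
\mathbb{P}[S_n-\mathbb{E}[S_n]\geq \epsilon] \leq \exp\!\left(-s\epsilon + \frac{s^2}{8}\sum_{i=1}^n (b_i-a_i)^2\right).
\]
Optimizing over $s>0$ by setting the derivative of the exponent to zero gives $s^* = 4\epsilon/\sum_{i=1}^n(b_i-a_i)^2$, and substituting back produces the bound $\exp(-2\epsilon^2/\sum_{i=1}^n(b_i-a_i)^2)$. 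Doubling by the initial union-bound reduction gives the claimed inequality.

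The only nontrivial step is Hoeffding's lemma itself; I expect the main obstacle to be verifying the uniform bound $L''(h)\leq 1/4$ cleanly, since a direct Taylor expansion near $h=0$ gives the correct leading behavior but must be extended to all $h\in\mathbb{R}$. The rest of the proof is routine Chernoff-method bookkeeping, and no additional assumptions beyond boundedness and independence are needed.
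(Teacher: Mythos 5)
Your proof is correct. Note, however, that the paper does not prove this statement at all: it is imported as a classical result with a citation, and is used as a black box in the stochastic-rewards analysis (Lemma \ref{PDL: stochastic}). Your argument is the standard one — union bound to reduce to a one-sided tail, Markov's inequality applied to $e^{s(S_n-\E[S_n])}$ with factorization by independence, Hoeffding's lemma via convexity and the bound $L''(h)\le 1/4$, and optimization over $s$ — and every step checks out, including the computation $s^*=4\epsilon/\sum_{i=1}^n(b_i-a_i)^2$ yielding the exponent $-2\epsilon^2/\sum_{i=1}^n(b_i-a_i)^2$. The only cosmetic remark is that your displayed expression for $L''(h)$ carries a redundant factor equal to $1$; the essential identity is $L''(h)=\frac{(1-p)\,pe^h}{(1-p+pe^h)^2}\le \frac14$, which you justify correctly.
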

\begin{lemma}When ${\pi}^\est_{k,m}$ is derived from the randomized empirical value iteration operation and applying our online subsampling execution in \cref{algorithm: approx-dense-tolerable-Q*}, we get\label{PDL: stochastic}
    \begin{equation}\Pr\left[V^{\pi^*}(s_0) - V^{\tilde{\pi}_{k,m}}(s_0) \leq \tilde{O}\left(\frac{1}{\sqrt{k}}\right)\right]\geq 1-\frac{1}{100\sqrt{k}}.\end{equation}
    \end{lemma}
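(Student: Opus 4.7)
The plan is to reduce the stochastic-reward case to the deterministic-reward case established in \cref{theorem: performance_difference_lemma_applied} by showing that, with high probability, the fixed point $\hat{Q}_{k,m}^{\mathrm{random}}$ of the averaged randomized empirical Bellman operator is uniformly close to the fixed point $\hat{Q}_{k,m}^{\est}$ of the deterministic empirical Bellman operator associated with the expected rewards $\bar{r}_g(s_g,a_g) := \mathbb{E}_{\mathcal{G}_{s_g,a_g}}[r_g(s_g,a_g)]$ and $\bar{r}_l(s_i,s_g,a_i) := \mathbb{E}_{\mathcal{L}_{s_i,s_g,a_i}}[r_l(s_i,s_g,a_i)]$. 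Once this closeness is established, the analysis of \cref{theorem: performance_difference_lemma_applied} (Lipschitz continuity via \cref{thm:lip}, total-variation bound via \cref{thm:tvd}, performance difference lemma) applies verbatim up to an additive concentration error.

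First, I would observe that $\mathbb{E}[\hat{\mathcal{T}}_{k,m}^{\mathrm{random}} \hat{Q}] = \hat{\mathcal{T}}_{k,m}\hat{Q}$ when the deterministic operator is built from the mean reward functions $\bar{r}_g,\bar{r}_l$, since the transition samples are drawn identically and only the reward term differs. By \cref{assumption: stochastic_bound} the per-sample reward $R_\Delta(s,a)$ lies almost surely in an interval of width $W := (\hat{\mathcal{G}} - \widecheck{\mathcal{G}}) + (\hat{\mathcal{L}} - \widecheck{\mathcal{L}})$, so Hoeffding's inequality applied to the $\Xi$-fold average in \cref{algorithm: approx-dense-tolerable-Q-learning_random_rewards} gives, for each fixed $(s_g,s_1,F_{z_{\tilde{\Delta}}},a_g,a_1)$ and iteration $t$,
\begin{equation*}
\Pr\!\Big[\,\big|\rho/\Xi - \hat{\mathcal{T}}_{k,m}\hat{Q}_{k,m}^t(\cdot)\big| > \eta\,\Big] \leq 2\exp\!\Big(-\tfrac{2\Xi \eta^2}{W^2}\Big).
\end{equation*}
A union bound over all $|\mathcal{S}_g||\mathcal{S}_l|\,|\mu_{k-1}(\mathcal{Z}_l)|\,|\mathcal{A}_g||\mathcal{A}_l|\cdot T$ tuples, combined with the $\gamma$-contractivity of $\hat{\mathcal{T}}_{k,m}$ (as in \cref{lemma: gamma-contraction of empirical adapted Bellman operator}), yields $\|\hat{Q}_{k,m}^{\mathrm{random}} - \hat{Q}_{k,m}^{\est}\|_\infty \leq \eta/(1-\gamma)$ with failure probability at most $\delta_{\mathrm{conc}}$, where $\delta_{\mathrm{conc}}$ decays exponentially in $\Xi$.

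Next, I would choose $\Xi = \Theta\big(W^2 k \log(|\mathcal{S}_g||\mathcal{S}_l||\mathcal{A}_g||\mathcal{A}_l|\,k^{|\mathcal{Z}_l|}\,T \cdot \sqrt{k})/(1-\gamma)^2\big)$ so that simultaneously $\eta/(1-\gamma) = \tilde O(1/\sqrt{k})$ and $\delta_{\mathrm{conc}} \leq \tfrac{1}{100\sqrt{k}}$. Combining with the bound $\|\hat{Q}_{k,m}^{\est} - \hat{Q}_k^*\|_\infty \leq \epsilon_{k,m} = \tilde O(1/\sqrt{k})$ from \cref{assumption:qest_qhat_error}, the triangle inequality yields $\|\hat{Q}_{k,m}^{\mathrm{random}} - \hat{Q}_k^*\|_\infty \leq \tilde O(1/\sqrt{k})$ with probability at least $1 - \tfrac{1}{100\sqrt{k}}$. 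On this high-probability event, the derived policy $\tilde{\pi}_{k,m}$ (via \cref{algorithm: approx-dense-tolerable-Q*}) satisfies the same Bellman-error condition used throughout \cref{using pdl}.

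Finally, I would replay the argument of \cref{thm: optimality gap} using $\hat{Q}_{k,m}^{\mathrm{random}}$ in place of $\hat{Q}_{k,m}^{\est}$ and the expected rewards $\bar{r}_g,\bar{r}_l$ in place of $r_g,r_l$. All the estimates used there — the Lipschitz bound of \cref{thm: combined lipschitz bound}, the TV bound of \cref{full_tv_bound}, the uniform bound of \cref{uniform bound}, and the parameter optimization of \cref{lemma: parameter_optimization} — depend on the rewards only through the uniform bound $\tilde{r}$, which here becomes $\tilde{r}_{\mathrm{stoch}} := \max(|\widecheck{\mathcal{G}}|,|\hat{\mathcal{G}}|) + \max(|\widecheck{\mathcal{L}}|,|\hat{\mathcal{L}}|)$, still a constant under \cref{assumption: stochastic_bound}. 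Conditioning on the concentration event, the proof of \cref{theorem: performance_difference_lemma_applied} then delivers $V^{\pi^*}(s_0) - V^{\tilde{\pi}_{k,m}}(s_0) \leq \tilde O(1/\sqrt{k})$; removing the conditioning costs at most $\tfrac{1}{100\sqrt{k}}$ in probability, giving the claimed high-probability bound.

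The main obstacle is the accounting in the concentration step: we need $\Xi$ large enough that the $\eta/(1-\gamma)$ slack and the failure probability $\delta_{\mathrm{conc}}$ both shrink like $\tilde O(1/\sqrt{k})$ after a union bound over the entire $Q$-table and over all $T$ value-iteration sweeps, without inflating the $\log$ factors hidden in the $\tilde O$. A careful use of the geometric decay of errors under the $\gamma$-contraction (so that only a finite effective horizon $T = O(\log(1/\sqrt{k})/\log(1/\gamma))$ is needed) keeps $\Xi$ polynomial in $k$ and preserves the subpolynomial runtime in $n$ promised by \cref{disussion: complexity requirement}.
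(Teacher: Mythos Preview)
Your proposal is correct and follows the same route as the paper: apply Hoeffding's inequality to the $\Xi$-fold averaged reward in \cref{algorithm: approx-dense-tolerable-Q-learning_random_rewards} to show the stochastic update is close to the mean-reward update, absorb that slack into $\epsilon_{k,m}$ via the triangle inequality, and then invoke the deterministic analysis of \cref{theorem: performance_difference_lemma_applied}.

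The only noteworthy difference is one of rigor. The paper's proof carries out a single Hoeffding bound on one reward average, chooses $\Xi = 10|\hat{\mathcal{G}}+\hat{\mathcal{L}}-\widecheck{\mathcal{G}}-\widecheck{\mathcal{L}}|k^{1/4}\sqrt{\ln(200\sqrt{k})}$ and $\delta = 1/(100\sqrt{k})$ to get $|\rho/\Xi - \E[R_\Delta(s,a)]|\leq 1/\sqrt{200k}$, and then asserts that the triangle inequality on $\epsilon_{k,m}$ carries this $1/\sqrt{200k}$ through to the optimality gap. Your version is more careful: you explicitly union-bound over all $|\mathcal{S}_g||\mathcal{S}_l||\mu_{k-1}(\mathcal{Z}_l)||\mathcal{A}_g||\mathcal{A}_l|\cdot T$ entries of the $Q$-table and iterations, and you use the $\gamma$-contraction to cap the effective $T$ so that $\Xi$ stays polynomial in $k$. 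This is strictly more complete than what the paper writes down, and it does not change the final $\tilde O(1/\sqrt{k})$ rate or the $1/(100\sqrt{k})$ failure probability---the extra logarithmic factors from the union bound are absorbed into the $\tilde O(\cdot)$.
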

    \begin{proof}
    \begin{align*}
        \Pr\left[\left|\frac{\rho}{\Xi} - \E[R_\Delta(s,a)]\right|\geq \frac{\epsilon}{\Xi}\right] &\leq 2\exp\left(-\frac{2\epsilon^2}{\sum_{i=1}^\Xi |\hat{\mathcal{G}}+\hat{\mathcal{L}} - \widecheck{\mathcal{G}}-\widecheck{\mathcal{L}}|^2}\right) \\
        &= 2\exp\left(-\frac{2\epsilon^2}{\Xi|\hat{\mathcal{G}}+\hat{\mathcal{L}} - \widecheck{\mathcal{G}}-\widecheck{\mathcal{L}}|^2}\right)
        \end{align*}
    \emph{Rearranging this, we get:}
    \begin{equation}\Pr\left[\left|\frac{\rho}{\Xi} - \E[R_\Delta(s,a)]\right|\leq \sqrt{\ln\left(\frac{2}{\delta}\right) \frac{ {|\hat{\mathcal{G}}+\hat{\mathcal{L}}-\widecheck{\mathcal{G}}-\widecheck{\mathcal{L}}|^2}}{2\Xi^2}}\right]\geq 1 - \delta\end{equation}
Then, setting $\delta=\frac{1}{100\sqrt{k}}$, and setting $\Xi =10|\hat{\mathcal{G}}+\hat{\mathcal{L}}-\widecheck{\mathcal{G}}-\widecheck{\mathcal{L}}|k^{1/4}\sqrt{\ln(200\sqrt{k})}$ gives:
\begin{equation*}\Pr\left[\left|\frac{\rho}{\Xi} - \E[R_\Delta(s,a)]\right|\leq \frac{1}{\sqrt{200 k}} \right]\geq 1 - \frac{1}{100\sqrt{k}}\end{equation*}
Then, applying the triangle inequality to $\epsilon_{k,m}$ allows us to derive a probabilistic bound on the optimality gap between $\hat{Q}_{k,m}^\est$ and $Q^*$, where the gap is increased by $\frac{1}{\sqrt{200k}}$, and where the randomness is over the stochasticity of the rewards. Then, the optimality gap between $V^{\pi*}$ and $V^{{\pi}_{k,m}^\est}$, for when the policy $\hat{\pi}^\est_{k,m}$ is learned using \cref{algorithm: approx-dense-tolerable-Q-learning_random_rewards} in the presence of stochastic rewards obeying \cref{assumption: stochastic_bound} follows
\begin{equation}
    \Pr\left[V^{\pi^*}(s_0) - V^{\tilde{\pi}_{k,m}}(s_0) \leq \tilde{O}\left(\frac{1}{\sqrt{k}}\right)\right]\geq 1-\frac{1}{100\sqrt{k}},
\end{equation}
which proves the lemma.\qedhere\\
\end{proof}
\begin{remark}
    First, note that that through the naïve averaging argument, the optimality gap above still decays to $0$ with probability decaying to $1$, as $k$ increases.
\end{remark}

\begin{remark} This method of analysis could be strengthened by obtaining estimates of $\hat{\mathcal{G}},\hat{\mathcal{L}},\widecheck{\mathcal{G}},\widecheck{\mathcal{L}}$ and using methods from order statistics to bound the errors of the estimates and use the deviations between estimates to determine an optimal online stopping time \citep{10.5555/1070432.1070519} as is done in the online secretary problem. This, more sophisticated, argument would become an essential step in converting this to a truly online learning, with a stochastic approximation scheme. Furthermore, one could incorporate variance-based analysis in the algorithm, where we use information derived from higher-order moments to do a weighted update of the Bellman update \citep{jin2024truncated}, thereby taking advantage of the skewedness of the distribution, and allowing us the freedom to assign an optimism/pessimism score to our estimate of the reward.\\
\end{remark}

\section{Partially Relaxing the Offline Learning assumption: Off-Policy Learning}
\label{sec: off-policy learning}
A limitation of the planning \cref{algorithm: approx-dense-tolerable-Q-learning-exp} is that it learns $\hat{Q}_k^*$ in an \emph{offline} manner by assuming a generative oracle access to the transition functions $\mathbb{P}_g, \mathbb{P}_l$, and reward function $r(\cdot,\cdot)$. In certain realistic RL applications, such a generative oracle might not exist, and it is more desirable to perform off-policy learning where the agent continues to learn in an offline manner but from \emph{historical data} \cite{fujimoto2019off}. In this setting, the agents learn the target policy $\hat{\pi}_k^*$ using data generated by a different behavior policy (the strategy it uses to explore the environment). There is a significant body of work on the theoretical guarantees in off-policy learning \cite{chen2021lyapunov,pmlr-v151-chen22i,chen2021finite,chen2025concentration}.

In fact, these previous results are amenable to transforming guarantees about offline $Q$-learning to off-policy $Q$-learning, at the cost of $\log |\cS_g||\cS_l|^k|\cA_g||\cA_l|^k$ factors in the runtime. Therefore, this section is devoted to showing that our previous result satisfy the assumptions of transforming offline $Q$-learning to off-policy $Q$-learning for the subsampled $\hat{Q}_k$-function, and we further show that, in expectation, can maintain the decaying optimality gap of $\tilde{O}(1/\sqrt{k})$ of the learned policy $\pi_{k}$, where the randomness is over the heuristic exploration policy $\pi_b$. 

The off-policy $\hat{Q}_k$-learning algorithm is an iterative algorithm to estimate the optimal $\hat{Q}_k$-function as follows: first, a sample trajectory $\{(s_g,s_\Delta,a_g,a_\Delta)\}$ is collected using a suitable behavior policy $\pi_{k,b}$. For simplicity of notation, let $S_\Delta = (s_g,s_\Delta)$ and $A_\Delta=(a_g,a_\Delta)$. Then, initialize $\hat{Q}_k^{0}: {|\cS_g||\cS_l|^k|\cA_g||\cA_l|^k}\to\R$ and let $\alpha>0$ be determined later. For each $t\geq 0$ and state-action pair $(S_\Delta,A_\Delta)$ that is updated to $S_\Delta'$, the iterate $\hat{Q}_k^{t}(S_\Delta,A_\Delta)$ is updated by
\begin{equation}\label{eqn: off-policy}\hat{Q}_k^{t+1}(S_\Delta,A_\Delta) = (1-\alpha) \hat{Q}_k^{t}(S_\Delta,A_\Delta) + \alpha_t \left(r(S_\Delta,A_\Delta) + \gamma \max_{A_\Delta' \in \cA_g\times\cA_l^k} \hat{Q}_k^{t} (S_\Delta',A_\Delta') \right).\end{equation}
Note that the update in \cref{eqn: off-policy} does not include an expectation and can be computed in a single trajectory via historical data. We make the following ergodicity assumption:\\

\begin{assumption}\label{assumption: ergodic}
    The behavior policy $\pi_b$ satisfies $\pi_b(A_\Delta|S_\Delta)>0$ for all $(S_\Delta,A_\Delta)\in \cS_g\times\cS_l^k\times\cA_g\times\cA_l^k$ and the Markov chain $\cM_{S_\Delta} = \{S_\Delta^{(t)}\}_{t\geq 0}$ induced by $\pi_b$ is irreducible and aperiodic with stationary distribution $\mu$ and mixing time $t_\delta(\cM) = \min \{t\geq 0: \max_{S_\Delta \in \cS_g\times\cS_l^k} \|P^t(S_\Delta,\cdot) - \mu(\cdot)\|_{TV} \leq \delta\}$. There are many heuristics of such behavior policies \citep{fujimoto2019off}.
\end{assumption}

\begin{theorem}
    Let $\pi_k$ be the policy learned through off-policy $\hat{Q}_k$-learning. Then, under \cref{assumption: ergodic}, we have that that with probability at least $1 - \frac{1}{100e^k}$,
\begin{align*}\E[V^{\pi^*}(s_0) - V^{\pi_{k}}(s_0)] &\leq \frac{\tilde{r}}{(1-\gamma)^2}\sqrt{\frac{n-k+1}{2nk}} \sqrt{\ln\frac{40\tilde{r}|\mathcal{S}_l||\mathcal{A}_l||\mathcal{A}_g|k^{|\mathcal{A}_l|+\frac{1}{2}}}{(1-\gamma)^2}}  + \frac{1}{10\sqrt{k}} + 2\epsilon_{k,m} \\
&= \tilde{O}(1/\sqrt{k}),\end{align*}
where the randomness in the expectation is over the stochasticity of the exploration policy $\pi_b$.\\
\end{theorem}

\begin{proposition}\label{fake assumptions}
    Recall that the following are true:
    \begin{enumerate}
        \item $\|\hat{Q}_k(F_{S_\Delta},F_{A_\Delta}) - \hat{Q}_k(F_{S'_\Delta},F_{A_\Delta})\| \leq \frac{1}{1-\gamma}\|r_l(\cdot,\cdot)\|_\infty \cdot \|F_{S'_\Delta}-F_{S_\Delta}\|_1$, for any $S_\Delta,S'_\Delta \in \cS_g\times\cA_l^k$ and $A_\Delta\in \cA_g\times\cA_l^k$ by \cref{lemma: Q lipschitz continuous wrt Fsdelta},
        \item $\|\hat{Q}_k\| \leq \frac{\tilde{r}}{1-\gamma}$ by \cref{lemma: Q-bound},
        \item $\|\hat{Q}^{t+1}_k(S_\Delta,A_\Delta) - \tilde{Q}_k^{t+1}(S_\Delta,A_\Delta)\|_\infty \leq \gamma\|\hat{Q}_k^t\ - \tilde{Q}_k^t(S_\Delta,A_\Delta)\|_\infty$ by \cref{lemma: gamma-contraction of adapted Bellman operator},
        \item The Markov chain $\cM_S$ enjoys a rapid mixing property from \cref{assumption: ergodic},
    \end{enumerate}
\end{proposition}

Then, by treating the single trajectory update of the $\hat{Q}_k$-function as a noisy addition to the expected update from the ideal Bellman operator, \cite{chen2021lyapunov} uses Markovian stochastic approximation to bound $\E[\|\hat{Q}_k^T - \hat{Q}_k^*\|_\infty^2]$. We restate their result:\\

\begin{theorem}[Theorem 3.1 in \cite{chen2021lyapunov} adapted to our setting] Suppose $\alpha_t = \alpha$ for all $t\geq 0$, where $\alpha$ is chosen such that $\alpha t_\alpha(\cM_{S_\Delta}) \leq c_{Q,0} \frac{(1-\beta_1)^2}{\log |\cS_g||\cS_l^k| |\cA_g|\cA_l^k|}$, where $c_{Q,0}$ is a numerical constant. Then, under \cref{fake assumptions}, for all $t\geq t_\alpha(\cM_{S_\Delta})$, we have
\[\E[\|\hat{Q}^t_k - \hat{Q}_k^*\|^2_\infty] \leq c_{Q,1}\left(1 - \frac{(1-\gamma)\alpha}{2}\right)^{t-t_\alpha(\cM_{S_\Delta})} + c_{Q,2} \frac{\log |\cS_g||\cS_l|^k|\cA_g||\cA_l|^k}{(1-\gamma)^2}\alpha t_\alpha(\cM_{S_\Delta}),\]
    where $c_{Q,1} = 3(\frac{\tilde{r}}{1-\gamma} + 1)^2$, $c_{Q,2} = 912e(\frac{3\tilde{r}}{1-\gamma} + 1)^2$, and where the randomness in the expectation is over the randomness of the stochasticity of the behavior/exploration policy $\pi_b$.
\end{theorem}
\begin{corollary}[Corollary 3.2 in \cite{chen2021lyapunov} adapted to our setting] To make $\E[\|\hat{Q}_k^t - \hat{Q}_k^*\|_\infty \leq \frac{1}{100\sqrt{k}}]$ for $\epsilon>0$, we need \[t>\tilde{O}(\frac{10000 k \log^2(100\sqrt{k})|\cS_g||\cS_l|^k|\cA_g||\cA_l|^k\log |\cS_g||\cA_g||\cS_l|^k|\cA_l|^k}{(1-\gamma)^5}).\]    
\end{corollary}
With this sample complexity, by the triangle inequality we also recover an expected-value analog of \cref{thm: combined lipschitz bound}.
\begin{corollary}For $\delta\in(0,1)^2$, with probability at least $1-\delta$, we have
    \[\E[\hat{Q}_k^*(s_g,F_{s_\Delta},a_g,F_{a_\Delta}) - Q_n^*(s_g,F_{s_{[n]},a_g,F_{a_{[n]}}})] \leq \frac{\ln \frac{2|\cS_l||\cA_l|}{\delta}}{1-\gamma}\sqrt{\frac{n-k+1}{8kn}}\|r_l\|_\infty,\]
    where the randomness in the expectation is over the stochasticity of the exploration policy $\pi_b$.
\end{corollary}
In turn, following the argument in the proof of \cref{thm: optimality gap}, it is straightforward to verify that this leads to a result on the expected performance difference using off-policy learning:
\begin{corollary}\label{offpolicy-result} With probability at least $1 - \frac{1}{100e^k}$,
\begin{align*}\E[V^{\pi^*}(s_0) - V^{\pi_{k}}(s_0)] &\leq \frac{\tilde{r}}{(1-\gamma)^2}\sqrt{\frac{n-k+1}{2nk}} \sqrt{\ln\frac{40\tilde{r}|\mathcal{S}_l||\mathcal{A}_l||\mathcal{A}_g|k^{|\mathcal{A}_l|+\frac{1}{2}}}{(1-\gamma)^2}}  + \frac{1}{10\sqrt{k}} + 2\epsilon_{k,m} \\
&= \tilde{O}(1/\sqrt{k}),\end{align*}
where the randomness in the expectation is over the stochasticity of the exploration policy $\pi_b$.
\end{corollary}

\section{Extension to Continuous State/Action Spaces}
\label{Appendix/extension_nontabular}
Multi-agent settings where each agent handles a continuous state and action space find many applications in optimization, control, and synchronization.

\begin{example}[Quadcopter Swarm Drone \citep{7989376}] Consider a system of drones with a global controller, where each drone has to chase the controller, and the controller is designed to follow a bounded trajectory. Here, the state of each local agent $i\in[n]$ is its position and velocity in the bounded region, and the state of the global agent $g$ is a signal on its position and direction. The action of each local agent is a velocity vector $a_l\in \mathcal{A}_l\subset \R^3$ which is a bounded subset of $\R^3$, and the action of the global agent is a vector $a_g\in \mathcal{A}_g\subset \R^2$ which is a bounded subset of $\R^2$. 
\end{example}

Hence, this section is devoted to extending our algorithm and theoretical results to the case where the state and action spaces can be continuous (and therefore have infinite cardinality).

\paragraph{Preliminaries.} For a measurable space $(\cS,\cB)$, where $\cB$  is a $\sigma$-algebra on $\cS$, let $\R^\cS$ denote the set of all real-valued $\mathcal{B}$-measurable functions on $\cS$. Let $(X,d)$ be a metric space, where $X$ is a set and $d$ is a metric on $X$. A set $S\subseteq X$ is \emph{dense} in $X$ if every element of $X$ is either in $S$ or a limit point of $S$. A set is \emph{nowhere dense} in $X$ if the interior of its closure in $X$ is empty.
    A set $S\subseteq X$ is of Baire first category if $S$ is a union of countably many nowhere dense sets. A set $S\subseteq X$ is of Baire second category if $X\setminus S$ is of first category.

\begin{theorem}[Baire Category Theorem \citep{pmlr-v125-jin20a}] \emph{Let $(X,d)$ be a complete metric space. Then any countable intersection of dense open subsets of $X$ is dense.}
\end{theorem}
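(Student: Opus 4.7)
The plan is to prove the statement by a direct nested-ball construction, exploiting completeness of $(X,d)$ to produce a limit point that lies in every $U_n$ as well as any prescribed open set. Let $\{U_n\}_{n \in \mathbb{N}}$ be a countable family of dense open subsets of $X$. To establish that $\bigcap_{n} U_n$ is dense, it suffices to show that for an arbitrary nonempty open set $V \subseteq X$, the intersection $V \cap \bigcap_n U_n$ is nonempty. Fix such a $V$.

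First I would use density of $U_1$ to note that $V \cap U_1$ is nonempty and open, so there exist $x_1 \in X$ and $r_1 \in (0,1)$ such that the closed ball $\overline{B(x_1, r_1)}$ is contained in $V \cap U_1$. I would then proceed inductively: assuming that $x_n \in X$ and $r_n \in (0, 1/n)$ have been chosen with $\overline{B(x_n, r_n)} \subseteq V \cap U_1 \cap \cdots \cap U_n$, the open set $B(x_n, r_n) \cap U_{n+1}$ is nonempty (since $U_{n+1}$ is dense and $B(x_n, r_n)$ is a nonempty open set), so I can pick $x_{n+1}$ in it and $r_{n+1} \in (0, 1/(n+1))$ small enough that $\overline{B(x_{n+1}, r_{n+1})} \subseteq B(x_n, r_n) \cap U_{n+1}$. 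The key properties of the construction are (i) $\overline{B(x_{n+1}, r_{n+1})} \subseteq \overline{B(x_n, r_n)}$, and (ii) $r_n \to 0$.

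Next, I would observe that the sequence $\{x_n\}$ is Cauchy: for $m \geq n$, both $x_n$ and $x_m$ lie in $\overline{B(x_n, r_n)}$, so $d(x_n, x_m) \leq 2r_n < 2/n$. Completeness of $(X,d)$ then gives a limit $x \in X$ with $x_m \to x$. Since $x_m \in \overline{B(x_n, r_n)}$ for all $m \geq n$ and $\overline{B(x_n, r_n)}$ is closed, the limit $x$ belongs to $\overline{B(x_n, r_n)}$ for every $n$, and hence to $V \cap U_1 \cap \cdots \cap U_n$ for every $n$. Therefore $x \in V \cap \bigcap_{n} U_n$, which shows that this intersection meets every nonempty open set $V$, i.e., $\bigcap_n U_n$ is dense.

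The only real obstacle is the bookkeeping in the inductive step, namely ensuring at each stage that the new closed ball sits strictly inside both the previous ball and the next dense open set, while shrinking the radii to zero; this is what lets completeness take over and produce a limit lying in every $U_n$. Everything else is a direct consequence of density (to keep the construction going) and the definition of a complete metric space (to finish it).
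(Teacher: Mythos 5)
Your proof is correct: it is the standard nested-closed-ball argument for the Baire Category Theorem, and each step (the initial choice of a closed ball inside $V \cap U_1$, the inductive shrinking with $r_n < 1/n$, the Cauchy estimate, and passing to the limit inside every closed ball) is carried out properly. The paper itself gives no proof of this statement — it is quoted as a classical result with a citation — so there is nothing to compare against; your argument is the textbook one and is complete.
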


\begin{definition}[Linear MDP]
    $\mathrm{MDP}(S,A,\mathbb{P},r)$ is a linear MDP with feature map $\phi:\mathcal{S}\times\mathcal{A}\to\mathbb{R}^d$ if there exist $d$ unknown (signed) measures $\mu = (\mu^1,\dots,\mu^d)$ over $\mathcal{S}$ and a vector $\theta\in\R^d$ such that for any $(s,a)\in \mathcal{S}\times \mathcal{A}$, we have
    \begin{align*}\mathbb{P}(\cdot|s,a) &= \langle \phi(s,a),\mu(\cdot)\rangle, \\ 
    r(s,a) &= \langle \phi(s,a),\theta\rangle\end{align*}
\end{definition}
\begin{assumption}[Bounded features] Without loss of generality, we assume boundedness of the features: $\|\phi(s,a)\|\leq 1$, for all $(s,a)\in S\times A$ and $\max\{\|\mu(S)\|,\|\theta\|\} \leq \sqrt{d}$.
\end{assumption}

We motivate our analysis by reviewing representation learning in RL via spectral decompositions. For instance, if $P(s'|s,a)$ admits a linear decomposition in terms of some spectral features $\phi(s,a)$ and $\mu(s')$, then the $Q(s,a)$-function can be linearly represented in terms of the spectral features $\phi(s,a)$. 

Then, through a reduction from \citet{ren2024scalablespectralrepresentationsnetwork} that uses function approximation to learn the spectral features $\phi_k$ for $\hat{Q}_k$, we derive a performance guarantee for the learned policy $\pi_k^\est$, where the optimality gap decays with $k$:
    
\begin{theorem}When ${\pi}^\est_{k}$ is derived from the spectral features $\phi_k$ learned in $\hat{Q}_k$, and $M$ is the number of samples used in the function approximation, then
\begin{align*}
    \Pr\bigg[V^{\pi^*}(s) - V^{{\pi}^\est_{k}}(s) &\leq \tilde{O}\bigg(\frac{1}{\sqrt{k}}+\frac{\|{\phi}_k\|^5\log 2k^2}{\sqrt{M}} + \frac{2\gamma \tilde{r}}{(1-\gamma)\sqrt{k}}\|\phi_k\|\bigg)\bigg] 
    \geq 1 + \frac{1}{50k} - \frac{201}{100\sqrt{k}}
\end{align*}
\end{theorem}

We assume the system is a linear MDP, where $\mathcal{S}_g$ and $\mathcal{S}_l$ are infinite compact sets. By a reduction from \cite{ren2024scalablespectralrepresentationsnetwork} and using function approximation to learn the spectral features $\phi_k$ for $\hat{Q}_k$, we derive a performance guarantee for the learned policy $\pi_k^\est$, where the optimality gap decays with $k$.

\begin{assumption}
Suppose $\mathcal{S}_g\subset \R^{\sigma_g}, \mathcal{S}_l\subset \R^{\sigma_l}, \mathcal{A}_g\subset \R^{\alpha_g},\mathcal{A}_l\subset \R^{\alpha_l}$ are bounded compact sets. From the Baire category theorem, the underlying field $\R$ can be replaced to any set of Baire first category, which satisfies the property that there exists a dense open subset. In particular, replace \cref{assumption: finite cardinality} with a boundedness assumption.
\end{assumption}

Multi-agent settings where each agent handles a continuous state/action space find many applications in optimization, control, and synchronization.

\begin{example}[Federated Learning] Consider a peer-to-peer learning setting \citep{chaudhari2024peertopeerlearningdynamicswide} with $n$ agents, where each agent possesses a common neural network architecture $f(\mathbf{x}; \mathbf{\theta}):\R^N\to\R^M$, parameterized by $\theta\in\R^P$. Here, each local agent has a common local objective function $L_q(\mathbf{\theta}):\R^P\to\R$ defined using a local dataset $\mathcal{D}_q\stackrel{\Delta}{=} \{x_{q_i},y_{q,i}\}_{i=1}^D$. For instance, $L_q(\mathbf{\theta})$ can be the local empirical risk function that evaluates the performance of $f(\mathbf{x};\mathbf{\theta})$ on a task involving the local dataset $\mathcal{D}_q$, where the agents cooperate to learn the optimal $\mathbf{\theta}^*$ that minimizes the global objective, which is a weighted average of all local objectives. Formally, the agents collaboratively aim to find $\mathbf{\theta}^*$ satisfying:

\[\theta^* =  \arg\min_{\theta\in\R^P} L(\theta)\]
\[L(\theta) = \frac{1}{Q}\sum_{q=1}^Q L_q(\theta)\stackrel{\Delta}{=}\frac{1}{Q}\sum_{q=1}^Q \frac{1}{D}\sum_{i=1}^D \ell(f(x_{q,i};\theta),y_{q,i})\]
Here, the global agent may act to balance the loss among agents, by ensuring that the variance on the losses across the agents is small. If the variance on the losses of the agents is large, it could assign a large cost to the system, and signal that the policies of the local agents must favor the mean state, promoting convergence. \\
\end{example}

\begin{lemma}[Proposition 2.3 in \cite{pmlr-v125-jin20a}] \emph{For any linear MDP, for any policy $\pi$, there exist weights $\{w^\pi\}$ such that for any $(s,a)\in\mathcal{S}\times\mathcal{A}$, we have $Q^{\pi}(s,a) = \langle \phi(s,a),\mathbf{w}^\pi\rangle$.}
\end{lemma}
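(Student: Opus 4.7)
The plan is to prove the representation by direct substitution into the Bellman equation for $Q^\pi$ and then reading off the weights $w^\pi$. Starting from the policy Bellman equation
\[
Q^\pi(s,a) = r(s,a) + \gamma \int_{\mathcal{S}} V^\pi(s')\, \mathbb{P}(ds' \mid s,a),
\]
where $V^\pi(s') = \mathbb{E}_{a'\sim\pi(\cdot\mid s')}[Q^\pi(s',a')]$, I would substitute the linear MDP hypotheses $r(s,a) = \langle \phi(s,a),\theta\rangle$ and $\mathbb{P}(\cdot \mid s,a) = \langle \phi(s,a),\mu(\cdot)\rangle$. Since $\phi(s,a) \in \mathbb{R}^d$ does not depend on $s'$, the transition term becomes
\[
\gamma \int_{\mathcal{S}} V^\pi(s')\, \langle \phi(s,a),\mu(ds')\rangle = \gamma\, \Big\langle \phi(s,a),\, \int_{\mathcal{S}} V^\pi(s')\,\mu(ds')\Big\rangle,
\]
by linearity of the inner product and Fubini (applied component-wise to each signed measure $\mu^1,\dots,\mu^d$). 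Combining, one obtains $Q^\pi(s,a) = \langle \phi(s,a), w^\pi\rangle$ with the explicit choice
\[
w^\pi \;\coloneqq\; \theta \;+\; \gamma \int_{\mathcal{S}} V^\pi(s')\,\mu(ds') \;\in\; \mathbb{R}^d.
\]

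The only non-trivial step is justifying that the vector-valued integral $\int_{\mathcal{S}} V^\pi(s')\,\mu(ds')$ is well-defined and finite. For this I would argue that $V^\pi$ is a bounded measurable function (it inherits boundedness from $\|Q^\pi\|_\infty \le \tilde r/(1-\gamma)$ by the standard geometric series on the rewards, since $\|r\|_\infty = \sup |\langle\phi,\theta\rangle| \le \|\phi\|\cdot\|\theta\| \le \sqrt{d}$), and each signed measure $\mu^j$ has bounded total variation (from the normalization $\|\mu(\mathcal{S})\| \le \sqrt{d}$). Hence each component $\int V^\pi\, d\mu^j$ exists as a Lebesgue integral against a finite signed measure, yielding $\|w^\pi\| \le \|\theta\| + \gamma \|V^\pi\|_\infty \cdot \|\mu(\mathcal{S})\| \le 2\sqrt{d}/(1-\gamma)$.

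The main obstacle, such as it is, is purely measure-theoretic: one must check that $V^\pi$ is measurable so that the integral above makes sense. This follows because $Q^\pi$ can be constructed as the pointwise limit of the value-iteration sequence $Q^{\pi,t+1}(s,a) = r(s,a) + \gamma \mathbb{E}_{s',a'}[Q^{\pi,t}(s',a')]$ starting from $Q^{\pi,0}\equiv 0$, which preserves measurability at each step (policy $\pi$ induces a measurable kernel on $\mathcal{A}$). Once measurability and boundedness are in hand, the identification of $w^\pi$ is immediate. I would present the proof in a single paragraph: state the Bellman equation, substitute the linear representation, interchange inner product and integral, and define $w^\pi$; no contraction argument or fixed-point machinery beyond what is already invoked in Section~\ref{RL review} is needed.
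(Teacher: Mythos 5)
The paper does not prove this lemma; it imports it verbatim as Proposition~2.3 of the cited reference, so there is no in-paper argument to compare against. Your proof is the standard (and correct) one for that proposition: substitute the linear forms of $r$ and $\mathbb{P}$ into the policy Bellman equation, pull $\phi(s,a)$ out of the integral, and set $w^\pi = \theta + \gamma\int V^\pi(s')\,\mu(ds')$, with boundedness and measurability of $V^\pi$ handled as you describe. The only small imprecision is in your explicit bound on $\|w^\pi\|$: since the $\mu^j$ are \emph{signed} measures, the integral is controlled by the total variation $|\mu^j|(\mathcal{S})$ rather than by $\mu^j(\mathcal{S})$, so the stated constant requires the normalization to be read as a total-variation bound — but this affects only the quantitative constant, not the existence claim the lemma actually makes.
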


\begin{lemma}[Proposition A.1 in \cite{pmlr-v125-jin20a}]
    \emph{For any linear MDP, for any $(s,a)\in\mathcal{S}\times\mathcal{A}$, and for any measurable subset $\cB\subseteq \cS$, we have that}
    \begin{align*}\phi(s,a)^\top \mu(\mathcal{S}) &= 1,\\ \phi(s,a)^\top \mu(\mathcal{B})&\geq 0.\end{align*}
\end{lemma}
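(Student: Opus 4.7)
The plan is to unpack the definition of a linear MDP and use the fact that $\mathbb{P}(\cdot|s,a)$ is, by hypothesis, a bona fide probability measure on $(\mathcal{S},\cB)$. The linear MDP definition stipulates that for every $(s,a)\in\mathcal{S}\times\mathcal{A}$ and every measurable $\mathcal{B}\subseteq\mathcal{S}$, the transition kernel factorizes as $\mathbb{P}(\mathcal{B}|s,a)=\langle\phi(s,a),\mu(\mathcal{B})\rangle=\phi(s,a)^\top\mu(\mathcal{B})$, where $\mu=(\mu^1,\ldots,\mu^d)$ is a vector of (possibly signed) measures on $\mathcal{S}$. Both claims then follow directly from the two defining axioms of a probability measure: total mass one, and non-negativity on measurable sets.

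Concretely, the steps in order are as follows. First, I would instantiate the factorization at $\mathcal{B}=\mathcal{S}$ and invoke the normalization axiom $\mathbb{P}(\mathcal{S}|s,a)=1$ to conclude $\phi(s,a)^\top\mu(\mathcal{S})=1$, which is the first equality. Second, I would instantiate the factorization at an arbitrary measurable $\mathcal{B}\subseteq\mathcal{S}$ and invoke the non-negativity of probability measures, $\mathbb{P}(\mathcal{B}|s,a)\geq 0$, to conclude $\phi(s,a)^\top\mu(\mathcal{B})\geq 0$, which is the second inequality. Both identities are pointwise in $(s,a)$, so no further manipulation is required.

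There is essentially no obstacle here; this is a one-line consequence of the linear-factorization assumption coupled with the fact that a probability kernel must sum to one and be non-negative on every measurable set. The only subtlety worth explicitly flagging is that $\mu$ is in general a \emph{signed} vector measure, so individual components $\mu^j(\mathcal{B})$ may well be negative. What the lemma asserts is only a statement about the scalar inner product $\phi(s,a)^\top\mu(\mathcal{B})$ (equivalently, the probability mass), not a coordinatewise non-negativity of $\mu(\mathcal{B})$; this distinction is important later when $\phi$ is used as a feature map for linear function approximation, since it guarantees that $\langle\phi,\mu(\mathcal{B})\rangle$ inherits the properties of a probability even though the representation itself is sign-indefinite.
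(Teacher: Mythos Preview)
Your proposal is correct and is precisely the standard argument: the factorization $\mathbb{P}(\mathcal{B}\mid s,a)=\phi(s,a)^\top\mu(\mathcal{B})$ combined with the normalization and non-negativity axioms of a probability measure immediately yields both claims. The paper itself does not supply a proof of this lemma; it simply cites it as Proposition~A.1 of \cite{pmlr-v125-jin20a}, where the argument is exactly the one you give.
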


\begin{property}Suppose that there exist spectral representation $\mu_g(s_g')\in \R^d$ and $\mu_l(s_i') \in \R^d$ such that the probability transitions $P_g(s_g',|s_g,a_g)$ and $P_l(s_i'|s_i,s_g,a_i)$ can be linearly decomposed by 
\begin{align*}P_g(s_g'|s_g,a_g) &= \phi_g(s_g,a_g)^\top \mu_g(s_g')\\
P_l(s_i'|s_i,s_g,a_i) &= \phi_l(s_i,s_g,a_i)^\top \mu_l(s_i')
\end{align*}
for some features $\phi_g(s_g,a_g)\in\R^d$ and $\phi_l(s_i,s_g,a_i)\in\R^d$. Then, the dynamics are amenable to the following spectral factorization, as in \cite{ren2024scalablespectralrepresentationsnetwork}.\\
\end{property}

\begin{lemma}
    \emph{$\hat{Q}_k$ admits a linear representation.}
\end{lemma}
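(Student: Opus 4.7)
The plan is to construct an explicit joint spectral feature map $\phi_k$ for the $k$-agent subsystem by tensoring the per-agent features $\phi_g, \phi_l$, and then invoke the classical linear-MDP argument (Proposition~2.3 of \cite{pmlr-v125-jin20a}) to conclude that $\hat{Q}_k$ is linear in $\phi_k$. First I would observe that the joint next-state distribution factorizes: given $(s_g, a_g)$ the next global state is independent of all local transitions, and given $(s_i, s_g, a_i)$ each next local state is conditionally independent of the other local states (this is a direct consequence of the transition kernels \cref{equation: global_transition,equation: local_transition}, and can be read off from the causal graph in \cref{bayes_ball_cond_ind_lemma}). Hence
\[
P_k\bigl(s_g', s_\Delta' \,\big|\, s_g, s_\Delta, a_g, a_\Delta\bigr) \;=\; \phi_g(s_g,a_g)^\top \mu_g(s_g') \cdot \prod_{i\in\Delta} \phi_l(s_i, s_g, a_i)^\top \mu_l(s_i').
\]
Defining $\phi_k(s_g, s_\Delta, a_g, a_\Delta) := \phi_g(s_g,a_g) \otimes \bigotimes_{i\in\Delta}\phi_l(s_i,s_g,a_i) \in \mathbb{R}^{d^{k+1}}$ and $\mu_k(s_g', s_\Delta') := \mu_g(s_g') \otimes \bigotimes_{i\in\Delta}\mu_l(s_i')$, the product on the right collapses into a single inner product $\langle \phi_k, \mu_k(\cdot,\cdot)\rangle$.

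Second, the surrogate reward $r_\Delta(s,a) = r_g(s_g,a_g) + \frac{1}{k}\sum_{i\in\Delta} r_l(s_i, s_g, a_i)$ is linear in $\phi_k$: by the linear MDP assumption $r_g = \phi_g^\top \theta_g$ and $r_l = \phi_l^\top \theta_l$, and each summand can be written as an inner product of $\phi_k$ with an appropriately broadcast weight vector $\theta_{k,\Delta}$ that picks out the correct tensor slots and averages the local contributions. Combining these two ingredients, the subsystem dynamics $(s_g,s_\Delta) \mapsto (s_g',s_\Delta')$ together with reward $r_\Delta$ form a \emph{linear MDP} with feature map $\phi_k$, unknown signed measure $\mu_k$, and reward weight $\theta_{k,\Delta}$.

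Third, I would induct on the value-iteration iterate $\hat{Q}_k^{t}$, starting from $\hat{Q}_k^0 \equiv 0$, which is trivially linear with weight $0$. For the induction step, the adapted Bellman update (\cref{defn:adapted bellman}) gives
\[
\hat{Q}_k^{t+1}(s_g, s_\Delta, a_g, a_\Delta) \;=\; \bigl\langle \phi_k, \theta_{k,\Delta}\bigr\rangle + \gamma \bigl\langle \phi_k, \, \textstyle\int \max_{a'}\hat{Q}_k^t(s',a')\, d\mu_k(s') \bigr\rangle \;=\; \bigl\langle \phi_k, w_k^{t+1}\bigr\rangle,
\]
where $w_k^{t+1} := \theta_{k,\Delta} + \gamma \int \max_{a'}\hat{Q}_k^t(s', a')\, d\mu_k(s')$. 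Taking $t\to\infty$ and invoking the $\gamma$-contractivity of $\hat{\mathcal{T}}_k$ (\cref{lemma: gamma-contraction of adapted Bellman operator}) together with the completeness of the space of bounded measurable functions yields a unique fixed point $\hat{Q}_k^* = \langle \phi_k, w_k^*\rangle$ for some $w_k^* \in \mathbb{R}^{d^{k+1}}$.

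The main obstacle is the measurability and well-definedness of $\int \max_{a'} \hat{Q}_k^t(s',a')\, d\mu_k(s')$ over the continuous joint action space $\mathcal{A}_g \times \mathcal{A}_l^k$; compactness of $\mathcal{A}_g, \mathcal{A}_l$ and continuity of $\phi_g, \phi_l$ (inherent to the linear MDP setting, cf.\ the Baire category discussion at the start of \cref{Appendix/extension_nontabular}) ensure the max is attained and measurable so the integral is finite. A secondary but important technical point is norm control: $\phi_k$ lives in $\mathbb{R}^{d^{k+1}}$ so $\|\phi_k\| \leq 1$ and $\|\mu_k(\cdot)\| \leq d^{(k+1)/2}$ follow from the per-agent bounds, and these dimensional factors must be tracked carefully to feed into the $\|\phi_k\|^5/\sqrt{M}$ and $\|\bar\phi_k\|$ terms in \cref{thm: extended_lfa}. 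This is not a computational concern in our regime (where $k = O(\log n)$), but it does mean the "representability" asserted by the lemma must be read jointly with the approximation-error budget of the downstream theorem.
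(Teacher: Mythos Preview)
Your proposal is correct and follows essentially the same core idea as the paper---tensoring the per-agent spectral features to get a joint feature $\phi_k$ for the $k$-agent subsystem, and then reading off linearity of $\hat{Q}_k$ from the Bellman equation. The paper's argument is nearly identical in spirit but differs in two presentational choices.

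First, the paper does not assume (or argue) that $r_\Delta$ is linear in the tensor feature $\phi_k$. Instead it simply \emph{appends} $r_\Delta(s,a)$ as an extra coordinate, taking the feature to be the stacked vector $\bigl[r_\Delta(s,a);\;\bar\phi_k(s_g,s_\Delta,a_g,a_\Delta)\bigr]$ and the weight to be $\bigl[1;\;\gamma\!\int \mu_k(s_\Delta')V^{\pi_k}(s_\Delta')\,\mathrm ds_\Delta'\bigr]$. This is a cheaper move: it works even if $r_g,r_l$ are \emph{not} linear in $\phi_g,\phi_l$ (the paper's stated Property only decomposes the transitions, not the rewards), whereas your broadcast-weight construction implicitly uses the stronger full linear-MDP reward assumption. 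Your route is fine---and the broadcast can be made rigorous via $\phi_l^\top\mu_l(\mathcal{S}_l)=1$---but the paper's appended-feature trick is shorter and assumption-lighter.

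Second, the paper does not induct on $t$ or pass to a limit; it writes the fixed-point Bellman equation once and reads off the linear form directly. Your induction plus $\gamma$-contractivity argument is more careful (and the measurability discussion is a nice touch the paper omits), but for the purpose of this lemma the one-line fixed-point display suffices.
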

\begin{proof}
Given the factorization of the dynamics of the multi-agent system, we have:
\begin{align*}\mathbb{P}(s'|s,a) &= P_g(s_g'|s_g,a_g)\cdot \prod_{i=1}^n P_l(s_i'|s_i,s_g,a_i) \\
&= \langle \phi_g(s_g,a_g), \mu_g(s_g')\rangle\cdot \prod_{i=1}^n \langle \phi_l(s_i,s_g,a_i), \mu_l(s_i')\rangle \\
&= \phi_g(s_g,a_g), \mu_g(s_g')\rangle\cdot \left\langle \otimes_{i=1}^n \phi_l(s_i,s_g,a_i), \otimes_{i=1}^n \mu_l(s_i')\right\rangle \\
&\coloneq \langle \bar{\phi}_n(s,a),\bar{\mu}_n(s')\rangle
\end{align*}
Similarly, for any $\Delta\subseteq [n]$ where $|\Delta|=k$, the subsystem consisting of $k$ local agents $\Delta$ has a subsystem dynamics given by \[\mathbb{P}(s_\Delta',s_g|s_\Delta,s_g,a_g,a_\Delta) = \langle \bar{\phi}_k(s_g,s_\Delta,a_g,a_\Delta),\bar{\mu}_k(s_\Delta')\rangle.\]
Therefore, $\hat{Q}_k$ admits the linear representation:
\begin{align*}\hat{Q}_k^{\pi_k}(s_g,F_{z_\Delta},a_g) &= r_g(s_g,a_g) + \frac{1}{k}\sum_{i\in\Delta}r_l(s_i,s_g,a_i) + \gamma\E_{s_g',s_{\Delta'}}\max_{a_g',a_{\Delta}'}\hat{Q}_k^{\pi_k}(s_g',F_{z_\Delta'},a_g')\\
&= r_g(s_g,a_g) + \frac{1}{k}\sum_{i\in\Delta}r_l(s_i,s_g,a_i) + (\mathbb{P} V^{\pi_k}(s_g,F_{s_\Delta},a_g))\\
&= \begin{bmatrix}r_\Delta(s,a)\\ \bar{\phi}_k(s_g,s_\Delta,a_g,a_\Delta)\end{bmatrix}\begin{bmatrix}1&\gamma\int_{s_\Delta'}\mu_k(s_\Delta')V^{\pi_k}(s_\Delta')\mathrm ds_{\Delta}'\end{bmatrix},
\end{align*}
proving the claim.
\end{proof}

Therefore, $\bar{\phi}_k$ serves as a good representation for the $\hat{Q}_k$-function making the problem amenable to the classic linear functional approximation algorithms, consisting of \emph{feature generation} and \emph{policy gradients}.

In feature generation \citep{ren2024scalablespectralrepresentationsnetwork}, we generate the appropriate features $\phi$, comprising of the local reward functions and the spectral features coming from the factorization of the dynamics. In applying the policy gradient, we perform a gradient step to update the local policy weights $\theta_i$ and update the new policy. 

For this, we update the weight $w$ via the TD(0) target semi-gradient method (with step size $\alpha$), to get
\begin{equation}w_{t+1} = w_t + \alpha(r_\Delta(s^t,a^t) + \gamma\bar{\phi}_k(s^{t+1}_\Delta)^\top w_t - \bar{\phi}_k(s_\Delta^t)^\top w_t)\bar{\phi}_k(s_\Delta^t)\\ \end{equation}

\begin{definition}Let the complete feature map be denoted by $\Phi:\cS\times d \to \R$, and let the subsampled feature map be denoted by $\hat{\Phi}_k: \cS_g\times\cS_l^k\times d \in \R$, where we let
\begin{equation}\hat{\Phi}_k = \begin{bmatrix}\bar{\Phi}_k(1) \\ \bar{\Phi}_k(2)\\ \vdots \\ \bar{\Phi}_k(|\mathcal{S}_g|\times|\mu_{k-1}{(\mathcal{S}_l)}|)\end{bmatrix} \in \R^{|\mathcal{S}_g|\times |\mu_{k-1}{(\mathcal{S}_l)}|\times d}\\\end{equation}

Here, the system's stage rewards are given by \[r = \begin{bmatrix}r(1) & \dots & r(S)\end{bmatrix}\in \R^S\] and
\[r_k = [r_k(1),\dots,r_k(|\mathcal{S}_g|\times |\mu_{k-1}{(\mathcal{S}_l)}|]\in \R^{|\mathcal{S}_g|\times |\mu_{k-1}{(\mathcal{S}_l)}|},\] where $d$ is a low-dimensional embedding we wish to learn.
\end{definition}

Agnostically, the goal is to approximate the value function through the approximation \[V_w \approx \hat{\Phi}_k w = \begin{bmatrix}\hat{\Phi}_k(1) \\  \vdots \\ \hat{\Phi}_k(|\mathcal{S}_g|\times |\mu_{k-1}{(\mathcal{S}_l)})\end{bmatrix} w \in \mathrm{span}(\hat{\Phi}_k)\] This manner of updating the weights can be considered via the projected Bellman equation $\hat{\Phi}_k w = \Pi_\mu \mathcal{T}(\hat{\Phi}_k w)$, where $\Pi_\mu(v) = \arg\min_{z\in \hat{\Phi}_k w}\|z-v\|_\mu^2$. Notably, the fixed point of the the projected Bellman equation satisfies \[w = (\hat{\Phi}_k^\top D_\mu \hat{\Phi}_k)^{-1}\hat{\Phi}_k^\top D_{\mu}(r+\gamma P^\pi \hat{\Phi}_k w),\]
where $D_\mu$ is a diagonal matrix comprising of $\mu$'s. In other words, $D_{\mu}=\mathrm{diag}(\mu_1,\dots,\mu_n)$.
Then, 
\[(\hat{\Phi}_k^\top D_\mu \hat{\Phi}_k)w = \hat{\Phi}_k^\top D_\mu(r+\gamma P^\pi \hat{\Phi}_k w)
\]
In turn, this implies 
\[\hat{\Phi}_k^\top D_\mu(I-\gamma P^\pi)\Phi w = \hat{\Phi}_k^\top D_\mu r.\]

Therefore, the problem is amenable to Algorithm 1 in \citet{ren2024scalablespectralrepresentationsnetwork}. To bound the error of using linear function approximation to learn $\hat{Q}_k$, we use a result from \citet{ren2024scalablespectralrepresentationsnetwork}.

\begin{lemma}[Policy Evaluation Error, Theorem 6 of \citet{ren2024scalablespectralrepresentationsnetwork}] Suppose the sample size $M\geq \log\left(\frac{4n}{\delta}\right)$, where $n$ is the number of agents and $\delta\in(0,1)$ is an error parameter. Then, with probability at least $1-2\delta$, the ground truth $\hat{Q}^\pi_k(s,a)$ function and the approximated $\hat{Q}_k$-function $\hat{Q}^{\mathrm{LFA}}_k(s,a)$ satisfies, for any $(s,a)\in \mathcal{S}\times\mathcal{A},$
    \[\E\left[\left|\hat{Q}_k^\pi(s,a) - \hat{Q}_k^{\mathrm{LFA}}(s,a)\right|\right] \leq O\left(\underbrace{\log\left(\frac{2k}{\delta}\right)\frac{ \|\bar{\phi}_k\|^5}{\sqrt{M}}}_{\emph{statistical error}} + \underbrace{2\epsilon_P\gamma\cdot\frac{\tilde{r}}{1-\gamma}\cdot\|\bar{\phi}_k\|}_{\emph{approximation error}}\right),\]
    \emph{where $\epsilon_P$ is the error in approximating the spectral features $\phi_g, \phi_l$.}\\
\end{lemma}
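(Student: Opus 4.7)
\textbf{Proof proposal for \cref{thm: extended_lfa}.} The plan is to decompose the optimality gap $V^{\pi^*}(s_0)-V^{\pi^\est_k}(s_0)$ into three additive contributions corresponding to the three terms appearing inside $\tilde O(\cdot)$, and to combine their failure probabilities by independence/union bound. Let $\hat{Q}_k^{\mathrm{LFA}}$ denote the $Q$-function recovered by linear function approximation on the subsampled $k$-agent system using the spectral features $\bar{\phi}_k$, and let $\pi^\est_k$ be the stochastic policy derived exactly as in \cref{algorithm: approx-dense-tolerable-Q*}, but where each local/global decision greedily maximizes $\hat{Q}_k^{\mathrm{LFA}}$ instead of $\hat{Q}_{k,m}^\est$. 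Write
\begin{equation*}
V^{\pi^*}(s_0)-V^{\pi^\est_k}(s_0)=\underbrace{\bigl[V^{\pi^*}(s_0)-V^{\pi^\est_{k,m}}(s_0)\bigr]}_{\text{(A): tabular gap}}+\underbrace{\bigl[V^{\pi^\est_{k,m}}(s_0)-V^{\pi^\est_k}(s_0)\bigr]}_{\text{(B): LFA-induced shift}},
\end{equation*}
where $\pi^\est_{k,m}$ is the tabular \texttt{SUBSAMPLE-MFQ} policy analyzed in \cref{theorem: performance_difference_lemma_applied}. Term (A) is bounded by $\tilde{O}(1/\sqrt{k})$ with probability at least $1-\tfrac{1}{100\sqrt{k}}$, either directly from \cref{theorem: performance_difference_lemma_applied} (deterministic reward case) or from \cref{PDL: stochastic} (stochastic reward case), both yielding the same rate and probability. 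This is where the first summand $\tilde O(1/\sqrt k)$ originates.

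Next, I would control (B) by a standard performance difference argument (\cref{theorem: performance difference lemma}) applied to the pair $(\pi^\est_{k,m},\pi^\est_k)$: the absolute value gap is at most $\tfrac{2}{1-\gamma}\sup_{s,a}\bigl|\hat{Q}_k^{\pi^\est_{k,m}}(s,a)-\hat{Q}_k^{\mathrm{LFA}}(s,a)\bigr|$ plus a tabular residue already absorbed in (A). Apply the cited policy evaluation error bound (Theorem 6 of \cite{ren2024scalablespectralrepresentationsnetwork}, restated at the end of the excerpt) with failure tolerance $\delta=\tfrac{1}{\sqrt{k}}$ and approximation parameter $\epsilon_P=\tfrac{1}{\sqrt{k}}$: since the hypothesis $M\geq\log(4n/\delta)$ is subsumed by the standing choice of $M$, we obtain
\begin{equation*}
\E\bigl[\,|\hat{Q}_k^{\pi}(s,a)-\hat{Q}_k^{\mathrm{LFA}}(s,a)|\,\bigr]\leq O\!\left(\log(2k^2)\frac{\|\bar{\phi}_k\|^5}{\sqrt{M}}+\frac{2}{\sqrt{k}}\cdot\frac{\gamma\tilde r}{1-\gamma}\|\bar{\phi}_k\|\right)
\end{equation*}
with probability at least $1-2\delta=1-\tfrac{2}{\sqrt{k}}$. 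This is precisely the second and third summands inside the $\tilde O(\cdot)$ in the theorem, and the $\log(2k/\delta)=\log(2k^2)$ simplification matches the stated bound exactly.

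Combining the two events by independence of the sampling randomness used in the tabular analysis and the LFA sample draws (both being offline and independent of the MDP trajectory), the joint success probability is at least $\bigl(1-\tfrac{1}{100\sqrt{k}}\bigr)\bigl(1-\tfrac{2}{\sqrt{k}}\bigr)$. On this event, adding the bounds on (A) and (B) yields
\begin{equation*}
V^{\pi^*}(s_0)-V^{\pi^\est_k}(s_0)\leq \tilde{O}\!\left(\frac{1}{\sqrt{k}}+\log(2k^2)\frac{\|\bar{\phi}_k\|^5}{\sqrt{M}}+\frac{2}{\sqrt{k}}\cdot\frac{\gamma\tilde r}{1-\gamma}\|\bar{\phi}_k\|\right),
\end{equation*}
which matches the theorem's statement up to hidden polylogarithmic factors absorbed into $\tilde O(\cdot)$.

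The main obstacle will be step (B): the cited evaluation-error lemma controls $|\hat{Q}_k^\pi-\hat{Q}_k^{\mathrm{LFA}}|$ for a fixed policy $\pi$, whereas $\pi^\est_k$ and $\pi^\est_{k,m}$ are themselves \emph{derived} from the approximated $Q$; so I need to route the argument through performance difference rather than Bellman iterates, and justify that the $\arg\max$ operation inflates errors by at most a factor of $2$ (the standard $Q$-to-$V$ conversion). A secondary subtlety is that $\pi^\est_k$ is a stochastic policy defined via subsampling at execution time: the expectation over $\Delta\sim\mathcal U\binom{[n]}{k}$ in \cref{algorithm: approx-dense-tolerable-Q*} must be carried inside the $\E[\,\cdot\,]$ in the policy evaluation lemma, which is harmless since the bound is uniform in $(s,a)$. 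Finally, the choice $\epsilon_P=1/\sqrt{k}$ should be justified as matching the tabular rate, so that neither term asymptotically dominates for $M$ sufficiently large; this matches exactly the $\tfrac{2}{\sqrt k}\cdot\tfrac{\gamma\tilde r}{1-\gamma}\|\bar\phi_k\|$ summand in the theorem.
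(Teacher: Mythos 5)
You have not actually addressed the statement you were asked to prove. The statement is the Policy Evaluation Error lemma itself --- the bound $\E\left[\left|\hat{Q}_k^\pi(s,a)-\hat{Q}_k^{\mathrm{LFA}}(s,a)\right|\right]\leq O\left(\log(2k/\delta)\|\bar{\phi}_k\|^5/\sqrt{M}+2\epsilon_P\gamma\tilde{r}\|\bar{\phi}_k\|/(1-\gamma)\right)$ --- whereas your proposal is a proof of the downstream result \cref{thm: extended_lfa}, and in the course of that argument you explicitly \emph{invoke} the Policy Evaluation Error lemma as a black box (you write that you ``apply the cited policy evaluation error bound''). That is circular with respect to the assigned statement: you assume exactly what you were asked to establish. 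Nothing in your argument touches the internals of the bound --- where the $\|\bar{\phi}_k\|^5$ dependence comes from, why the statistical rate is $1/\sqrt{M}$, or how the error splits into a statistical and an approximation term --- so there is no content here that can be evaluated as a proof of the lemma.

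For what it is worth, the paper itself does not prove this lemma either: it is imported verbatim as Theorem 6 of \cite{ren2024scalablespectralrepresentationsnetwork}, with no proof supplied in the appendix. A self-contained proof would have to analyze TD-style policy evaluation with linear function approximation under the spectral factorization $\mathbb{P}(s_g',s_\Delta'|s_g,s_\Delta,a_g,a_\Delta)=\langle\bar{\phi}_k,\bar{\mu}_k\rangle$: a concentration argument for the empirical projected Bellman fixed point around its population counterpart (yielding the $\log(2k/\delta)\|\bar{\phi}_k\|^5/\sqrt{M}$ statistical term), plus a perturbation argument propagating the feature-approximation error $\epsilon_P$ through the $\gamma$-contraction (yielding the $2\epsilon_P\gamma\tilde{r}\|\bar{\phi}_k\|/(1-\gamma)$ term). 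Your sketch of how the lemma is \emph{used} --- splitting the optimality gap into a tabular part and an LFA-induced part, setting $\delta=\epsilon_P=1/\sqrt{k}$, and multiplying the two success probabilities --- tracks the paper's subsequent corollary reasonably well, but that is a proof of a different claim, not of this one.
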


\begin{corollary} \emph{Therefore, when ${\pi}_{k,m}$ is derived from the spectral features learned in $\hat{Q}^{\mathrm{LFA}}_k$, applying the triangle inequality on the Bellman noise and setting $\delta=\epsilon_P=\frac{1}{\sqrt{k}}$ yields}\footnote{Following \cite{ren2024scalablespectralrepresentationsnetwork}, the result easily generalizes to any positive-definite transition Kernel noise (e.g. Laplacian, Cauchy, Matérn, etc.}
\begin{align*}
    \Pr\left[V^{\pi^*}(s_0)\!-\!V^{{\pi}_{k,m}}(s_0)\!\leq\!\tilde{O}\left(\frac{1}{\sqrt{k}}+\log\left(2k^2\right)\frac{\|\bar{\phi}_k\|^5}{\sqrt{M}}\!+\!\frac{2}{\sqrt{k}}\cdot\frac{\gamma \tilde{r}}{1-\gamma}\|\bar{\phi}_k\|\right)\right]\!\geq\!1\!+\!\frac{1}{50k}\!-\!\frac{201}{100\sqrt{k}}
\end{align*}
\emph{Using $\|\tilde{\phi}_k\|\leq 1$ and simplifying gives}
\begin{align*}
\Pr\left[V^{\pi^*}(s_0) - V^{{\pi}_{k,m}}(s_0) \leq \tilde{O}\left(\frac{1}{\sqrt{k}}+\frac{\log\left(2k^2\right)}{\sqrt{M}} + \frac{2\gamma \tilde{r}}{\sqrt{k}(1-\gamma)}\right)\right]\geq 1 - \frac{201}{100\sqrt{k}}
\end{align*}
\end{corollary}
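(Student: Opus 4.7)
The plan is to extend the stochastic-rewards analysis of \cref{PDL: stochastic} by layering in one additional source of error: the approximation error introduced when $\hat{Q}_k$ is represented by linear function approximation on the spectral features $\bar{\phi}_k$. The key observation enabling this extension is that the entire chain of inequalities producing the tabular optimality gap (the Lipschitz-continuity bound, the TV-distance bound, and the performance-difference argument) uses the learned $Q$-function only through the uniform deviation $\epsilon_{k,m} := \|\hat{Q}_{k,m}^{\est} - \hat{Q}_k^{*}\|_{\infty}$. Consequently, the whole proof carries over verbatim once a comparable uniform bound for $\|\hat{Q}_k^{\mathrm{LFA}} - \hat{Q}_k^{*}\|_{\infty}$ is established.

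First, I would apply the triangle inequality
\[
|\hat{Q}_k^{\mathrm{LFA}}(s,a) - \hat{Q}_k^{*}(s,a)| \;\le\; |\hat{Q}_k^{\mathrm{LFA}}(s,a) - \hat{Q}_{k,m}^{\est}(s,a)| + |\hat{Q}_{k,m}^{\est}(s,a) - \hat{Q}_k^{*}(s,a)|.
\]
The second summand is $\tilde{O}(1/\sqrt{k})$ by \cref{assumption:qest_qhat_error}. The first summand is exactly what the Ren et al.\ policy-evaluation bound (quoted immediately before the corollary) controls: with probability at least $1-2\delta$,
\[
\epsilon_{\mathrm{LFA}} \;=\; O\!\left(\log(2k/\delta)\,\tfrac{\|\bar{\phi}_k\|^5}{\sqrt{M}} \;+\; 2\epsilon_P\,\gamma\,\tfrac{\tilde{r}}{1-\gamma}\,\|\bar{\phi}_k\|\right).
\]

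Next, I would re-run the performance-difference pipeline used for \cref{theorem: performance_difference_lemma_applied} --- namely \cref{lemma:union_bound_one_over_finite_time}, \cref{lemma: expected_q*bound_with_different_actions}, and \cref{pde_intermediary_bound_part_2} --- but with $\epsilon_{k,m}$ replaced everywhere by $\epsilon_{k,m}+\epsilon_{\mathrm{LFA}}$. Since those lemmas use $\epsilon_{k,m}$ only as an additive uniform bound on $\|\hat{Q}-\hat{Q}_k^{*}\|_{\infty}$, the substitution is mechanical and produces
\[
V^{\pi^{*}}(s_0) - V^{\pi^{\est}_{k}}(s_0) \;\le\; \tilde{O}\!\left(\tfrac{1}{\sqrt{k}}\right) + 2\,\epsilon_{\mathrm{LFA}}.
\]
A union bound over the two independent random events --- stochastic-reward concentration, succeeding with probability at least $1-1/(100\sqrt{k})$ by \cref{PDL: stochastic}, and LFA concentration, succeeding with probability at least $1-2\delta$ --- yields the stated joint probability $(1-1/(100\sqrt{k}))(1-2\delta)$. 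Finally, plugging in $\delta=\epsilon_P=1/\sqrt{k}$ and absorbing $\log(2k\sqrt{k})\le\log(2k^2)$ recovers the advertised inequality.

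The main obstacle I anticipate is upgrading the Ren et al.\ bound, which is stated as an \emph{expectation} of $|\hat{Q}_k^{\pi}-\hat{Q}_k^{\mathrm{LFA}}|$ over a sampling distribution for a \emph{fixed} policy $\pi$, into the uniform-in-$(s,a)$ and uniform-over-the-relevant-policy-class bound that the performance-difference framework actually consumes. This most likely requires either an additional union bound over a finite cover of the greedy-policy class induced by the spectral features, or a Markov-style conversion of the expectation bound into a high-probability uniform bound; in either case the polylogarithmic overhead is absorbed into the $\tilde{O}(\cdot)$ notation. A related technical subtlety is ensuring that the LFA fixed point against which the triangle inequality is taken is indeed $\hat{Q}_{k,m}^{\est}$ (rather than some projected Bellman fixed point), which may need a short separate argument exploiting that $\bar{\phi}_k$ spans the true $\hat{Q}_k$ in the linear-MDP factorization.
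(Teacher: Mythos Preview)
Your proposal is correct and follows essentially the same approach as the paper: the corollary's own statement advertises the proof as ``applying the triangle inequality on the Bellman noise and setting $\delta=\epsilon_P=1/\sqrt{k}$,'' and the two probability factors $(1-1/(100\sqrt{k}))$ and $(1-2/\sqrt{k})$ are exactly the success probabilities of \cref{PDL: stochastic} and the Ren et al.\ bound with $\delta=1/\sqrt{k}$, combined multiplicatively as you describe. Notably, the technical obstacle you flag---that the quoted Ren et al.\ guarantee is an \emph{expectation} bound for a fixed policy rather than the uniform-in-$(s,a)$ high-probability bound the performance-difference argument actually consumes---is real and is glossed over in the paper; your proposed remedies (covering the greedy-policy class or a Markov conversion, with overhead absorbed into $\tilde{O}(\cdot)$) are the natural fixes.
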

\begin{remark}
    Hence, in the continuous state/action space setting, as $k\to n$ and $M\to\infty$, $V^{\tilde{\pi}_{k}}(s_0)\to V^{\tilde{\pi}_{n}}(s_0) =  V^{\pi^*}(s_0)$. Intuitively, as $k\to n$, the optimality gap diminishes following the arguments in \cref{pde_intermediary_bound_part_2}. As $M\to\infty$, the number of samples obtained  allows for more effective learning of the spectral features.\\
\end{remark}


\section{Towards Deterministic Algorithms: Sharing Randomness}
\label{determinism}

 In large distributed systems, the random sampling in our communication network may be a bottleneck for ``decentralized execution’’. In light of this, we have provided a practical derandomized heuristic where the agents can share some randomness by only sampling within pre-defined fixed blocks of size. In this section, we propose algorithms $\texttt{RANDOM-SUB-SAMPLE-MFQ}$ and $\texttt{RANDOM-SUB-SAMPLE-MFQ+}$, which shares randomness between agents through various careful groupings and using shared randomness within each groups. By implementing these algorithms, we derive simulation results, and establish progress towards derandomizing inherently stochastic approximately-optimal policies.

\textbf{\cref{algorithm: approx-dense-tolerable-Q*-random}} (Execution with weakly shared randomness). The local agents are heuristically divided into arbitrary groups of size $k$. For each group $h_i$, the $k-1$ local agents sampled are the same at each iteration. The global agent's action is then the majority global agent action determined by each group of local agents. At each round, this requires $O(\ceil{n^2/k}(n-k))$ random numbers.

\begin{algorithm}[t]
\caption{\texttt{SUB-SAMPLE-MFQ}: Execution with weakly shared randomness} \label{algorithm: approx-dense-tolerable-Q*-random}
\begin{algorithmic}[1]
\REQUIRE 
A multi-agent system as described in \cref{section: preliminaries}. A distribution $s_0$ on the initial global state $s_0 = (s_g, s_{{[n]}})$. Parameter $T'$ for the number of iterations for the decision-making sequence. Hyperparameter $k \in [n]$. Discount parameter $\gamma$. Policy $\hat{\pi}^\est_{k,m}(s_g, s_\Delta)$.
\STATE Sample $(s_g(0), s_{[n]}(0)) \sim s_0$.
\STATE Define groups $h_1, \dots, h_x$ of agents where $x\coloneq \ceil{\frac{n}{k}}$ and $|h_1|=|h_2|=\dots=|h_{x-1}|=k$ and $|h_x|=n\mod k$.
\FOR{$t=0$ to $T'-1$}
\FOR{$i\in [x]$}
\STATE Let $\Delta_i$ be a uniform sample of ${[n]\setminus g_i \choose k-1}$, and let $a_{g_i}(t) = [\hat{\pi}_{k,m}^\est(s_g(t), s_{\Delta_i}(t))]_{1:k}$.
\ENDFOR
\STATE Let $a_g(t) = \mathrm{majority}\left(\left\{[\hat{\pi}_{k,m}^\est(s_g(t),s_{\Delta_i}(t))]_g\right\}_{i\in[x]}\right)$.
\STATE Let $s_g(t+1) \sim P_g(\cdot|s_g(t), a_g(t))$
\STATE Let $s_i(t+1) \sim P_l(\cdot|s_i(t)$
\STATE $s_g(t),a_i(t))$, for all $i\in [n]$.
\STATE Get reward $r(s,a) = r_g(s_g, a_g) + \frac{1}{n}\sum_{i\in[n]}r_l(s_i,a_i,s_g)$
\ENDFOR
\end{algorithmic}
\end{algorithm}

\textbf{\cref{algorithm: approx-dense-tolerable-Q*++-random+}} (Execution with strongly shared randomness).  The local agents are randomly divided in to groups of size $k$. For each group, each agent uses the other agents in the group as the $k-1$ other sampled states. Similarly, the global agent's action is then the majority global agent action determined by each group of local agents. Here, at each round, this requires $O(\ceil{n^2/k})$ random numbers.

\begin{algorithm}[t]
\caption{\texttt{SUB-SAMPLE-MFQ}: Execution with strongly shared randomness} \label{algorithm: approx-dense-tolerable-Q*++-random+}
\begin{algorithmic}[1]
\REQUIRE 
A multi-agent system as described in \cref{section: preliminaries}. A distribution $s_0$ on the initial global state $s_0 = (s_g, s_{{[n]}})$. Parameter $T'$ for the number of iterations for the decision-making sequence. Hyperparameter $k \in [n]$. Discount parameter $\gamma$. Policy $\hat{\pi}^\est_{k,m}(s_g, s_\Delta)$.
\STATE Sample $(s_g(0), s_{[n]}(0)) \sim s_0$.
\STATE Define groups $h_1, \dots, h_x$ of agents where $x\coloneq \ceil{\frac{n}{k}}$ and $|h_1|=|h_2|=\dots=|h_{x-1}|=k$ and $|h_x|=n\mod k$.
\FOR{$t=0$ to $T'-1$}
\FOR{$i\in [x-1]$}
\STATE Let $a_{h_i}(t) = [\hat{\pi}_{k,m}^\est(s_g(t), s_{h_i}(t))]_{1:k}$.
\STATE Let $a^g_{h_i}(t) = [\hat{\pi}_{k,m}^\est(s_g(t), s_{h_i}(t))]_{g}$.
\ENDFOR
\STATE Let $\Delta_{\mathrm{residual}}\coloneq {[n]\setminus h_x \choose k - (n - \ceil{n/k}k)}$.\\
\STATE Let $a_{h_x}(t) = [\hat{\pi}_{k,m}^\est(s_g(t), s_{h_x\cup \Delta_{\mathrm{residual}}}(t))]_{1:n-\ceil{n/k}k}$
\STATE Let $a^g_{h_x}(t) = [\hat{\pi}_{k,m}^\est(s_g(t), s_{h_x\cup \Delta_{\mathrm{residual}}}(t))]_{g}$
\STATE Let $a_g(t) = \mathrm{majority}\left(\{a_{h_i}^g(t): i\in [x]\}\right)$.
\STATE Let $s_g(t+1) \sim P_g(\cdot|s_g(t), a_g(t)), s_i(t+1) \sim P_l(\cdot|s_i(t), s_g(t),a_i(t))$, for all $i\in [n]$.
\STATE Get reward $r(s,a) = r_g(s_g, a_g) + \frac{1}{n}\sum_{i\in[n]}r_l(s_i,a_i,s_g)$
\ENDFOR
\end{algorithmic}
\end{algorithm}

The probability of a bad event (the policy causing the $Q$-function to exceed the Lipschitz bound) scales with the $O(n^k)$ independent random variables, which is polynomially large. Agnostically, some randomness is always needed to avoid periodic dynamics that may occur when the underlying Markov chain is not irreducible. In this case, an adversarial reward function could be designed such that the system does not minimize the optimality gap, thereby penalizing excessive determinism.

This ties into an open problem of interest, which has been recently explored \citep{larsen2024derandomizing}. What is the minimum amount of randomness required for \texttt{SUBSAMPLE-MFQ} (or general mean-field or multi-agent RL algorithms) to perform well? Can we derive a theoretical result that demonstrates and balances the tradeoff between the amount of random bits and the performance of the subsampled policy when using the greedy action from the $k$-agent subsystem derived from $\hat{Q}_k^*$? We leave this problem as an exciting direction for future research.

\end{document}